%% file: main.tex
\input{tex/preamble}

\input{tex/symbols}

\begin{document}

\twocolumn[
\icmltitle{Theoretical Guarantees for One-Shot Magnitude Pruning and Compute-Adaptive Early Exit}

\icmlsetsymbol{equal}{*}

\begin{icmlauthorlist}
	\icmlauthor{Erdem Koyuncu}{uic}
\end{icmlauthorlist}

\icmlaffiliation{uic}{Department of Electrical and Computer Engineering, University of Illinois Chicago, IL, USA}
\icmlcorrespondingauthor{Erdem Koyuncu}{ekoyuncu@uic.edu}

\icmlkeywords{Machine Learning, ICML}

\vskip 0.3in
]
\printAffiliationsAndNotice{}  

\input{tex/abstract}

\input{tex/introduction}

\input{tex/static-neurons}

\input{tex/adaptive-neurons}
\input{tex/static-networks}

\input{tex/adaptive-networks}

\input{tex/numerical}

\input{tex/conclusions}

\section*{Acknowledgements}
This work was supported in part by Army Research Lab under Grant W911NF-24-2-0172, and by National Science Foundation under Grant 2531376.


\section*{Impact Statement}
This paper presents work whose goal is to advance the field of machine learning. There are many potential societal consequences of our work, none of which we feel must be specifically highlighted here.

\bibliography{bib/sample,bib/misc-refs}
\bibliographystyle{icml2026}

\appendix
\onecolumn
\input{tex/app/notation}
\input{tex/app/gammatheorem}

\input{tex/app/ee-neuron-app}
\input{tex/app/prune-network-app}
\input{tex/app/thres-calib-app}

\input{tex/app/adaptive-net-app}

\input{tex/app/simul-app}

\input{tex/app/related-work}

\end{document}

%% file: tex/preamble.tex
\documentclass{article}

 \usepackage{cuted, caption, subcaption}

\usepackage{enumitem}
\usepackage{ amsmath, amssymb, graphicx, url,amsthm,subcaption,xcolor,amsthm,proof}
\usepackage{stackengine}
\usepackage{pgfplots,wrapfig}
\usepgfplotslibrary{groupplots,statistics}
\pgfplotsset{compat=1.17}

\definecolor{darkgreen}{RGB}{0,100,0}

\usepackage{mathtools, multirow}
\usepackage{bm}
\usepackage{microtype, array}

\allowdisplaybreaks

\AtBeginDocument{
	\setlength{\abovedisplayskip}{6pt}
	\setlength{\belowdisplayskip}{6pt}
	\setlength{\abovedisplayshortskip}{3pt}
	\setlength{\belowdisplayshortskip}{3pt}
	\setlength{\jot}{2pt}
}

\AtBeginDocument{
	\setlength{\textfloatsep}{8pt}
	\setlength{\floatsep}{6pt}
	\setlength{\intextsep}{6pt}
	\setlength{\abovecaptionskip}{3pt}
	\setlength{\belowcaptionskip}{0pt}
}

\usepackage[utf8]{inputenc} 
\usepackage[T1]{fontenc}    
\usepackage{hyperref}       
\usepackage{url}            
\usepackage{booktabs}       
\usepackage{amsfonts}       
\usepackage{nicefrac}       
\usepackage{microtype}      
\usepackage{xcolor}         

\title{Understanding Pruning and Early Exit in Neural Networks Using Conditional Neurons}

\author{Erdem Koyuncu}

\newcounter{cc}
\usepackage{microtype}
\usepackage{graphicx}
\usepackage{subcaption}
\usepackage{booktabs} 

\usepackage{hyperref}

\usepackage[accepted]{icml2026}

\usepackage{amsmath}
\usepackage{amssymb}
\usepackage{mathtools}
\usepackage{amsthm}

\usepackage[capitalize,noabbrev]{cleveref}

\theoremstyle{plain}

\newtheorem{example}{Example}[section]
\newtheorem{theorem}{Theorem}[section]
\newtheorem{proposition}[theorem]{Proposition}
\newtheorem{lemma}[theorem]{Lemma}
\newtheorem{corollary}[theorem]{Corollary}
\theoremstyle{definition}

\newtheorem{assumption}[theorem]{Assumption}
\theoremstyle{remark}

\usepackage[textsize=tiny]{todonotes}

\icmltitlerunning{Theoretical Guarantees for One-Shot Magnitude Pruning and Compute-Adaptive Early Exit}

\DeclareMathOperator{\sgn}{sign}

%% file: tex/symbols.tex
\newcommand{\inputvec}{\mathbf{x}}
\newcommand{\weightvec}{\mathbf{w}}
\newcommand{\prunedvec}{\mathbf{w}_{\mathrm{p}}}
\newcommand{\renormprunedvec}{\widetilde{\mathbf{w}}_{\mathrm{p}}}
\newcommand{\eevec}{\renormprunedvec}
\newcommand{\teachvec}{\mathbf{t}}
\newcommand{\trainvec}{\overline{\mathbf{x}}}
\newcommand{\maxmarginvec}{\overline{\mathbf{w}}}
\newcommand{\prob}{\mathbb{P}}
\renewcommand{\P}{\prob}
\newcommand{\E}{\mathbb{E}}

\newcommand{\avec}{\mathbf{a}}
\newcommand{\zvec}{\mathbf{z}}
\newcommand{\matW}{\mathbf{W}}
\newcommand{\matM}{\mathbf{M}}
\newcommand{\readoutvec}{\mathbf{y}}
\newcommand{\netout}{f}

\newcommand{\renormnetout}{\widehat{f}}
\newcommand{\prunedW}{\widetilde{\matW}}
\newcommand{\renormW}{\widehat{\matW}}

\newcommand{\netexitthres}{\lambda}

%% file: tex/abstract.tex

\begin{abstract}
We study compute reduction in neural networks through a unified partial versus full computation view, captured by one-shot magnitude pruning in the static regime and early exit in the adaptive regime. In an asymptotic single-neuron model, we prove a concentration theorem for one-shot magnitude pruning with explicit rates. We also introduce the conditional perceptron for early exit and show that its excess generalization error decays as a power of the compute gap, with an exponent that grows to infinity as the alignment between partial and full computations tends to one. We then extend the analysis to deep networks, characterizing how pruning-induced distortions accumulate with depth and deriving a corresponding compute–accuracy tradeoff for frozen-backbone early exit under a neural network Gaussian process model. Numerical simulations corroborate the predicted scaling laws.
\end{abstract}

%% file: tex/introduction.tex
\section{Introduction}

The field of artificial intelligence (AI) has seen dramatic progress, driven largely by advances in the design and training of large-scale neural networks. A key observation is that the performance of neural networks scales predictably with model size, dataset size, and compute budget. This empirical observation is often described by scaling laws~\cite{kaplan2020scaling}, which suggest that performance improves smoothly as compute is increased. However, the theoretical understanding of these trends remains elusive.

One promising strategy for achieving scalable performance at reduced compute cost is to activate only a subset of a model’s parameters depending on the input, a paradigm known as conditional computation~\cite{bengio2013estimating, davis2013lowrank, eigen2013learning}. This includes a variety of mechanisms, such as early-exit networks~\cite{han2021dynamic,li2024seenn,teerapittayanon2016branchynet, gormez2022early}, where confidence scores from intermediate layers can trigger early outputs to save computation, and sparsely-gated mixture-of-experts (MoE) models~\cite{fedus2021switch,shazeer2017outrageously,jiang2024mixtral}, where input-dependent gating activates only a subset of expert sub-networks for each input. These architectures can achieve substantial compute savings while maintaining accuracy, but a principled explanation of when and why this happens remains limited.

This work  develops a tractable theoretical characterization of compute--accuracy tradeoffs in conditional computation. Our approach begins at the level of a \emph{single neuron}, where sharp results are possible, and then transitions to \emph{deep networks} by showing how the same mechanisms propagate through depth. We associate conditional computation with the coexistence of a \emph{full} computation and a \emph{partial} computation. In the static setting, the partial computation is used for every input, which corresponds to magnitude pruning. In the adaptive setting, the partial computation is used to form a confidence score and the full computation is invoked only when needed, which corresponds to early exit. We first analyze these two regimes for a single neuron, and then lift them to multi-layer networks. In this context, the main contributions of this work are summarized as follows:

\begin{itemize}[leftmargin=*, itemsep=0em, parsep=0em, topsep=0em, partopsep=0em]
	\item \textbf{Static neurons.}
	We give a closed-form characterization of \emph{one-shot} magnitude-pruning distortion in a single neuron by proving that, in high dimensions, the cosine similarity between the full and (renormalized) pruned weight vectors concentrates to a pruning-rate dependent constant, with explicit convergence rates.
	This complements existing pruning theory that more often emphasizes iterative prune--train pipelines or general compression and generalization bounds~\cite{dery2024everybody,sun2023simple}.
	
	\item \textbf{Adaptive neurons.}
	We introduce the \emph{conditional perceptron} as an analytically tractable model of early exit, where a partial computation forms a confidence score and the full computation is invoked only when needed.
	In a student--teacher setting~\cite{opper1995statistical}, we prove that the excess generalization error decays as a power of the compute gap, with an exponent controlled by the alignment between partial and full computations, and diverging as this alignment tends to one.
	This provides an explicit compute--accuracy curve beyond purely policy-driven empirical tradeoffs.
	
	\item \textbf{Static networks.}
	We lift the one-shot pruning distortion to depth by coupling unpruned and renormalized-pruned computations in an infinite-width Gaussian limit, yielding a recursion that quantifies how misalignment accumulates across layers and how stability depends on the activation.
	This addresses the limited theoretical treatment of depth-accumulated distortion under one-shot pruning.
	
	\item \textbf{Adaptive networks.}
	For frozen backbones with distilled intermediate exits, we analyze early exit in an infinite-width Gaussian coupling model, characterize the joint law of intermediate and final scores, and derive an alignment-controlled compute--accuracy tradeoff as a function of exit depth through an effective correlation parameter.
\end{itemize}

Collectively, we provide a theoretical study of compute reduction in neural networks, from single neurons to deep architectures, and from static pruning to adaptive early exit. Our goal is not to provide a quantitatively exact description of every trained architecture, but to isolate analytically tractable mechanisms governing alignment loss, depth accumulation, and activation dependence. The experiments then test whether these mechanisms persist in practical pretrained models, supporting robustness of the governing trends beyond the idealized settings.

%

The rest of this paper is organized as follows: Sections~\ref{sec:static-neurons} and~\ref{sec:adaptive-neurons} analyze static pruning and adaptive early exit for a single neuron, respectively. Sections~\ref{sec:static-networks} and~\ref{sec:adaptive-networks} then lift these mechanisms to deep networks. Some numerical experiments are provided in Section~\ref{sec:numerical}. We draw our main conclusions in Section~\ref{sec:conclusions}.  The appendices provide the technical proofs, additional numerical results, and a summary of related work.


%% file: tex/static-neurons.tex
\section{Static Neurons}
\label{sec:static-neurons}

We begin with the static compute-reduction regime at the level of a single neuron, where one-shot magnitude pruning is applied once and then kept fixed. Let $\inputvec\in\mathbb{R}^n$ be an input and let $\weightvec\in\mathbb{R}^n$ be the corresponding feature-extracting weight vector. The \emph{full} computation forms the local field $\weightvec^{\top}\inputvec$. Given a pruning rate $q\in(0,1)$, one-shot magnitude pruning removes the $\lceil qn\rceil$ smallest-magnitude coordinates of $\weightvec$, producing a pruned vector $\prunedvec$.  A renormalized pruned vector $\renormprunedvec \triangleq \|\weightvec\| \prunedvec / \|\prunedvec\|$, can be obtained by rescaling $\prunedvec$ so that its norm matches that of $\weightvec$. The \emph{partial} computation can form either of the local fields $\prunedvec^{\top}\mathbf{x}$  or $\renormprunedvec^{\top}\mathbf{x}$. 

Our goal is to characterize how close the full and partial computations are in high dimensions. We show that, under standard isotropic weight models, the cosine similarity between $\mathbf{w}$ and its pruned counterparts concentrate around a deterministic limit that depends only on the pruning rate. We further provide an effective finite-$n$ deviation bound, yielding explicit asymptotic convergence rates. This concentration helps explain why one-shot pruning can preserve decision behavior, and it will serve as a building block for the adaptive early-exit analysis in Section~\ref{sec:adaptive-neurons}, and for the network-level depth-accumulation analysis in Section~\ref{sec:static-networks}. 

\newcommand{\gammashape}{s}

%
%

To state our main concentration theorem, we introduce notation related to the order statistics of Gamma random variables. Let $G_1,G_2,\ldots,G_n$ be i.i.d.\ copies of a Gamma random variable  $G \sim \mathrm{Gamma}(s,\theta)$ with shape $s$ and scale $\theta$. Denote their order statistics by $G_{(1)}\leq \cdots \leq G_{(n)}$. For a given $y\in[0,\infty)$, define the truncated Gamma random variable $G_{\leq y}$ as $G$ conditioned on $G \leq y$, where $y\in[0,\infty)$. The notation $G_{\geq y}$ is defined similarly.

\newcounter{constgammaonec}\setcounter{constgammaonec}{\value{cc}}
\newcommand{\constgammaone}{C_{\theconstgammaonec}}\addtocounter{cc}{1}

\newcounter{constgammatwoc}\setcounter{constgammatwoc}{\value{cc}}
\newcommand{\constgammatwo}{C_{\theconstgammatwoc}}\addtocounter{cc}{1}

\newcounter{constgammaepsc}\setcounter{constgammaepsc}{\value{cc}}
\newcommand{\constgammaeps}{C_{\theconstgammaepsc}}\addtocounter{cc}{1}

\newcounter{constgammanoc}\setcounter{constgammanoc}{\value{cc}}
\newcommand{\constgammano}{C_{\theconstgammanoc}}\addtocounter{cc}{1}

\newcommand{\maintheothres}{\tau}

\begin{theorem}
	\label{theorem:gamma}
	Let $G_1, G_2, \ldots, G_n$ be independent and identically distributed Gamma random variables with shape $s$ and scale $\theta$.
	Denote the corresponding order statistics by $G_{(1)} \leq \cdots \leq G_{(n)}$.
	For a given $0 < q < 1$, let
	\begin{align}
		\Xi_n \triangleq \frac{G_{(\lceil q n \rceil +1)} + \cdots + G_{(n)}}{G_1 + \cdots + G_n}.
	\end{align}
	Let $\omega_q$ be defined by $\P(G \leq \omega_q) = q$.
	Let
	\begin{align}
		\label{maintheothresdef}
		\tau_q \!\triangleq\! \biggl( 1 \!+\! \frac{q \mathbb{E}\bigl[G_{\leq \omega_q} \bigr]}{ (1-q) \mathbb{E}\bigl[G_{\geq \omega_q} \bigr]}\biggr)^{\!-1} \!\!\!\!\!=\!\! \frac{\Gamma(s\!+\!1,\frac{\omega_q}{\theta})}{\Gamma(s\!+\!1)} \!\in\! [0,1].
	\end{align}
Then, $\Xi_n \xrightarrow{p} \tau_q$ as $n\rightarrow\infty$. 	Moreover, there are constants $\constgammaone,\constgammatwo,\constgammaeps,\constgammano>0$ that depend only on $(q,s,\theta)$ such that
	for every $\epsilon\in(0,\constgammaeps]$ and every integer $n \ge \constgammano / \epsilon$, we have
	\begin{align}
		\P(\bigl|\Xi_n-\tau_q\bigr|\ge \epsilon)
		\le
		\constgammaone\,n^{-1/2}
		+
		4\exp(-\constgammatwo n\epsilon^2).
		\label{eq:gamma_clean_bound}
	\end{align}
\end{theorem}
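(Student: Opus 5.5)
Write $m=\lceil qn\rceil$ and split $\Xi_n = T_n/S_n$. The denominator is $S_n = \sum_i G_i$, with $S_n/n\to s\theta$. The numerator is the upper trimmed sum $T_n = \sum_{i>m} G_{(i)}$.

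The identity $\tau_q = \E[G\,\mathbf{1}\{G\ge\omega_q\}]/\E[G] = \Gamma(s+1,\omega_q/\theta)/\Gamma(s+1)$ is a direct computation. It uses $x\,f_G(x) = s\theta\, f_{\mathrm{Gamma}(s+1,\theta)}(x)$ together with $(1-q)\E[G_{\ge\omega_q}] = \E[G\mathbf{1}\{G\ge \omega_q\}]$ and $q\E[G_{\le \omega_q}]=\E[G\mathbf{1}\{G\le\omega_q\}]$. Convergence in probability will follow from the finite-$n$ bound, so the work is in \eqref{eq:gamma_clean_bound}.

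The key device is to compare the trimmed sum with a sum of i.i.d.\ terms at the deterministic threshold $\omega_q$. Let $U_n=\sum_i G_i\mathbf{1}\{G_i>\omega_q\}$ and $N_n=\#\{i:G_i>\omega_q\}$. If $N_n\ge n-m$, then $T_n$ equals $U_n$ minus the $N_n-(n-m)$ smallest exceedances, each at most the order statistic $G_{(m+1)}$. If $N_n<n-m$, then $T_n$ equals $U_n$ plus $(n-m)-N_n$ values, each below $\omega_q$. Either way,
\begin{align*}
|T_n-U_n|\le |N_n-(n-m)|\cdot \max\bigl(\omega_q,G_{(m+1)}\bigr).
\end{align*}
Since $N_n\sim\mathrm{Bin}(n,1-q)$ and $|n-m-n(1-q)|\le1$, Hoeffding gives $|N_n-(n-m)|\le n\epsilon'+1$ except with probability $2e^{-2n\epsilon'^2}$. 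The same binomial event, applied at the quantile level $\omega_{q+\epsilon'}$, controls $G_{(m+1)}\le \omega_{q+\epsilon'}\le 2\omega_q$ for small $\epsilon'$. The constant $\constgammaeps$ restricts $\epsilon$ so that these quantiles stay bounded and the Gamma density is bounded below near $\omega_q$. The additive $+1$ (from the ceiling) is absorbed as $1/n\le\epsilon/\constgammano$, which is why the hypothesis $n\ge\constgammano/\epsilon$ appears.

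It remains to show that $U_n/n$ and $S_n/n$ concentrate around $\E[G\mathbf{1}\{G>\omega_q\}]$ and $s\theta$, which bounds the ratio. These summands are unbounded (sub-exponential), so Hoeffding does not apply directly. I would use a Berry--Esseen bound, applied to the single linear combination $U_n - \tau_q S_n$ with zero mean. This is the step I expect to be the main obstacle, since it must deliver a Gaussian tail plus an $n^{-1/2}$ error. Note that $\{|U_n/S_n-\tau_q|\ge\epsilon\}$ is contained in $\{|U_n-\tau_q S_n|\ge \epsilon S_n\}$. Intersecting with the event $S_n\ge n s\theta/2$, whose complement has exponentially small probability by a Chernoff bound for the Gamma$(ns,\theta)$ sum, reduces matters to $\P(|\sum_i Y_i|\ge c n\epsilon)$ with $Y_i$ i.i.d., mean zero, and with finite third moment. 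Berry--Esseen bounds this by $2\bar\Phi(c\sqrt n\epsilon/\sigma)+C n^{-1/2}\le 2e^{-c' n\epsilon^2}+Cn^{-1/2}$. This explains the shape of \eqref{eq:gamma_clean_bound}.

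Finally, I would combine the pieces by a union bound with $\epsilon'$ a constant multiple of $\epsilon$. The two Hoeffding tails and the Berry--Esseen Gaussian tail merge into $4\exp(-\constgammatwo n\epsilon^2)$, after shrinking $\constgammatwo$ and absorbing the Chernoff term, which decays faster than the others for $\epsilon\le\constgammaeps$. The Berry--Esseen remainder gives $\constgammaone n^{-1/2}$. All constants depend only on $(q,s,\theta)$.
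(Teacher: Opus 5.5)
Your argument is correct, and it takes a genuinely different and more elementary route than the paper.

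\textbf{The paper's route.} The paper conditions on the random cutoff $G_{(k)}=x$ and uses the conditional independence of the lower and upper blocks of order statistics. This turns $\{\Xi_n\le y\}$ into a linear event in two independent sums of truncated variables $G_{\le x}$ and $G_{\ge x}$. It then applies Berry--Esseen for each fixed $x$ and integrates the remainder against the density of $G_{(k)}$. Making this uniform in $x$ requires several extra ingredients:
\begin{itemize}
\item two-sided bounds on truncated-Gamma variances, via log-concavity/log-convexity;
\item a Lipschitz bound on the centering function $h$;
\item a finite-$n$ proxy center $\tau_n=h(\omega_q)$;
\item a separate Hoeffding bound for atypical cutoffs $|G_{(k)}-\omega_q|\ge\delta$.
\end{itemize}

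\textbf{Your route.} You freeze the cutoff at the deterministic $\omega_q$ and pay $|N_n-(n-m)|\max(\omega_q,G_{(m+1)})$, which binomial counts alone control. The main term $\sum_i\bigl(G_i\mathbf{1}\{G_i>\omega_q\}-\tau_q G_i\bigr)$ is then an unconditional i.i.d.\ mean-zero sum, so one Berry--Esseen application suffices. The ceiling's $+1$ plays the role of the paper's $O(1/n)$ gap $|\tau_n-\tau_q|$, which is why both proofs need $n\ge C/\epsilon$.

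\textbf{What each approach buys.}
\begin{itemize}
\item Your route needs nothing Gamma-specific beyond nonnegativity, a finite third moment, and a continuous distribution function whose quantile function is continuous at $q$. So it carries over directly to non-Gaussian weight entries.
\item For the Gamma law the summands are sub-exponential. Replacing Berry--Esseen by Bernstein's inequality therefore removes the $n^{-1/2}$ term entirely, giving a purely exponential bound that is strictly stronger than the stated one.
\item The paper's conditioning gives an exact representation at the random cutoff with an explicit finite-$n$ center. For the concentration statement itself, however, it is the heavier argument.
\end{itemize}

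\textbf{Details to tighten.}
\begin{itemize}
\item The event $G_{(m+1)}\le\omega_{q+\epsilon'}$ is not ``the same'' binomial event. It needs its own one-sided Hoeffding bound for $\#\{i:G_i\le\omega_{q+\epsilon'}\}\sim\mathrm{Bin}(n,q+\epsilon')$, absorbing the offset $m+1\le nq+2$ via $n\ge C/\epsilon$. Alternatively, use a fixed level such as $\omega_{(1+q)/2}$ to get an $e^{-cn}$ term.
\item Folding your five or so exponential terms into the prefactor $4$ relies on the bound being vacuous whenever $4e^{-c\,n\epsilon^2}\ge 1$. Say so explicitly.
\item The lower bound on the density near $\omega_q$ that you mention is not needed in your route.
\end{itemize}
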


One of the main technical tools used to establish Theorem~\ref{theorem:gamma} is a conditional independence property of order statistics \cite{ahsanullah2013conditional}: The lower block $(G_{(1)},\ldots,G_{(k-1)})$ and the upper block $(G_{(k+1)},\ldots,G_{(n)})$ are conditionally independent given $G_{(k)}$, where $k = \lceil q n \rceil$.
Since the numerator of $\Xi_n$ depends only on the upper block and the denominator splits into the sum of the upper block, the lower block, and $G_{(k)}$, this conditioning decouples the two contributions.
Conditioning on $G_{(k)}$, the problem reduces to controlling sums under the induced truncated laws, enabling a CLT approximation with Berry--Esseen error control and a separate concentration estimate for the random threshold $G_{(k)}$.
The $O(n^{-1/2})$ term in \eqref{eq:gamma_clean_bound} comes from bounding the Berry--Esseen remainder uniformly over the conditioning value $G_{(k)}$. The $\exp(-\constgammatwo n\epsilon^2)$ term has two main sources: On the typical event $|G_{(k)}-\omega_q| < \epsilon$, the conditional CLT turns the deviation event $|\Xi_n-\tau_q|>\epsilon$ into a Gaussian-tail event at scale $\sqrt{n}\epsilon$, yielding the factor $\exp(-c n\epsilon^2)$.
Separately, the probability that the random threshold falls outside this typical regime, $\P(|G_{(k)}-\omega_q|>\epsilon)$, contributes an additional $\exp(-c n\epsilon^2)$ term. For applications to neural networks, the following corollary is of particular interest.

\begin{corollary}
	\label{corol:concentration}
	Suppose $\weightvec$ is uniform on $\mathbb{S}^{n-1}$.  Let $\prunedvec$ be obtained by one-shot magnitude pruning at rate $q$, and let $\renormprunedvec$ denote the renormalized pruned vector.
	As $n\rightarrow\infty$, we have $\weightvec^{\top} \renormprunedvec \rightarrow \sqrt{\tau_q}$, and $\weightvec^{\top} \prunedvec  \rightarrow \tau_q$, where $\tau_q$ is defined in \eqref{maintheothresdef} with $G\sim\mathrm{Gamma}(\tfrac{1}{2},2)$.
\end{corollary}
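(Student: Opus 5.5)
We represent the uniform vector on the sphere through Gaussians and then reduce both claims to Theorem~\ref{theorem:gamma}.

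\textbf{Gaussian representation.} Write $\weightvec = \mathbf{g}/\|\mathbf{g}\|$ with $\mathbf{g}=(g_1,\ldots,g_n)$ having i.i.d.\ $\mathcal{N}(0,1)$ entries. Magnitude pruning depends only on the ordering of $|w_i|$, which coincides with the ordering of $|g_i|$, or equivalently of $G_i \triangleq g_i^2$. Each $G_i$ is i.i.d.\ $\chi^2_1 = \mathrm{Gamma}(\tfrac12,2)$, which is why $\tau_q$ is evaluated for this law. Ties occur with probability zero, so the pruned index set is almost surely the set of indices of the $n-\lceil qn\rceil$ largest $G_i$.

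\textbf{Reduction to $\Xi_n$.} Since $\prunedvec$ keeps the surviving coordinates of $\weightvec$ and zeros the rest,
\begin{align*}
\weightvec^\top \prunedvec = \|\prunedvec\|^2 = \frac{\sum_{i\ \text{kept}} g_i^2}{\sum_{i=1}^n g_i^2} = \frac{G_{(\lceil qn\rceil+1)}+\cdots+G_{(n)}}{G_1+\cdots+G_n} = \Xi_n .
\end{align*}
Because $\|\weightvec\|=1$, we also have
\begin{align*}
\weightvec^\top\renormprunedvec = \frac{\weightvec^\top\prunedvec}{\|\prunedvec\|} = \frac{\Xi_n}{\sqrt{\Xi_n}} = \sqrt{\Xi_n}.
\end{align*}

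\textbf{Conclusion.} Theorem~\ref{theorem:gamma} gives $\Xi_n\xrightarrow{p}\tau_q$. The map $x\mapsto\sqrt{x}$ is continuous on $[0,1]$, so the continuous mapping theorem yields $\sqrt{\Xi_n}\xrightarrow{p}\sqrt{\tau_q}$. Both limits are therefore in probability, which is the sense in which the corollary's arrows should be read.

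For an explicit rate we can use $|\sqrt{a}-\sqrt{b}|\le |a-b|/\sqrt{b}$ with $b=\tau_q>0$, which holds since $0<q<1$ and the Gamma law has positive density. The bound~\eqref{eq:gamma_clean_bound} then transfers to the renormalized quantity with $\epsilon$ replaced by $\epsilon\sqrt{\tau_q}$.

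\textbf{Possible obstacles.} The only delicate points are the equality in law between the pruned mass fraction and $\Xi_n$, which rests on pruning being invariant to the random normalization $\|\mathbf{g}\|$, and the identification $g_i^2\sim\mathrm{Gamma}(\tfrac12,2)$. Both are routine, so I expect no real obstacle.
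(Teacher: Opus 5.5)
Your proof is correct and takes the same route as the paper: write the uniform vector as a normalized i.i.d.\ Gaussian vector, identify $\weightvec^{\top}\prunedvec$ with $\Xi_n$ for $\mathrm{Gamma}(\tfrac12,2)$ variables, and invoke Theorem~\ref{theorem:gamma}. The paper only says the renormalized case ``is proved similarly''; your identity $\weightvec^{\top}\renormprunedvec=\sqrt{\Xi_n}$ with continuous mapping, plus the $\epsilon\sqrt{\tau_q}$ rate transfer, supplies that step explicitly, as the paper itself does later in its $N\!:\!M$ proof.
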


\begin{figure}
	\centerline{\scalebox{0.8}{
			\includegraphics{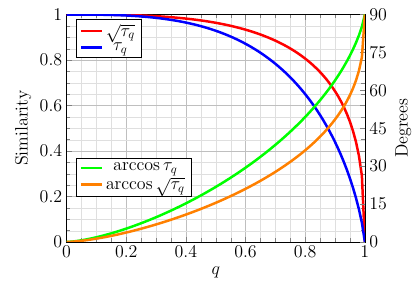}}}\vspace{-5pt}
	\caption{Asymptotic similarity between $\mathbf{w}$ and its magnitude-pruned version as a function of the pruning rate $q$.}\vspace{-6pt}
	\label{fig:concentration}
\end{figure}

Thus, one-shot magnitude pruning preserves a deterministic amount of similarity with the original neuron in high dimension: both $\prunedvec$ and its renormalized version $\renormprunedvec$ have non-vanishing inner-product similarity with $\weightvec$ as $n$ grows, and their inner products converge to $\tau_q$ and $\sqrt{\tau_q}$, respectively. Moreover, this limiting similarity is a non-linear function of the pruning rate $q$ and remains close to one even under aggressive pruning. For instance, at $q=0.4$ (keeping $60\%$ of the coordinates), we obtain $\weightvec^{\top}\renormprunedvec\approx 0.982$, corresponding to an angle of about $10.8^{\circ}$; see also Fig. \ref{fig:concentration}. In particular, one-shot pruning induces only a small directional distortion at the single-neuron level in the high-dimensional regime, offering a first-principles explanation for its strong empirical performance reported in prior work. 

The concentration framework extends to blockwise semi-structured $N\!:\!M$ pruning \cite{frantar2023sparsegpt, liu2026armor}. We derive the limiting similarities in Appendix \ref{sec:semistructured_nm} in closed form. We now turn to the adaptive regime for a single neuron.



%% file: tex/adaptive-neurons.tex
\section{Adaptive Neurons}

\label{sec:adaptive-neurons}

We now move from \emph{static} compute reduction (fixed for all inputs) to an
\emph{adaptive} regime at the single neuron level. The key idea is to first evaluate a cheap
\emph{partial} local field and use its magnitude as a confidence test. If the partial field is
confident, we exit early; otherwise, we fall back to full computation. This section formalizes
this mechanism via the conditional perceptron and studies its compute--generalization tradeoff in a
student--teacher setting.

\subsection{Conditional Perceptrons}
\label{subsec:cond-perceptrons}

\newcommand{\exitthres}{\lambda}               
\newcommand{\act}{\operatorname{sgn}}          

Consider an input $\inputvec\in\mathbb{R}^n$ and weights
$\weightvec\in\mathbb{R}^n$.
We use the same one-shot magnitude pruning and renormalization as in
Section~\ref{sec:static-neurons}. For a pruning rate $q\in(0,1)$, let $k=\lceil qn\rceil$ and let
$\mathbf{w}_{\mathrm{p}}$ be the magnitude-pruned version of $\mathbf{w}$ obtained by zeroing the
$k$ smallest-magnitude coordinates. In this section, it will be convenient to work with the renormalization $\renormprunedvec = \|\weightvec\| \prunedvec / \|\prunedvec\|$, which we call the \textit{early-exit} vector.

The full and partial local fields are given by $	v_0 \triangleq \weightvec^{\top}\mathbf{x}$ and $v_1 \triangleq \renormprunedvec^\top\mathbf{x}$, respectively. The neuron first evaluates the partial field $v_1$ and uses $|v_1|$ as a confidence signal to decide
whether to exit early. Concretely, the output is given by
\begin{align}
	\label{eq:condpercep_def_new}
	y_{\mathrm{c}}
	\triangleq
	\begin{cases}
		\act(v_1), & |v_1| \ge \exitthres,\\
		\act(v_0), & |v_1| < \exitthres,
	\end{cases}
\end{align}
where $\exitthres>0$ is the exit threshold. We refer to \eqref{eq:condpercep_def_new} as a
\emph{conditional perceptron} since the output is \emph{conditioned} on the magnitude of the partial
local field $v_1$: when $|v_1|\ge \exitthres$ we return the early-exit decision $\act(v_1)$, and
otherwise we include the remaining $k$ coordinates and fall back to the full decision $\act(v_0)$. The ordinary (unconditional) perceptron is recovered in the limit $\lambda\to\infty$.

Computing $v_1$ uses $(n-k)$ multiplications and $(n-k-1)$ additions. For $k=0$ and including one FLOP for the activation, 
an unconditional perceptron spends $\mu_{\mathrm{uc}}\triangleq 2n$ FLOPs for every input. On the other hand, with one comparison for
$|v_1|\ge \exitthres$ and one FLOP for the activation, the early-exit branch costs $2(n-k)+1$ FLOPs.
If we do not exit, we must additionally incorporate the pruned coordinates to obtain $v_0$ from $v_1$
via $v_0
	=( \|\prunedvec\|/\|\weightvec\|) v_1
	+ \sum_{j=1}^{k} w_{i_j}x_{i_j}$, 
which costs $1+2k$ more FLOPs, plus one FLOP for the final activation. Thus, the conditional perceptron spends
\begin{align}
	\label{eq:condpercep_flops_new}
	\mu_{\mathrm{c}}
	\triangleq
	\begin{cases}
		2(n-k)+1, & |v_1| \ge \exitthres,\\
		2n+2, & |v_1| < \exitthres.
	\end{cases}
\end{align}
On an ensemble of inputs for which the early-exit condition triggers frequently, the average compute is strictly below the unconditional baseline. At the same time, one expects the early-exit and full decisions to agree for most inputs, since the omitted coordinates have small magnitude.

\begin{example}
	Let $n=3$ and $\mathbf{w}=[\sqrt{21}\ \ -6\ \ 8]^T$ with $\|\mathbf{w}\|=11$. For $q=1/3$ we have
	$k=\lceil qn\rceil=1$, so  pruning yields $\prunedvec=[0\ \ -6\ \ 8]^T$ and
	$\eevec=(11/10)\mathbf{w}_{\mathrm{p}}=[0\ \ -6.6\ \ 8.8]^T$. The partial field
	is $v_1=-6.6x_2+8.8x_3$, which costs $3$ FLOPs. We check $|v_1|\ge \exitthres$ (1 FLOP) and
	apply $\act(\cdot)$ (1 FLOP), totaling $5$ FLOPs if we exit. Otherwise, we compute
	$v_0=(10/11)v_1+\sqrt{21}\,x_1$ (3 more FLOPs) and apply $\act(\cdot)$ (1 FLOP), totaling $9$ FLOPs.
\end{example}

\newcounter{constgammauncondperceptc}\setcounter{constgammauncondperceptc}{\value{cc}}
\newcommand{\constgammauncondpercept}{C_{\theconstgammauncondperceptc}}\addtocounter{cc}{1}


\subsection{Learning on the Unconditional Perceptron}
\label{sec:learningontheordinaryperceptron}
Consider a dataset of $N_t$ training vectors $\{\trainvec_1,\ldots,\trainvec_{N_t}\} \subset \mathbb{R}^n$ and a teacher $\teachvec \in \mathbb{R}^n$. The teacher assigns labels $z_i \in \{-1,+1\}$ via
$z_i \triangleq \act(\teachvec^{\top} \trainvec_i)$ for $i\in\{1,\ldots,N_t\}$.

We consider now a student $\weightvec\in\mathbb{R}^{n}$ acquired through some learning algorithm, as a function of only the input-output pairs $(\trainvec_1,z_1),\ldots,(\trainvec_{N_t},z_{N_t})$. The specific learning algorithm to obtain $\weightvec$ out of the training vectors is not vital for our purposes. For example, the student can be chosen to be the vector that  classifies the training data with the maximal margin, i.e. $\maxmarginvec \triangleq \arg\max_{\weightvec_0\in\mathbb{S}^{n-1}}\min_{i\in\{1,\ldots,{N_t}\}} z_i\weightvec_0^{\top}\trainvec_i$. The generalization error provided by any student $\weightvec$ is the probability $\epsilon_{\mathrm{uc}}(\weightvec,\teachvec) \triangleq \prob_{\inputvec}(\act(\inputvec^{\top}\weightvec) \neq \act(\inputvec^{\top}\teachvec))$ of mismatch between the student and teacher decisions when the input $\inputvec$ is assumed uniform on $\mathbb{S}^{n-1}$. By rotational
invariance, this depends only on the angle between $\weightvec$ and $\teachvec$. Defining the cosine
similarity $\mathrm{sim}(\weightvec,\teachvec)\triangleq \weightvec^{\top}\teachvec / (\|\weightvec\|\,\|\teachvec\|)$, 
we have $\epsilon_{\mathrm{uc}}(\weightvec,\teachvec) = \frac{1}{\pi}\arccos \mathrm{sim}(\weightvec,\teachvec)$. 

We are often interested in the error averaged over random datasets and teachers. For instance, if
$\trainvec_1,\ldots,\trainvec_{N_t},\teachvec\sim\mathcal{N}(\mathbf{0}_n,\mathbf{I}_n)$ are
independent, then the max-margin classifier $\maxmarginvec$ yields the average error
$\mathbb{E}[\epsilon_{\mathrm{uc}}(\maxmarginvec,\teachvec)]$. Exact evaluation is generally hard, but
in the limit $n,N_t\to\infty$ with $\alpha\triangleq N_t/n$ fixed, the asymptotic error
$\epsilon_{\mathrm{uc}}$ is well-studied. In particular, as $\alpha\to\infty$ it
obeys $\epsilon_{\mathrm{uc}} \sim \constgammauncondpercept/\alpha$ for a constant $\constgammauncondpercept>0$
\cite{sorscher2022beyond,opper1995statistical}.

\subsection{Analyzing the Conditional Perceptron}
\label{sec:analcp}
Consider the student max-margin classifier $\maxmarginvec$ induced by a dataset
$\{\trainvec_1,\ldots,\trainvec_{N_t}\}$ and a teacher $\teachvec$, as in
Section~\ref{sec:learningontheordinaryperceptron}. We use $\weightvec=\maxmarginvec$ as the main
weight vector of the conditional perceptron in Section~\ref{subsec:cond-perceptrons} with sparsity
$q$. A generalization error occurs if the
conditional output \eqref{eq:condpercep_def_new} disagrees with the teacher label
$\act(\teachvec^{\top}\inputvec)$ on an input $\inputvec$.

Averaging first over the test input and then over the randomness of the dataset and teacher, we
obtain the (average) conditional generalization error. Let $\epsilon_{\mathrm{c}}$ denote its
$n,N_t\to\infty$ limit with $\alpha= N_t/n$ fixed. Likewise, averaging 
\eqref{eq:condpercep_flops_new} (with $\weightvec=\maxmarginvec$) over the same randomness, let
$\widehat{\mu}_{\mathrm{c}}$ denote the average FLOPs of the conditional perceptron. It is also convenient to normalize by the FLOP cost of an unconditional perceptron, which was calculated as $\mu_{\mathrm{uc}}= 2n$ in Section \ref{subsec:cond-perceptrons}. This yields 
$\overline{\mu}_{\mathrm{c}} \triangleq \lim_{n\to\infty}\widehat{\mu}_{\mathrm{c}}/(2n)$.
Then, $\overline{\mu}_{\mathrm{c}}=1$ corresponds to the cost of the unconditional perceptron. The following theorem analyzes the compute-error tradeoff of this model.

\begin{theorem}
	\label{theorem:tradeoff}
	For a given computation constraint $\overline{\mu}_{\mathrm{c}} \le 1-\epsilon$, where $\epsilon>0$,
	a generalization error of $\epsilon_{\mathrm{c}} \le
\epsilon_{\mathrm{uc}}+(2\epsilon/q)^{A}$ is achievable, where
	$A = 0.5/(1-\rho^2)$ and $\rho \triangleq
	\cos(\arccos\sqrt{\tau_q}+\pi\epsilon_{\mathrm{uc}})$.
\end{theorem}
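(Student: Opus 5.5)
We reduce everything to a three-dimensional Gaussian computation over the test input. Since $\inputvec$ is uniform on $\mathbb{S}^{n-1}$ and only three directions matter, $\teachvec$, $\weightvec=\maxmarginvec$ and $\eevec$, we replace $\sqrt{n}\,\inputvec$ by a standard Gaussian vector in the limit $n\to\infty$. The projections $(u,v_0',v_1')\triangleq\sqrt n(\hat\teachvec^\top\inputvec,\weightvec^\top\inputvec,\eevec^\top\inputvec)$ then become jointly Gaussian with unit variances. The pairwise correlations are $\mathrm{sim}(\teachvec,\weightvec)=\cos(\pi\epsilon_{\mathrm{uc}})$ and $\weightvec^\top\eevec\to\sqrt{\tau_q}$. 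The second limit follows from Corollary~\ref{corol:concentration}, under the assumption that the max-margin student is asymptotically isotropic; this holds because $\maxmarginvec$ is rotationally equivariant in the teacher/data distribution, so its coordinate magnitudes behave as those of a uniform vector on the sphere. The remaining correlation $\mathrm{sim}(\teachvec,\eevec)$ is not explicitly known. We lower bound it using the triangle inequality for angles on the sphere: $\angle(\teachvec,\eevec)\le\angle(\teachvec,\weightvec)+\angle(\weightvec,\eevec)=\pi\epsilon_{\mathrm{uc}}+\arccos\sqrt{\tau_q}$. This gives $\mathrm{sim}(\teachvec,\eevec)\ge\rho$, which is exactly the $\rho$ in the statement.

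\emph{Step 1: error decomposition.} The conditional output disagrees with the teacher either on $\{|v_1|<\exitthres,\ \act(v_0)\ne\act(u)\}$ or on $\{|v_1|\ge\exitthres,\ \act(v_1)\ne\act(u)\}$. The first set is contained in the full-computation error event, so
\[
\epsilon_{\mathrm{c}}\le\epsilon_{\mathrm{uc}}+\P\bigl(|v_1'|\ge\lambda',\ \act(v_1')\ne\act(u)\bigr),
\]
where $\lambda'$ is the rescaled threshold. By the previous paragraph, the second term is at most the same probability computed with correlation exactly $\rho$, since this disagreement probability is monotone decreasing in the correlation. For a bivariate Gaussian with correlation $\rho$, write $u=\rho v+\sqrt{1-\rho^2}\,\xi$. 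The event is then $\{\xi>\rho|v|/\sqrt{1-\rho^2},\ |v|\ge\lambda'\}$ up to symmetry. A Gaussian tail bound gives that this probability is at most a constant times $\exp\bigl(-\lambda'^2/(2(1-\rho^2))\bigr)$. Equivalently, the term is at most $\bigl(\P(|v|\ge\lambda')\bigr)^{1/(1-\rho^2)}$ up to constants, because $\P(|v|\ge\lambda')\approx e^{-\lambda'^2/2}$.

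\emph{Step 2: compute constraint.} By \eqref{eq:condpercep_flops_new}, the normalized cost is
\[
\overline{\mu}_{\mathrm{c}}=1-q\,\P(|v_1'|\ge\lambda').
\]
To meet $\overline{\mu}_{\mathrm{c}}\le1-\epsilon$ we choose $\lambda'$ so that the exit probability is $p\triangleq\P(|v|\ge\lambda')=\epsilon/q$. Substituting into Step 1, the extra error is bounded by a power of $p$ with exponent governed by $1/(2(1-\rho^2))$. The constant factors and the factor $2$ are absorbed by using $2\epsilon/q$ in place of $\epsilon/q$, which yields $(2\epsilon/q)^{A}$ with $A=0.5/(1-\rho^2)$.

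\emph{Main obstacle.} The hardest step is turning the Gaussian tail estimates into a clean, non-asymptotic inequality of the form $\P(\text{extra error})\le(2p)^{A}$, valid for all $\lambda'$ rather than only as $\lambda'\to\infty$. The plan is to use a Chernoff-type bound. We bound
\[
\P\bigl(\xi>c|v|,\ |v|\ge\lambda'\bigr)\le\E\bigl[e^{-c^2v^2/2}\mathbf 1\{|v|\ge\lambda'\}\bigr],
\qquad c=\rho/\sqrt{1-\rho^2},
\]
which is proportional to a Gaussian tail at variance $1-\rho^2$. We then compare it to $p$ via the inequality $\P(|v|\ge\lambda)\ge$ an appropriate power of $\P(|\sqrt{1-\rho^2}\,v|\ge\lambda)$, adjusting the exponent or constant if needed. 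A secondary technical point is justifying the exchange of limits, namely that the finite-$n$ projections converge to the Gaussian model. For this we use the concentration in Theorem~\ref{theorem:gamma} together with the fact that the relevant quantities are bounded and continuous functions of the correlations.
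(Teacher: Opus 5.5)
Your proposal follows the paper's proof essentially step for step. The shared steps are: the split $\epsilon_{\mathrm{c}}\le\epsilon_{\mathrm{uc}}+\P(\text{exit and disagree with teacher})$; the angular triangle inequality plus monotonicity in the correlation, reducing to a bivariate Gaussian with correlation $\rho$; the bound $\mathrm{erfc}(x)\le e^{-x^2}$, giving an excess of at most $e^{-\lambda'^2/(2(1-\rho^2))}$; and the compute identity $\overline{\mu}_{\mathrm{c}}=1-q\,\P(|v|\ge\lambda')$. One small correction: concentration of $\maxmarginvec^{\top}\teachvec$ must come from self-averaging of the max-margin student, as in the paper, since Theorem~\ref{theorem:gamma} only controls $\maxmarginvec^{\top}\eevec$.

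The step you flag as the main obstacle is closed by a single elementary bound, $\P(|v|\ge\lambda')=\mathrm{erfc}(\lambda'/\sqrt{2})\ge\tfrac12 e^{-\lambda'^2}$ (the paper's Chang et al.\ inequality with $\beta=2$). With $\P(|v|\ge\lambda')=\epsilon/q$ this gives $e^{-\lambda'^2/(2(1-\rho^2))}=(e^{-\lambda'^2})^{A}\le(2\epsilon/q)^{A}$ for every $\lambda'$. This lossy comparison against $e^{-\lambda'^2}$ is what produces the factor $2$ and the exponent $A$. Your heuristic exponent $1/(1-\rho^2)$ should not be relied on: because of polynomial prefactors it is not a valid uniform bound when $\rho^2>1/2$.
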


Now, fix some ``reasonable'' sparsity rate $q$ so that $\rho$ is not too far from zero, and imagine $\epsilon$ as the only variable. Theorem \ref{theorem:tradeoff} shows that as the normalized FLOPs approach $1$, the system's performance approaches that of the unconditional perceptron at a polynomial rate, specifically $O(\epsilon^A)$ for some  $A > 1$, where $\epsilon$ denotes the compute gap. Hence, a nontrivial reduction in average compute can incur only a modest increase in error. Moreover, $A \rightarrow \infty$ as $\rho\to
1$, so the approach to full-compute performance becomes increasingly sharp in regimes where pruning
preserves strong alignment (large $\tau_q$) and the unconditional student already generalizes well (small $\epsilon_{\mathrm{uc}}$).

 Together, these bounds provide a theoretical mechanism behind a commonly observed
behavior in early-exit systems: accuracy can remain close to the full-compute baseline over a wide
range of compute budgets, and degrades noticeably only once the budget becomes too tight.
This qualitative trend is consistent with our pruning experiments in Fig.~1, where similarity remains
high across moderate sparsity values and only drops under aggressive sparsification.

Theorem~\ref{theorem:tradeoff} should be interpreted as an explicit
achievability bound and scaling law, not as a tight characterization of the
conditional perceptron. A sharper result would likely require directly
characterizing the overlap
$\langle \widetilde{\mathbf{w}}_p,\mathbf{t}\rangle$ between the
normalized pruned student and the teacher. This is difficult because
$\widetilde{\mathbf{w}}_p$ is obtained by pruning the learned student weights
and then renormalizing, so its support depends on student coordinates that are
already correlated with the teacher through the training data. Thus, one would
need to control the joint law of the student coordinates, teacher coordinates,
and the order-statistic support selected by magnitude pruning. Our proof
avoids this joint analysis by separating the teacher--student overlap from the
student--pruned-student overlap through an angular triangle inequality.


%% file: tex/static-networks.tex

\section{Static Networks}
\label{sec:static-networks}

In Sections~\ref{sec:static-neurons} and \ref{sec:adaptive-neurons}, we focused on a single neuron in
order to obtain an asymptotic characterization of neuronwise magnitude pruning and
early exit. We now turn to fully connected deep networks. The key new issue is depth: Even when
pruning induces only a mild distortion at a single layer, this distortion can compound across layers
and lead to a significant mismatch between the pruned and unpruned computations. Our goal is to
quantify this depth accumulation under the same practical pruning rule, namely neuronwise magnitude
pruning of weights without retraining.

\subsection{Pruning with Renormalization in Networks}
\label{sec:prunerenormmodel}

We consider an $L$-layer fully connected network with widths
$m_0,m_1,\ldots,m_L$, where $m_0$ is the input dimension.  For each $\ell$, the entries of the Layer $\ell$ weight matrix $\matW^{(\ell)}\in\mathbb{R}^{m_\ell\times m_{\ell-1}}$ are IID, have zero-mean, unit variance with finite fourth moments. We also consider readout weights $\mathbf{y}\in\mathbb{R}^{m_L}$ that are zero-mean, IID, and independent of the entries of $\matW^{(\ell)}$ for every $\ell$. 

For an input
$\inputvec\in\mathbb{R}^{m_0}$, the forward pass is defined by $\avec^{(0)}(\inputvec)\triangleq
\inputvec$, and $\zvec^{(\ell)}(\inputvec)\triangleq
m_{\ell-1}^{-1/2}\matW^{(\ell)}\avec^{(\ell-1)}(\inputvec),\,\ell\in\{1,\ldots,L\}$. Here, 
$\avec^{(\ell)}(\inputvec)\triangleq \phi(\zvec^{(\ell)}(\inputvec))$, and $\phi$ is some activation function. We consider a scalar output with linear readout
$\netout(\inputvec)\triangleq m_L^{-1/2}\langle \readoutvec,\avec^{(L)}(\inputvec)\rangle$.

Fix a pruning ratio $q\in(0,1)$ and let $k_\ell\triangleq \lceil q\,m_{\ell-1}\rceil$. For each layer
$\ell$ and neuron $i\in\{1,\ldots,m_\ell\}$, we prune the  weight vector
$\smash{\matW^{(\ell)}_{i,:}\in\mathbb{R}^{m_{\ell-1}}}$ by retaining only the $(1-q)$ fraction of indices
with largest magnitudes. Let $\smash{\matM^{(\ell)}_{i,:}\in\{0,1\}^{m_{\ell-1}}}$ be the resulting mask and
define the pruned weights by $\prunedW^{(\ell)}_{ij}\triangleq \matW^{(\ell)}_{ij}\matM^{(\ell)}_{ij}$. We consider a renormalized version that removes the purely multiplicative energy loss induced by
pruning. Let $\sqrt{\tau_q}\in(0,1)$ be the asymptotic cosine similarity between an isotropic Gaussian vector and its pruned counterpart as evaluated in  Corollary~\ref{corol:concentration}. We define $\renormW^{(\ell)}\triangleq \prunedW^{(\ell)}/\sqrt{\smash[b]{\tau_q}}$, and let
$\widehat{\avec}^{(\ell)}(\inputvec)$ and $\renormnetout(\inputvec)$ denote the layer-$\ell$
intermediate feature vector and the network output obtained by replacing $\matW^{(\ell)}$ with
$\renormW^{(\ell)}$ in the forward pass, respectively.

Our motivation for renormalization is to separate scale loss from directional
mismatch. Pruning changes both the norm and direction of each weight vector.
Without correction, activations are attenuated across layers, so degradation may
simply reflect shrinking signal energy. Renormalization is not needed for the
wide-limit analysis itself; it restores the original second moment and isolates
the directional mismatch caused by pruning. Under neuronwise magnitude pruning,
this is also the natural way to preserve signal scale and alignment across
layers. Appendix~\ref{app:nonrenorm_static_networks} gives the corresponding
non-renormalized recursion and comparison.

\subsection{Main Result with Applications}

\newcommand{\kernm}{\mathbf{K}}              
\newcommand{\rpkernm}{\widehat{\mathbf{K}}} 
\newcommand{\ckernm}{\mathbf{C}}            

Consider arbitrary inputs $\inputvec_1,\ldots,\inputvec_r$ to the neural network. 
For each layer $\ell\in\{0,\ldots,L\}$ and $i,j\in\{1,\ldots,r\}$, define the empirical activation kernels
$\kernm^{(\ell)}_{ij}\triangleq m_\ell^{-1}\langle \avec^{(\ell)}(\inputvec_i),\avec^{(\ell)}(\inputvec_j)\rangle$ and
$\rpkernm^{(\ell)}_{ij}\triangleq m_\ell^{-1}\langle \widehat{\avec}^{(\ell)}(\inputvec_i),\widehat{\avec}^{(\ell)}(\inputvec_j)\rangle$,
as well as the cross-kernel
$\ckernm^{(\ell)}_{ij}\triangleq m_\ell^{-1}\langle \avec^{(\ell)}(\inputvec_i),\widehat{\avec}^{(\ell)}(\inputvec_j)\rangle$.
Let $\kernm^{(\ell)},\rpkernm^{(\ell)},\ckernm^{(\ell)}\in\mathbb{R}^{r\times r}$ denote the corresponding Gram matrices.

We will use the standard Gaussian kernel operator associated with $\phi$. For any positive
semidefinite matrix $\Sigma=\begin{psmallmatrix}u & v\\ v & u'\end{psmallmatrix}$, define
$\mathcal{T}_{\phi}(\Sigma)\triangleq \E[\phi(U)\phi(V)]$ where $(U,V)\sim\mathcal{N}(0,\Sigma)$. Assume an activation $\phi$ normalized so that for $Z\sim\mathcal{N}(0,1)$,
\begin{align}
	\E[\phi(Z)]\!=\!0,
	\E[\phi(Z)^2]\!=\!1,\,
	\chi_{\phi}\triangleq \E[Z\phi(Z)]\!\in\![0,1].
	\label{eq:chi_def}
\end{align}

\begin{theorem}
	\label{theorem:deep_general_phi}
	Fix some $q\in(0,1)$ and consider the inputs and random network as described in Section~\ref{sec:prunerenormmodel}. 
	Consider the joint limit $m_1,\ldots,m_L\to\infty$ with $L$ fixed. With probability tending to one,
	the empirical kernels and cross-kernels converge pointwise to deterministic limits
	$\{K_\infty^{(\ell)}\}_{\ell=0}^L$, $\{\widehat{K}_\infty^{(\ell)}\}_{\ell=0}^L$, and $\{C_\infty^{(\ell)}\}_{\ell=0}^L$, whose entries are defined below.
	Specifically, for $i,j\in\{1,\ldots,r\}$, we have $
	K_{\infty,ij}^{(0)} = \widehat{K}_{\infty,ij}^{(0)} = C_{\infty,ij}^{(0)} = \frac{1}{m_0}\langle \inputvec_i,\inputvec_j\rangle$, 
	and for $\ell\in\{1,\ldots,L\}$, we have the recursions
	\begin{align}
		\widehat{K}_{\infty,ij}^{(\ell)} & = K_{\infty,ij}^{(\ell)}  =
		\mathcal{T}_\phi\!\Biggl(
		\begin{bmatrix}
			K_{\infty,ii}^{(\ell-1)} & K_{\infty,ij}^{(\ell-1)}\\
			K_{\infty,ji}^{(\ell-1)} & K_{\infty,jj}^{(\ell-1)}
		\end{bmatrix}
		\Biggr), \label{eq:Krec_pointwise} \\
		C_{\infty,ij}^{(\ell)} & =
		\mathcal{T}_\phi\!\Biggl(
		\begin{bmatrix}
			K_{\infty,ii}^{(\ell-1)} & \sqrt{\tau_q}\,C_{\infty,ij}^{(\ell-1)}\\
			\sqrt{\tau_q}\,C_{\infty,ji}^{(\ell-1)} & \widehat{K}_{\infty,jj}^{(\ell-1)}
		\end{bmatrix}
		\Biggr). \label{eq:Crec_pointwise}
	\end{align}

	For the outputs, we have $\mathrm{Cov}\bigl(f(\inputvec_i),f(\inputvec_j)\bigr)  \to K_{\infty,ij}^{(L)}$ and  $\mathrm{Cov}\bigl(\widehat{f}(\inputvec_i),\widehat{f}(\inputvec_j)\bigr) \to \widehat{K}_{\infty,ij}^{(L)}$. The  cross-covariance satisfies $\mathrm{Cov}\bigl(f(\inputvec_i),\widehat{f}(\inputvec_j)\bigr)\to C_{\infty,ij}^{(L)}$. Consequently, for each fixed input index $i$, the asymptotic MSE satisfies
	\begin{align}
		\mathrm{E}\bigl[(f(\inputvec_i)-\widehat{f}(\inputvec_i))^2\bigr]
		\to
		2(K_{\infty,ii}^{(L)}-C_{\infty,ii}^{(L)}).
		\label{eq:mse_general_deep}
	\end{align}

\end{theorem}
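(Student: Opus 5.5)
We proceed by induction on the layer index $\ell$, in the spirit of the standard NNGP argument where wide layers are taken to the limit one at a time. The new ingredient is a coupled pair of forward passes (unpruned and renormalized-pruned) that share the same weights.

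\emph{Layer-wise setup.} Condition on layer $\ell-1$. Suppose that, with probability tending to one, the empirical kernels $\kernm^{(\ell-1)},\rpkernm^{(\ell-1)},\ckernm^{(\ell-1)}$ converge to their deterministic limits. For a fixed neuron $i$, put $\mathbf{w}=\matW^{(\ell)}_{i,:}$ and $\widehat{\mathbf{w}}=\renormW^{(\ell)}_{i,:}$. Then the preactivations are
\[
z_i(\inputvec_a)=m_{\ell-1}^{-1/2}\langle \mathbf{w},\avec^{(\ell-1)}(\inputvec_a)\rangle,\qquad
\widehat z_i(\inputvec_b)=m_{\ell-1}^{-1/2}\langle \widehat{\mathbf{w}},\widehat\avec^{(\ell-1)}(\inputvec_b)\rangle .
\]
These are sums over $j$ of the independent pairs $(W_{ij},W_{ij}M_{ij}/\sqrt{\tau_q})$, weighted by the fixed vectors from layer $\ell-1$.

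\emph{Key claim.} Conditionally, the vector $(z_i(\inputvec_a),\widehat z_i(\inputvec_b))_{a,b}$ is asymptotically jointly Gaussian. Its covariance should be $\E[W^2]K^{(\ell-1)}_{ab}$, $\E[W^2M^2/\tau_q]\widehat K^{(\ell-1)}_{ab}$, and $\E[W^2M/\sqrt{\tau_q}]C^{(\ell-1)}_{ab}$. Two points make this work.

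\begin{enumerate}
\item The mask entries are not independent across $j$, since they are determined by order statistics. They are, however, asymptotically independent: replacing the random threshold $|W|_{(k)}$ by its deterministic limit $\omega$, with $M_{ij}\approx\mathbf{1}\{|W_{ij}|>\omega\}$, changes only $o(m_{\ell-1})$ coordinates. This is exactly the concentration supplied by Theorem~\ref{theorem:gamma} and its threshold estimate.
\item The moments evaluate correctly. Corollary~\ref{corol:concentration} identifies $\E[W^2\mathbf{1}\{|W|>\omega\}]=\tau_q$. Hence $\E[W^2M^2]/\tau_q=1$ and $\E[W^2M]/\sqrt{\tau_q}=\sqrt{\tau_q}$. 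This gives the diagonal blocks of \eqref{eq:Krec_pointwise} and the factor $\sqrt{\tau_q}$ in \eqref{eq:Crec_pointwise}.
\end{enumerate}

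For Gaussian weights this identification is exact. For general IID weights with finite fourth moment, Corollary~\ref{corol:concentration} strictly applies only to the Gaussian case. I would therefore either restrict to Gaussian weights or state the renormalization constant as $\E[W^2\mathbf{1}\{|W|>\omega\}]$.

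\emph{CLT, kernel update and output.} The conditional CLT follows from Lindeberg or a Berry--Esseen bound. Its input is that the layer-$(\ell-1)$ features have bounded empirical fourth moments, so that no single coordinate dominates; this is also carried through the induction. Distinct neurons $i$ use disjoint rows of the weight matrix and are conditionally independent. A weak law of large numbers over $i$ then gives
\[
\ckernm^{(\ell)}_{ab}=m_\ell^{-1}\sum_i\phi(z_i(\inputvec_a))\phi(\widehat z_i(\inputvec_b))\to\mathcal{T}_\phi(\cdot),
\]
and the same argument applies to $\kernm$ and $\rpkernm$. Continuity of $\mathcal{T}_\phi$ lets us replace the empirical kernels by their limits. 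This step assumes $\phi$ is continuous with polynomial growth. Because $\E[\phi(Z)^2]=1$ and the diagonal variances stay equal, $\widehat K=K$.

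The output statements follow by conditioning on the last-layer features. For example,
\[
\mathrm{Cov}(f(\inputvec_a),\widehat f(\inputvec_b))=\E[y^2]\,\ckernm^{(L)}_{ab}\to C^{(L)}_{\infty,ab},
\]
with $\E[y^2]=1$ absorbed into the normalization, together with uniform integrability to pass to the limit. The MSE formula \eqref{eq:mse_general_deep} is then $K+\widehat K-2C$ with $\widehat K=K$.

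\emph{Main obstacle.} The hardest step is the mask dependence: showing that the data-dependent top-$(1-q)$ selection inside each row can be replaced by an independent thresholding without affecting the CLT. I would attempt this through a coupling. On the event $|\,|W|_{(k)}-\omega|<\delta$, the two masks differ only on coordinates with $|W_{ij}|\in(\omega-\delta,\omega+\delta)$. There are $O(\delta m)$ such coordinates, each with bounded contribution. Their effect on $z$ is therefore $O_p(\sqrt{\delta})$ in $L^2$, conditionally on the features, and it vanishes as $\delta\to0$.
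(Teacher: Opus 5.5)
Your proposal follows essentially the same route as the paper's proof. Both argue by layerwise induction conditional on $\mathcal{F}_{\ell-1}$, replace the order-statistic mask with a deterministic-threshold surrogate so that a multivariate Lindeberg CLT applies to the coupled $2r$-vector with cross-covariance $\sqrt{\tau_q}\,C^{(\ell-1)}$, use a conditional law of large numbers over the IID neurons, and finish with a conditional covariance computation at the readout. Your explicit $O_p(\sqrt{\delta})$ coupling bound for the mask replacement and your caveat that $\tau_q$ is the correct renormalization constant only for Gaussian weights sharpen points the paper leaves informal; you should relax ``$\phi$ continuous'' to continuity outside a Gaussian-null set so that the sign example is covered.
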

Here, we provide an example application to two widely-used activation functions. Assume the inputs satisfy $\frac{1}{m_0}\|\inputvec_i\|^2=1,\, \forall i\in\{1,\ldots,r\}$. Suppose the activation is also normalized such that \eqref{eq:chi_def} holds. Then, by definition, $K_{\infty,ii}^{(\ell)}=1$ for all $\ell=0,1,\ldots,L$, and by \eqref{eq:mse_general_deep}, the asymptotic MSE is $2(1-\alpha_{L})$, where $\alpha_\ell \triangleq C_{\infty,ii}^{(\ell)}$ with $\alpha_0\triangleq 1$. In the next two examples, we adapt classical Gaussian kernel identities for $\sgn$ and (normalized) ReLU to our setting \eqref{eq:Crec_pointwise}. We provide the
derivations Appendices \ref{app:proof_deep_sign_mse} and \ref{app:proof_deep_nrelu_mse} for completeness; see also, e.g.,
\cite{williams1997computing} for the arcsine law and \cite{cho2009kernel} for the ReLU
kernel.

\begin{example}
	\label{cor:deep_sign_mse}
	Let $\phi(t)=\act(t)$. 	Then, \eqref{eq:chi_def} is satisfied and  $\alpha_\ell = \frac{2}{\pi}\arcsin(\sqrt{\tau_q}\,\alpha_{\ell-1}),\,\ell \geq 1$. 
\end{example}

\begin{example}
	\label{cor:deep_nrelu_mse}
	Let $\phi(t)=\sqrt{2}\max\{t,0\}$ be the normalized ReLU. The factor $\sqrt{2}$ keeps the self-kernel stable, so that
	$K_{\infty,ii}^{(\ell)}=1$ for all $\ell$ in the unpruned recursion \eqref{eq:Krec_pointwise}.
	Then, \eqref{eq:chi_def} holds, and for $\ell \geq 1$, we have $\alpha_\ell = \kappa_{\rm R}(\sqrt{\tau_q}\,\alpha_{\ell-1})$, where 
	$\kappa_{\rm R}(\rho) \triangleq \tfrac{1}{\pi}(\sqrt{1-\rho^2}+\rho(\pi-\arccos\rho)),
	\, \rho\in[-1,1]$.
\end{example}



Theorem~\ref{theorem:deep_general_phi} provides a Gaussian-limit description of neuronwise magnitude pruning in fully connected deep networks, without retraining. Relative to the single-neuron analysis in Sections~\ref{sec:static-neurons} and \ref{sec:adaptive-neurons}, the new phenomenon is depth: even when pruning causes only a mild mismatch at one layer, the mismatch can propagate through the nonlinearity and compound across layers.

A first consequence of the theorem is that renormalization cleanly removes the trivial scale attenuation caused by pruning, as intended. In the wide limit, the renormalized-pruned and unpruned networks share the same limiting self-kernels at every layer, since $\widehat{K}_{\infty,ij}^{(\ell)}=K_{\infty,ij}^{(\ell)}$ in \eqref{eq:Krec_pointwise}. This means that any degradation in the renormalized pruned computation is not due to a shrinking signal energy, but due to a loss of alignment between the two forward passes.\footnote{The same Gaussian-limit argument also applies without renormalization; in that case the self-kernel and cross-kernel recursions acquire extra factors of $\tau_q$, reflecting the fact that each pruned row has asymptotic squared norm and unnormalized overlap equal to $\tau_q$. We state this variant explicitly in Appendix~\ref{app:nonrenorm_static_networks}.} This loss is captured entirely by the cross-kernel recursion \eqref{eq:Crec_pointwise}. The only difference relative to the unpruned recursion is the factor $\sqrt{\tau_q}$ multiplying the cross terms. This is the same overlap parameter that appeared in the neuron-level results, where pruning reduces the cosine similarity between a weight vector and its pruned version. In deep networks, the overlap loss is injected at each layer and then transformed by the activation through $\mathcal{T}_\phi$. As a result, the output distortion is governed by the gap between the self-kernel and the cross-kernel, and the limiting MSE takes the simple form \eqref{eq:mse_general_deep}.

Under the common unit self-kernel normalization (for instance, when $\frac{1}{m_0}\|\inputvec_i\|^2=1$ and $\phi$ is variance-preserving with $\sigma_w=1$), we have $K_{\infty,ii}^{(\ell)}=1$ for all $\ell$, and \eqref{eq:mse_general_deep} reduces to an alignment deficit of the form $2(1-\alpha_L)$, where $\alpha_\ell \triangleq C_{\infty,ii}^{(\ell)}$. The two corollaries make this reduction explicit for standard activations. For $\phi(t)=\act(t)$, Corollary~\ref{cor:deep_sign_mse} yields the one-dimensional recursion
$\alpha_\ell=\frac{2}{\pi}\arcsin(\sqrt{\tau_q}\,\alpha_{\ell-1})$, which is highly sensitive near perfect alignment. For normalized ReLU, Corollary~\ref{cor:deep_nrelu_mse} yields
$\alpha_\ell=\kappa_{\rm R}(\sqrt{\tau_q}\,\alpha_{\ell-1})$ with a smoother map, implying a milder per-layer loss of alignment. In this sense, the analysis explains how the same pruning ratio can lead to qualitatively different depth accumulation depending on the nonlinearity, even after removing scale effects via renormalization. We validate these predictions numerically in Section~\ref{sec:numerical}.

%% file: tex/adaptive-networks.tex
\section{Adaptive Networks}

\label{sec:adaptive-networks}

\newcommand{\Var}{\operatorname{Var}}
\newcommand{\Cov}{\operatorname{Cov}}
\newcommand{\Corr}{\operatorname{Corr}}

Section~\ref{sec:adaptive-neurons} provides an \emph{exact} characterization of early exit at the single-neuron level in a standard high-dimensional teacher--student setting. Extending such explicit guarantees to deep networks is substantially more challenging, since both the exit and final scores are induced by high-dimensional intermediate representations that are strongly coupled across layers and shaped by training. We therefore proceed in two steps. First, we derive a deep compute--accuracy tradeoff under the assumption that the class-standardized exit and final scores are \emph{dominated} in their tails by a bivariate Gaussian, so that the tradeoff is governed by a single within-class coupling parameter. Second, to obtain \emph{exact} and explicit expressions for this coupling in a canonical regime, we specialize to the infinite-width neural network Gaussian process (NNGP) limit, where a frozen random backbone yields deterministic correlation recursions and closed-form tradeoff curves. This also captures a practically relevant \emph{frozen-backbone} setting, in which the base network is kept fixed and only an intermediate exit head is trained, mirroring common post hoc early-exit protocols for pretrained models.


\subsection{Early-Exit Model}
\label{subsec:adaptive_networks_model}
For large pretrained transformers, the most practical early-exit deployments typically avoid end-to-end retraining of
the full backbone. Updating all layers can be prohibitively expensive in compute and engineering, and may be infeasible
when the backbone is provided as a fixed, shared model. A common alternative is to keep the backbone frozen and train
only lightweight exit heads on intermediate representations, calibrating their thresholds to meet a target average
compute budget. Our analysis is tailored to this regime. 

The backbone is frozen. We attach an exit head at an
intermediate layer $\ell\!\in\!\{1,\ldots,L\!-\!1\}$, and train only this exit head.  We use the same layer
notation as in Section~\ref{sec:static-networks}. In particular, $\avec^{(\ell)}(\inputvec)$ denotes the
layer-$\ell$ feature vector produced by the frozen backbone on input $\inputvec$. Recall that the full network score is the linear readout
$f_L(\inputvec)\triangleq f(\inputvec)
	=
	m_L^{-1/2}\big\langle \readoutvec,\avec^{(L)}(\inputvec)\big\rangle$. We take the exit head at layer $\ell$ to be linear and focus on distillation to the final score. Define the distilled exit head as the best affine predictor of the final score from the layer-$\ell$ representation:
	\begin{align}
		(\boldsymbol{w}_{\ell}^{\star}, b_{\ell}^{\star})
	\!	\in\!
		\arg\min_{\boldsymbol{w},\,b}\!\ 
		\E_{\inputvec}\Bigl[\bigl(\boldsymbol{w}^{\top}\!\avec^{(\ell)}(\inputvec)\!+\!b\!-\!f_L(\inputvec)\bigr)^2\Bigr],
		\label{eq:distilldef}
	\end{align}
	and set $
	f_{\ell}(\inputvec)\triangleq (\boldsymbol{w}_{\ell}^{\star})^{\top}\avec^{(\ell)}(\inputvec)+b_{\ell}^{\star}$. 
	
	Let $Y\in\{-1,+1\}$ denote the
ground-truth label for $\inputvec$. Given a threshold $\netexitthres\ge 0$, the early-exit predictor is
\begin{align}
	\widehat{Y}_{\mathrm{c}}(\inputvec)
	\triangleq
	\begin{cases}
		\act(f_{\ell}(\inputvec)), & |f_{\ell}(\inputvec)|\ge \netexitthres,\\
		\act(f_L(\inputvec)), & |f_{\ell}(\inputvec)|< \netexitthres,
	\end{cases}
	\label{eq:deep_conditional_rule_generalization}
\end{align}
with $\epsilon_{\mathrm{c}} \triangleq \P_{\inputvec}\big(\widehat{Y}_{\mathrm{c}}(\inputvec) \neq Y\big)$. The full-compute baseline is $\widehat{Y}_{\mathrm{uc}}(\inputvec)\triangleq \act(f_L(\inputvec))$ with
$\epsilon_{\mathrm{uc}}\triangleq \P_{\inputvec}\big(\widehat{Y}_{\mathrm{uc}}(\inputvec)\neq Y\big)$. 

Although the exit head in (\ref{eq:distilldef}) is trained by least-squares
distillation, the downstream task considered here is still binary
classification: the final predictor is $\operatorname{sgn}(f_L(x))$, and
the exit predictor is $\operatorname{sgn}(f_\ell(x))$. Thus, the regression
view is used only to fit an intermediate score that approximates the final
score. The magnitude $|f_\ell(x)|$ is then a margin-like confidence statistic
for the induced binary classification decision, rather than a confidence
measure for regression accuracy itself. In practical early-exit systems, the
confidence statistic can be replaced by task-specific alternatives, such as
maximum softmax probability or top-two logit margin. The choice $|f_\ell(x)|$ is the binary-score analogue of these
criteria and keeps the analysis one-dimensional.

Let $\kappa_{\ell}\in(0,1)$ denote the normalized cost of evaluating up to layer $\ell$
(including the exit head), relative to the full-network cost $1$. The expected normalized compute is
\begin{align}
		\label{eq:deep_compute_mu}\mu_{\mathrm{c}}'(\netexitthres)
	&=
	\kappa_{\ell}\,\P_{\inputvec}\!\big(|f_{\ell}(\inputvec)|\ge\netexitthres\big)
	+
	\P_{\inputvec}\!\big(|f_{\ell}(\inputvec)|<\netexitthres\big).
\end{align}

In practice, given $\ell$ and target compute budget, we set $\lambda$ by calibration on a held-out validation set so that
the empirical exit rate matches the desired budget. This reduces tuning to a low-dimensional search over
candidate exits $\ell$, with $\lambda$ determined by quantile calibration for each candidate. In the
conditional perceptron model, this optimization admits a closed-form solution, which we report in the appendix.

\subsection{Deep compute--accuracy tradeoffs}
\label{subsec:deep_tradeoff}

Let
$Y\in\{-1,+1\}$ denote the ground-truth label. For each layer index $\ell\in\{1,\ldots,L\}$ and class
$y\in\{-1,+1\}$, define the class-conditional mean and variance $\mu_{\ell,y}\triangleq \E_{\inputvec}[f_\ell(\inputvec)\mid Y=y]$, 
	$\sigma_{\ell,y}^2\triangleq \Var_{\inputvec}(f_\ell(\inputvec)\mid Y=y)$,
and define the class-standardized score by
$\tilde f_{\ell,y}(\inputvec)
	\triangleq (f_\ell(\inputvec)-\mu_{\ell,y})/\sigma_{\ell,y}$ for $ \ell=1,\ldots,L$ and  $y\in\{-1,+1\}$.
By construction, $\E[\tilde f_{\ell,y}\mid Y=y]=0$ and $\Var(\tilde f_{\ell,y}\mid Y=y)=1$.

Our analysis uses within-class Gaussian-type \emph{upper bounds} on the joint tails of the standardized exit and final
scores. After centering and scaling within each class, we assume the exit statistic has sub-Gaussian tails, and that
the standardized final score has a linear conditional mean given the exit score, with a sub-Gaussian residual. The
scalar parameter $\rho_\ell$ is the corresponding within-class regression coefficient, equivalently the within-class
correlation between the standardized exit and final scores, and it is the sole alignment parameter that governs the
compute--accuracy exponent.

\begin{assumption}
	\label{assump:subg_coupling}
	Fix $\ell\in\{1,\ldots,L-1\}$ and a class $y\in\{-1,+1\}$. Let
	$U\triangleq \tilde f_{\ell,y}(X)$ and $V\triangleq \tilde f_{L,y}(X)$ denote the class-standardized exit and final scores,
	so that $\E[U\mid Y=y]=\E[V\mid Y=y]=0$ and $\Var(U\mid Y=y)=\Var(V\mid Y=y)=1$. Assume the following holds for both classes $y\in\{-1,+1\}$ with a common $\rho_\ell\in[-1,1]$:
	
{\bf 1.}  Sub-Gaussian exit statistic. For all $t\ge 0$, we have
		\begin{align}
			\Pr\!\big(|U|\ge t \mid Y=y\big)\le 2e^{-t^{2}/2}.
			\label{eq:subg_tail_U}
			\end{align}
{\bf 2.} Linear prediction with sub-Gaussian residual. The conditional mean is linear,
		$\E[V\mid U,Y=y]=\rho_\ell\,U$, and the residual $R\triangleq V-\rho_\ell U$ is conditionally sub-Gaussian with variance
		$1-\rho_\ell^{2}$, namely for all $\lambda\in\mathbb{R}$,
		\begin{align}
			\E\!\left[\exp\!\left(\lambda R\right)\mid U,Y\!=\! y\right]\le \exp(\tfrac{\lambda^{2}(1-\rho_\ell^{2})}{2}).
			\label{eq:subg_mgf_R}
		\end{align}
\end{assumption}
Such sub-Gaussian tail and exponential moment bounds are standard abstractions in high-dimensional probability
and statistics \cite{vershynin2018high,wainwright2019hds}. We also later provide empirical diagnostics, checking approximate linearity of $\E[V\mid U]$ after
within-class standardization and light-tailed behavior of the regression residual. 

The specific numerical constants that appear in \eqref{eq:subg_tail_U} and \eqref{eq:subg_mgf_R} are inessential for our purposes: In fact, if these inequalities are weakened by constant factors, only the prefactor in the final tradeoff changes, while the exponent remains governed by $1-\rho_\ell^2$. Informally, within each class the tails of
$(U,V)$ are no heavier than those of a centered bivariate Gaussian pair with correlation $\rho_\ell$.

Assumption~\ref{assump:subg_coupling} should therefore be read as a tail-control surrogate rather than an exact Gaussian model for every trained network. The proof of Theorem~\ref{thm:deep_tradeoff_explicit} uses it only to upper bound the exit tail and the probability that the exit and final decisions disagree among high-confidence exits. If the conditional mean is only approximately linear, or if the residual tail bound holds with different constants over the relevant range, the constants in the final bound change, but the qualitative prediction is unchanged: the excess error decays as a power of the compute gap, and the decay becomes sharper as the within-class exit-to-final alignment $\rho_\ell$ increases.


\newcommand{\deepnetexpo}{A'}

\begin{theorem}
		\label{thm:deep_tradeoff_explicit}
	For a given computation constraint $\mu_{\mathrm{c}}' \le 1-\varepsilon$, where $\varepsilon\in(0,1-\kappa_\ell)$,
	a generalization error of $\epsilon_{\mathrm{c}} \le
	\epsilon_{\mathrm{uc}}+(\frac{8\pi\varepsilon}{1-\kappa_{\ell}})^{\deepnetexpo}$ is achievable, where
	$\deepnetexpo = 0.5/(1-\rho_{\ell}^2)$.
\end{theorem}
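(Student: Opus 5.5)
The plan is to reduce the statement to a tail estimate on the event that the exit and final decisions disagree, and to pick the threshold $\netexitthres$ so that the compute constraint is exactly met. From \eqref{eq:deep_compute_mu},
\[
\mu_{\mathrm{c}}'(\netexitthres)=1-(1-\kappa_\ell)\,\P_{\inputvec}\big(|f_\ell(\inputvec)|\ge\netexitthres\big).
\]
Hence $\mu_{\mathrm{c}}'\le 1-\varepsilon$ is equivalent to the exit probability being at least $p\triangleq \varepsilon/(1-\kappa_\ell)\in(0,1)$. Since $\P(|f_\ell|\ge\netexitthres)$ is nonincreasing and right-continuous in $\netexitthres$, I will take $\netexitthres$ to be the largest threshold with exit probability at least $p$. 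If there are atoms, I will randomize the exit decision on $\{|f_\ell|=\netexitthres\}$ so the exit probability equals $p$ exactly; this is allowed because the claim is achievability. The rule \eqref{eq:deep_conditional_rule_generalization} differs from $\widehat{Y}_{\mathrm{uc}}$ only on the exit event, and there only if the two signs differ. This gives the basic decomposition
\[
\epsilon_{\mathrm{c}}-\epsilon_{\mathrm{uc}}\le \P\big(|f_\ell|\ge\netexitthres,\ \act(f_\ell)\ne\act(f_L)\big),
\]
and it remains to show the right-hand side is at most $(8\pi p)^{\deepnetexpo}$.

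The second step works conditionally on $Y=y$ and rewrites everything in the standardized variables $U,V$ of Assumption~\ref{assump:subg_coupling}. Large $|f_\ell|$ corresponds to large $|U|$ up to the class offset $\mu_{\ell,y}/\sigma_{\ell,y}$. Sign disagreement means $V$ falls on the opposite side of $-\mu_{L,y}/\sigma_{L,y}$ from where $U$ sits. Writing $V=\rho_\ell U+R$ and applying a Chernoff bound to \eqref{eq:subg_mgf_R}, on $\{|U|=u\}$ the disagreement probability is at most $\exp\big(-\rho_\ell^2u^2/(2(1-\rho_\ell^2))\big)$, up to the offsets. Combining this with \eqref{eq:subg_tail_U} through a layer-cake integral over $|U|\ge t$ uses the identity
\[
\tfrac12+\tfrac{\rho_\ell^2}{2(1-\rho_\ell^2)}=\tfrac{1}{2(1-\rho_\ell^2)}=\deepnetexpo,
\]
and produces a bound of the form $c\,e^{-\deepnetexpo t^2}=c\,(e^{-t^2/2})^{2\deepnetexpo}$. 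The exponent $2\deepnetexpo$ rather than $\deepnetexpo$ leaves room to absorb prefactors.

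The third step converts the level $t$ back into $p$, and I expect this to be the main obstacle. Assumption~\ref{assump:subg_coupling} supplies only upper bounds on tails, so $p=\P(\text{exit})\le 2e^{-t^2/2}$ holds. That inequality goes the wrong way: it does not directly give $e^{-t^2/2}\lesssim p$. My first attempt avoids needing the exact location of the exit boundary. The exit event $E$ has mass $p$, and the integrand $g(u)=\exp(-\rho_\ell^2u^2/(2(1-\rho_\ell^2)))$ is decreasing in $|u|$. Among all events of mass $p$, the expectation $\E[\mathbf 1_E\, g(U)]$ is therefore maximized when $E$ sits where $g$ is largest, so I bound it by the rearrangement/H\"older estimate
\[
\E[\mathbf 1_E\, g(U)]\le \int_0^{p}g\big(\bar u(s)\big)\,ds,
\]
where $\bar u$ is the quantile function. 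The upper tail bound \eqref{eq:subg_tail_U} controls the integrand once $|U|$ is large, and together these should give $O(p^{\deepnetexpo})$.

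If the rearrangement argument turns out too lossy, the fallback is to exploit the fact that the threshold is chosen to make exits rare. Then on $E$ the standardized score is forced into the region where \eqref{eq:subg_tail_U} is essentially tight, and only there does the conversion to a power of $p$ need to hold. In either route, the class offsets, the factor $2$ in \eqref{eq:subg_tail_U}, the averaging over $y\in\{\pm1\}$, and a Gaussian-type integral are all absorbed into the constant, which is where $8\pi$ inside $(8\pi\varepsilon/(1-\kappa_\ell))^{\deepnetexpo}$ comes from. This yields $\epsilon_{\mathrm{c}}\le\epsilon_{\mathrm{uc}}+(8\pi\varepsilon/(1-\kappa_\ell))^{\deepnetexpo}$, in parallel with the role of $(2\epsilon/q)^{A}$ in Theorem~\ref{theorem:tradeoff}.
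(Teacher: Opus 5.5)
Your decomposition of the excess error and your conditional Chernoff step are the same as the paper's first two steps. The obstacle you flag in the third step is real, but neither of your routes gets past it.

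\textbf{The tail-direction gap.} Read literally, the rearrangement bound defeats itself. Since $g$ is largest near $U=0$, the worst event of mass $p$ sits near the origin, and the bound is of order $p$. What you actually need is that the exit set is the upper tail of $|U|$. Then bounding $g(\bar u(s))$ from above requires a \emph{lower} bound on the upper quantile $\bar u(s)$, i.e.\ a lower bound on $\mathbb{P}(|U|\ge t\mid Y=y)$. But \eqref{eq:subg_tail_U} only gives $\bar u(s)\le\sqrt{2\log(2/s)}$. Your fallback (the tail being essentially tight on the exit set) is exactly such an extra hypothesis, and Assumption~\ref{assump:subg_coupling} does not imply it.

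For example, take centered classes, $U$ Rademacher, $R\sim\mathcal{N}(0,1-\rho_\ell^2)$ independent of $U$, and $V=\rho_\ell U+R$. Both parts of the assumption hold and every exit happens at $|U|=1$. Any exit rule with rate at least $p$ then has disagreement mass at least $p\,\Phi(-\rho_\ell/\sqrt{1-\rho_\ell^2})$. This is linear in $p$, so for $A'>1$ no bound of the form $(Cp)^{A'}$ on the disagreement term can follow from the assumption alone.

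The paper does not close this step either. It evaluates the exponential factor at $t_\varepsilon$, defined by $2e^{-t_\varepsilon^2/2}=r_\varepsilon$, while using the prefactor $r(\lambda_\varepsilon)=r_\varepsilon$ of the compute-matching threshold $\lambda_\varepsilon$. But $r(t_\varepsilon)\le r_\varepsilon=r(\lambda_\varepsilon)$ forces $\lambda_\varepsilon\le t_\varepsilon$ whenever $r$ is strictly decreasing. That is the wrong direction for the factor $\exp(-\rho_\ell^2\lambda_\varepsilon^2/(2(1-\rho_\ell^2)))$. The missing ingredient is a matching lower tail, e.g.\ $\mathbb{P}(|U|\ge t\mid Y=y)\ge c\,e^{-t^2/2}/(1+t)$ as for Gaussian $U$. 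With it (and centered scores), your bound $c'e^{-A't^2}$ does convert to $(Cp)^{A'}$ through the slack between $2A'$ and $A'$, as you anticipated.

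\textbf{The offset gap.} The class offsets are a second gap, and they cannot be absorbed into $8\pi$. Let $a=\mu_{\ell,y}/\sigma_{\ell,y}$, $b=\mu_{L,y}/\sigma_{L,y}$, and let $s=|U+a|$ be the standardized exit magnitude. The sign events are $\{U+a\gtrless 0\}$ and $\{V+b\gtrless 0\}$. On one of the two sides the Chernoff step only gives $\exp(-(\rho_\ell s-|b-\rho_\ell a|)_+^2/(2(1-\rho_\ell^2)))$, which is vacuous until $\rho_\ell s>|b-\rho_\ell a|$. The cross term this creates can be paid for out of the slack only with a constant depending on $(a,b)$.

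Concretely, take equiprobable classes with $(f_\ell,f_L)=y\,(a+U,\,b+V)$ given $Y=y$. Let $(U,V)$ be standard bivariate Gaussian with correlation $\rho_\ell=0.95$, let $b=10$, and let $a\approx 0.005$ be the smaller root of $a(b-a)=1-\rho_\ell$. With this choice the pooled least-squares affine fit of $f_L$ on $f_\ell$ is $f_\ell$ itself. Assumption~\ref{assump:subg_coupling} holds exactly. The final classifier is essentially perfect and the exit classifier is essentially at chance. At required exit rate $\varepsilon/(1-\kappa_\ell)=0.01$, every admissible threshold gives $\epsilon_{\mathrm{c}}-\epsilon_{\mathrm{uc}}\ge 0.0049$, whereas $(8\pi\cdot 0.01)^{A'}\approx 8\times 10^{-4}$ with $A'\approx 5.1$.

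The paper's proof handles the offsets by asserting that within-class centering does not affect the sign-disagreement event, which is false. So your argument, like the paper's, needs hypotheses beyond Assumption~\ref{assump:subg_coupling}: a lower tail bound on the exit statistic, and either control of the class offsets or a constant that depends on them.
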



Theorem~\ref{thm:deep_tradeoff_explicit}  is a deep analogue of Theorem~\ref{theorem:tradeoff}, which treated a single neuron.
A compute gap $\varepsilon$ forces an early exit on at least an $\varepsilon/(1-\kappa_\ell)$ fraction of test points in order to satisfy the compute constraint, and the resulting excess error decays as a power of this required exit fraction.
The exponent is governed by a single coupling parameter $\rho_\ell$, the within-class exit-to-final correlation, which we estimate in experiments by computing $\mathrm{Corr}(f_\ell,f_L)$ between the exit and final \emph{true-class} logits on the evaluation set.
In the neuron-level Theorem \ref{theorem:tradeoff} the exponent equals $A=0.5/(1-\rho^2)$, and in the deep-network theorem it becomes $\deepnetexpo=0.5/(1-\rho_\ell^2)$, increasing sharply as $\rho_\ell\to 1$.
The key difference is that the neuron-level $\rho$ is a deterministic function of the model parameters, whereas the deep-network $\rho_\ell$ depends on the frozen backbone and on the depth gap.

The parameter $\rho_\ell$ enters as a coupling coefficient in a Gaussian-type
upper bound on tail probabilities. Separately from this
surrogate parameter, one can define the \emph{true} within-class correlation between the distilled exit score and the
final score induced by the backbone and the distillation construction. For this purpose, for each width $m$, draw an i.i.d.\ Gaussian backbone $W^{(m)}$ with activation $\phi$ and scaling $1/\sqrt{m}$ as in Section~\ref{sec:static-networks}.
Let $f_{\ell,y}^{(m)}$ denote the (within-class) distillation exit regressor at layer $\ell$ defined analogously to \eqref{eq:distilldef}, and let $f_L^{(m)}$ denote the final score. Let  $\tilde{f}_{\ell,y}^{(m)}$ be the class-standardized version of $f_{\ell,y}^{(m)}$. 
Assume the activation $\phi$ is normalized in the sense of \eqref{eq:chi_def}.

\begin{proposition}
	\label{prop:rho_gap_law_quenched}
	 For each $y\in\{-1,+1\}$, define the within-class
	correlation between the standardized scores
	\begin{align}
		\tilde{\rho}_{\ell,y}^{(m)}
		\triangleq
		\E_{\inputvec}\!\big[\tilde{f}_{\ell,y}^{(m)}\tilde{f}_{L,y}^{(m)}\ \big|\ Y=y\big].
		\label{eq:rho_def_local_prop}
	\end{align}
	Then, for every $y$, as $m\to\infty$, we have
	$\tilde{\rho}_{\ell,y}^{(m)} \xrightarrow[]{\mathrm{in\ prob.}}\chi_\phi^{\,L-\ell}$, 
	where the convergence  is with respect to $W^{(m)}$.
\end{proposition}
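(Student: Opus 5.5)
The plan is to exploit the fact that least-squares distillation turns the correlation into an explained-variance ratio. By the normal equations for \eqref{eq:distilldef}, restricted to class $y$, the residual $f_L^{(m)}-f_{\ell,y}^{(m)}$ is uncorrelated with $f_{\ell,y}^{(m)}$. Hence $\Cov(f_{\ell,y}^{(m)},f_L^{(m)}\mid Y=y)=\Var(f_{\ell,y}^{(m)}\mid Y=y)$, and after standardization
\begin{align*}
\tilde\rho_{\ell,y}^{(m)}=\sqrt{\Var(f_{\ell,y}^{(m)}\mid Y=y)\,/\,\Var(f_L^{(m)}\mid Y=y)}.
\end{align*}
The right-hand side is the square root of the $R^2$ of the affine regression of $f_L^{(m)}$ on $\avec^{(\ell)}$. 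It therefore suffices to show that this $R^2$ converges in probability (over $W^{(m)}$) to $\chi_\phi^{2(L-\ell)}$.

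\textbf{Per-layer linear/orthogonal split.} In $L^2$ of the standard Gaussian, write $\phi(z)=\chi_\phi z+\phi_\perp(z)$, where $\E[Z\phi_\perp(Z)]=0$ and $\E[\phi_\perp(Z)^2]=1-\chi_\phi^2$ by \eqref{eq:chi_def}. Apply this split to every layer $\ell+1,\ldots,L$ and unroll the forward pass from layer $\ell$ to the readout. This gives
\begin{align*}
f_L^{(m)}(\inputvec)=\chi_\phi^{L-\ell}\,m^{-1/2}\langle \mathbf{g},\avec^{(\ell)}(\inputvec)\rangle+R^{(m)}(\inputvec),
\end{align*}
where $\mathbf{g}=m^{-(L-\ell)/2}\matW^{(\ell+1)\top}\cdots\matW^{(L)\top}\readoutvec$ is a random vector. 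The first term lies in the linear span of the layer-$\ell$ features, so it is captured exactly by the distilled head. The remainder $R^{(m)}$ collects every path that passes through at least one $\phi_\perp$.

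Using the Gram-kernel machinery of Theorem~\ref{theorem:deep_general_phi} with unit self-kernels, the variances concentrate. The within-class variance of the linear part tends to $\chi_\phi^{2(L-\ell)}$ times the layer-$\ell$ (centered) kernel mass, and the total variance of $f_L$ tends to the corresponding iterated $\mathcal{T}_\phi$ quantity. Expanding $\mathcal{T}_\phi$ in Hermite coefficients $c_k$, with $c_1=\chi_\phi$ and $\sum_k c_k^2=1$, the variance splits into a linear part and a part of Hermite degree at least $2$ in the layer-$\ell$ preactivations. After within-class centering, the normalized ratio of the linear part to the total is $\chi_\phi^{2(L-\ell)}$. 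Law-of-large-numbers concentration over the $m$ i.i.d.\ units, using the same argument as in Theorem~\ref{theorem:deep_general_phi}, upgrades these expectation statements to convergence in probability over $W^{(m)}$.

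\textbf{Main obstacle.} The hard step is showing that $R^{(m)}$ is asymptotically uncorrelated, within class, with the whole span of $\{a^{(\ell)}_i(\inputvec)\}_{i\le m}\cup\{1\}$. Without this, the regression could recover part of $R^{(m)}$ and push the $R^2$ above $\chi_\phi^{2(L-\ell)}$. I would proceed in three moves. First, condition on the backbone up to layer $\ell$. Second, use that the deeper weights are independent of it, so that in the wide limit the layer-$(\ell+1)$ preactivations are Gaussian with covariance given by the layer-$\ell$ kernel. Third, invoke Hermite orthogonality: $\phi_\perp$ of a Gaussian preactivation is orthogonal to every linear functional of that preactivation, hence to linear functionals of $\avec^{(\ell)}$.

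Concretely, I would bound the projection of $R^{(m)}$ onto the feature span by its normalized inner products. These are averages of $m$ weakly dependent, mean-zero terms, and their second moments are $O(1/m)$ by the finite-fourth-moment assumption. Summing over the $m$ directions gives a projected energy of $O(1)\cdot O(1/m)\cdot m$, which needs care. I expect to control it by noting that this projected energy equals a quadratic form in $\mathbf{g}$ whose expectation is $O(1/m)$ after using the independence of $\readoutvec$. If that estimate fails to close directly, the fallback is to control the projection through the spectral norm of the empirical feature covariance, which is bounded with high probability.
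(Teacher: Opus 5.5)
You have the same skeleton as the paper's proof, and it is sound as far as it goes. The normal equations turn $\tilde\rho_{\ell,y}^{(m)}$ into the square root of a within-class $R^2$; the paper phrases this as a projection onto the span $\mathcal{S}_\ell$ of the centered layer-$\ell$ features. The layerwise split $\phi(z)=\chi_\phi z+\phi_\perp(z)$ then gives exactly your term $\chi_\phi^{L-\ell}m^{-1/2}\langle\mathbf{g},\mathbf{a}^{(\ell)}\rangle\in\mathcal{S}_\ell$.

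\textbf{The gap.} It is the step you flag as the main obstacle, and the paper does not fill it either: it simply asserts $g_{L,y}^{(m)}=\chi_\phi^{L-\ell}U+V$ with $V\perp\mathcal{S}_\ell$. Your proposed fix uses Hermite orthogonality under the wrong measure. The identity $\mathbb{E}[Z\phi_\perp(Z)]=0$ concerns the Gaussian law of a preactivation generated by the random weights at fixed inputs. The regression instead needs orthogonality in $L^2(\mathbb{P}_{\inputvec}\mid Y=y)$ for one fixed backbone, against all $m$ directions of $\mathcal{S}_\ell$ at once. Averaging over $W^{(m)}$ only makes each inner product vanish on average. The projected energy, however, is a sum of $m$ squared quenched fluctuations, which is exactly why your count returns $O(1)$. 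Your quadratic-form bound does not escape this. For $L=\ell+1$ one has $\mathbb{E}_{\readoutvec}\|\mathrm{Proj}_{\mathcal{S}_\ell}R^{(m)}\|^2=m^{-1}\sum_i\|\mathrm{Proj}_{\mathcal{S}_\ell}\phi_\perp(z_i^{(L)})\|^2$. This is $O(1/m)$ only if each single neuron's nonlinear part is already nearly orthogonal to the whole layer-$\ell$ span, which is the same problem again.

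\textbf{The statement needs an extra hypothesis.} No such estimate follows from the network assumptions alone, so an assumption on the class-conditional input law is required. Let $\mathbb{P}_{\inputvec}(\cdot\mid Y=y)$ be uniform on distinct points $\inputvec_1,\ldots,\inputvec_r$ at which $K^{(\ell)}_\infty$ is nonsingular. By Theorem~\ref{theorem:deep_general_phi}, the centered feature Gram matrix converges to a matrix of rank $r-1$. So with probability tending to one, $\mathcal{S}_\ell$ is all of the centered $L^2(\mathbb{P}_{\inputvec}\mid Y=y)$. The distilled head then reproduces $f_L$ exactly, and $\tilde\rho_{\ell,y}^{(m)}=1\neq\chi_\phi^{L-\ell}$ whenever $\chi_\phi<1$. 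One expects the same for a fixed input dimension with a continuous law, since the span of $m$ random features becomes dense.

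\textbf{A related hidden assumption in the variance ratio.} Your variance-ratio step, like the paper's claim $\mathrm{Var}(g_{L,y}^{(m)}\mid Y=y)\to1$, also assumes something unstated. Average over the backbone with unit self-kernels, and let $\bar K^{(j)}$ be the mean of $K^{(j)}_\infty(\inputvec,\inputvec')$ over independent same-class pairs. The linear part then has within-class variance $\chi_\phi^{2(L-\ell)}(1-\bar K^{(\ell)})$, while $f_L$ has within-class variance $1-\bar K^{(L)}$. The ratio equals $\chi_\phi^{2(L-\ell)}$ only when $\bar K^{(\ell)}=\bar K^{(L)}$, for instance when both vanish. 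Concentration of these variances over $W^{(m)}$ likewise requires $\mathrm{Cov}_{\inputvec}(\mathbf{a}^{(L)}\mid Y=y)/m$ to have vanishing Frobenius norm.

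\textbf{What would complete the argument.} You would need to assume a high-dimensional, nearly orthogonal within-class input regime. In that regime you would then have to prove per-direction overlaps of order $1/m$, together with a well-conditioned feature Gram matrix as in your spectral-norm fallback. Neither your sketch nor the paper's proof supplies this.
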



This provides a closed-form characterization of the \emph{true} within-class alignment
between the distilled exit score and the final score in the wide random-backbone regime. In particular, the limit does not
depend on the class label $y$, and the distilled correlation concentrates to
$\widetilde{\rho}_{\ell,\infty}\triangleq \chi_\phi^{\,L-\ell}$. Plugging $\widetilde{\rho}_{\ell,\infty}$ into the exponent
in Theorem~\ref{thm:deep_tradeoff_explicit} yields the explicit depth-gap dependent value
$\deepnetexpo_{\infty}\triangleq 0.5/(1-\widetilde{\rho}_{\ell,\infty}^2)
=0.5/(1-\chi_\phi^{2(L-\ell)})$.
Thus, in the random-backbone limit the compute--accuracy exponent can be expressed explicitly as a function of the depth gap
$L-\ell$.

\begin{corollary}
	\label{cor:deep_tradeoff_gap}
	Assume the setting of Proposition~\ref{prop:rho_gap_law_quenched} and fix a compute gap
	$\varepsilon\in(0,1-\kappa_\ell)$.
	Then, as $m\to\infty$, with probability approaching $1$ over the draw of $W^{(m)}$,
	\begin{align}
		\epsilon_{\mathrm{c}}
		\le
		\epsilon_{\mathrm{uc}}
		+ [(8\pi\,\varepsilon)/(1-\kappa_\ell)]^{\deepnetexpo_{\infty}}.
		\label{eq:deep_tradeoff_gap}
	\end{align}
\end{corollary}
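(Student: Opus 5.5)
The corollary follows by combining Theorem~\ref{thm:deep_tradeoff_explicit} with Proposition~\ref{prop:rho_gap_law_quenched}, together with a continuity and monotonicity argument for the bound in $\rho_\ell$. For each width $m$ and each draw of $W^{(m)}$, apply Theorem~\ref{thm:deep_tradeoff_explicit} with the coupling parameter set to the true within-class correlation. Under the Gaussian-type tail surrogate of Assumption~\ref{assump:subg_coupling}, the natural choice is $\rho_\ell = \tilde\rho^{(m)}_\ell$. The two classes give $\tilde\rho^{(m)}_{\ell,+1}$ and $\tilde\rho^{(m)}_{\ell,-1}$; since the assumption requires a common $\rho_\ell$, take the smaller of the two, which I address below. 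The theorem then gives, for that draw,
\[
\epsilon_{\mathrm{c}} \le \epsilon_{\mathrm{uc}} + \Bigl(\tfrac{8\pi\varepsilon}{1-\kappa_\ell}\Bigr)^{A'_m}, \qquad A'_m = \tfrac{0.5}{1-(\tilde\rho^{(m)}_\ell)^2}.
\]

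Next, pass to the limit. Proposition~\ref{prop:rho_gap_law_quenched} gives $\tilde\rho^{(m)}_{\ell,y}\to\chi_\phi^{L-\ell}$ in probability for each $y$, so the minimum over the two classes also converges to $\chi_\phi^{L-\ell}$. The map $\rho\mapsto 0.5/(1-\rho^2)$ is continuous on $(-1,1)$. If $\chi_\phi<1$, the limit lies strictly inside this interval, and by the continuous mapping theorem $A'_m\to A'_\infty$ in probability. If $\chi_\phi=1$, then $A'_\infty=\infty$ and the same conclusion holds in the extended sense.

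Because $\varepsilon<1-\kappa_\ell$, the base $b=8\pi\varepsilon/(1-\kappa_\ell)$ is a fixed positive number, and $A\mapsto b^A$ is continuous. Hence $b^{A'_m}\to b^{A'_\infty}$ in probability. This gives the bound \eqref{eq:deep_tradeoff_gap} up to an arbitrarily small slack, with probability approaching one.

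To obtain the inequality exactly rather than with slack, use monotonicity. When $b\le 1$, $b^A$ is nonincreasing in $A$, so it suffices to have $A'_m\ge A'_\infty-\delta$ and then absorb $\delta$. If $b>1$, the bound is trivial, since the excess error is at most $1$.

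The main obstacle is justifying that the assumption holds with parameter $\tilde\rho^{(m)}_\ell$ at finite $m$, or at least asymptotically. In the NNGP limit, conditional on the frozen backbone, the pair $(f_\ell,f_L)$ over the input distribution becomes jointly Gaussian within each class, with correlation $\chi_\phi^{L-\ell}$. For an exactly Gaussian pair, the linear conditional mean and the sub-Gaussian residual with variance $1-\rho^2$ hold exactly. I would therefore establish the bound for the limiting Gaussian coupling, then use the weak convergence from Theorem~\ref{theorem:deep_general_phi}/Proposition~\ref{prop:rho_gap_law_quenched}. The excess-error and compute quantities are probabilities of events whose boundaries have Gaussian measure zero, so they converge. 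The limiting bound therefore transfers, with the factor $8\pi$ providing room to absorb the vanishing discrepancies.
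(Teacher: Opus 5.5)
Your first two paragraphs are the paper's proof. You apply Theorem~\ref{thm:deep_tradeoff_explicit} for each realized backbone with $\rho_\ell=\widetilde\rho^{(m)}_\ell$, then use Proposition~\ref{prop:rho_gap_law_quenched} and continuity of $u\mapsto 0.5/(1-u^2)$ to get $A'_m\to A'_\infty$ in probability. The paper simply takes Assumption~\ref{assump:subg_coupling}, with parameter $\widetilde\rho^{(m)}_\ell$, as holding for every draw of $W^{(m)}$.

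Your final paragraph, which tries to derive that assumption from a Gaussian limit, does not work. Theorem~\ref{theorem:deep_general_phi} gives Gaussianity over the random weights for a fixed finite set of inputs. Here, by contrast, $W^{(m)}$ is frozen and the randomness is the class-conditional input law $\P_{X\mid Y=y}$, which the paper leaves arbitrary. Proposition~\ref{prop:rho_gap_law_quenched} controls only one second-moment ratio. Neither result gives weak convergence, for fixed $W^{(m)}$, of the law of the standardized score pair under $\P_{X\mid Y=y}$ to a bivariate Gaussian. This fails in general: if $\P_{X\mid Y=y}$ is discrete, the pair is discrete for every $W^{(m)}$. The fix is to state Assumption~\ref{assump:subg_coupling}, with the true correlation as parameter, as an explicit hypothesis; the paper's proof does this implicitly. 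That also makes your minimum over classes moot. The linear-mean clause forces $\rho_\ell=\E[UV\mid Y=y]$ in each class, so a common parameter already means the two class correlations coincide.

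Your passage from ``arbitrarily small slack'' to the exact inequality is not justified by monotonicity either. For $b<1$, any realization with $A'_m<A'_\infty$ gives $b^{A'_m}>b^{A'_\infty}$. Nothing prevents $A'_m-A'_\infty$ from being negative with probability bounded away from zero, so absorbing $\delta$ needs genuine slack in the bound. The paper's ``absorbing the $o(1)$ term into the exponent'' has the same looseness. The slack exists inside the proof of Theorem~\ref{thm:deep_tradeoff_explicit}. Its sharper bound \eqref{eq:excess_subg_power} reads $2(r_\varepsilon/2)^{2A'}$ with $r_\varepsilon=\varepsilon/(1-\kappa_\ell)\in(0,1)$, and
\[
\frac{(8\pi r_\varepsilon)^{A'_\infty}}{2(r_\varepsilon/2)^{2A'_\infty}}
=\frac12\Bigl(\frac{32\pi}{r_\varepsilon}\Bigr)^{A'_\infty}
\ge \frac12\sqrt{32\pi}>5 .
\]
Since $\varepsilon$ is fixed, $(r_\varepsilon/2)^{2(A'_m-A'_\infty)}\to 1$ in probability. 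Hence $2(r_\varepsilon/2)^{2A'_m}\le (8\pi r_\varepsilon)^{A'_\infty}$ with probability tending to one, which gives \eqref{eq:deep_tradeoff_gap} exactly for $\chi_\phi<1$.
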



When $\chi_\phi\in(0,1)$, the distilled correlation decays exponentially in the depth gap,
$\widetilde{\rho}_{\ell,\infty}=\chi_\phi^{\,L-\ell}$. As the exit is moved earlier (larger $L-\ell$),
$\widetilde{\rho}_{\ell,\infty}$ decreases and the exponent
$\deepnetexpo_{\infty}=1/(2(1-\widetilde{\rho}_{\ell,\infty}^2))$ approaches $1/2$, so the compute--accuracy curve becomes less steep.
Conversely, when $\chi_\phi$ is closer to $1$ or the exit is placed near the final layers (small $L-\ell$),
$\widetilde{\rho}_{\ell,\infty}$ remains close to $1$ and the exponent becomes large. This is analogous to the low-sparsity regime
in Theorem~\ref{theorem:tradeoff}, where the coupling is close to one and the tradeoff exponent becomes large.

%

%% file: tex/numerical.tex
\section{Numerical Results}

\label{sec:numerical}

We design experiments to directly test the theory’s qualitative predictions: distortion curves under pruning and power law scaling under compute constrained early exit. Additional validations can be found in Appendix~\ref{app:more_experiments}.

	\begin{figure}[H]
	\centering
	\scalebox{0.8}{\includegraphics[width=\linewidth]{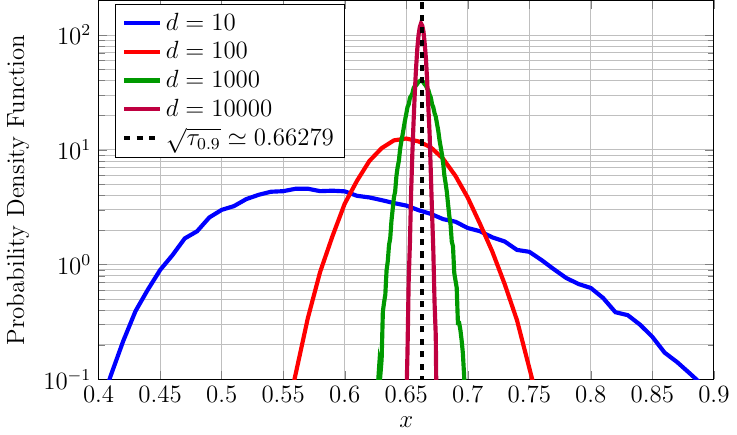}}
	\caption{Simulated PDFs for $q=0.9$.}
	\label{fig:concent_q09v1}\vspace{-10pt}
\end{figure}

{\bf Measure concentration.} Fig.~\ref{fig:concent_q09v1} verifies the concentration predicted by
Theorem~\ref{theorem:gamma} and Corollary~\ref{corol:concentration}. 
It shows simulated PDFs of $\mathbf W^\top \mathbf W_{\mathrm{ee}}$ from
$100{,}000$ trials at pruning rate $q=0.9$ and dimensions
$d\in\{10,100,1000,10000\}$. As $d$ increases, the distribution
concentrates around the theoretical limit
$\sqrt{\tau_{0.9}}\simeq 0.66279$, consistent with
Corollary~\ref{corol:concentration}.

\begin{figure*}
	\vspace{-4pt}
	\centering
	\begin{subfigure}[t]{0.32\textwidth}
		\centering
		\includegraphics[width=\linewidth]{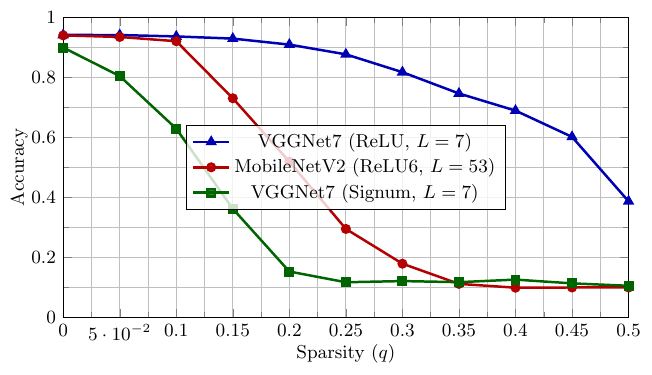}\vspace{-5pt}
		\caption{Accuracy vs sparsity $q$.}
		\label{fig:main_cifar_acc}
	\end{subfigure}
	\hfill
	\begin{subfigure}[t]{0.32\textwidth}
		\centering
		\includegraphics[width=\linewidth]{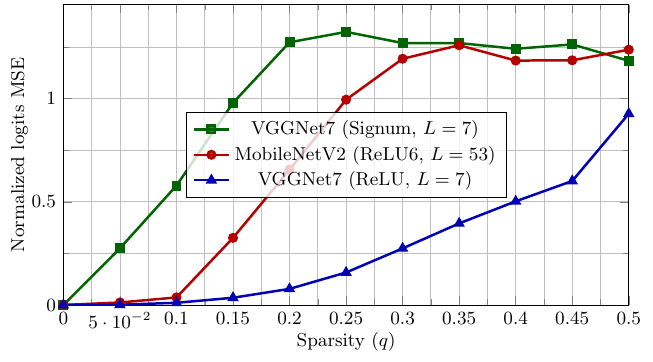}\vspace{-5pt}
		\caption{Logit NMSE vs $q$.}
		\label{fig:main_cifar_nmse}
	\end{subfigure}
	\hfill
	\begin{subfigure}[t]{0.32\textwidth}
		\centering
		\includegraphics[width=\linewidth]{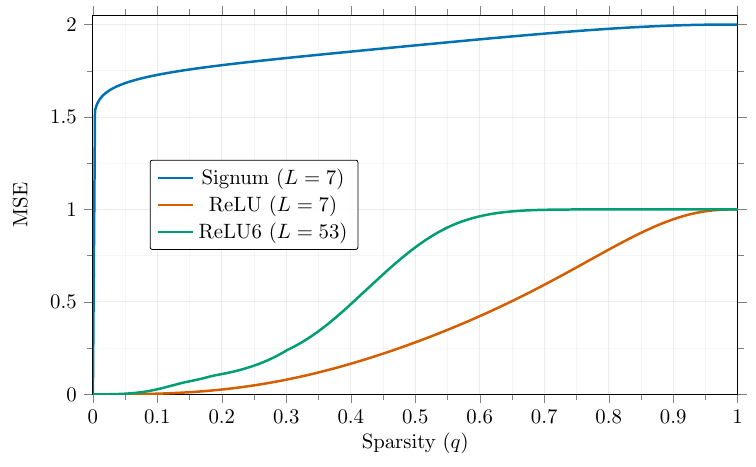}
		\vspace{-16pt}
		\caption{
			Theoretical curves from Appendix~\ref{app:cifar10_theory}.}
				\label{fig:main_cifar_anal}
	\end{subfigure}
	\vspace{-3pt}
	\caption{Numerical experiments and analytical estimates for pruning CNNs on CIFAR-10.}
	\label{fig:main_numerical}
	\vspace{-12pt}
\end{figure*}

{\bf Multi-layer pruning on CNNs.}
Figs.~\ref{fig:main_cifar_acc} and \ref{fig:main_cifar_nmse} show how unstructured magnitude pruning impacts test accuracy and a scale-free logit distortion metric. For each weight tensor, we perform magnitude pruning with sparsity $q$. We do not perform renormalization so that we are compatible with the pretrained weights. The binary signum network is highly fragile, showing rapid distortion growth and a sharp accuracy collapse even at mild sparsities.
The ReLU \texttt{VGGNet7} exhibits a broad low-distortion regime and maintains accuracy through moderate sparsity.
\texttt{MobileNetV2} (ReLU6) degrades earlier than the 7-layer ReLU model, consistent with stronger depth accumulation and saturation effects in ReLU6.
Overall, the ordering across activations matches the analytical estimate provided in Fig. \ref{fig:main_cifar_anal}. The datasets in all curves are CIFAR-10.

\begin{figure}[H]
			\centering
		\includegraphics[width=0.85\linewidth]{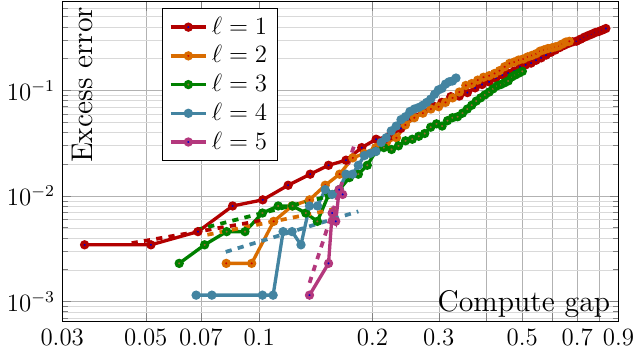}
		\caption{ Early exit on DistilBERT (SST-2).}
			\label{fig:main_xfmr_loglog}\vspace{-10pt}
	\end{figure}

{\bf Early exit on a frozen DistilBERT backbone.}
Figure~\ref{fig:main_xfmr_loglog} considers the frozen-backbone early-exit setting from Section~\ref{subsec:adaptive_networks_model}. 
We freeze a 6-block DistilBERT transformer fine-tuned for SST-2, train linear exit heads at layers $\ell \leq 5$ by distillation to the final score, and sweep the exit threshold.
On log-log axes, the excess error versus compute gap is approximately linear over a visible range, consistent with the power-law behavior in Theorem~\ref{thm:deep_tradeoff_explicit} for small compute gaps.
We also measured exit-to-final correlation $\rho_\ell=\mathrm{Corr}(f_\ell,f_L)$ as $(\rho_1,\ldots,\rho_5)=(0.446,\,0.553,\,0.669,\,0.736,\,0.977)$, 
which increases with $\ell$ as expected.

To make the comparison with Theorem~\ref{thm:deep_tradeoff_explicit} more explicit, Figure~\ref{fig:main_xfmr_loglog} now also overlays dashed line segments, in the same color as each exit curve, showing the analytical small-compute-gap log-log fits.
Specifically, for the NNGP early-exit theory, the predicted exponent at exit $\ell$ is $A'_\ell=0.5/(1-\rho_\ell^2)$. 
Using the measured correlations above gives the slopes for the dashed lines $(A'_1,\ldots,A'_5)\approx(0.62,\,0.72,\,0.91,\,1.09,\,11.00)$.
The corresponding empirical slopes fitted from the curves in Fig.~\ref{fig:main_xfmr_loglog} are
$(\widehat{A}_1,\ldots,\widehat{A}_5)\approx(1.69,\,2.28,\,2.02,\,3.46,\,11.75)$. Thus, theory and experiment show the same qualitative trend across exits: the slope increases with alignment, and both the analytical and empirical exponents single out the deepest exit as having a dramatically larger slope.

Quantifying this dependence remains challenging in the most aligned regime because the empirical excess error quickly reaches zero at finite test-set resolution, leaving few nonzero points at small compute gaps and making slope estimates unstable.
In particular, the estimate at $\ell=5$ is based on only $6$ usable nonzero points and is therefore noisier than the others.
Still, excluding $\ell=5$, the rescaled quantities $\widehat{A}_\ell(1-\rho_\ell^2)$ cluster in the range $1.1$--$1.6$, providing additional support for the predicted inverse dependence on $1-\rho_\ell^2$.
Appendix~\ref{app:assump51_diagnostic} reports additional joint statistics regarding the early-exit and final scores.

{\bf Vision transformer for CIFAR-100.}
In a complementary multi-class vision setting, Figure~\ref{fig:vit_cifar100_excess_vs_gap} considers the frozen-backbone
early-exit protocol on a ViT-Base model. The backbone has $L\!=\!12$ layers and is fine-tuned on CIFAR-100. We consider intermediate exits at
$\ell\in\{3,6,9,11\}$, and for each $\ell$ we fit a linear exit head by least-squares distillation to the final
logits (a multi-class analog of \eqref{eq:distilldef}), keeping the backbone fixed.
At inference, the exit decision is based on the exit confidence
$c_\ell(x)\triangleq \max_k \mathrm{softmax}(g_\ell(x))_k$, and we sweep the confidence threshold (via quantiles of $c_\ell$)
to obtain compute--accuracy operating points. Excess error scales approximately as a power of the compute gap, in line with Theorem~\ref{thm:deep_tradeoff_explicit}, and deeper exits exhibit sharper decay. 

Using the measured correlations
$(\rho_3,\rho_6,\rho_9,\rho_{11})=(0.246,\,0.412,\,0.589,\,0.773)$
gives the analytical exponents
$(A'_3,A'_6,A'_9,A'_{11})\approx(0.53,\,0.60,\,0.77,\,1.24)$.
A local log--log fit near the smallest compute gaps gives empirical slopes $\widehat A_3\approx1.56$, $\widehat A_6\approx3.16$, $\widehat A_9\approx8.78$,
while the estimate for $\ell=11$ is much less stable because the left edge of the curve is heavily quantized by finite test-set resolution. This larger theory--experiment gap than in the binary DistilBERT case suggests that a multiclass correction may be needed, since here the exit rule depends on the maximum softmax confidence over 100 classes rather than on a single binary score.
Further diagnostics are provided in Appendix \ref{app:vit_diagnostics}.


\begin{figure}
	\centering
	\includegraphics[width=0.8\linewidth]{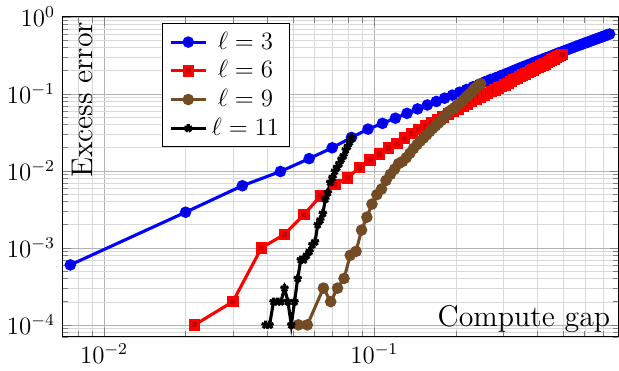}
	\caption{Early exit on a ViT-Base fine-tuned on CIFAR-100.}
	\label{fig:vit_cifar100_excess_vs_gap}
\end{figure}

%% file: tex/conclusions.tex
\section{Conclusions}
\label{sec:conclusions}

\paragraph{Scope and role of the assumptions.}
We discuss the role of different assumptions and models used throughout the paper. First,  IID weights and infinite width define a clean reference ensemble in
which pruning and early-exit quantities concentrate around deterministic
limits. These assumptions are not intended as literal models of trained
finite networks, but as idealized settings that isolate the mechanisms
studied here. Second, modeling choices such as one-shot magnitude pruning,
frozen backbones, and distilled exit heads specify concrete regimes of
practical interest. In particular, one-shot or training-free pruning is
relevant when retraining is expensive or unavailable, and frozen-backbone
early exit reflects common post hoc deployment protocols for pretrained
models. Third, the Gaussian-type score coupling in
Assumption~5.1 is a surrogate tail model rather than an exact distributional
claim. We assess this coupling through within-class diagnostics
Appendices~\ref{app:assump51_diagnostic} and \ref{app:vit_diagnostics}.

\paragraph{Summary and outlook.}We developed a tractable theory of compute reduction via one-shot magnitude pruning (static) and early exit (adaptive),
covering both single neurons and deep networks. For pruning, we proved concentration of the cosine similarity between
a weight vector and its pruned (renormalized) version around a deterministic pruning-rate limit, and showed how the
resulting misalignment propagates with depth through a correlation recursion. For early exit, we analyzed conditional
perceptrons and deep exits with frozen backbones and distilled heads, and derived compute--accuracy tradeoffs in which
the excess error decays as a power of the compute gap, with an exponent controlled by exit-to-final alignment. Our
experiments support these qualitative predictions. Future work includes extending the analysis to trained backbones and multiple exits, and comparing alternative one-shot
pruning rules under the same concentration-based framework.

%% file: tex/app/notation.tex
%
%

%% file: tex/app/gammatheorem.tex
\newcommand{\convindist}{\mathrel{\stackon[1.5pt]{$\rightarrow$}{$\scriptstyle \!d$}}}
\newcommand{\eqindist}{\mathrel{\stackon[1.5pt]{$=$}{$\scriptstyle d$}}}
\newcommand{\ratiothreshold}{y}
\newcommand{\normalizedratiothreshold}{y'}

\section{Proofs of  Results on Static Neurons}
\label{proof:theorem:gamma}
\subsection{Some auxiliary results on Gamma random variables}
\label{sec:gammaintro}
We begin by presenting some auxiliary results concerning Gamma random variables that we will need to prove the theorem. A general Gamma random variable $G \sim \mathrm{Gamma}(s,\theta)$ has PDF 
\begin{align}
	\label{gammadensity}
	f_G(x) = \tfrac{1}{\Gamma(s)\theta^{s}}x^{s-1}e^{-x/\theta},\,x \geq 0,
\end{align}
where $s$ and $\theta$ are known as the shape and the scale parameters, respectively, and  $\Gamma(\cdot)$ is the Gamma function. The lower and upper incomplete gamma functions are defined as $\gamma(s,x) \triangleq \int_0^x t^{s-1}e^{-t}\mathrm{d}t$ and $\Gamma(s,x) \triangleq \int_x^{\infty} t^{s-1}e^{-t}\mathrm{d}t$, respectively.

\newcommand{\ratiofunclower}{\underline{\mu}}
\newcommand{\ratiofuncupper}{\overline{\mu}}
\newcommand{\ratiofunc}{\mu}

%

We note the asymptotic expressions \citet[Eqs. 6.5.29 and 6.5.32]{abramowitz1948handbook}
\begin{align}
	\label{asexp1} \gamma(s,x) & \sim x^s/s,\,	x\rightarrow 0, \\
	\label{asexp2} \gamma(s,x) & \rightarrow \Gamma(s),\,	x\rightarrow \infty, \\
	\label{asexp3}  \Gamma(s,x) & \rightarrow \Gamma(s),\,x\rightarrow 0, \\
	\label{asexp4}  \Gamma(s,x) & \sim x^{s-1}e^{-x},\, x\rightarrow\infty.
\end{align}
According to (\ref{asexp1}), we have 
\begin{align}
	\label{gammaleqy}
	\mathrm{P}(G \leq y) = \frac{1}{\Gamma(\gammashape)}\gamma(\gammashape,y/\theta) \sim \frac{(y/\theta)^\gammashape}{\Gamma(1+\gammashape)} ,\,y\rightarrow 0,
\end{align}
and by (\ref{asexp4}), we obtain,
\begin{align}
	\label{gammageqy}
	\mathrm{P}(G \geq y) = \frac{1}{\Gamma(\gammashape)}\Gamma(\gammashape,y/\theta) \sim \frac{(y/\theta)^{\gammashape-1}e^{-y/\theta}}{\Gamma(\gammashape)} ,\,y\rightarrow \infty,
\end{align}

Let $G_{\leq y}$ denote a truncated Gamma random variable, obtained by conditioning $G$ on the event $G \leq y$, where $y\in[0,\infty)$. The notation $G_{\geq y}$ is defined similarly. A straightforward calculation reveals that for $a \geq 0$, we have
\begin{align}
	\mathrm{E}G_{\leq y}^a & = \frac{\int_{0}^y x^a f_G(x)\mathrm{d}x} {\int_{0}^y f_G(x)\mathrm{d}x} \\ & =  \frac{\int_{0}^y x^{a+\gammashape-1}e^{-x/\theta} \mathrm{d}x} {\int_{0}^y x^{\gammashape-1}e^{-x/\theta} \mathrm{d}x} \\ & = \frac{\int_{0}^{y/\theta} u^{a+\gammashape-1} \theta^{a+\gammashape} e^{-u} \mathrm{d}u} {\int_{0}^{y/\theta} u^{\gammashape-1}\theta^{\gammashape}e^{-u} \mathrm{d}u} \\	 \label{generallowermomentcalculation} & =  \frac{\theta^a \gamma(a+\gammashape, y/\theta)}{\gamma(\gammashape,y/\theta)},
\end{align}
and similarly,
\begin{align}
	\label{generaluppermomentcalculation}
	\mathrm{E} G_{\geq y}^a =  \frac{\theta^a \Gamma(a+\gammashape, y/\theta)}{\Gamma(\gammashape,y/\theta)}.
\end{align}
In (\ref{generallowermomentcalculation}), the first equality is by definition. To obtain the second equality, we substituted the PDF of $G$. The third equality is by a change of variables $u = x/\theta$. The final equality is by the definition of the lower incomplete Gamma function. Using (\ref{asexp1}), we can then obtain
\begin{align}
	\label{underlinemuzero}
	\mathrm{E}G_{\leq y}^a \sim \frac{\theta^a (y/\theta)^{a+\gammashape}/(a+\gammashape)}{ (y/\theta)^{\gammashape}/\gammashape} = \frac{\gammashape y^a}{a+\gammashape},\,y\rightarrow  0,
\end{align}
and using (\ref{asexp2}) yields
\begin{align}
	\label{underlinemuinf}
	\mathrm{E}G_{\leq y}^a \rightarrow \frac{\theta^a \Gamma(a+\gammashape)}{ \Gamma(\gammashape)} ,\,y\rightarrow \infty.
\end{align}

In a similar vein, using (\ref{asexp3}), we have 
\begin{align}
	\label{overlinemuzero}
	\mathrm{E}G_{\geq y}^a \rightarrow \frac{\theta^a \Gamma(a+\gammashape)}{ \Gamma(\gammashape)} ,\,y\rightarrow 0.
\end{align}
and applying (\ref{asexp4}), we obtain
\begin{align}
	\label{overlinemuinf}
	\mathrm{E}G_{\geq y}^a \sim  \frac{\theta^a  (y/\theta)^{a+\gammashape-1} e^{-y/\theta}  }{ (y/\theta)^{\gammashape-1} e^{-y/\theta} } = y^a,\,y\rightarrow\infty.
\end{align}

We begin with a useful lemma on the expected values of truncated Gamma random variables. 
\begin{lemma}
	\label{lemma:boundednessnotwo}
	The derivative of the function 
	\begin{align}
		\label{nuydefinbounded}
		y \mapsto \mathrm{E}[G_{\leq y}].
	\end{align}
	is bounded. 
\end{lemma}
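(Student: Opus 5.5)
We plan to write $\nu(y)\triangleq \mathrm{E}[G_{\leq y}]$ explicitly and differentiate it. Let $F(y)=\P(G\leq y)$. By definition,
\begin{align}
	\nu(y)=\frac{\int_0^y x f_G(x)\,\mathrm{d}x}{F(y)}.
\end{align}
The quotient rule then gives
\begin{align}
	\nu'(y)=\frac{y f_G(y)F(y)-f_G(y)\int_0^y x f_G(x)\,\mathrm{d}x}{F(y)^2}
	=\frac{f_G(y)}{F(y)}\bigl(y-\nu(y)\bigr).
\end{align}
This is nonnegative because $\nu(y)\le y$, so the task reduces to bounding the product $h(y)\triangleq \frac{f_G(y)}{F(y)}(y-\nu(y))$ on $(0,\infty)$. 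The function $h$ is continuous on $(0,\infty)$. Hence it suffices to show that $h$ has finite limits, or at least stays bounded, as $y\to 0^+$ and as $y\to\infty$. Boundedness on any compact subinterval $[a,b]\subset(0,\infty)$ is then automatic.

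\emph{Behavior at $y\to 0$.} By \eqref{gammaleqy}, $F(y)\sim (y/\theta)^s/\Gamma(1+s)$. From \eqref{gammadensity}, $f_G(y)\sim y^{s-1}/(\Gamma(s)\theta^s)$. Together these give $f_G(y)/F(y)\sim s/y$. By \eqref{underlinemuzero} with $a=1$, $\nu(y)\sim sy/(s+1)$, so $y-\nu(y)\sim y/(s+1)$. Therefore $h(y)\to s/(s+1)$, which is finite.

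\emph{Behavior at $y\to\infty$.} Here $F(y)\to1$ and $f_G(y)\to0$ exponentially fast. By \eqref{underlinemuinf}, $\nu(y)\to s\theta$, so $y-\nu(y)=O(y)$. Since $y f_G(y)\to 0$, we get $h(y)\to0$.

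Putting the two endpoint limits together with continuity of $h$ on $(0,\infty)$ shows that $h$ is bounded on $(0,\infty)$. The value at $y=0$ itself can be handled by defining $\nu(0)=0$ and noting that the one-sided derivative equals $\lim_{y\to0}\nu(y)/y=s/(s+1)$.

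The only delicate step is the small-$y$ limit when $s<1$. In that regime $f_G(y)\to\infty$, so the bound relies on the cancellation $y-\nu(y)\asymp y$ offsetting $f_G/F\asymp 1/y$. That cancellation must be extracted from the first-order asymptotics \eqref{asexp1}, since the leading terms do not cancel. Because $\nu(y)\sim sy/(s+1)$ with coefficient strictly less than $1$, first-order asymptotics are enough and no higher-order expansion is needed.
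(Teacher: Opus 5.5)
Your proposal is correct and follows the paper's proof essentially step for step: the same quotient-rule formula $\nu'(y)=\frac{f_G(y)}{F(y)}\bigl(y-\nu(y)\bigr)$, the same endpoint limits ($s/(1+s)$ as $y\to 0$ and $0$ as $y\to\infty$, from the same Gamma asymptotics), and the same continuity argument on $(0,\infty)$. Your additional remarks are valid refinements that the paper leaves implicit: nonnegativity of the derivative, the first-order cancellation $y-\nu(y)\sim y/(s+1)$ when $s<1$, and the one-sided derivative at $0$.
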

\begin{proof}
	We have
	\begin{align}
		\mathrm{E}[G_{\leq y}] = \frac{\int_0^y x f(x) dx}{\int_0^y f(x) dx}
	\end{align}
	Hence, by the fundamental theorem of calculus,
	\begin{align}
		\frac{\mathrm{d}\mathrm{E}[G_{\leq y}]}{\mathrm{d}y} & = \frac{ y f(y) P(G \leq y) -  \mathrm{E}[G_{\leq y}] P(G \leq y) f(y) }{P^2(G \leq y)} \\
		& = \frac{  f(y) }{P(G \leq y)} (y -  \mathrm{E}[G_{\leq y}]  )
	\end{align}
	As $y\rightarrow 0$, according to (\ref{gammadensity}), (\ref{gammaleqy}), and (\ref{underlinemuzero}), we obtain
	\begin{align}
		\lim_{y\rightarrow 0 }	\frac{\mathrm{d}\mathrm{E}[G_{\leq y}]}{\mathrm{d}y} = \frac{s}{1+s}
	\end{align}
	On the other hand, by (\ref{gammadensity}) and (\ref{underlinemuinf}), we have
	\begin{align}
		\lim_{y\rightarrow \infty }	\frac{\mathrm{d}\mathrm{E}[G_{\leq y}]}{\mathrm{d}y} = 0
	\end{align}
	The statement of the lemma then follows from the continuity of $y \mapsto \mathrm{E}[G_{\leq y}]$. 
\end{proof}

The following useful lemma is a standard bound on powers of linear functions.

\begin{lemma}[\citeauthor{koyuncu:j3}~\citeyearpar{koyuncu:j3}, Lemma~7]
	\label{lemma:holder}
	For any real numbers $x_1,\ldots,x_n \geq 0$, we have 
	\begin{align}
		\label{holder}
		\textstyle	(\sum_{i=1}^n x_i)^{\beta} \leq n^{\beta - 1}\sum_{i=1}^n x_i^{\beta}. 
	\end{align}
\end{lemma}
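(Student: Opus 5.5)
The statement is Lemma~\ref{lemma:holder}: $(\sum_{i=1}^n x_i)^\beta \le n^{\beta-1}\sum_{i=1}^n x_i^\beta$ for $x_i\ge 0$. No range of $\beta$ is stated, but the inequality is false for $\beta\in(0,1)$; for example $n=2$, $x=(1,0)$, $\beta=1/2$ gives $1\le 2^{-1/2}$. So I will prove it for $\beta\ge 1$, which is the regime used later (Berry--Esseen third moments, $\beta=3$).

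The approach is convexity of $t\mapsto t^\beta$ on $[0,\infty)$ for $\beta\ge 1$. By Jensen's inequality applied to the uniform average of $x_1,\ldots,x_n$,
\begin{align}
	\Bigl(\frac{1}{n}\sum_{i=1}^n x_i\Bigr)^{\beta}\le \frac{1}{n}\sum_{i=1}^n x_i^{\beta}.
\end{align}
Multiplying both sides by $n^\beta$ gives $(\sum_i x_i)^\beta\le n^{\beta-1}\sum_i x_i^\beta$, which is the claim.

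As an alternative route, Hölder's inequality with exponents $\beta$ and $\beta/(\beta-1)$ (for $\beta>1$) gives
\begin{align}
	\sum_i x_i\cdot 1\le \Bigl(\sum_i x_i^\beta\Bigr)^{1/\beta} n^{(\beta-1)/\beta},
\end{align}
and raising both sides to the power $\beta$ yields the same bound. The case $\beta=1$ is an equality.

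The only real obstacle is the one already noted: the range of $\beta$ must be restricted to $\beta\ge 1$. For $\beta\in(0,1]$ the inequality reverses, and one instead has $(\sum_i x_i)^\beta\le\sum_i x_i^\beta$ by subadditivity of concave functions that vanish at $0$.
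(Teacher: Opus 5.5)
Your proof is correct, and it is the standard argument. Jensen's inequality for the convex map $t\mapsto t^\beta$ on $[0,\infty)$ with $\beta\ge 1$ gives the claim, and H\"older with exponents $\beta$ and $\beta/(\beta-1)$ is an equivalent route; the paper gives no proof of its own and simply cites the lemma from the external reference.

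Restricting to $\beta\ge 1$ is a correct repair of the statement as written. Your counterexample $n=2$, $x=(1,0)$, $\beta=1/2$ is valid, and the restriction costs nothing downstream: the only use in the paper is $n=2$, $\beta=3$, which gives the factor $4$ in the third-moment bound in the proof of Lemma~\ref{lemma:normalizedkurtosisbound}.
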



%

Our main technical result, Theorem \ref{theorem:gamma}, shows that a certain ratio related to order statistics of Gamma random variables converges in probability to a threshold given by (\ref{maintheothresdef}). The following lemma shows that the derivatives of a more general form of the threshold function is bounded from above. 

\newcounter{constiiicnt}\setcounter{constiiicnt}{\value{cc}}\newcommand{\constiii}{C_{\theconstiiicnt}}\addtocounter{cc}{1}
\begin{lemma}
	\label{lemma:boundedderivatives}
	Let $q\in(0,1)$, $n \geq 1$ and $k = \lceil n q \rceil$. Define the function
	\begin{align}
		\label{hxdef}
		h(x) \triangleq \frac{(n-k)\mathrm{E}[G_{\geq x}] }{ k \mathrm{E}[G_{\leq x}] + (n-k)\mathrm{E}[G_{\geq x}]  }.
	\end{align}
	There is a constant $\constiii > 0$ that is independent of $n$ such that $|\frac{\mathrm{d}h}{\mathrm{d}x}| < \constiii ,\,\forall x\in\mathbb{R}$ and for all large enough $n$. 
\end{lemma}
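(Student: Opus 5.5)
Write $a(x)\triangleq \mathrm{E}[G_{\leq x}]$ and $b(x)\triangleq \mathrm{E}[G_{\geq x}]$, and set $p_n\triangleq k/n$. Dividing numerator and denominator of \eqref{hxdef} by $n$ gives
\[
h(x)=\frac{(1-p_n)\,b(x)}{p_n a(x)+(1-p_n)b(x)}.
\]
Since $k=\lceil nq\rceil$, we have $p_n\in[q,q+1/n]$. For all large $n$ (say $1/n\le (1-q)/2$) this gives $p_n\in[q,(1+q)/2]\subset(0,1)$, so both $p_n$ and $1-p_n$ lie in a fixed compact subinterval of $(0,1)$. Since $G\ge 0$, we only need $x\ge 0$; for $x<0$ the truncated expectations are not defined and can be extended by their values at $0$.

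By the quotient rule,
\[
h'(x)=\frac{p_n(1-p_n)\,\bigl(b'(x)a(x)-a'(x)b(x)\bigr)}{\bigl(p_n a(x)+(1-p_n)b(x)\bigr)^2}.
\]
The plan is to bound the numerator by a constant multiple of the squared denominator, uniformly in $x\ge0$.

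\textbf{Derivatives.} Lemma~\ref{lemma:boundednessnotwo} gives $0\le a'\le C$. For $b$, the same computation as in that lemma gives
\[
b'(x)=\frac{f(x)}{\P(G\ge x)}\bigl(b(x)-x\bigr)\ge 0.
\]
Its limits at the endpoints are:
\begin{itemize}[leftmargin=*]
\item As $x\to0$: by \eqref{overlinemuzero} and $f(x)/\P(G\ge x)\to f(0^+)$, $b'$ is finite when $s\ge1$. When $s<1$ the hazard blows up like $x^{s-1}$, so this endpoint needs the separate argument below.
\item As $x\to\infty$: by \eqref{gammageqy} the hazard tends to $1/\theta$. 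Using the refined asymptotic $\Gamma(s,x)\sim x^{s-1}e^{-x}(1+(s-1)/x+\dots)$, we get $b(x)-x\to\theta$, so $b'\to1$.
\end{itemize}
Continuity then gives $b'$ bounded on compact subsets of $(0,\infty)$ and near $\infty$.

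\textbf{Denominator.} It is at least $(1-p_n)b(x)\ge \frac{1-q}{2}\max\{x,\mathrm{E}G\}$, because $b(x)\ge x$ and $b(x)\ge \mathrm{E}G$. On the numerator side, $a(x)\le \min\{x,\mathrm{E}G\}$, $a'\le C$, and $b'$ is bounded for large $x$.

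\textbf{Large $x$.} Here $b'a$ and $a'b$ are $O(1)\cdot O(1)$ and $O(1)\cdot O(x)$, while the squared denominator is of order $x^2$. So $h'=O(1/x)$ and is bounded.

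\textbf{Small $x$.} The squared denominator is at least $((1-q)\mathrm{E}G/2)^2>0$. The term $a'b$ is bounded. For $b'a$, use $a(x)\sim sx/(1+s)$ from \eqref{underlinemuzero} and $f(x)\sim x^{s-1}/(\Gamma(s)\theta^s)$. Then $b'(x)a(x)=O(x^{s-1}\cdot x)=O(x^s)\to0$. So the product stays bounded even when $s<1$ and $b'$ alone blows up. This endpoint is the main obstacle, and it is handled by pairing $b'$ with $a$ rather than bounding $b'$ by itself.

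\textbf{Middle range.} On any compact $[\delta,M]$, all the functions involved are continuous and the denominator is bounded below, so $h'$ is bounded there.

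Finally, since $p_n(1-p_n)\le 1/4$ and every bound above depends on $n$ only through $p_n\in[q,(1+q)/2]$, the constant $\constiii$ can be taken independent of $n$ for all large enough $n$.
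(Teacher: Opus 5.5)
Your proof is correct. It uses the same basic template as the paper (quotient rule, asymptotics as $x\to0$ and $x\to\infty$, continuity in between), but organizes the estimate differently.

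\textbf{The paper's route.} It writes $h=(1+\frac{k}{n-k}\nu)^{-1}$ with $\nu=a/b$, where $a(x)=\mathrm{E}[G_{\le x}]$ and $b(x)=\mathrm{E}[G_{\ge x}]$. It then discards the factor $(1+\frac{k}{n-k}\nu)^{-2}\le 1$ to obtain $|h'|\le\frac{2q}{1-q}|\nu'|$. Finally it shows $\nu'$ is bounded using the combined closed form $\nu'=f(x)\,[x\,\mathrm{E}[G]-ab]/(b^2F(1-F))$, derived from $aF+b(1-F)=\mathrm{E}[G]$. It computes the endpoint limits explicitly: $1/(\theta(1+s))$ at $0$ and $0$ at $\infty$.

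\textbf{Your route.} You keep the full denominator $D=p_na+(1-p_n)b\ge\frac{1-q}{2}\max\{x,\mathrm{E}G\}$ and bound $ab'$ and $a'b$ separately. Since $D\ge(1-p_n)b$, this amounts to bounding the two quotient-rule pieces $a'/b$ and $ab'/b^2$ of $\nu'$ individually.

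\textbf{What each buys.} Your approach needs only crude size information rather than exact limits of a combined expression. Specifically, it uses bounded $a'$ from Lemma~\ref{lemma:boundednessnotwo}, $b(x)=x+O(1)$, and $a\sim sx/(1+s)$ near $0$. It also makes explicit why the hazard singularity $f(x)\sim x^{s-1}$ for $s<1$ is harmless: it always multiplies $a=O(x)$. This is the case $s=1/2$ that is actually used later. The paper's approach gives explicit endpoint values of $\nu'$ and isolates the $n$-dependence in the single prefactor $k/(n-k)$.

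\textbf{A minor technical point.} With your constant extension to $x<0$, $h$ has a corner at $x=0$. Its left derivative is $0$ and its right derivative is $-p_n/((1-p_n)(1+s)\theta)$. So the conclusion should be read as a bound on one-sided derivatives, i.e.\ a uniform Lipschitz bound. That is all Step-2 of the proof of Theorem~\ref{theorem:gamma} uses, and the paper's statement ``$\forall x\in\mathbb{R}$'' has the same looseness.
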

\begin{proof}
	Let $\nu(x) = \mathrm{E}G_{\leq x} / \mathrm{E}G_{\geq x}$. We can rewrite (\ref{hxdef}) as
	\begin{align}
		\label{aksjdhakjsdhas}
		h(x) = \Bigl(1+\frac{k}{n-k} \nu(x)  \Bigr)^{-1},
	\end{align}
	The derivative of (\ref{aksjdhakjsdhas}) is calculated to be
	\begin{align}
		\label{aksjdhakjsdhas22}
		\frac{\mathrm{d}h}{\mathrm{d}x} =  -\frac{k}{n-k} \Bigl(1+\frac{k}{n-k} \nu(x)  \Bigr)^{-2}\frac{\mathrm{d}\nu}{\mathrm{d}x}
	\end{align}
	It follows that
	\begin{align}
		\Bigl|\frac{\mathrm{d}h}{\mathrm{d}x}\Bigr| \leq  \frac{k}{n-k} \Bigl|\frac{\mathrm{d}\nu}{\mathrm{d}x}\Bigr| \leq \frac{2q}{1-q} \Bigl|\frac{\mathrm{d}\nu}{\mathrm{d}x}\Bigr|.
	\end{align}
	The inequality follows since $\lim_{n\rightarrow\infty} \frac{k}{n-k} = \frac{q}{1-q}$. We used twice the limit as an upper bound, which will be valid for every large enough $n$. 
	
	What is now left to show is that the derivative $\frac{\mathrm{d}\nu}{\mathrm{d}x}$ is bounded.  It is sufficient to prove that the limits  $\lim_{x\rightarrow 0} \mathrm{d}\nu / \mathrm{d}x$ and $\lim_{x\rightarrow \infty} \mathrm{d}\nu / \mathrm{d}x$ exist and they are finite, and that $\mathrm{d}\nu / \mathrm{d}x$ is continuous on $(0,\infty)$. First, we calculate the derivative. Let 
	\begin{align}
		N(x) &  \triangleq \int_0^x y f(y) dy\int_x^{\infty} f(y) dy \\ & = [\mathrm{E}[G_{\leq x}]P(G \leq x)]P(G \geq x), \\
		D(x) & \triangleq \int_x^{\infty} y f(y) dy\int_0^{x} f(y) dy \\ & = [\mathrm{E}[G_{\geq x}]P(G \geq x)]P(G \leq x).
	\end{align}
	We note the alternate representation $\nu(x)= 	\frac{N(x)}{D(x)}$.  By the fundamental theorem of calculus, we obtain
	\begin{align}
		\frac{\mathrm{d}N}{\mathrm{d }x} & = 	x f(x) \mathrm{P}(G \geq x) - \mathrm{E}[G_{\leq x}] P(G \leq x) f(x), \\
		\frac{\mathrm{d}D}{\mathrm{d }y} & = 	-x f(x) \mathrm{P}(G \leq x) + \mathrm{E}[G_{\geq x}] P(G \geq x) f(x).
	\end{align}
	Using the division rule for derivatives, after some cumbersome but straightforward calculations, we can obtain
	\begin{align}
		\frac{\mathrm{d}\nu}{\mathrm{d }x} 
		& = f(x) \frac{x  \mathrm{P}(G \geq x)\mathrm{E}[G_{\geq x}] +  	x  \mathrm{P}(G \leq x)\mathrm{E}[G_{\leq x}] - \mathrm{E}[G_{\geq x}]\mathrm{E}[G_{\leq x}] \bigl[\mathrm{P}(G \leq x) + \mathrm{P}(G \geq x)\bigr]  }{ \mathrm{E}^2[G_{\geq x}]  \mathrm{P}(G \leq x)\mathrm{P}(G \geq x)}\\ 
		\label{djashdjasd} & =  \frac{ f(x)\bigl[ x  E[G] - \mathrm{E}[G_{\leq x}] \mathrm{E}[G_{\geq x}]    \bigr]}{  \mathrm{E}^2[G_{\geq x}]  \mathrm{P}(G \leq x)\mathrm{P}(G \geq x) }.
	\end{align}
	Since all functions involved in (\ref{djashdjasd}) are continuous, $\mathrm{d}\nu/\mathrm{d }x$ is continuous except possibly at $0$ and $\infty$. Using (\ref{gammadensity}), (\ref{gammaleqy}), (\ref{underlinemuzero}), (\ref{overlinemuzero}), and noting that $E[G] = s\theta$, we obtain
	\begin{align}
		\label{alksdjalkdas}
		\lim_{x\rightarrow 0}  \frac{\mathrm{d}\nu} {\mathrm{d }x} =  \frac{1}{\theta(1+\gammashape)}
	\end{align}
	Also, substituting (\ref{gammadensity}), (\ref{gammageqy}), (\ref{underlinemuinf}), and (\ref{overlinemuinf}), to (\ref{djashdjasd}), we can show that the derivative at $\infty$ is zero. Together with (\ref{alksdjalkdas}), this shows that the derivative is bounded. This concludes the proof.
\end{proof}

\newcounter{constxicnt}\setcounter{constxicnt}{\value{cc}}\newcommand{\constxi}{C_{\theconstxicnt}}\addtocounter{cc}{1}
Let us now derive upper and lower bounds on the variances of truncated Gamma random variables.
\begin{lemma}
	\label{lemma:truncgammavariancebound}
	For every $x \geq 0$, the variances of truncated Gamma random variables follow the bounds 
	\begin{align}
		\max\{s,1\}\theta^2  \geq \mathrm{var}(G_{\geq x}) & \geq \min\{s,1\}\theta^2, \\
		\constxi \geq \mathrm{var}(G_{\leq x}),
	\end{align}
	where $\constxi$ is a constant that is independent of $x$.
\end{lemma}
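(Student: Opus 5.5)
The plan is to treat the two truncations separately. The bound on $\mathrm{var}(G_{\leq x})$ is a continuity-plus-endpoints argument of the kind used in Lemmas~\ref{lemma:boundednessnotwo} and~\ref{lemma:boundedderivatives}. The two-sided bound on $\mathrm{var}(G_{\geq x})$ needs a structural comparison argument.

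\emph{Lower truncation.} I will write $\mathrm{var}(G_{\leq x}) = \mathrm{E}G_{\leq x}^2 - (\mathrm{E}G_{\leq x})^2$ and use the closed form (\ref{generallowermomentcalculation}) with $a=1,2$; this is continuous in $x$ on $(0,\infty)$. As $x\to 0$, the support is $[0,x]$, so $\mathrm{var}(G_{\leq x}) \le x^2/4 \to 0$; one can also read this off (\ref{underlinemuzero}). As $x\to\infty$, (\ref{underlinemuinf}) gives $\mathrm{var}(G_{\leq x}) \to \theta^2\Gamma(s+2)/\Gamma(s) - s^2\theta^2 = s\theta^2$. A continuous function on $[0,\infty)$ with finite limits at both ends is bounded, which yields $\constxi$.

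\emph{Upper truncation: reduction to a shift.} I will write $G_{\geq x} = x + Y_x$, where $Y_x \ge 0$ has density proportional to $(x+y)^{s-1}e^{-y/\theta}$ on $[0,\infty)$, so that $\mathrm{var}(G_{\geq x})=\mathrm{var}(Y_x)$. The two reference values are $\mathrm{var}(Y_0)=s\theta^2$ and $\mathrm{var}(\mathrm{Exp}(\theta))=\theta^2$; the latter is the $x\to\infty$ limit, obtainable from (\ref{overlinemuinf}) refined to second order. The claimed bounds say that $\mathrm{var}(Y_x)$ always lies between these two values. I split into the cases $s\ge 1$ and $s\le 1$.

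\emph{Case $s\ge 1$.} Here the factor $g_x(y)=(x+y)^{s-1}$ is nondecreasing and log-concave, so $Y_x$ is an exponential tilted by a log-concave increasing weight.
\begin{itemize}
\item For the lower bound $\mathrm{var}(Y_x)\ge\theta^2$, I would show that $Y_x$ dominates $\mathrm{Exp}(\theta)$ in the likelihood-ratio order with a log-concave ratio. The alternative is the direct integration-by-parts identity $\mathrm{var}(Y_x) = \theta\,\mathrm{E}[Y_x] - \theta\,\mathrm{Cov}\big(Y_x,\, Y_x (\log g_x)'(Y_x)\big)$, with signs controlled by monotonicity of $y(\log g_x)'(y) = (s-1)y/(x+y)$.
\item For the upper bound $\mathrm{var}(Y_x)\le s\theta^2$, I would invoke the classical fact that lower truncation of a log-concave density does not increase the variance, together with the observation that $\mathrm{Gamma}(s,\theta)$ is log-concave for $s\ge1$.
\end{itemize}

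\emph{Case $s<1$.} I would use complete monotonicity: $(x+y)^{s-1} = \Gamma(1-s)^{-1}\int_0^\infty u^{-s}e^{-u(x+y)}\,\mathrm{d}u$. This exhibits $Y_x$ as a scale mixture of exponentials with means $\theta/(1+u\theta)\le\theta$. The law of total variance, $\mathrm{var}=\mathrm{E}[\text{mean}^2]+\mathrm{var}(\text{mean})$, then gives upper and lower comparisons. The upper bound $\theta^2$ should follow because a mixture of exponentials with means at most $\theta$ is dominated, in the sense of this variance comparison, by the pure $\mathrm{Exp}(\theta)$ obtained as $x\to\infty$. The lower bound $s\theta^2$ should follow since the mixing law becomes more concentrated on small $u$ as $x$ grows, so that $x=0$ is the extreme case.

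\emph{Main obstacle.} I expect the monotone-in-$x$ comparisons to be the hardest step, namely showing that $x\mapsto\mathrm{var}(Y_x)$ stays between its values at $x=0$ and $x=\infty$. My first attempt would be to differentiate $\mathrm{var}(G_{\geq x})$ in $x$ using the fundamental theorem of calculus, as in Lemma~\ref{lemma:boundedderivatives}, and sign the derivative via the log-concavity or log-convexity of $t^{s-1}$. If that fails, I would fall back on the proven endpoint limits plus continuity. This fallback gives only constants depending on $(s,\theta)$ rather than the sharp $\min/\max\{s,1\}\theta^2$. That weaker form would still suffice for the uniform Berry--Esseen control needed in Theorem~\ref{theorem:gamma}.
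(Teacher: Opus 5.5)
Your bound on $\mathrm{var}(G_{\leq x})$ matches the paper. So does your upper bound $\mathrm{var}(G_{\geq x})\le s\theta^2$ for $s\ge 1$, which uses the fact that lower truncation of a log-concave density does not increase variance. The genuine gap is the lower bound $\mathrm{var}(G_{\geq x})\ge s\theta^2$ for $s<1$, and your mixture heuristic does not establish it. Let $M=\theta/(1+\theta U_x)$ be the mean of the selected exponential component. Then $\mathrm{var}(Y_x)=2\,\mathrm{E}[M^2]-(\mathrm{E}M)^2$. The mixing densities, proportional to $u^{-s}e^{-ux}/(u+1/\theta)$, form an exponential family in $x$, so
\[
\frac{\mathrm{d}}{\mathrm{d}x}\,\mathrm{var}(Y_x)=-2\,\mathrm{Cov}\bigl(M(M-\mathrm{E}M),\,U_x\bigr).
\]
The map $M\mapsto M(M-\mathrm{E}M)$ is not monotone on $(0,\theta]$. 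Hence knowing that $U_x$ decreases stochastically in $x$ does not sign this derivative; in general, stochastic or likelihood-ratio ordering of laws does not order variances. Your fallback (continuity plus endpoint limits) yields only unspecified constants. That is enough for Corollary~\ref{corol:variancebound}, but it does not prove the lemma as stated.

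The missing ingredient is the log-convex counterpart of the fact you already use for $s\ge1$. For a log-convex density, which $\mathrm{Gamma}(s,\theta)$ is when $s\le 1$, the map $x\mapsto\mathrm{var}(G_{\geq x})$ is nondecreasing; this is Proposition~2 of Heckman and Honor\'e, which the paper invokes. Combine monotonicity in both regimes with the endpoint values $\mathrm{var}(G_{\geq 0})=s\theta^2$ and $\lim_{x\to\infty}\mathrm{var}(G_{\geq x})=\theta^2$, and all four inequalities follow at once. This is precisely how the paper argues, and it makes your separate arguments for the $\theta^2$ bounds unnecessary.

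A smaller slip: your integration-by-parts identity has the wrong sign. Integrate by parts against $g_x(y)e^{-y/\theta}$ with $h(y)=y$ and $h(y)=y^2$, and set $W=(s-1)Y_x/(x+Y_x)$. This gives $\mathrm{E}[Y_x]=\theta\,(1+\mathrm{E}W)$ and $\mathrm{var}(Y_x)=\theta\,\mathrm{E}[Y_x]+\theta\,\mathrm{Cov}(Y_x,W)$. With the plus sign the argument works and handles both $\theta^2$ bounds:
\begin{itemize}
\item for $s\ge 1$, $W\ge 0$ is nondecreasing in $Y_x$, so $\mathrm{var}(Y_x)\ge\theta\,\mathrm{E}[Y_x]\ge\theta^2$;
\item for $s<1$, $W\le 0$ is nonincreasing, so $\mathrm{var}(Y_x)\le\theta^2$.
\end{itemize}
Your mixture also gives the $s<1$ upper bound, via $\mathrm{E}[M^2]\le\theta\,\mathrm{E}M$. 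What neither device produces is the comparison with the endpoint $x=0$, i.e.\ the two $s\theta^2$ bounds. That is exactly where monotonicity in $x$ is needed.
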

\begin{proof}
	Assume that the shape parameter $s$ of the Gamma random variable $G$ satisfies $s\in(0,1]$. Then, $G$ is log-convex, and according to \citeauthor{heckman1990empirical}~\citeyearpar{heckman1990empirical}, Proposition~2, the function  $x \mapsto \mathrm{var}(G_{\geq x})$ is monotonically increasing. In particular, $\mathrm{var}(G_{\geq x}) \geq \mathrm{var}(G_{\geq 0}) = \mathrm{var}(G) =s\theta^2$. On the other hand, when $s\in[1,\infty)$, the density $G$ is log-concave. In this case, \citeauthor{heckman1990empirical}~\citeyearpar{heckman1990empirical}, Proposition~1
	 shows that $x \mapsto \mathrm{var}(G_{\geq x})$ is monotonically decreasing. Hence, we have $\mathrm{var}(G_{\geq x}) \geq \lim_{x\rightarrow\infty}\mathrm{var}(G_{\geq x})$. In what follows, we calculate the limit. We have
	\begin{align}
		\mathrm{var}(G_{\geq x}) & = \mathrm{E}[G_{\geq x}^2] - \mathrm{E}^2[G_{\geq x}] \\
		& = \theta^2\frac{\Gamma(2+s,\frac{x}{\theta} )\Gamma(s,\frac{x}{\theta} )-\Gamma^2(1+s,\frac{x}{\theta} ) }{\Gamma^2(s,\frac{x}{\theta} )}
	\end{align}
	We have a $\frac{0}{0}$ indeterminancy as $x\rightarrow\infty$. We can thus apply L'H\^{o}pital's rule to obtain
	\begin{align}
		\lim_{x\rightarrow\infty}	\mathrm{var}(G_{\geq x}) & =  \lim_{x\rightarrow\infty} \frac{x^2 \Gamma(s, \frac{x}{\theta}) - 2 \theta x \Gamma(1 + s, \frac{x}{\theta}) + \theta^2 \Gamma(2 + s, \frac{x}{\theta})}{2 \Gamma(s, \frac{x}{\theta})} \\
		& =	\lim_{x\rightarrow\infty} \theta^s  \frac{ \theta \Gamma(1 + s, \frac{x}{\theta})   -x \Gamma(s, \frac{x}{\theta})  }{e^{-\frac{x}{\theta}} x^{s-1}} \\
		& =	\lim_{x\rightarrow\infty} \frac{ \theta^{1+s}\Gamma(s, \frac{x}{\theta})   }{e^{-\frac{x}{\theta}} x^{-2 + s} (\theta - s \theta + x)} \\
		\label{tetakare} & = \theta^2
	\end{align}
	The second and the third equalities also follow from L'H\^{o}pital's rule. In order to obtain the final equality, we have applied (\ref{asexp4}). Note that the derivatives of the upper incomplete Gamma function can be evaluated via the formulae $\frac{\mathrm{d}\Gamma(s,x)}{\mathrm{d}x} = \frac{\mathrm{d}}{\mathrm{d}x}  \int_x^{\infty} t^{s-1}e^{-t}\mathrm{d}t = -x^{s-1}e^{-x}$, by the fundamental theorem of calculus. Therefore, for any $s$, we obtain $\mathrm{var}(G_{\geq x}) \geq  \min\{s, 1\}\theta^2$. 
	
	Since $\mathrm{var}(G_{\leq x}) =   \mathrm{E}[ G_{\leq x}^2] - [\mathrm{E} G_{\leq x}]^2 \leq \mathrm{E} [G_{\leq x}^2]$, according to (\ref{underlinemuzero}) and (\ref{underlinemuinf}), the lower conditional variance $\mathrm{var}(G_{\leq x})$ is bounded by a constant that is independent of $x$ from above. For the upper conditional variance $\mathrm{var}(G_{\geq x})$, we consider the cases of $s\in(0,1)$ and $s\in[1,\infty)$ separately. In the former scenario, the monotonically increasing nature of \( x \mapsto \mathrm{var}(G_{\geq x}) \), as established in \citeauthor{heckman1990empirical}~\citeyearpar{heckman1990empirical}, Proposition~2, in conjunction with  (\ref{tetakare}), demonstrates that \( \mathrm{var}(G_{\geq x}) \leq \theta^2 \) for every \( x \). For $s \geq 1$, we obtain \( \mathrm{var}(G_{\geq x}) \leq \mathrm{var}(G_{\geq 0}) = s\theta^2  \), according to \citeauthor{heckman1990empirical}~\citeyearpar{heckman1990empirical}, Proposition~1. Hence, for any $s$, we have $\mathrm{var}(G_{\geq x}) \leq \theta^2\max\{1,s\}$. This concludes the proof of the lemma.
\end{proof}

\newcounter{constvicnt}\setcounter{constvicnt}{\value{cc}}\newcommand{\constvi}{C_{\theconstvicnt}}\addtocounter{cc}{1}
\newcounter{constviicnt}\setcounter{constviicnt}{\value{cc}}\newcommand{\constvii}{C_{\theconstviicnt}}\addtocounter{cc}{1}
As a corollary, we obtain lower and upper bounds on a linear combination of variances truncated Gamma random variables.
\begin{corollary}
	\label{corol:variancebound}
	Let $q\in(0,1)$, $n \geq 1$ and $k = \lceil n q \rceil$. Let 
	\begin{align}
		\label{sigmasqdef}
		\sigma^2 \triangleq y^2 (k-1) \mathrm{var}(G_{\leq x}) + (1-y)^2(n-k)\mathrm{var}(G_{\geq x}).
	\end{align}
	Then, for every $\tau \in(0,1)$ and $y\in(0,1)$ with $|y - \tau| \leq \frac{1-\tau}{2}$, we have
	\begin{align}
		\constvi n \leq \sigma^2 \leq \constvii  n
	\end{align}
	for all sufficiently large $n$, where  $\constvi$ and $\constvii$ are constants.
\end{corollary}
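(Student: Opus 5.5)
The plan is to sandwich $\sigma^2$ between two expressions linear in $n$, using Lemma~\ref{lemma:truncgammavariancebound} to bound each variance term uniformly in $x$. Two things make this work. First, the hypothesis $|y-\tau|\le \frac{1-\tau}{2}$ keeps $y$ away from $1$, since $y \le \tau + \frac{1-\tau}{2} = \frac{1+\tau}{2} < 1$. Second, the counts $k-1$ and $n-k$ are both of order $n$ when $k=\lceil nq\rceil$ and $n$ is large.

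\textbf{Upper bound.} Since $y\in(0,1)$, we have $y^2\le 1$ and $(1-y)^2\le 1$. Lemma~\ref{lemma:truncgammavariancebound} gives $\mathrm{var}(G_{\le x})\le \constxi$ and $\mathrm{var}(G_{\ge x})\le \max\{s,1\}\theta^2$, both uniformly in $x\ge 0$. Together with $k-1\le n$ and $n-k\le n$, this yields
\begin{align}
\sigma^2\le \bigl(\constxi+\max\{s,1\}\theta^2\bigr)n,
\end{align}
so we may take $\constvii \triangleq \constxi+\max\{s,1\}\theta^2$.

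\textbf{Lower bound.} Both summands are nonnegative, so we drop the first one and keep only $(1-y)^2(n-k)\mathrm{var}(G_{\ge x})$. We have $1-y\ge \frac{1-\tau}{2}>0$, hence $(1-y)^2\ge \frac{(1-\tau)^2}{4}$. The lower bound of Lemma~\ref{lemma:truncgammavariancebound} gives $\mathrm{var}(G_{\ge x})\ge \min\{s,1\}\theta^2$ for all $x$. Finally, $n-k\ge n-nq-1=(1-q)n-1\ge \frac{1-q}{2}n$ once $n\ge 2/(1-q)$. Combining these three facts gives
\begin{align}
\sigma^2\ge \frac{(1-\tau)^2(1-q)\min\{s,1\}\theta^2}{8}\,n,
\end{align}
so we may take $\constvi$ equal to the prefactor above.

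\textbf{Dependence of the constants.} The constants depend only on $(q,s,\theta,\tau)$ and not on $x$, $y$ or $n$. In the intended application $\tau$ will be $\tau_q$, so they depend only on $(q,s,\theta)$. There is no real obstacle here. The only points needing care are the uniformity in $x$, which the lemma already supplies, and the positivity of $n-k$ for large $n$. The fact that $y$ is bounded away from $1$ is exactly what the hypothesis on $|y-\tau|$ is designed to provide.
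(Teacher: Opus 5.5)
Your proposal is correct and follows the paper's proof essentially step for step. For the lower estimate, the paper drops the lower-block term, uses $1-y\ge\frac{1-\tau}{2}$ and the uniform bound $\mathrm{var}(G_{\ge x})\ge\min\{s,1\}\theta^2$ from Lemma~\ref{lemma:truncgammavariancebound}, and uses $n-\lceil nq\rceil\ge\frac{1-q}{2}n$ for large $n$; for the upper estimate, it bounds the coefficients by $1$ and the counts by $n$, then applies the same lemma, yielding the same constants you obtain.
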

\begin{proof}
	According to Lemma \ref{lemma:truncgammavariancebound}, for any $s$, we obtain
	\begin{align}
		\sigma^2 & \geq (1-y)^2 (n-k) \mathrm{var}(G_{\geq x}) \\
		& \geq  (1-y)^2 (n-k) \min\{s, 1\}\theta^2
	\end{align}
	The bound $|y - \tau| \leq \frac{1-\tau}{2}$ implies 
	\begin{align}
		\sigma^2 &  \geq  \biggl(\frac{1-\tau}{2} \biggr)^2  (n-k) \min\{s, 1\}\theta^2
	\end{align}
	Also, substituting $k = \lceil nq \rceil$, we obtain 
	\begin{align}
		\sigma^2 &  \geq  \biggl(\frac{1-\tau}{2} \biggr)^2  (n-\lceil nq \rceil) \min\{s, 1\}\theta^2\\
		& \geq \frac{1-q}{2}  \biggl(\frac{1-\tau}{2} \biggr)^2  \min\{s, 1\}\theta^2  n,
	\end{align}
	for sufficiently large $n$. This proves the lower estimate on the variance. 
	
	\newcounter{constxcnt}\setcounter{constxcnt}{\value{cc}}\newcommand{\constx}{C_{\theconstxcnt}}\addtocounter{cc}{1}
	For the upper estimate,  we can first obtain
	\begin{align}
		\sigma^2 & \leq \underbrace{y^2}_{\leq 1} \underbrace{(k-1)}_{\leq n} \mathrm{var}(G_{\leq x})  + \underbrace{(1-y)^2}_{\leq 1}\underbrace{(n-k)}_{\leq n} \mathrm{var}(G_{\geq x})  \\ 		\label{laksjdlkajdslkajsdad}& = n[\mathrm{var}(G_{\leq x}) + \mathrm{var}(G_{\geq x})] ,
	\end{align}
	and applying Lemma \ref{lemma:truncgammavariancebound} proves the upper estimate on $\sigma^2$. This concludes the proof of the corollary.
\end{proof}

The following lemma is utilized to bound the error terms resulting from the Berry-Esseen estimates. 
\newcounter{constvcnt}\setcounter{constvcnt}{\value{cc}}\newcommand{\constv}{C_{\theconstvcnt}}\addtocounter{cc}{1}
\begin{lemma}
	\label{lemma:normalizedkurtosisbound}
	Let $q,y\in(0,1)$, $n \geq 1$ and $k = \lceil n q \rceil$. Let $\sigma^2$ be as defined in (\ref{sigmasqdef}) of Corollary \ref{corol:variancebound}, and 
	\begin{align}
		\rho \triangleq y^3 (k-1) \mathrm{E}\bigl|G_{\leq x} - \mathrm{E}G_{\leq x}\bigr|^3  + (1-y)^3(n-k)\mathrm{E}\bigl|G_{\geq x} - \mathrm{E}G_{\geq x}\bigr|^3.
	\end{align}
	For every large enough $n$, we have
	\begin{align}
		\int_0^{\infty} \frac{\rho}{\sigma^3}f_{G_{(k)}}(x)\mathrm{d}x \leq \constv n^{-\frac{1}{2}}
	\end{align}
	for some constant $\constv > 0$  that  is independent of $y,k,n$.
\end{lemma}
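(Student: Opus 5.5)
The strategy is to bound the integrand $\rho/\sigma^3$ uniformly in $x$ and then integrate against the density of $G_{(k)}$, which has total mass one. The lower bound on $\sigma^2$ from Corollary~\ref{corol:variancebound} needs $|y-\tau|\le(1-\tau)/2$. In the intended application $y$ lies in this window around $\tau_q$, so I will assume it throughout. Note that the lower bound $\sigma^2\ge \constvi n$ only uses $\mathrm{var}(G_{\geq x})\ge \min\{s,1\}\theta^2$ from Lemma~\ref{lemma:truncgammavariancebound}, so it holds uniformly in $x\ge 0$. This gives $\sigma^{-3}\le \constvi^{-3/2} n^{-3/2}$ for all $x$ and all large $n$.

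For the numerator, use $y,1-y\le 1$ and $k-1,\,n-k\le n$. This yields
\begin{align}
\rho\le n\bigl(\mathrm{E}|G_{\leq x}-\mathrm{E}G_{\leq x}|^3+\mathrm{E}|G_{\geq x}-\mathrm{E}G_{\geq x}|^3\bigr).
\end{align}
It therefore suffices to show that both third absolute central moments are bounded by a constant independent of $x$. The conclusion then follows:
\begin{align}
\int_0^\infty \frac{\rho}{\sigma^3} f_{G_{(k)}}(x)\,\mathrm{d}x\le \sup_x\frac{\rho}{\sigma^3}\le \constv n^{-1/2}.
\end{align}

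\textbf{Lower block.} Since $0\le G_{\leq x}\le x$, and $\mathrm{E}|G_{\leq x}-\mathrm{E}G_{\leq x}|^3\le 8\,\mathrm{E}G_{\leq x}^3$ by Lemma~\ref{lemma:holder} together with Jensen, the limits \eqref{underlinemuzero} and \eqref{underlinemuinf} (with $a=3$) show that $\mathrm{E}G^3_{\leq x}$ is continuous on $(0,\infty)$ with finite limits at $0$ and $\infty$. It is therefore bounded.

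\textbf{Upper block (main obstacle).} Here $\mathrm{E}G_{\geq x}^3\sim x^3$ is unbounded, so raw moments are useless and the central moment must be controlled directly. I would use the representation $G_{\geq x}-x$, the overshoot. Its density at $u$ is $f(x+u)/\mathrm{P}(G\ge x)$, and the overshoot is stochastically dominated uniformly in $x$ by a law with exponential tails.

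For $s\le 1$ this is the memoryless-type comparison $f(x+u)/f(x)\le e^{-u/\theta}$, which gives $\mathrm{P}(G_{\ge x}-x>u)\le e^{-u/\theta}$.

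For $s>1$, use $(x+u)^{s-1}\le (x+u')^{s-1}$ for $u\le u'$ to compare tails. This gives $\mathrm{P}(G_{\ge x}-x>u)\le C e^{-u/(2\theta)}$ uniformly in $x$. Alternatively, for bounded $x$ one can bound directly by $\mathrm{E}G^3/\mathrm{P}(G\ge x_0)$ and only handle $x\ge x_0$ via the tail ratio.

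Consequently $\mathrm{E}(G_{\geq x}-x)^3\le C'$ uniformly in $x$. Then, by Lemma~\ref{lemma:holder} with $\beta=3$,
\begin{align}
|G_{\geq x}-\mathrm{E}G_{\geq x}|^3\le 4\bigl(|G_{\geq x}-x|^3+|\mathrm{E}G_{\geq x}-x|^3\bigr),
\end{align}
and both terms are uniformly bounded, the second by Jensen. This completes the bound on $\rho$ and hence the lemma, with $\constv$ depending only on $(q,s,\theta)$ and the window for $y$.
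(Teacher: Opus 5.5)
Your argument is correct, and it takes a different route from the paper's.

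The paper never bounds $\rho$ uniformly in $x$. It uses the crude raw-moment estimate $\rho\le 8n(\mathrm{E}G_{\geq x}^3+\mathrm{E}^3G_{\geq x})\le Cn(1+x^3)$, which grows like $x^3$. It compensates by splitting the integral at a fixed $x_0$ and using $f_{G_{(k)}}(x)\le n2^{n-1}f_G(x)(1-F_G(x))^{n-k}$, which is exponentially small in $n$ for $x\ge x_0$ once $1-F_G(x_0)$ is small enough.

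You instead bound the third absolute \emph{central} moment of $G_{\geq x}$ uniformly in $x$ through the overshoot $G_{\geq x}-x$. Then $\rho/\sigma^3\le Cn^{-1/2}$ holds pointwise, and the law of $G_{(k)}$ enters only through its total mass. This gives a Berry--Esseen remainder that is genuinely uniform over the conditioning value, which is what the main text describes. It also removes any need for order-statistic tail estimates. In the paper that step is written with a factor $c^{-n}$, $c\in(0,1)$, where $c^{n}$ is what is actually needed. The paper's route, in exchange, reuses only the raw-moment asymptotics for $\mathrm{E}G_{\geq x}^a$ already derived and avoids any residual-life analysis.

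Your $s>1$ step is stated tersely but is sound, and can be made fully explicit in either of two ways. First, substitute $v=w+u/2$ in $\int_u^\infty (x+v)^{s-1}e^{-v/\theta}\,\mathrm{d}v$ and use $(x+w+u/2)/(x+w)\le 2$ for $w\ge u/2$; this gives $\mathrm{P}(G_{\geq x}-x>u)\le 2^{s-1}e^{-u/(2\theta)}$ for every $x\ge 0$. Second, the Gamma hazard rate is nondecreasing for $s\ge 1$, which directly yields $\mathrm{P}(G_{\geq x}-x>u)\le \mathrm{P}(G>u)$.

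Restricting $y$ to the window $|y-\tau|\le(1-\tau)/2$ is no loss relative to the paper. Its proof also relies on the lower bound of Corollary~\ref{corol:variancebound}, and hence implicitly on the same window.
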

\begin{proof}
	Using the inequalities $y \leq 1$ and $k \leq n$, we obtain 
	\begin{align}
		\rho \leq n\mathrm{E}|G_{\leq x} - \mathrm{E}G_{\leq x}|^3  + n\mathrm{E}|G_{\geq x} - \mathrm{E}G_{\geq x}|^3.
	\end{align}
	Applying Lemma \ref{lemma:holder} yields
	\newcounter{constxiicnt}\setcounter{constxiicnt}{\value{cc}}\newcommand{\constxii}{C_{\theconstxiicnt}}\addtocounter{cc}{1}
	\begin{align}
		\rho & \leq 4n (\mathrm{E}G_{\leq x}^3 + \mathrm{E}^3G_{\leq x}  + \mathrm{E}G_{\geq x}^3 + \mathrm{E}^3G_{\geq x}) \\
		& \leq 8n ( \mathrm{E}G_{\geq x}^3 + \mathrm{E}^3G_{\geq x}) \\
		\label{asdkjalskdjalskjd}	& \leq \constxii n (1+x^3),
	\end{align}
	where $\constxii$ is a constant that is independent of $n$ and $x$. The last inequality is a consequence of (\ref{overlinemuzero}) and (\ref{overlinemuinf}).

	It is a standard result in probability theory that the exact PDF for the $k$th order statistic $G_{(k)}$ is given by
	\begin{align}
		f_{G_{(k)}}(x) & = \frac{n!}{(k - 1)!(n - k)!} f_G(x) [F_G(x)]^{k-1} [1 - F_G(x)]^{n-k} \\
		& \leq n2^{n-1} f_G(x)  [1 - F_G(x)]^{n-k}
	\end{align}
	Since $F_G(x) \rightarrow 1$, there exists $x_0 > 0$ and $c\in(0,1)$ such that for every $x \geq x_0$, we have $n 2^{n-1} [1 - F_G(x)]^{n-\lceil n q\rceil} \leq  c^{-n}$. As a result, we obtain $ f_{G_{(k)}}(x)  \leq f_{G}(x) c^{-n},\forall x\geq x_0$. Combining this with (\ref{asdkjalskdjalskjd}) and the lower bound on $\sigma$ in Corollary \ref{corol:variancebound}, we obtain
	\begin{align}
		\int_0^{\infty} \frac{\rho}{\sigma^3}f_{G_{(k)}}(x)\mathrm{d}x & = 	\int_0^{x_0} \frac{\rho}{\sigma^3}f_{G_{(k)}}(x)\mathrm{d}x + 	\int_{x_0}^{\infty} \frac{\rho}{\sigma^3}f_{G_{(k)}}(x)\mathrm{d}x \\ 
		&  \leq  	\int_0^{x_0} \frac{\constxii n (1+x_0^3)}{(\constvi n)^{1.5}}f_{G_{(k)}}(x)\mathrm{d}x + 	\int_{x_0}^{\infty} \frac{\constxii n (1+x^3)}{(\constvi n)^{1.5}}f_{G}(x)c^{-n}\mathrm{d}x \\
		&  \leq  	 \frac{\constxii (1+x_0^3)}{\constvi^{1.5}} n^{-\frac{1}{2}}+ \frac{\constxii (1+x_0^3)}{\constvi^{1.5}} (\mathrm{E}G + \mathrm{E}G^3) o(n^{-\frac{1}{2}}).
	\end{align}
	The proof is now complete since the moments of a Gamma random variable are also finite.	\end{proof}

\subsection{Other auxiliary results}

As the first result, we recall the central limit theorem for quantiles.
\begin{proposition}[\cite{xu2011limit}]
	\label{prop:orderstatconv}
	Let $X_1, X_2, \ldots,X_n$ be IID copies of a random variable $X$. Given $p\in(0,1)$, let $\xi_p = \inf\{x: F_X(x) \geq 1-p\}$. Suppose $F_X$ has a continuous first derivative $f_X$ in the neighborhood of $\xi_p$ and $f(\xi_p) > 0$. Then, 
	\begin{align}
		\frac{\sqrt{n} f_X(\xi_p)}{\sqrt{p(1-p)}}  \Bigl(X_{(\lceil n(1-p) \rceil)} - \xi_p \Bigr) \sim N(0,1) \mbox{ as } n\rightarrow \infty. 
	\end{align}
	
\end{proposition}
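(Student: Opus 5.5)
The plan is the classical reduction of order statistics to binomial counts, followed by a triangular-array central limit theorem; the statement "$\sim N(0,1)$" is read as convergence in distribution. Write $F=F_X$, $f=f_X$, $r_n\triangleq\lceil n(1-p)\rceil$ and $c\triangleq \sqrt{p(1-p)}/f(\xi_p)$.

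\textbf{Step 1: pin down the quantile.} Since $F$ is $C^1$ near $\xi_p$ with $f(\xi_p)>0$, $F$ is continuous and strictly increasing on a neighbourhood of $\xi_p$. The infimum in the definition of $\xi_p$ is therefore attained with equality, so $F(\xi_p)=1-p$.

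\textbf{Step 2: binomial identity.} Fix $t\in\mathbb{R}$ and set $x_n\triangleq \xi_p+ct/\sqrt{n}$. For every $x$ we have the exact identity
\begin{align}
\P\bigl(X_{(r_n)}\le x\bigr)=\P\bigl(B_n(x)\ge r_n\bigr),\qquad B_n(x)\sim\mathrm{Bin}\bigl(n,F(x)\bigr),
\end{align}
because $X_{(r_n)}\le x$ holds exactly when at least $r_n$ of the samples are $\le x$. A first-order Taylor expansion of $F$ at $\xi_p$, valid since $x_n\to\xi_p$ and $f$ is continuous there, gives
\begin{align}
F(x_n)=1-p+\frac{t\sqrt{p(1-p)}}{\sqrt n}+o(n^{-1/2}).
\end{align}
Writing $p_n\triangleq F(x_n)$, this yields $p_n\to 1-p\in(0,1)$ and
\begin{align}
r_n-np_n=-t\sqrt{n\,p(1-p)}+o(\sqrt n),
\end{align}
where the ceiling contributes only $O(1)$.

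\textbf{Step 3: normal approximation.} Standardize:
\begin{align}
\P\bigl(B_n(x_n)\ge r_n\bigr)=\P\!\left(\frac{B_n-np_n}{\sqrt{np_n(1-p_n)}}\ge \frac{r_n-np_n}{\sqrt{np_n(1-p_n)}}\right).
\end{align}
The right-hand threshold converges to $-t$, since $p_n(1-p_n)\to p(1-p)$. The success probability $p_n$ varies with $n$, so this is a triangular array and the usual CLT does not apply directly. This uniformity is the only real subtlety, and I would handle it with the Berry--Esseen theorem for Bernoulli$(p_n)$ sums. Its bound is $C\,\E|\xi-p_n|^3/\bigl((p_n(1-p_n))^{3/2}\sqrt n\bigr)$, which is $O(n^{-1/2})$ uniformly once $p_n$ lies in a compact subinterval of $(0,1)$. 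Combining this with continuity of $\Phi$ gives
\begin{align}
\P\bigl(X_{(r_n)}\le x_n\bigr)\to 1-\Phi(-t)=\Phi(t).
\end{align}
Equivalently, $\P\bigl(\sqrt n\,(X_{(r_n)}-\xi_p)/c\le t\bigr)\to\Phi(t)$ for every $t$, which is the claim.

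\textbf{Where the care is needed.} The Taylor step needs $F'=f$ only near $\xi_p$, and that is exactly what the hypothesis supplies. The Berry--Esseen step is the key point: it is what makes the normal approximation legitimate when the success probability depends on $n$.
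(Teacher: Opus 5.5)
Your proof is correct. The paper does not prove this proposition: it quotes it from the cited reference, and it is never invoked later, since the proof of Theorem~\ref{theorem:gamma} controls $G_{(k)}$ through the Hoeffding-based Lemma~\ref{lemma:orderstat_conc} instead. So the only in-paper argument to compare against is that lemma.

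Your argument is the classical one. It has three ingredients:
\begin{itemize}
\item the exact identity $\P(X_{(r_n)}\le x)=\P(B_n(x)\ge r_n)$;
\item a mean-value expansion $F(x_n)=1-p+t\sqrt{p(1-p)/n}+o(n^{-1/2})$;
\item Berry--Esseen for the triangular array of $\mathrm{Bernoulli}(p_n)$ sums, with $p_n$ eventually confined to a compact subinterval of $(0,1)$.
\end{itemize}
It rests on the same binomial-count device as Lemma~\ref{lemma:orderstat_conc}, with the normal approximation in place of Hoeffding. Hoeffding gives the non-asymptotic exponential tail that the paper's concentration bound needs. Your Berry--Esseen step gives the exact Gaussian limit.

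Two small points of hygiene:
\begin{itemize}
\item In Step~1, continuity of $F$ at $\xi_p$ alone already forces $F(\xi_p)=1-p$, so strict monotonicity is not needed.
\item Berry--Esseen is stated for $\sup_x|\P(Z_n\le x)-\Phi(x)|$, where $Z_n$ is the standardized count and $a_n\to -t$ is the threshold. You should note that the same bound holds for $\P(Z_n<x)$ by taking left limits, because you need the form $\P(B_n\ge r_n)=1-\P(Z_n<a_n)$.
\end{itemize}

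A Lindeberg--Feller argument would serve equally well in Step~3, since the summands are bounded and $np_n(1-p_n)\to\infty$.
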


We then present the general form of Berry-Esseen theorem for non-identically distributed random variables.

\begin{proposition}[Berry-Esseen Theorem \cite{esseen1942liapunoff}]
	\label{proposition:berryesseen}
	Let $X_1, X_2, \ldots,$ be independent random variables with $E[X_i] = 0$ and $\mathrm{E}[|X_i|^3] < \infty$ for every $i\in\mathbb{Z}_{>0}$. Let $\Phi$ denote the CDF of the standard normal distribution. For all $n$,  we have
	\begin{align}
		\sup_{x \in \mathbb{R}} \biggl|\mathrm{P}\biggl(\frac{X_1 + X_2 + \ldots + X_n}{\sqrt{\sigma_1^2 + \sigma_2^2 + \ldots + \sigma_n^2}} \leq x\biggr) - \Phi(x)\biggr| \leq 8 \left( \sum_{i=1}^{n} \mathrm{var}X_i \right)^{-3/2}  \sum_{i=1}^{n} \mathrm{E}|X_i|^3.
	\end{align}
\end{proposition}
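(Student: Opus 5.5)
The statement is Esseen's classical inequality, so I will prove it with the standard smoothing-plus-characteristic-function argument and make no attempt to optimize the constant. The best known constant for independent, non-identically distributed summands is well below $1$, so the stated $8$ leaves ample room.

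\textbf{Setup and trivial case.} Write $\sigma_i^2=\mathrm{var}X_i$, $B^2=\sum_i\sigma_i^2$, $\beta_i=\mathrm{E}|X_i|^3$, and set $\Lambda\triangleq B^{-3}\sum_i\beta_i$, so the claim reads $\sup_x|F_n(x)-\Phi(x)|\le 8\Lambda$, where $F_n$ is the CDF of $S_n/B$. The left-hand side is always at most $1$, so I may assume $\Lambda<1/8$. Lyapunov's inequality gives $\sigma_i^3\le\beta_i$, hence $\max_i\sigma_i/B\le\Lambda^{1/3}$. This uniform smallness of each summand is what keeps the expansions below valid.

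\textbf{Smoothing step.} I will apply Esseen's smoothing inequality. For any $T>0$,
\begin{align}
\sup_x|F_n(x)-\Phi(x)|\le\frac1\pi\int_{-T}^{T}\frac{|\psi_n(t)-e^{-t^2/2}|}{|t|}\,\mathrm{d}t+\frac{24}{\pi\sqrt{2\pi}\,T},
\end{align}
where $\psi_n(t)=\prod_i\varphi_i(t/B)$ and $\varphi_i$ is the characteristic function of $X_i$. I will take $T=1/(4\Lambda)$. The boundary term is then a constant multiple of $\Lambda$ (about $3.05\Lambda$), and what remains is to show that the integral is $O(\Lambda)$.

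\textbf{Characteristic-function estimate.} This is the main obstacle. The target is the bound
\begin{align}
|\psi_n(t)-e^{-t^2/2}|\le 16\Lambda|t|^3e^{-t^2/3}\qquad\text{for }|t|\le T.
\end{align}
Integrating it against $1/|t|$ gives $\int 16\Lambda t^2e^{-t^2/3}\,\mathrm{d}t=O(\Lambda)$, which finishes the proof. I will split into two ranges.

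\emph{Small range, $|t|\le\Lambda^{-1/3}/2$.} Taylor's theorem gives $|\varphi_i(t/B)-1+\sigma_i^2t^2/(2B^2)|\le\beta_i|t|^3/(6B^3)$. Together with $\sigma_i|t|/B\le 1/2$, this yields $|1-\varphi_i|\le 1/2$, so $\log\varphi_i$ is defined. I will then expand $\sum_i\log\varphi_i(t/B)=-t^2/2+r(t)$ with $|r(t)|\le C\Lambda|t|^3$ and $|r(t)|\le t^2/6$. Finally I will use $|e^{z}-1|\le|z|e^{|z|}$ to get the target bound.

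\emph{Large range, $\Lambda^{-1/3}/2<|t|\le T$.} Here I will symmetrize. Let $X_i^s=X_i-X_i'$ with $X_i'$ an independent copy of $X_i$. Then $|\varphi_i(u)|^2$ is the characteristic function of $X_i^s$, and it satisfies
\begin{align}
|\varphi_i(u)|^2\le\exp\!\Bigl(-\sigma_i^2u^2+\tfrac43\beta_i|u|^3\Bigr).
\end{align}
Multiplying over $i$ gives $|\psi_n(t)|\le e^{-t^2/3}$ when $|t|\le 1/(4\Lambda)$, and trivially $e^{-t^2/2}\le e^{-t^2/3}$. Since $\Lambda^{1/3}|t|\ge 1/2$ implies $1\le 8\Lambda|t|^3$, I get
\begin{align}
|\psi_n(t)-e^{-t^2/2}|\le 2e^{-t^2/3}\le 16\Lambda|t|^3e^{-t^2/3}.
\end{align}

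\textbf{Assembling.} Adding the integral bound to the boundary term gives a total of the form $c\Lambda$. I would expect careful bookkeeping to give $c$ in the range $2$ to $5$, comfortably below $8$. Since the result is classical and cited to \cite{esseen1942liapunoff}, it would also be acceptable to defer the exact constant to that reference and present only the structure above.
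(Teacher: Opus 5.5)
\textbf{The gap is the constant $8$.} Your characteristic-function estimate is the standard one: it is Petrov's lemma, $|\psi_n(t)-e^{-t^2/2}|\le 16\Lambda|t|^3e^{-t^2/3}$ on $|t|\le 1/(4\Lambda)$. The Lyapunov, reduction, and symmetrization steps are also fine. The problem is the bookkeeping that is supposed to deliver the constant. With $T=1/(4\Lambda)$, the boundary term is $\frac{24}{\pi\sqrt{2\pi}}\cdot 4\Lambda=\frac{96}{\pi\sqrt{2\pi}}\Lambda\approx 12.2\Lambda$, not $3.05\Lambda$; you dropped the factor $1/T=4\Lambda$. This term alone already exceeds $8\Lambda$.

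The integral term adds $\frac{16\Lambda}{\pi}\int_{\mathbb{R}}t^2e^{-t^2/3}\,\mathrm{d}t=24\sqrt{3/\pi}\,\Lambda\approx 23.5\Lambda$. Your route therefore gives roughly $35.6\Lambda$, and the expected constant between $2$ and $5$ has no basis. Retuning $T$ does not help. The domination $|\psi_n(t)|\le e^{-t^2/3}$ requires $\tfrac43\Lambda|t|\le\tfrac13$, i.e.\ $|t|\le 1/(4\Lambda)$. Pushing $T$ further to shrink the boundary term weakens the Gaussian decay and inflates the integral. In addition, the step $2\le 16\Lambda|t|^3$ on the large range is very lossy. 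What you have is a correct proof of the Berry--Esseen inequality with an absolute constant near $36$, not with $8$.

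\textbf{Comparison with the paper.} The paper gives no proof of this proposition; it simply invokes Esseen's theorem from the literature. Your closing fallback, deferring the constant to the reference, is exactly the paper's route. It is the only part of your proposal that actually delivers the constant $8$, which is valid since the known constants for non-identically distributed summands are below $1$. For a self-contained argument you need sharper estimates than Petrov's lemma, for instance Feller's more careful version of the same smoothing argument, which gives the constant $6$ for non-identically distributed summands. The paper uses this proposition only in Step-1 of the proof of Theorem~\ref{theorem:gamma}, where the constant is absorbed into unspecified constants. Your weaker constant would therefore suffice for that downstream use, but not for the proposition as stated.
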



\begin{lemma}[Effective concentration of $G_{(k)}$ around $\omega_q$]
	\label{lemma:orderstat_conc}
	Let $k=\lceil nq\rceil$ and $F_G(\omega_q)=q$. For any $\delta>0$,
	\[
	\mathrm{P}\bigl(|G_{(k)}-\omega_q|\ge \delta\bigr)
	\le
	2\exp\!\Bigl(-2n\,\min\{q-F_G(\omega_q-\delta),\,F_G(\omega_q+\delta)-q\}^2\Bigr).
	\]
\end{lemma}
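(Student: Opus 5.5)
The plan is to reduce the event $\{|G_{(k)}-\omega_q|\ge\delta\}$ to deviations of empirical counts and then apply Hoeffding's inequality to sums of Bernoulli indicators. Split the event as
\[
\{G_{(k)}\le \omega_q-\delta\}\cup\{G_{(k)}\ge \omega_q+\delta\}.
\]
Each piece is handled separately, and the two bounds are combined with a union bound. This union bound is the source of the factor $2$ in front of the exponential.

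For the lower tail, note that $G_{(k)}\le \omega_q-\delta$ holds if and only if at least $k$ of the samples satisfy $G_i\le \omega_q-\delta$. Write $p_-\triangleq F_G(\omega_q-\delta)<q$, where the strict inequality holds because the Gamma CDF is strictly increasing on $(0,\infty)$. Let $S_-\triangleq\sum_{i=1}^n \mathbf{1}\{G_i\le\omega_q-\delta\}$, which is distributed as $\mathrm{Bin}(n,p_-)$. Since $k\ge nq$, the event is contained in $\{S_-/n - p_-\ge q-p_-\}$. Hoeffding's inequality then bounds its probability by $\exp(-2n(q-p_-)^2)$. If $\omega_q-\delta\le 0$, then $p_-=0$, and the bound still holds (in fact, the probability is zero).

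For the upper tail, $G_{(k)}\ge\omega_q+\delta$ means that fewer than $k$ samples are strictly below $\omega_q+\delta$. Equivalently, at most $k-1$ samples fall in that range, and $k-1<nq$ because $k=\lceil nq\rceil$. Continuity of $F_G$ lets us ignore whether the inequality is strict. Let $p_+\triangleq F_G(\omega_q+\delta)>q$ and $S_+\sim\mathrm{Bin}(n,p_+)$. The event is contained in $\{S_+/n - p_+\le -(p_+-q)\}$, and Hoeffding gives the bound $\exp(-2n(p_+-q)^2)$.

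Each of the two exponentials is at most $\exp(-2n\min\{q-p_-,\,p_+-q\}^2)$, so summing them yields the stated bound. The only delicate point is the rounding in $k=\lceil nq\rceil$. It requires checking the direction of the inequalities, namely $k\ge nq$ for the lower tail and $k-1<nq$ for the upper tail, so that the deviation thresholds are at least $q-p_-$ and $p_+-q$ and not something smaller. Continuity of the Gamma distribution removes any issue with ties.
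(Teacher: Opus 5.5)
Your proof is correct and follows the same route as the paper. Both express the two tail events of $G_{(k)}$ through the binomial counts $N_x=\sum_{i=1}^n \mathbf{1}\{G_i\le x\}$ at $x=\omega_q\mp\delta$, apply Hoeffding's inequality to each, and combine with a union bound; your explicit check of the rounding ($k\ge nq$ for the lower tail, $k-1<nq$ for the upper tail) and of ties via continuity fills in details the paper leaves implicit.
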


\begin{proof}
	Let $N_x\triangleq \sum_{i=1}^n \mathbf{1}\{G_i\le x\}\sim \mathrm{Bin}(n,F_G(x))$.
	Then $\{G_{(k)}\le x\}\iff\{N_x\ge k\}$ and $\{G_{(k)}\ge x\}\iff\{N_x\le k-1\}$.
	Apply Hoeffding's inequality to $N_x/n$ at $x=\omega_q-\delta$ and $x=\omega_q+\delta$, then union bound.
\end{proof}

\subsection{Proof of the Theorem \ref{theorem:gamma}}
\label{sec:proofoftheoremgamma}

We are now ready to prove Theorem \ref{theorem:gamma}. We organize the proof in six steps. 

{\bf Step-1:} In the first step, we approximate the cumulative distribution function of $\Xi_n$ via a normal random variable using the Berry-Esseen theorem. Let $y\in[0,1]$ and $k = \lceil nq \rceil$. We have
\begin{align}
	\mathrm{P} \Bigl(\Xi_n \leq \ratiothreshold \Bigr)  & = \mathrm{P}\Bigl(G_{(k+1)}+ \cdots + G_{(n)} \leq \ratiothreshold ( G_{(1)}+ \cdots + G_{(n)})  \Bigr) \\
	\label{alsjkdalksdjasd}	& = \int_0^{\infty} \mathrm{P} \Biggl((1-\ratiothreshold)\sum_{j=k+1}^{n} G_{(j)} - \ratiothreshold\sum_{j=1}^{k-1} G_{(j)}  \leq \ratiothreshold x \, \biggl|\, G_{(k)} = x\Biggr)f_{G_{(k)}}(x)\mathrm{d}x.
\end{align}
According to \cite{ahsanullah2013conditional}, the two sets of order statistics $(G_{(k+1)},\ldots,G_{(n)})$ and  $(G_{(1)}, \ldots, G_{(k-1)})$ that appear in (\ref{alsjkdalksdjasd})  are conditionally independent given  $G_{(k)} = x$. Moreover, we have
\begin{align}
	\Bigg[ 	\sum_{j=k+1}^{n} G_{(j)} \biggl| G_{(k)} = x   \Biggr]  \sim \sum_{j=1}^{n-k}  G_{\geq x,j},
\end{align}
and
\begin{align}
	\Bigg[ \sum_{j=1}^{k-1} G_{(j)} \biggl| G_{(k)} = x \Biggr]  \sim \sum_{j=1}^{k-1}  G_{\leq x,j},
\end{align}
where $G_{\leq x,1}  ,G_{\geq x,1},G_{\leq x,2}  ,G_{\geq x,2},\ldots$ is a sequence of IID random variables with 
\begin{align}
	\label{underlinegj}
	G_{\leq x,j} \sim G_{\leq x},
\end{align}
and 
\begin{align}
	\label{overlinegj}
	G_{\geq x,j}  \sim G_{\geq x}
\end{align}
for every $j\in\mathbb{Z}_{>0}$.

Let us now normalize the mean of (\ref{underlinegj}) and (\ref{overlinegj})  as 
\begin{align}
	G_{\leq x,j}' & \triangleq G_{\leq x,j} - \mathrm{E}[G_{\leq x,j}  ] \\ & =  G_{\leq x,j} - \mathrm{E}[G_{\leq x}], \\
	G_{\geq x,j}' & \triangleq G_{\geq x,j} - \mathrm{E}[G_{\geq x,j}  ]  \\ & =  G_{\geq x,j} - \mathrm{E}[G_{\geq x}],
\end{align}
where $j\in\mathbb{Z}_{>0}$. We have
\begin{align}
	\mathrm{P} \Bigl(\Xi_n \leq \ratiothreshold \Bigr)  & = \int_0^{\infty} \mathrm{P} \Biggl((1-\ratiothreshold)\sum_{j=1}^{n-k} G_{\geq x, j}' - \ratiothreshold\sum_{j=1}^{k-1} G_{\leq x, j}'  \leq \normalizedratiothreshold \Biggr)f_{G_{(k)}}(x)\mathrm{d}x,
\end{align}
where
\begin{align}
	\label{def:meannormalizedthreshold}
	\normalizedratiothreshold \triangleq \ratiothreshold x + \ratiothreshold (k-1)\mathrm{E}[G_{\leq x}]-(1-\ratiothreshold )(n-k)\mathrm{E}[G_{\geq x}].
\end{align}
By definition. the random variables $G_{\leq x, j}'$ and $G_{\geq x, j}'$ have zero mean. According to Lemma \ref{lemma:truncgammavariancebound}, they also have finite normalized moments. Hence, Proposition \ref{proposition:berryesseen} is applicable, and we have
\newcommand{\cltstd}{\sigma}
\newcommand{\cltmean}{\mu}
\begin{align}
	\label{laksjdlkasdadasdjsda}
	\mathrm{P} \Bigl(\Xi_n \leq \ratiothreshold \Bigr) \leq \int_0^{\infty}  \Phi( \normalizedratiothreshold /\sigma )f_{G_{(k)}}(x)\mathrm{d}x + 8\int_0^{\infty} \frac{\rho}{\sigma^3}f_{G_{(k)}}(x)\mathrm{d}x,
\end{align}
where $\Phi$ is the CDF of the standard normal random variable with zero mean and unit variance, 
\begin{align}
	\sigma^2 \triangleq y^2 (k-1) \mathrm{var}(G_{\leq x}) + (1-y)^2(n-k)\mathrm{var}(G_{\geq x}),
\end{align}
and 
\begin{align}
	\rho \triangleq y^3 (k-1) \mathrm{E}\biggl[\Bigl|G_{\leq x} - \mathrm{E}[G_{\leq x}]\Bigr|^3\biggr]   + (1-y)^3(n-k)\mathrm{E}\biggl[\Bigl|G_{\geq x} - \mathrm{E}[G_{\geq x}]\Bigr|^3\biggr].
\end{align}
We also record a decomposition that will be convenient later. For any $\delta>0$, we can write
\begin{align}
	\mathrm{P}\bigl(\Xi_n \le y\bigr)
	\le \int_{|x-\omega_q|\le \delta} \Phi\!\Bigl(\normalizedratiothreshold/\sigma\Bigr) f_{G_{(k)}}(x)\mathrm{d}x
	+ \int_{|x-\omega_q|\ge \delta} f_{G_{(k)}}(x)\mathrm{d}x
	+ 8\int_{0}^{\infty}\frac{\rho}{\sigma^3} f_{G_{(k)}}(x)\mathrm{d}x.
	\label{eq:be_decomposition}
\end{align}
We also note the corresponding lower Berry--Esseen bound, which we will use later:
\begin{align}
	\mathrm{P}\bigl(\Xi_n \le y\bigr)
	\ge
	\int_{0}^{\infty} \Phi\!\Bigl(\normalizedratiothreshold/\sigma\Bigr) f_{G_{(k)}}(x)\mathrm{d}x
	- 8\int_{0}^{\infty}\frac{\rho}{\sigma^3} f_{G_{(k)}}(x)\mathrm{d}x .
	\label{eq:be_lower}
\end{align}
Let us now rewrite the mean-normalized threshold $\normalizedratiothreshold$ defined in (\ref{def:meannormalizedthreshold})  as
\begin{align}
	\label{laksjdhlaksjdasd}
	\normalizedratiothreshold & = \ratiothreshold x - \ratiothreshold \mathrm{E}[G_{\leq x}]  - \Bigl[ k \mathrm{E}[G_{\leq x}] + (n-k)\mathrm{E}[G_{\geq x}] \Bigr] \Bigl(  h(x) - y   \Bigr),
\end{align}
where
\begin{align}
	\label{hxdefff}
	h(x) \triangleq \frac{(n-k)\mathrm{E}[G_{\geq x}] }{ k \mathrm{E}[G_{\leq x}] + (n-k)\mathrm{E}[G_{\geq x}]  }.
\end{align}

	\newcounter{constgammadeltacnt}\setcounter{constgammadeltacnt}{\value{cc}}
\newcommand{\constgammadelta}{C_{\theconstgammadeltacnt}}\addtocounter{cc}{1}

{\bf Step-2:} In the second step, we introduce a convenient proxy threshold $\tau_n \triangleq h(\omega_q)$ and show that the key quantities change only a little when we stay in a small neighborhood around the typical cutoff point $\omega_q$.

We recall $k=\lceil nq\rceil$ and the function $h(x)$ from \eqref{hxdefff}. We introduce the finite-$n$ proxy threshold
\begin{align}
\tau_n \triangleq h(\omega_q)=\frac{(n-k)\mathrm{E}[G_{\ge \omega_q}]}{k\mathrm{E}[G_{\le \omega_q}]+(n-k)\mathrm{E}[G_{\ge \omega_q}]}.
\end{align}
We fix $\epsilon\in(0,1)$ and we set
\begin{align}
\delta \triangleq \delta(\epsilon)\triangleq \frac{\epsilon}{2\constgammadelta},
\qquad
\Delta(\epsilon)\triangleq \min\Bigl\{q-F_G(\omega_q-\delta),\;F_G(\omega_q+\delta)-q\Bigr\}.
\end{align}
By Lemma~\ref{lemma:boundedderivatives}, we have $\sup_x |h'(x)|\le \constiii$ for all sufficiently large $n$.
With the choice $\constgammadelta = 2\constiii$, we obtain that,
for every $|x-\omega_q|\le \delta$,
\begin{align}
|h(x)-\tau_n|=|h(x)-h(\omega_q)|\le \constiii|x-\omega_q|\le \constiii\delta = \epsilon/4.
\end{align}

	\newcounter{constgammamucnt}\setcounter{constgammamucnt}{\value{cc}}
\newcommand{\constgammamu}{C_{\theconstgammamucnt}}\addtocounter{cc}{1}

In addition, Lemma~\ref{lemma:boundednessnotwo} implies the existence of a constant $\constgammamu > 0$
such that
\begin{align}
\mathrm{E}[G_{\le \omega_q-\delta}] \ge \mathrm{E}[G_{\le \omega_q}] - \constgammamu\,\delta.
\end{align}
Since $x\mapsto \mathrm{E}[G_{\le x}]$ is nondecreasing, we conclude that for every $|x-\omega_q|\le \delta$,
\begin{align}
k\mathrm{E}[G_{\le x}] + (n-k)\mathrm{E}[G_{\ge x}] \ge n\,\mathrm{E}[G_{\le x}] \ge n\,\mathrm{E}[G_{\le \omega_q-\delta}]
\ge n\bigl(\mathrm{E}[G_{\le \omega_q}] - \constgammamu\,\delta\bigr).
\end{align}
We also introduce the shorthand
\begin{align}
a(\epsilon)\triangleq \frac{1}{2}\bigl(\mathrm{E}[G_{\le \omega_q}] - \constgammamu\,\delta\bigr)_+,
\qquad
b(\epsilon)\triangleq \omega_q+\delta.
\end{align}
With this notation, we have for all $|x-\omega_q|\le \delta$ that
\begin{align}
k\mathrm{E}[G_{\le x}] + (n-k)\mathrm{E}[G_{\ge x}] \ge 2n\,a(\epsilon),
\qquad
x\le b(\epsilon).
\end{align}

{\bf Step-3:} We show that the ratio is very unlikely to fall noticeably below the proxy threshold. We consider $y=\tau_n-\epsilon/2$. For every $|x-\omega_q|\le \delta$, the bound above yields
\begin{align}
h(x)-y \ge (\tau_n-\epsilon/4) - (\tau_n-\epsilon/2)=\epsilon/4.
\end{align}
We then use the representation \eqref{laksjdhlaksjdasd}. For $|x-\omega_q|\le \delta$, we obtain
\begin{align}
\normalizedratiothreshold
\le yx - \bigl[k\mathrm{E}[G_{\le x}] + (n-k)\mathrm{E}[G_{\ge x}]\bigr]\,(h(x)-y)
\le yx - (2n a(\epsilon))(\epsilon/4)
\le b(\epsilon) - n a(\epsilon)(\epsilon/2).
\end{align}

	\newcounter{constgammasigcnt}\setcounter{constgammasigcnt}{\value{cc}}
\newcommand{\constgammasig}{C_{\theconstgammasigcnt}}\addtocounter{cc}{1}

We also note that $\sigma^2$ admits a uniform linear-in-$n$ upper bound. Indeed, since $y\in(0,1)$ and $k\le n$, we have
\begin{align}
	\sigma^2
	&= y^2 (k-1)\,\mathrm{var}(G_{\le x})+(1-y)^2 (n-k)\,\mathrm{var}(G_{\ge x}) \\
	&\le (k-1)\,\mathrm{var}(G_{\le x})+(n-k)\,\mathrm{var}(G_{\ge x}) \\
	&\le n\bigl(\mathrm{var}(G_{\le x})+\mathrm{var}(G_{\ge x})\bigr).
	\label{eq:sig2_crude}
\end{align}
By Lemma~\ref{lemma:truncgammavariancebound}, there exists a constant $\constgammasig>0$, depending only on $(s,\theta)$, such that
$\mathrm{var}(G_{\le x})+\mathrm{var}(G_{\ge x})\le \constgammasig$ for all $x\ge 0$.
Combining with \eqref{eq:sig2_crude} yields $\sigma^2\le \constgammasig\,n$.

Using a Gaussian tail bound, we conclude that for $|x-\omega_q|\le \delta$,
\begin{align}
\Phi\!\Bigl(\frac{\normalizedratiothreshold}{\sigma}\Bigr)
\le \exp\!\Bigl(-\frac{\bigl(n a(\epsilon)(\epsilon/2)-b(\epsilon)\bigr)_+^2}{2\constgammasig\,n}\Bigr).
\end{align}
We now apply the decomposition \eqref{eq:be_decomposition} with the choice $\delta=\delta(\epsilon)$.
We bound the complement probability $\int_{|x-\omega_q|\ge \delta} f_{G_{(k)}}(x)\mathrm{d}x
=\mathrm{P}(|G_{(k)}-\omega_q|\ge \delta)$ using Lemma~\ref{lemma:orderstat_conc}, and the
Berry--Esseen remainder term $8\int_{0}^{\infty}\frac{\rho}{\sigma^3} f_{G_{(k)}}(x)\mathrm{d}x$ using
Lemma~\ref{lemma:normalizedkurtosisbound}. Combining these bounds with the estimate on the first integral yields
\begin{align}
	\mathrm{P}(\Xi_n\le \tau_n-\epsilon/2)
	\le
	\exp\!\Bigl(-\frac{\bigl(n a(\epsilon)(\epsilon/2)-b(\epsilon)\bigr)_+^2}{2\constgammasig\,n}\Bigr)
	+2\exp\!\bigl(-2n\,\Delta(\epsilon)^2\bigr)
	+8\constv\,n^{-1/2}.
	\label{eq:lower_tail_taun}
\end{align}

{\bf Step-4:} We show that the ratio is very unlikely to fall noticeably larger than the proxy threshold. 
We consider $y=\tau_n+\epsilon/2$ and bound $\mathrm{P}(\Xi_n\ge y)$.
We begin with the identity $\mathrm{P}(\Xi_n\ge y)=1-\mathrm{P}(\Xi_n\le y)$.
Applying the lower Berry--Esseen inequality \eqref{eq:be_lower} to $\mathrm{P}(\Xi_n\le y)$ yields
\begin{align}
	\mathrm{P}(\Xi_n\ge y)
	&= 1-\mathrm{P}(\Xi_n\le y) \\
	&\le 1-\int_{0}^{\infty} \Phi\!\Bigl(\normalizedratiothreshold/\sigma\Bigr) f_{G_{(k)}}(x)\mathrm{d}x
	+ 8\int_{0}^{\infty}\frac{\rho}{\sigma^3} f_{G_{(k)}}(x)\mathrm{d}x .
	\label{eq:upper_start}
\end{align}
We next decompose the first term on the right-hand side according to the event $\{|x-\omega_q|\le \delta\}$:
\begin{align}
	1-\int_{0}^{\infty} \Phi\!\Bigl(\normalizedratiothreshold/\sigma\Bigr) f_{G_{(k)}}(x)\mathrm{d}x
	&=
	\int_{0}^{\infty} \Bigl(1-\Phi\!\bigl(\normalizedratiothreshold/\sigma\bigr)\Bigr) f_{G_{(k)}}(x)\mathrm{d}x \\
	&\le
	\int_{|x-\omega_q|\le \delta} \Bigl(1-\Phi\!\bigl(\normalizedratiothreshold/\sigma\bigr)\Bigr) f_{G_{(k)}}(x)\mathrm{d}x
	+\mathrm{P}(|G_{(k)}-\omega_q|\ge \delta).
	\label{eq:upper_decomp}
\end{align}

We now lower bound $\normalizedratiothreshold$ on $\{|x-\omega_q|\le \delta\}$.
As in Step-2, we have $|h(x)-\tau_n|\le \epsilon/4$ for $|x-\omega_q|\le \delta$, hence
\begin{align}
h(x)-y \le (\tau_n+\epsilon/4)-(\tau_n+\epsilon/2)=-\epsilon/4.
\end{align}
Using \eqref{laksjdhlaksjdasd} together with $\mathrm{E}[G_{\le x}]\le x$, we obtain for $|x-\omega_q|\le \delta$ that
\begin{align}
	\normalizedratiothreshold
	&= yx - y\mathrm{E}[G_{\leq x}] - \Bigl[k \mathrm{E}[G_{\leq x}] + (n-k)\mathrm{E}[G_{\geq x}] \Bigr]\Bigl(h(x)-y\Bigr) \\
	&\ge 0 + \Bigl[k \mathrm{E}[G_{\leq x}] + (n-k)\mathrm{E}[G_{\geq x}] \Bigr]\frac{\epsilon}{4}.
	\label{eq:yp_lower1}
\end{align}
Moreover, Step-2 gives $k \mathrm{E}[G_{\le x}] + (n-k)\mathrm{E}[G_{\ge x}] \ge 2n a(\epsilon)$ on $\{|x-\omega_q|\le \delta\}$, and thus
\begin{align}
	\normalizedratiothreshold \ge n a(\epsilon)(\epsilon/2),
	\qquad \forall\,|x-\omega_q|\le \delta.
	\label{eq:yp_lower2}
\end{align}
We also use the upper bound $\sigma^2\le \constgammasig\,n$. Combining this with a Gaussian tail bound, we obtain that for $|x-\omega_q|\le \delta$,
\begin{align}
	1-\Phi\!\Bigl(\normalizedratiothreshold/\sigma\Bigr)
	&\le \exp\!\Bigl(-\tfrac{1}{2}\bigl(\normalizedratiothreshold/\sigma\bigr)^2\Bigr) \\
	&\le \exp\!\Bigl(-\frac{\bigl(n a(\epsilon)(\epsilon/2)\bigr)^2}{2\constgammasig\,n}\Bigr)
	\le \exp\!\Bigl(-\frac{\bigl(n a(\epsilon)(\epsilon/2)-b(\epsilon)\bigr)_+^2}{2\constgammasig\,n}\Bigr),
	\label{eq:upper_phi}
\end{align}
where in the last inequality we use $b(\epsilon)\ge 0$ and $(u)_+ \le u$ for $u\ge 0$, so that the final exponent is a weaker form that matches Step-3.

Substituting \eqref{eq:upper_phi} into \eqref{eq:upper_decomp} and then combining with \eqref{eq:upper_start}, we obtain
\begin{align}
	\mathrm{P}(\Xi_n\ge \tau_n+\epsilon/2)
	\le
	\exp\!\Bigl(-\frac{\bigl(n a(\epsilon)(\epsilon/2)-b(\epsilon)\bigr)_+^2}{2\constgammasig\,n}\Bigr)
	+\mathrm{P}(|G_{(k)}-\omega_q|\ge \delta)
	+8\int_{0}^{\infty}\frac{\rho}{\sigma^3} f_{G_{(k)}}(x)\mathrm{d}x .
\end{align}
Finally, we bound $\mathrm{P}(|G_{(k)}-\omega_q|\ge \delta)$ using Lemma~\ref{lemma:orderstat_conc} and we bound the remainder integral using Lemma~\ref{lemma:normalizedkurtosisbound}.
This yields
\begin{align}
	\mathrm{P}(\Xi_n\ge \tau_n+\epsilon/2)
	\le
	\exp\!\Bigl(-\frac{\bigl(n a(\epsilon)(\epsilon/2)-b(\epsilon)\bigr)_+^2}{2\constgammasig\,n}\Bigr)
	+2\exp\!\bigl(-2n\,\Delta(\epsilon)^2\bigr)
	+8\constv\,n^{-1/2},
	\label{eq:upper_tail_taun}
\end{align}
which is the desired upper-tail estimate around $\tau_n$.

	\newcounter{constgammashiftcnt}\setcounter{constgammashiftcnt}{\value{cc}}
\newcommand{\constgammashift}{C_{\theconstgammashiftcnt}}\addtocounter{cc}{1}

{\bf Step-5:} We combine the two one sided bounds and then switch from the proxy threshold to the final target threshold. We combine \eqref{eq:lower_tail_taun} and \eqref{eq:upper_tail_taun} with a union bound, and we obtain
\begin{align}
	\mathrm{P}\bigl(|\Xi_n-\tau_n|\ge \epsilon/2\bigr)
	\le
	16\constv\,n^{-1/2}
	+2\exp\!\Bigl(-\frac{\bigl(n a(\epsilon)(\epsilon/2)-b(\epsilon)\bigr)_+^2}{2\constgammasig\,n}\Bigr)
	+2\exp\!\bigl(-2n\,\Delta(\epsilon)^2\bigr).
	\label{eq:twosided_taun}
\end{align}
We next relate $\tau_n$ to $\tau_q$. We introduce $\nu\triangleq \mathrm{E}[G_{\le \omega_q}]/\mathrm{E}[G_{\ge \omega_q}]$ and we write
\begin{align}
\tau_n=\Bigl(1+\frac{k}{n-k}\nu\Bigr)^{-1},
\qquad
\tau_q=\Bigl(1+\frac{q}{1-q}\nu\Bigr)^{-1}.
\end{align}
Since the mapping $x\mapsto (1+x\nu)^{-1}$ is $\nu$-Lipschitz on $[0,\infty)$, we have
\begin{align}
|\tau_n-\tau_q|\le \nu\Bigl|\frac{k}{n-k}-\frac{q}{1-q}\Bigr|.
\end{align}
Using $k=\lceil nq\rceil\in[nq,nq+1]$ and $n-k\ge n(1-q)-1$, we obtain for all $n\ge 2/(1-q)$ that
\begin{align}
\Bigl|\frac{k}{n-k}-\frac{q}{1-q}\Bigr|\le \frac{4}{n(1-q)^2},
\end{align}
and therefore $|\tau_n-\tau_q|\le \frac{4\nu}{(1-q)^2}\cdot \frac{1}{n}$.
If we assume $n\ge \lceil 2\constgammashift/\epsilon\rceil$ for some constant  $\constgammashift\ge \frac{4\nu}{(1-q)^2}$, then $|\tau_n-\tau_q|\le \epsilon/2$.
In this case, the inclusion $\{|\Xi_n-\tau_q|\ge \epsilon\}\subseteq \{|\Xi_n-\tau_n|\ge \epsilon/2\}$ holds, and hence
\begin{align}
\mathrm{P}\bigl(|\Xi_n-\tau_q|\ge \epsilon\bigr)
\le
\mathrm{P}\bigl(|\Xi_n-\tau_n|\ge \epsilon/2\bigr).
\end{align}
Combining this inequality with \eqref{eq:twosided_taun} yields
		\begin{align}
		\label{ajskdjaskdjaskdjas}
	\mathrm{P}\Bigl(\bigl|\Xi_n-\tau_q\bigr|\ge \epsilon\Bigr)
	\le
16\constv\,n^{-1/2}
	+2\exp\!\Bigl(-\frac{\bigl(a(\epsilon)n\epsilon/2-b(\epsilon)\bigr)_+^2}{2\constgammasig\,n}\Bigr)
	+2\exp\!\Bigl(-2n\,\Delta(\epsilon)^2\Bigr).
\end{align}


{\bf Step-6:} We now simplify the two exponential terms in \eqref{ajskdjaskdjaskdjas} into an explicit $\exp(-\Theta(n\epsilon^2))$ form.


We recall $\delta=\epsilon/(2\constgammadelta)$ and
\begin{align}
a(\epsilon)=\tfrac12\bigl(\mathrm{E}[G_{\le \omega_q}]-\constgammamu\,\delta\bigr)_+,
\qquad
b(\epsilon)=\omega_q+\delta.
\end{align}
We introduce the small-$\epsilon$ threshold
\begin{align}
\constgammaeps \triangleq \frac{\mathrm{E}[G_{\le \omega_q}]\,\constgammadelta}{\constgammamu},
\end{align}
so that for every $\epsilon\in(0,\constgammaeps]$ we have $\constgammamu\,\delta\le \mathrm{E}[G_{\le \omega_q}]/2$ and hence
\begin{align}
	a(\epsilon)\ge \frac{1}{4}\mathrm{E}[G_{\le \omega_q}].
	\label{eq:amin_lower}
\end{align}
Moreover, for $\epsilon\in(0,\constgammaeps]$ we have
\begin{align}
	b(\epsilon)=\omega_q+\delta \le \omega_q+\frac{\constgammaeps}{2\constgammadelta}.
	\label{eq:bmax_upper}
\end{align}
We next choose $\constgammano>0$ large enough so that for every $\epsilon\in(0,\constgammaeps]$ and every
\begin{align}
n\ge \frac{\constgammano}{\epsilon},
\end{align}
we have both $n\ge \lceil 2\constgammashift/\epsilon\rceil$ (so that $|\tau_n-\tau_q|\le \epsilon/2$ as in Step-5) and
\begin{align}
	a(\epsilon)\,n\,\epsilon/2 \ge 2b(\epsilon).
	\label{eq:activate_plus}
\end{align}
Combining \eqref{eq:amin_lower}--\eqref{eq:activate_plus} yields
\begin{align}
\bigl(a(\epsilon)n\epsilon/2-b(\epsilon)\bigr)_+ \ge a(\epsilon)n\epsilon/4 \ge \frac{\mathrm{E}[G_{\le \omega_q}]}{16}\,n\epsilon,
\end{align}
and therefore the first exponential term in \eqref{ajskdjaskdjaskdjas} satisfies
\begin{align}
	2\exp\!\Bigl(-\frac{\bigl(a(\epsilon)n\epsilon/2-b(\epsilon)\bigr)_+^2}{2\constgammasig\,n}\Bigr)
	\le
	2\exp\!\Bigl(-\frac{\mathrm{E}[G_{\le \omega_q}]^2}{512\,\constgammasig}\,n\epsilon^2\Bigr).
	\label{eq:exp1_clean}
\end{align}

For the order-statistic term, we recall $\Delta(\epsilon)=\min\{q-F_G(\omega_q-\delta),\,F_G(\omega_q+\delta)-q\}$.
Since $f_G$ is continuous and strictly positive at $\omega_q$, there exists $\delta_0>0$ such that
$f_G(x)\ge \tfrac12 f_G(\omega_q)$ for all $x\in[\omega_q-\delta_0,\omega_q+\delta_0]$.
Hence, if $\delta\le \delta_0$, that is, if $\epsilon\le 2\constgammadelta\,\delta_0$, then
\begin{align}
\Delta(\epsilon)\ge \frac12 f_G(\omega_q)\,\delta
= \frac{f_G(\omega_q)}{4\constgammadelta}\,\epsilon,
\end{align}
and thus the second exponential term satisfies
\begin{align}
	2\exp\!\bigl(-2n\,\Delta(\epsilon)^2\bigr)
	\le
	2\exp\!\Bigl(-\frac{f_G(\omega_q)^2}{8\,\constgammadelta^2}\,n\epsilon^2\Bigr),
	\label{eq:exp2_clean}
\end{align}
where $f_G(\omega_q)=\frac{1}{\Gamma(s)\theta^s}\omega_q^{s-1}e^{-\omega_q/\theta}$.

We now set
\begin{align}
	\constgammaone \triangleq 16\constv,
	\qquad
	\constgammatwo \triangleq \min\Bigl\{\frac{\mathrm{E}[G_{\le \omega_q}]^2}{512\,\constgammasig},\ \frac{f_G(\omega_q)^2}{8\,\constgammadelta^2}\Bigr\}.
	\label{eq:consts_clean}
\end{align}
Combining \eqref{ajskdjaskdjaskdjas} with \eqref{eq:exp1_clean}--\eqref{eq:consts_clean} yields that for every
\begin{align}
0<\epsilon \le \min\{\constgammaeps,\ 2\constgammadelta\,\delta_0\}
\quad\text{and}\quad
n\ge \constgammano/\epsilon,
\end{align}
we have the  two-sided bound
\begin{align}
	\mathrm{P}\Bigl(\bigl|\Xi_n-\tau_q\bigr|\ge \epsilon\Bigr)
	\le
	\constgammaone\,n^{-1/2}
	+
	4\exp\!\Bigl(-\constgammatwo\,n\epsilon^2\Bigr).
	\label{eq:tauq_clean_final}
\end{align}
This completes the proof.

\subsection{Proof of Corollary \ref{corol:concentration}}
\label{proofof:corol:concentration}
Let $N_1,\ldots,N_n$ be independent and identically distributed $\mathrm{N}(0,1)$ random variables. We can set $\mathbf{W} = [N_1 \cdots N_n]^T / (\sum_{i=1}^n N_i^2)^{1/2}$. Let $|N_{i_1}| \leq \cdots \leq |N_{i_n}|$, where $i_1,\ldots,i_n$ is a permutation of $1,\ldots,n$. By definition, the random vector $\mathbf{W}_p$ is zero at all indices except at $\{i_j: j=\lceil nq \rceil +1,\ldots,n\}$ where it equals $\mathbf{W}$. This implies $	\mathbf{W}^T \mathbf{W}_{\mathrm{p}} =\sum_{j=\lceil nq \rceil +1}^n N_{i_j}^2 / \sum_{i=1}^n N_i^2$. The statement for $\mathbf{W}^T \mathbf{W}_{\mathrm{p}}$ now follows from Theorem \ref{theorem:gamma} since $\mathrm{N}(0,1)^2$ is a Gamma random variable with shape $\frac{1}{2}$ and scale $2$. The convergence of $\mathbf{W}^T \mathbf{W}_{\mathrm{ee}}$ is proved similarly.

\subsection{Semi-structured $N\!:\!M$ pruning}
\label{sec:semistructured_nm}

In this section, we extend our pruning concentration results to semi-structured pruning \cite{frantar2023sparsegpt, liu2026armor}. Fix integers $M\geq 1$ and $N\in\{1,\ldots,M\}$. For every positive integer $B$, let $n=BM$ and partition the coordinates of a vector in $\mathbb{R}^n$ into $B$ disjoint consecutive blocks of size $M$. Within each block, we keep the $N$ largest-magnitude coordinates and set the remaining $M-N$ coordinates to zero.

For a block of size $M$, let $G_1,\ldots,G_M$ be IID copies of $G\sim \Gamma(\tfrac12,2)$, and let $G_{(1)}\leq \cdots \leq G_{(M)}$ denote their order statistics. Define
\begin{align}
	\label{eq:tau_nm_orderstat}
	\tau_{N:M}
	\triangleq
	\frac{1}{M}\,\mathbb{E}\Bigl[\sum_{r=M-N+1}^{M} G_{(r)}\Bigr].
\end{align}
The next theorem shows that a new constant $\tau_{N:M}$ plays exactly the same role for blockwise $N\!:\!M$ pruning that $\tau_q$ played for globally unstructured one-shot pruning.

\begin{theorem}
	\label{theorem:blockwise_nm}
	Let $M\geq 1$ and $N\in\{1,\ldots,M\}$ be fixed. For each $B\geq 1$, let $n=BM$ and let $\mathbf{W}\sim \mathrm{Unif}(\mathbb{S}^{n-1})$. Partition the coordinates of $\mathbf{W}$ into $B$ disjoint blocks of size $M$, and in each block keep the $N$ largest-magnitude entries, producing the blockwise pruned vector $\mathbf{W}_{N:M}$. Let
\begin{align}
	\widetilde{\mathbf{W}}_{N:M}
	\triangleq
	\frac{\mathbf{W}_{N:M}}{\|\mathbf{W}_{N:M}\|}
\end{align}
	denote the renormalized pruned vector. Then, as $B\to\infty$,
	\begin{align}
		\mathbf{W}^{\top}\mathbf{W}_{N:M} &\xrightarrow{p} \tau_{N:M},
		\label{eq:blockwise_nm_raw_conv}
		\\
		\mathbf{W}^{\top}\widetilde{\mathbf{W}}_{N:M} &\xrightarrow{p} \sqrt{\tau_{N:M}}.
		\label{eq:blockwise_nm_renorm_conv}
	\end{align}
	Moreover,
	\begin{align}
		\tau_{N:M}
		=
		\int_0^{\infty} x f(x)
		\sum_{k=0}^{N-1} {M-1\choose k}(1-F(x))^k F(x)^{M-1-k}\,\mathrm{d}x,
		\label{eq:tau_nm_integral}
	\end{align}
	where $f$ and $F$ are the PDF and CDF of $\Gamma(\tfrac12,2)$.
\end{theorem}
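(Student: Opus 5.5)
The plan is to represent $\mathbf{W}$ through Gaussians, exactly as in the proof of Corollary~\ref{corol:concentration}, and then reduce everything to a law of large numbers over independent blocks. Write $\mathbf{W}=\mathbf{N}/\|\mathbf{N}\|$ with $\mathbf{N}=[N_1\cdots N_n]^T$ having IID $\mathrm{N}(0,1)$ entries. Set $G_i\triangleq N_i^2\sim\Gamma(\tfrac12,2)$. Blockwise magnitude pruning keeps, in block $b$, the indices of the $N$ largest $G_i$ in that block. This gives
\begin{align}
\mathbf{W}^{\top}\mathbf{W}_{N:M}=\frac{\sum_{b=1}^{B} S_b}{\sum_{b=1}^{B} T_b},
\end{align}
where $T_b\triangleq\sum_{i\in\text{block } b}G_i$ and $S_b\triangleq\sum_{r=M-N+1}^{M}G^{(b)}_{(r)}$ is the sum of the top $N$ order statistics within block $b$.

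Unlike the unstructured case of Theorem~\ref{theorem:gamma}, the pairs $(S_b,T_b)$ are IID across $b$, because the blocks are disjoint and the selection rule is local. No conditioning on a global order statistic is needed. Since $0\le S_b\le T_b$ and $\mathbb{E}[T_b]=M\,\mathbb{E}[G]=M$, both have finite means. The weak law of large numbers then gives $B^{-1}\sum_b S_b\to\mathbb{E}[S_1]=M\tau_{N:M}$ and $B^{-1}\sum_b T_b\to M$ in probability. The continuous mapping theorem yields \eqref{eq:blockwise_nm_raw_conv}. Finite variances, which hold since Gamma variables have all moments, would also give Chebyshev rates if desired.

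For \eqref{eq:blockwise_nm_renorm_conv}, I will use $\mathbf{W}_{N:M}$ supported on the kept indices, where it equals $\mathbf{W}$. This gives $\mathbf{W}^{\top}\mathbf{W}_{N:M}=\|\mathbf{W}_{N:M}\|^2$, and therefore $\mathbf{W}^{\top}\widetilde{\mathbf{W}}_{N:M}=\|\mathbf{W}_{N:M}\|=\sqrt{\mathbf{W}^{\top}\mathbf{W}_{N:M}}$, since $\|\mathbf{W}\|=1$. Continuity of the square root finishes this part. I need $\tau_{N:M}>0$, which holds because $S_1>0$ almost surely.

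The integral formula \eqref{eq:tau_nm_integral} is the step needing the most care, though it is elementary. Write $\mathbb{E}[S_1]=\sum_{i=1}^{M}\mathbb{E}[G_i\mathbf{1}\{G_i\text{ is among the top }N\}]$, which equals $M\,\mathbb{E}[G_1\mathbf{1}\{G_1\text{ kept}\}]$ by exchangeability; ties have probability zero. Condition on $G_1=x$. Then $G_1$ is kept iff at most $N-1$ of the other $M-1$ variables exceed $x$. That count is $\mathrm{Bin}(M-1,1-F(x))$, so the conditional probability is $\sum_{k=0}^{N-1}\binom{M-1}{k}(1-F(x))^kF(x)^{M-1-k}$. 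Integrating against $x f(x)$ and dividing by $M$ gives \eqref{eq:tau_nm_integral}.

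I will also check the sanity case $N=M$, where the sum equals $1$ and the formula gives $\tau=\mathbb{E}[G]=1$, as it should.
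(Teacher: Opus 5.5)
Your proposal is correct and follows essentially the same route as the paper: the Gaussian representation, IID block pairs with the weak law of large numbers and Slutsky/continuous mapping, the identity $\|\mathbf{W}_{N:M}\|^2=\mathbf{W}^{\top}\mathbf{W}_{N:M}$ for the renormalized limit, and exchangeability plus a binomial count for the integral formula. Your labels $S_b$ and $T_b$ are swapped relative to the paper's, which is harmless, and you do not need $\tau_{N:M}>0$, since $\sqrt{\cdot}$ is continuous on all of $[0,\infty)$.
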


\begin{proof}
	Let $Z_1,\ldots,Z_n$ be IID $\mathrm{N}(0,1)$ random variables and write
\begin{align}
	\mathbf{W}=\frac{1}{(\sum_{i=1}^{n} Z_i^2)^{1/2}}[Z_1\ \cdots\ Z_n]^{\top}.
\end{align}
	Since dividing by the positive scalar $(\sum_{i=1}^{n} Z_i^2)^{1/2}$ does not change the within-block order of magnitudes, blockwise pruning commutes with this normalization. Hence, if we partition the Gaussian coordinates into blocks indexed by $b\in\{1,\ldots,B\}$ and $j\in\{1,\ldots,M\}$, then for each block we may define
\begin{align}
	S_b \triangleq \sum_{j=1}^{M} Z_{b,j}^2,
	\qquad
	T_b \triangleq \sum_{r=M-N+1}^{M} G_{b,(r)},
\end{align}
	where $G_{b,(1)}\leq \cdots \leq G_{b,(M)}$ are the order statistics of $Z_{b,1}^2,\ldots,Z_{b,M}^2$. With this notation,
	\begin{align}
		\mathbf{W}^{\top}\mathbf{W}_{N:M}
		=
		\frac{\sum_{b=1}^{B} T_b}{\sum_{b=1}^{B} S_b}.
		\label{eq:blockwise_ratio_rep}
	\end{align}
	The pairs $(S_b,T_b)$ are IID across $b$, and both have finite means because $M$ is fixed. By the weak law of large numbers,
	\begin{align}
	\frac{1}{B}\sum_{b=1}^{B} T_b \xrightarrow{p} \mathbb{E}[T_1],
	\qquad
	\frac{1}{B}\sum_{b=1}^{B} S_b \xrightarrow{p} \mathbb{E}[S_1].
	\end{align}
	Since $S_1=\sum_{j=1}^{M} Z_{1,j}^2$ and $Z_{1,j}^2\sim \Gamma(\tfrac12,2)$ has mean $1$, we have $\mathbb{E}[S_1]=M$. Applying Slutsky's theorem to \eqref{eq:blockwise_ratio_rep} yields
	\begin{align}
	\mathbf{W}^{\top}\mathbf{W}_{N:M}
	\xrightarrow{p}
	\frac{\mathbb{E}[T_1]}{M}
	=
	\frac{1}{M}\,\mathbb{E}\Bigl[\sum_{r=M-N+1}^{M} G_{(r)}\Bigr]
	=
	\tau_{N:M},
	\end{align}
	which proves \eqref{eq:blockwise_nm_raw_conv}.
	
	For the renormalized vector, note that
	\begin{align}
	\|\mathbf{W}_{N:M}\|^2
	=
	\sum_{i=1}^{n} (W_{N:M,i})^2
	=
	\sum_{i=1}^{n} W_i (W_{N:M,i})
	=
	\mathbf{W}^{\top}\mathbf{W}_{N:M},
	\end{align}
	because $\mathbf{W}_{N:M}$ is obtained from $\mathbf{W}$ by zeroing coordinates and leaving the retained coordinates unchanged. Therefore,
	\begin{align}
	\mathbf{W}^{\top}\widetilde{\mathbf{W}}_{N:M}
	=
	\frac{\mathbf{W}^{\top}\mathbf{W}_{N:M}}{\|\mathbf{W}_{N:M}\|}
	=
	\sqrt{\mathbf{W}^{\top}\mathbf{W}_{N:M}}.
	\end{align}
	The map $x\mapsto \sqrt{x}$ is continuous on $[0,\infty)$, so \eqref{eq:blockwise_nm_renorm_conv} follows from \eqref{eq:blockwise_nm_raw_conv} by the continuous mapping theorem.
	
	It remains to prove the integral representation \eqref{eq:tau_nm_integral}. By exchangeability,
	\begin{align}
		\mathbb{E}[T_1]
		&=
		M\,\mathbb{E}\bigl[G_1\,\mathbf{1}\{G_1\text{ is retained}\}\bigr].
		\label{eq:tau_nm_exchangeability}
	\end{align}
	Conditioning on $G_1=x$, the coordinate $G_1$ is retained if and only if at most $N-1$ of the remaining $M-1$ coordinates exceed $x$. Since each of those coordinates exceeds $x$ with probability $1-F(x)$, we obtain
	\begin{align}
		\mathbb{P}(G_1\text{ is retained}\mid G_1=x)
		=
		\sum_{k=0}^{N-1} {M-1\choose k}(1-F(x))^k F(x)^{M-1-k}.
		\label{eq:retain_prob_nm}
	\end{align}
	Substituting \eqref{eq:retain_prob_nm} into \eqref{eq:tau_nm_exchangeability}, and then dividing by $M$, gives
	\begin{align}
		\tau_{N:M}
		&=
		\int_0^{\infty} x f(x)
		\sum_{k=0}^{N-1} {M-1\choose k}(1-F(x))^k F(x)^{M-1-k}\,\mathrm{d}x,
	\end{align}
	as claimed.
\end{proof}

The previous theorem is the practically relevant regime for fixed local sparsity patterns such as $2\!:\!4$. The next proposition records the complementary scaling in which the block size itself grows proportionally with the ambient dimension. In that case, the semi-structured limit collapses back to the original unstructured constant $\tau_q$.

\begin{proposition}
	\label{prop:blockwise_nm_linear_blocks}
	Fix constants $0<\beta_1\leq \beta_2\leq 1$ such that $B\triangleq 1/\beta_2$ is a positive integer. For each $n$, let $M_n=\beta_2 n$ and $N_n=\beta_1 n$, partition the coordinates of $\mathbf{W}\sim \mathrm{Unif}(\mathbb{S}^{n-1})$ into $B$ disjoint blocks of size $M_n$, and in each block keep the $N_n$ largest-magnitude entries. Let $\mathbf{W}_{p,n}$ denote the resulting pruned vector and let $\widetilde{\mathbf{W}}_{p,n}=\mathbf{W}_{p,n}/\|\mathbf{W}_{p,n}\|$. Define
	\begin{align}
	q \triangleq 1-\frac{\beta_1}{\beta_2}.
	\end{align}
	Then, as $n\to\infty$,
	\begin{align}
		\mathbf{W}^{\top}\mathbf{W}_{p,n} &\xrightarrow{p} \tau_q,
		\\
		\mathbf{W}^{\top}\widetilde{\mathbf{W}}_{p,n} &\xrightarrow{p} \sqrt{\tau_q},
	\end{align}
	where $\tau_q$ is the same constant as in Theorem~\ref{theorem:gamma} and Corollary~\ref{corol:concentration}.
\end{proposition}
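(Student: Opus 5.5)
We reduce to a ratio of sums of order statistics, exactly as in the proof of Theorem~\ref{theorem:blockwise_nm}. Write $\mathbf{W}=[Z_1\ \cdots\ Z_n]^{\top}/(\sum_i Z_i^2)^{1/2}$ with $Z_i$ IID $\mathrm{N}(0,1)$, and split the coordinates into the $B=1/\beta_2$ fixed blocks $b\in\{1,\ldots,B\}$, each of size $M_n=\beta_2 n$. Normalization does not change within-block magnitude order, so
\begin{align}
	\mathbf{W}^{\top}\mathbf{W}_{p,n}=\frac{\sum_{b=1}^{B} T_b^{(n)}}{\sum_{b=1}^{B} S_b^{(n)}},
\end{align}
where $S_b^{(n)}=\sum_{j} Z_{b,j}^2$ is the total energy of block $b$. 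The quantity $T_b^{(n)}$ is the sum of the $N_n$ largest values $G_{b,(r)}$ among the $M_n$ squared entries of block $b$. The blocks are independent, and $B$ does not depend on $n$.

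\textbf{Per-block limit.} For each block, the number of pruned coordinates is $M_n-N_n=qM_n$, with $q=1-\beta_1/\beta_2$. The ratio $T_b^{(n)}/S_b^{(n)}$ is therefore exactly the statistic $\Xi_{M_n}$ of Theorem~\ref{theorem:gamma}, with sample size $M_n$ and $G\sim\mathrm{Gamma}(\tfrac12,2)$. One detail needs checking: $\lceil qM_n\rceil=qM_n$ only if $qM_n$ is an integer. If it is not, the off-by-one discrepancy contributes at most one order statistic. That order statistic is $O_p(\log n)$, while the denominator is $\Theta_p(n)$, so it is negligible. Applying Theorem~\ref{theorem:gamma} with $M_n\to\infty$ then gives $T_b^{(n)}/S_b^{(n)}\xrightarrow{p}\tau_q$ for each $b$.

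\textbf{Aggregation.} By the weak law of large numbers, $S_b^{(n)}/M_n\xrightarrow{p}1$ for every $b$, since $\mathbb{E}[Z^2]=1$. Write the global ratio as a convex combination of the per-block ratios:
\begin{align}
	\frac{\sum_b T_b^{(n)}}{\sum_b S_b^{(n)}}=\sum_{b=1}^{B}\frac{S_b^{(n)}}{\sum_{b'}S_{b'}^{(n)}}\cdot\frac{T_b^{(n)}}{S_b^{(n)}}.
\end{align}
The weights converge in probability to $1/B$, each ratio converges to $\tau_q$, and $B$ is finite. Slutsky's theorem (or the continuous mapping theorem applied to a finite vector) then gives convergence to $\tau_q$. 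The weight limit is not even needed: any convex combination of terms that all converge to $\tau_q$ converges to $\tau_q$.

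\textbf{Renormalized vector.} Retained coordinates are unchanged by pruning, so $\|\mathbf{W}_{p,n}\|^2=\mathbf{W}^{\top}\mathbf{W}_{p,n}$. Hence $\mathbf{W}^{\top}\widetilde{\mathbf{W}}_{p,n}=\sqrt{\mathbf{W}^{\top}\mathbf{W}_{p,n}}$, which converges to $\sqrt{\tau_q}$ by the continuous mapping theorem.

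\textbf{Main obstacle.} The main technical point is the integrality and ceiling bookkeeping in the per-block limit. It requires a crude bound on the maximum of $n$ chi-square variables, such as $\max_i Z_i^2=O_p(\log n)$, to show that swapping a single order statistic in or out of $T_b^{(n)}$ does not affect the limit. Everything else reuses Theorem~\ref{theorem:gamma} directly.
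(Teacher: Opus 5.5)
Your proof is correct and follows the paper's argument step for step: the same blockwise Gaussian representation, the same convex-combination decomposition with Theorem~\ref{theorem:gamma} applied inside each block, and the same identity $\mathbf{W}^{\top}\widetilde{\mathbf{W}}_{p,n}=\sqrt{\mathbf{W}^{\top}\mathbf{W}_{p,n}}$ for the renormalized vector. Two small remarks: your ceiling concern is vacuous here, since $qM_n=M_n-N_n$ is an integer whenever the block sizes are. Your observation that the weight limits are unnecessary (a convex combination of terms all tending to $\tau_q$ tends to $\tau_q$) slightly simplifies the paper's proof, which computes the weight limits $\beta_2$ explicitly.
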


\begin{proof}
	Write the Gaussian representation blockwise as before, and let $S_{b,n}$ denote the total squared energy in block $b$, while $T_{b,n}$ denotes the retained squared energy after keeping the top $N_n$ magnitudes in that block. Then
	\begin{align}
		\mathbf{W}^{\top}\mathbf{W}_{p,n}
		=
		\frac{\sum_{b=1}^{B} T_{b,n}}{\sum_{b=1}^{B} S_{b,n}}
		=
		\sum_{b=1}^{B}
		\frac{S_{b,n}}{\sum_{c=1}^{B} S_{c,n}}
		\frac{T_{b,n}}{S_{b,n}}.
		\label{eq:blockwise_linear_ratio_decomp}
	\end{align}
	For each fixed block $b$, the block dimension is $M_n=\beta_2 n\to\infty$, and the fraction removed within that block is
	\begin{align}
	1-\frac{N_n}{M_n}=1-\frac{\beta_1}{\beta_2}=q.
	\end{align}
	Therefore, Theorem~\ref{theorem:gamma} applied inside block $b$ yields
	\begin{align}
		\frac{T_{b,n}}{S_{b,n}} \xrightarrow{p} \tau_q.
		\label{eq:blockwise_linear_local_conv}
	\end{align}
	Also, since $S_{b,n}$ is a sum of $M_n$ IID copies of $\Gamma(\tfrac12,2)$, the weak law of large numbers gives
	\begin{align}
	\frac{S_{b,n}}{n}
	=
	\frac{M_n}{n}\cdot \frac{S_{b,n}}{M_n}
	\xrightarrow{p}
	\beta_2,
	\qquad b=1,\ldots,B.
	\end{align}
	Summing over the finitely many blocks yields $\sum_{c=1}^{B} S_{c,n}/n \xrightarrow{p} B\beta_2 = 1$, and therefore
	\begin{align}
		\frac{S_{b,n}}{\sum_{c=1}^{B} S_{c,n}} \xrightarrow{p} \beta_2,
		\qquad b=1,\ldots,B.
		\label{eq:blockwise_linear_weight_conv}
	\end{align}
	Combining \eqref{eq:blockwise_linear_ratio_decomp}, \eqref{eq:blockwise_linear_local_conv}, and \eqref{eq:blockwise_linear_weight_conv}, and using the fact that $B$ is fixed, we obtain
	\begin{align}
	\mathbf{W}^{\top}\mathbf{W}_{p,n}
	\xrightarrow{p}
	\sum_{b=1}^{B} \beta_2\tau_q
	=
	B\beta_2\tau_q
	=
	\tau_q.
	\end{align}
	The renormalized statement follows exactly as in the proof of Theorem~\ref{theorem:blockwise_nm}, because
	\begin{align}
	\mathbf{W}^{\top}\widetilde{\mathbf{W}}_{p,n}
	=
	\sqrt{\mathbf{W}^{\top}\mathbf{W}_{p,n}}.
	\end{align}
\end{proof}
As a numerical reference point, the integral \eqref{eq:tau_nm_integral} yields $\tau_{2:4}\approx 0.86755$, so the corresponding renormalized similarity limit is $\sqrt{\tau_{2:4}}\approx 0.93142$.

%% file: tex/app/ee-neuron-app.tex
\section{Proof of Theorem \ref{theorem:tradeoff}}
\label{proofof:theorem:tradeoff}
Let us first calculate and upper bound on the normalized FLOPs. It is easily seen that $|\mathbf{W}_{\mathrm{ee}}^T\mathbf{X}|^2$ is a Chi-squared random variable with $1$ degree of freedom.  We thus obtain 
$\mathrm{P}(|\mathbf{W}_{\mathrm{ee}}^T\mathbf{X}| \geq \tau) = \Gamma(\frac{1}{2},\frac{\tau^2}{2})$, where $\Gamma(\cdot,\cdot)$ is the upper incomplete Gamma function. The normalized FLOPs is thus
\begin{align}
	\overline{\mu}_c' & = (1-q)\Gamma\left(\frac{1}{2},\frac{\tau^2}{2}\right) + \gamma\left(\frac{1}{2},\frac{\tau^2}{2}\right),
\end{align}
where $\gamma(\cdot,\cdot)$ is the lower incomplete Gamma function.
Further, we obtain
\begin{align}
	\overline{\mu}_c'& = (1-q) + q\gamma\left(\frac{1}{2}, \frac{\tau^2}{2}\right) \\
	& = 1-q  + q\mathrm{erf}\left(\sqrt{ \frac{\tau^2}{2}}\right) \\
	\label{alskdhjalksjdas} & \leq 1-q + q\biggl(1-\underbrace{\sqrt{\frac{2e}{\pi}}}_{>1}\frac{\sqrt{\beta-1}}{\beta}e^{-\beta \tau^2 /2 }\biggr) \\
	& \leq 1- q\frac{\sqrt{\beta-1}}{\beta}e^{-\beta \tau^2 /2 } \\
	\label{alskdhjalksjdas2} & \leq 1- \frac{q}{2} e^{- \tau^2 }
\end{align}
The upper bound on the error function in (\ref{alskdhjalksjdas}) follows from \cite{chang2011chernoff}, and is valid for any $\beta > 1$. In (\ref{alskdhjalksjdas2}), we substituted $\beta = 2$. 

We now analyze the generalization error $\epsilon_c$. We consider the following  events:
\begin{itemize}
	\item Let $E_0$ be the event where the decisions of the conditional perceptron and the teacher do not match so that $\epsilon_c = \mathrm{P}(E_0)$. 
	\item We let $E_1$ denote the event that the student $\mathbf{W}$ and the teacher $\mathbf{T}$ are at least $\delta_1$-close with respect to the angular distance. In other words, let $E_1$ denote the event that $\arccos \mathbf{W}^T\mathbf{T} \leq \delta_1$, where $\delta_1\in[0,\frac{\pi}{2}]$.
	\item Let $E_2$ be the event that the student and the early exit vector (which are derived from the student weights) are $\delta_2$-close. In other words, let $E_2$ represent the event $\arccos \mathbf{W}^{\dagger}\mathbf{W}_{\mathrm{ee}} \leq \delta_2$, where $\delta_2\in[0,\frac{\pi}{2}]$.
	\item Finally, let $E_3$ be the event that $|\mathbf{X}^{T}\mathbf{W}_{\mathrm{ee}}| \geq \tau$, encoding the criterion of early exit in the definition of the conditional perceptron in Section 4.2. 
\end{itemize}
We have
\begin{align}
	\epsilon_{\mathrm{c}} & = \mathrm{P}(E_0E_1E_2) + \underbrace{\mathrm{P}(E_0 | E_1^c \mbox{ or } E_2^c)}_{\leq 1}\underbrace{\mathrm{P}(E_1^c \mbox{ or } E_2^c)}_{\leq \mathrm{P}(E_1^c) +\mathrm{P}(E_2^c) } \\
	& = \mathrm{P}(E_0 E_1E_2E_3)+ \underbrace{\mathrm{P}(E_0 E_1E_2E_3^c)}_{\leq \mathrm{P}(E_0 E_3^c) \leq \epsilon_{\mathrm{uc}}} + \mathrm{P}(E_1^c) +\mathrm{P}(E_2^c) \\
	& = \epsilon_{\mathrm{uc}} + \mathrm{P}(E_1^c) +\mathrm{P}(E_2^c) +   \mathrm{P}(E_0 E_1E_2E_3) 
\end{align}
Let $\overline{\epsilon_{uc}}$ denote the asymptotic $n,N_t\rightarrow\infty$ generalization error for the unconditional perceptron so that $\epsilon_{\mathrm{uc}} =  \frac{1}{\pi} \mathrm{E}[\arccos \mathbf{W}^T\mathbf{T}] \rightarrow \overline{\epsilon_{uc}}$ as $n\rightarrow\infty$. Due to the self-averaging property learning \cite{buhot1997finite}, we have, in addition, the convergence in mean $\frac{1}{\pi} \arccos \mathbf{W}^T\mathbf{T} \rightarrow \overline{\epsilon_{uc}}$. Hence, if $\delta_1 > \pi  \overline{\epsilon_{uc}}$, we have $P(E_1^c)\rightarrow 0$. With a similar argument, provided that $\delta_2 > \mathrm{arccos}\sqrt{\tau_q}$, we have $P(E_2^c)\rightarrow 0$ as a result of Corollary \ref{corol:concentration}. Letting $n,N_t\rightarrow\infty$, we thus obtain
\begin{align}
	\label{combjdjdkdjd}
	\overline{\epsilon_{\mathrm{c}}} \leq  \overline{\epsilon_{\mathrm{uc}}}  + \lim\sup_{n\rightarrow\infty}  \mathrm{P}(E_0 E_1E_2E_3).
\end{align}
What is left to analyze is thus the term $ \mathrm{P}(E_0 E_1E_2E_3)$. By symmetry, we have
\begin{align}
	\mathrm{P}(E_0 E_1E_2E_3) = 2\mathrm{P}( \mathbf{X}^{T}\mathbf{T} < 0, \mathrm{arccos}\mathbf{W}^{\dagger}\mathbf{T} \leq \delta_1, \mathrm{arccos}\mathbf{W}^T\mathbf{W}_{\mathrm{ee}}\leq \delta_2,\mathbf{X}^{T}\mathbf{W}_{\mathrm{ee}} \geq \tau).
\end{align}
Let us recall the triangle inequality for angular distances: For unit norm vectors $a_1,a_2,a_3$, we have $\arccos a_1^{\dagger}a_2 \leq\arccos a_1^{\dagger}a_3 + \arccos a_3^{\dagger}a_2$. Therefore,
\begin{align}
	\mathrm{P}(E_0 E_1E_2E_3) & \leq 2\mathrm{P}( \mathbf{X}^{T}\mathbf{T} < 0, \mathrm{arccos}\mathbf{T}^T\mathbf{W}_{\mathrm{ee}}\leq \delta_1+\delta_2,\mathbf{X}^{T}\mathbf{W}_{\mathrm{ee}} \geq \tau) \\
	& = 2\mathrm{P}( \mathbf{X}^{T}\mathbf{T} < 0, \mathbf{T}^T\mathbf{W}_{\mathrm{ee}}\geq \cos (\delta_1+\delta_2),\mathbf{X}^{T}\mathbf{W}_{\mathrm{ee}} \geq \tau) \\
	\label{alkjsdalksdjasda}	& \leq 2\mathrm{P}( \mathbf{X}^{T}\mathbf{a} < 0,\mathbf{X}^{T}\mathbf{b} \geq \tau)
\end{align}
where $\mathbf{a}$ and $\mathbf{b}$ are arbitrary unit-norm deterministic vectors with $\mathbf{a}^T \mathbf{b} = \cos(\delta_1 + \delta_2)$. The random variables $\mathbf{X}^{T}\mathbf{a}$ and $\mathbf{X}^{T}\mathbf{b}$ are jointly Gaussian with zero mean, unit variance, and covariance $\rho \triangleq \cos(\delta_1 + \delta_2)$. We can thus evaluate the joint probability as
\begin{align}
	\mathrm{P}( \mathbf{X}^{T}\mathbf{a} < 0,\mathbf{X}^{T}\mathbf{b} \geq \tau)	& =  \int_{\tau}^{\infty}\int_{-\infty}^0 \frac{1}{2\pi \sqrt{1 - \rho^2}}\exp\left(-\frac{x^2-2\rho x y  + y^2}{2(1-\rho^2)}\right) \mathrm{d}x\mathrm{d}y \\
	& = \frac{1}{2\sqrt{2\pi}} \int_{\tau}^{\infty} e^{-y^2/2} \mathrm{erfc}\biggl( \frac{\rho y}{\sqrt{2(1-\rho^2)}} \biggr)\mathrm{d}y
\end{align}
Using the upper bound $\mathrm{erfc}x \leq e^{-x^2}$, we obtain
\begin{align}
	\mathrm{P}( \mathbf{X}^{T}\mathbf{a} < 0,\mathbf{X}^{T}\mathbf{b} \geq \tau)	& \leq  \frac{1}{2\sqrt{2\pi}} \int_{\tau}^{\infty} e^{-y^2/2} e^{ -\frac{\rho^2 y^2}{{2(1-\rho^2)}}}\mathrm{d}y \\
	& =  \frac{1}{2\sqrt{2\pi}} \int_{\tau}^{\infty} e^{ -\frac{ y^2}{{2(1-\rho^2)}}}\mathrm{d}y \\
	& = \frac{1}{4} \sqrt{1 - \rho^2} \mathrm{erfc}\biggl(\frac{\tau}{\sqrt{2(1-\rho^2)}}\biggr) \\
	& \leq  \frac{1}{2}e^{-\frac{\tau^2}{2(1-\rho^2)}},
\end{align}
and thus by (\ref{alkjsdalksdjasda}) and (\ref{combjdjdkdjd}), we have
\begin{align}
	\label{combjdjdkdjd22}
	\overline{\epsilon_{\mathrm{c}}} \leq  \overline{\epsilon_{\mathrm{uc}}}  + e^{-\frac{\tau^2}{2(1-\rho^2)}}.
\end{align}
Finally, a joint consideration with (\ref{alskdhjalksjdas2}), where $\epsilon = \frac{q}{2} e^{-\frac{\tau^2}{2}}$, concludes the proof of the theorem. 

%% file: tex/app/prune-network-app.tex
\section{Proofs of Results on Static Networks}

\subsection{Proof of Theorem~\ref{theorem:deep_general_phi}}
\label{app:proof_deep_general_phi}
Our proof follows the standard wide-network induction used for Gaussian-process limits and kernel recursions, see for example
Neal~\cite{neal1996priors}, Williams~\cite{williams1997computing}, Cho and Saul~\cite{cho2009kernel}, Daniely et al.~\cite{daniely2016toward},
Lee et al.~\cite{lee2018deep}, Matthews et al.~\cite{matthews2018gaussian}, and Yang~\cite{yang2019tensor}. We adapt this induction to the
coupled pair (unpruned, renormalized-pruned) by tracking a joint Gaussian limit for the paired pre-activations, which yields the
cross-kernel recursion.

\textbf{Step 1:} Fix the finite input set $\inputvec_1,\ldots,\inputvec_r$. For $\ell\ge 1$, let $\mathcal{F}_{\ell-1}$ be the sigma field
generated by $\{\matW^{(1)},\ldots,\matW^{(\ell-1)}\}$ and their pruned/renormalized versions, the readout vector
$\readoutvec$, and the inputs. Conditional on $\mathcal{F}_{\ell-1}$, the collections
$\{\avec^{(\ell-1)}(\inputvec_a)\}_{a=1}^r$ and $\{\widehat{\avec}^{(\ell-1)}(\inputvec_a)\}_{a=1}^r$ are deterministic.
Moreover, $\matW^{(\ell)}$ is independent of $\mathcal{F}_{\ell-1}$ and its rows $\{\matW^{(\ell)}_{i,:}\}_{i=1}^{m_\ell}$ are IID.
Neuronwise pruning and renormalization are applied row-by-row as measurable functions of each IID row, hence the pairs
$\big(\matW^{(\ell)}_{i,:},\widehat{\matW}^{(\ell)}_{i,:}\big)$ are IID across $i$. Consequently, conditional on $\mathcal{F}_{\ell-1}$,
the $2r$-dimensional vectors
\begin{align}
\Bigl(z^{(\ell)}_i(\inputvec_1),\ldots,z^{(\ell)}_i(\inputvec_r)\,;\,
\widehat{z}^{(\ell)}_i(\inputvec_1),\ldots,\widehat{z}^{(\ell)}_i(\inputvec_r)\Bigr)\in\mathbb{R}^{2r}
\end{align}
are IID over $i\in\{1,\ldots,m_\ell\}$.

\textbf{Step 2:} For a fixed neuron \(i\), we show that the coupled vector \(\bigl(z_i^{(\ell)}(\inputvec_a),\widehat z_i^{(\ell)}(\inputvec_a)\bigr)_{a=1}^r\) has a joint Gaussian limit, and determine resulting covariance structure. Fix $\ell\in\{1,\ldots,L\}$ and a neuron index $i$. Conditional on $\mathcal{F}_{\ell-1}$, for each $a\in\{1,\ldots,r\}$,
\begin{align}
z_i^{(\ell)}(\inputvec_a)=\frac{1}{\sqrt{m_{\ell-1}}}\sum_{j=1}^{m_{\ell-1}} \matW^{(\ell)}_{ij}\,a^{(\ell-1)}_j(\inputvec_a),
\qquad
\widehat{z}_i^{(\ell)}(\inputvec_a)=\frac{1}{\sqrt{m_{\ell-1}}}\sum_{j=1}^{m_{\ell-1}} \widehat{\matW}^{(\ell)}_{ij}\,
\widehat{a}^{(\ell-1)}_j(\inputvec_a),
\end{align}
where $\widehat{\matW}^{(\ell)}_{ij}=\matW^{(\ell)}_{ij}\matM^{(\ell)}_{ij}/\sqrt{\tau_q}$ is the renormalized pruned weight.
For the unpruned sum, the summands are independent across $j$, mean zero, and have finite fourth moments, so a standard
multivariate Lindeberg CLT applies to the vector $(z_i^{(\ell)}(\inputvec_1),\ldots,z_i^{(\ell)}(\inputvec_r))$ conditional on
$\mathcal{F}_{\ell-1}$, provided the empirical norms
$m_{\ell-1}^{-1}\|\avec^{(\ell-1)}(\inputvec_a)\|^2$ stay $O_{\mathbb{P}}(1)$ (this tightness follows from the kernel recursion in
Step 3 and the normalization $\E[\phi(Z)^2]=1$ in \eqref{eq:chi_def}).

For the renormalized-pruned sum, the coordinates $\{\widehat{\matW}^{(\ell)}_{ij}\}_{j=1}^{m_{\ell-1}}$ are not independent because the
mask $\matM^{(\ell)}_{i,:}$ is defined by within-row order statistics. Our idea is that the order-statistic mask is asymptotically equivalent, for the corresponding 
$\sqrt{m_{\ell-1}}$-normalized linear forms $z_i^{(\ell)}$ and $\widehat{z}_i^{(\ell)}$, to entrywise thresholding at a deterministic cutoff. In fact, the two masks differ only on a
vanishing fraction of coordinates near the cutoff, so the corresponding pruned sums differ by $o_{\mathbb{P}}(1)$ and have the same
limiting joint distribution. Hence, we use a quantile approximation as follows: Let $t_q$ be the deterministic threshold with $\P(|W|>t_q)=1-q$ for a generic weight entry $W$, and define the
deterministic-threshold surrogate
$\bar W_{ij}\triangleq \matW^{(\ell)}_{ij}\mathbf{1}\{|\,\matW^{(\ell)}_{ij}\,|>t_q\}/\sqrt{\tau_q}$. Then the order-statistic threshold
concentrates at $t_q$ and the difference between the true pruned sum and the surrogate sum is $o_{\mathbb{P}}(1)$ at the
$\sqrt{m_{\ell-1}}$ scale for any deterministic coefficient vectors with bounded empirical $\ell_2$ norms. Conditional on
$\mathcal{F}_{\ell-1}$, the surrogate summands are independent across $j$, so the multivariate Lindeberg CLT applies jointly to
\begin{align}
\Bigl(z_i^{(\ell)}(\inputvec_1),\ldots,z_i^{(\ell)}(\inputvec_r)\,;\,
\bar z_i^{(\ell)}(\inputvec_1),\ldots,\bar z_i^{(\ell)}(\inputvec_r)\Bigr).
\end{align}
Also, since replacing $\bar z_i^{(\ell)}$ by $\widehat{z}_i^{(\ell)}$ perturbs the above $2r$-vector by $o_{\mathbb{P}}(1)$, this
replacement does not affect the weak limit, a result known as Slutsky's theorem. Consequently, the original pair with $\widehat{z}_i^{(\ell)}$ has the same joint Gaussian
limit.

The limiting covariance is identified by conditional second moments. For any $a,b$,
\begin{align}
\Cov\!\bigl(z_i^{(\ell)}(\inputvec_a),z_i^{(\ell)}(\inputvec_b)\mid\mathcal{F}_{\ell-1}\bigr)
=\frac{1}{m_{\ell-1}}\sum_{j=1}^{m_{\ell-1}} a_j^{(\ell-1)}(\inputvec_a)a_j^{(\ell-1)}(\inputvec_b)
=K^{(\ell-1)}_{ab}.
\end{align}
Renormalization ensures the pruned marginal second moment matches the unpruned one, while the overlap second moment is reduced
by $\sqrt{\tau_q}$. Concretely, the rowwise pruning result that defines $\tau_q$ (see Corollary~\ref{corol:concentration}, and the
discussion around renormalization in Section~\ref{sec:prunerenormmodel}) implies the asymptotic identities
\begin{align}
\E\big[(\widehat{\matW}^{(\ell)}_{ij})^2\big]\to 1,
\qquad
\E\big[\matW^{(\ell)}_{ij}\widehat{\matW}^{(\ell)}_{ij}\big]\to \sqrt{\tau_q},
\end{align}
so
\begin{align}
\Cov\!\bigl(\widehat{z}_i^{(\ell)}(\inputvec_a),\widehat{z}_i^{(\ell)}(\inputvec_b)\mid\mathcal{F}_{\ell-1}\bigr)\approx
\widehat{K}^{(\ell-1)}_{ab},
\qquad
\Cov\!\bigl(z_i^{(\ell)}(\inputvec_a),\widehat{z}_i^{(\ell)}(\inputvec_b)\mid\mathcal{F}_{\ell-1}\bigr)\approx
\sqrt{\tau_q}\,C^{(\ell-1)}_{ab},
\end{align}
and thus the coupled $2r$-vector converges (conditionally on $\mathcal{F}_{\ell-1}$) to a centered Gaussian with block covariance
\begin{align}
\begin{bmatrix}
	K^{(\ell-1)} & \sqrt{\tau_q}\,C^{(\ell-1)}\\[2pt]
	\sqrt{\tau_q}\,C^{(\ell-1)\top} & \widehat{K}^{(\ell-1)}
\end{bmatrix}.
\end{align}

\textbf{Step 3:} We convert these neuronwise Gaussian limits into kernel updates. Fix $\ell$ and indices $a,b\in\{1,\ldots,r\}$. Define
\begin{align}
Y_i^{ab}\triangleq \phi\!\bigl(z_i^{(\ell)}(\inputvec_a)\bigr)\phi\!\bigl(z_i^{(\ell)}(\inputvec_b)\bigr),\quad
\widehat{Y}_i^{ab}\triangleq \phi\!\bigl(\widehat{z}_i^{(\ell)}(\inputvec_a)\bigr)\phi\!\bigl(\widehat{z}_i^{(\ell)}(\inputvec_b)\bigr),\quad
\widetilde{Y}_i^{ab}\triangleq \phi\!\bigl(z_i^{(\ell)}(\inputvec_a)\bigr)\phi\!\bigl(\widehat{z}_i^{(\ell)}(\inputvec_b)\bigr).
\end{align}
Then
\begin{align}
K^{(\ell)}_{ab}=\frac{1}{m_\ell}\sum_{i=1}^{m_\ell}Y_i^{ab},\qquad
\widehat{K}^{(\ell)}_{ab}=\frac{1}{m_\ell}\sum_{i=1}^{m_\ell}\widehat{Y}_i^{ab},\qquad
C^{(\ell)}_{ab}=\frac{1}{m_\ell}\sum_{i=1}^{m_\ell}\widetilde{Y}_i^{ab}.
\end{align}
Conditional on $\mathcal{F}_{\ell-1}$, the triples $(Y_i^{ab},\widehat{Y}_i^{ab},\widetilde{Y}_i^{ab})$ are IID across $i$, and
$\E[\phi(Z)^2]=1$ for $Z\sim\mathcal{N}(0,1)$ implies $\E[|Y_i^{ab}|]$, $\E[|\widehat{Y}_i^{ab}|]$, $\E[|\widetilde{Y}_i^{ab}|]$
are uniformly bounded under the tightness of the conditional variances from Step 1. Hence a conditional LLN yields
\begin{align}
K^{(\ell)}_{ab}-\E[Y_1^{ab}\mid\mathcal{F}_{\ell-1}] \xrightarrow{\mathbb{P}} 0,\qquad
\widehat{K}^{(\ell)}_{ab}-\E[\widehat{Y}_1^{ab}\mid\mathcal{F}_{\ell-1}] \xrightarrow{\mathbb{P}} 0,\qquad
C^{(\ell)}_{ab}-\E[\widetilde{Y}_1^{ab}\mid\mathcal{F}_{\ell-1}] \xrightarrow{\mathbb{P}} 0.
\end{align}
By Step 1, conditional on $\mathcal{F}_{\ell-1}$ the pairs
$(z_i^{(\ell)}(\inputvec_a),z_i^{(\ell)}(\inputvec_b))$,
$(\widehat{z}_i^{(\ell)}(\inputvec_a),\widehat{z}_i^{(\ell)}(\inputvec_b))$,
and $(z_i^{(\ell)}(\inputvec_a),\widehat{z}_i^{(\ell)}(\inputvec_b))$
converge in distribution to centered Gaussians with $2\times 2$ covariances
\begin{align}
\Sigma_{ab}^{(\ell)}=
\begin{bmatrix}
	K^{(\ell-1)}_{aa} & K^{(\ell-1)}_{ab}\\
	K^{(\ell-1)}_{ba} & K^{(\ell-1)}_{bb}
\end{bmatrix},\quad
\widehat{\Sigma}_{ab}^{(\ell)}=
\begin{bmatrix}
	\widehat{K}^{(\ell-1)}_{aa} & \widehat{K}^{(\ell-1)}_{ab}\\
	\widehat{K}^{(\ell-1)}_{ba} & \widehat{K}^{(\ell-1)}_{bb}
\end{bmatrix},\quad
\widetilde{\Sigma}_{ab}^{(\ell)}=
\begin{bmatrix}
	K^{(\ell-1)}_{aa} & \sqrt{\tau_q}\,C^{(\ell-1)}_{ab}\\
	\sqrt{\tau_q}\,C^{(\ell-1)}_{ba} & \widehat{K}^{(\ell-1)}_{bb}
\end{bmatrix}.
\end{align}
Since $\E[\phi(Z)^2]<\infty$, the products $\phi(U)\phi(V)$ are integrable for Gaussian $(U,V)$, and the same moment control gives
uniform integrability of the prelimit products, so convergence in distribution upgrades to convergence of expectations:
\begin{align}
\E[Y_1^{ab}\mid\mathcal{F}_{\ell-1}] \xrightarrow{\mathbb{P}} \mathcal{T}_\phi(\Sigma_{ab}^{(\ell)}),\qquad
\E[\widehat{Y}_1^{ab}\mid\mathcal{F}_{\ell-1}] \xrightarrow{\mathbb{P}} \mathcal{T}_\phi(\widehat{\Sigma}_{ab}^{(\ell)}),\qquad
\E[\widetilde{Y}_1^{ab}\mid\mathcal{F}_{\ell-1}] \xrightarrow{\mathbb{P}} \mathcal{T}_\phi(\widetilde{\Sigma}_{ab}^{(\ell)}).
\end{align}

\textbf{Step 4:} We now perform induction over layers and derive the deterministic limits. 
Base case $\ell=0$: by definition,
\begin{align}
K^{(0)}_{ij}=\widehat{K}^{(0)}_{ij}=C^{(0)}_{ij}=\frac{1}{m_0}\langle \inputvec_i,\inputvec_j\rangle,
\end{align}
so set $K_{\infty,ij}^{(0)}=\widehat{K}_{\infty,ij}^{(0)}=C_{\infty,ij}^{(0)}$ equal to this quantity.

Inductive step: assume for a fixed $\ell-1$ that $K^{(\ell-1)}_{ij}\to K_{\infty,ij}^{(\ell-1)}$,
$\widehat{K}^{(\ell-1)}_{ij}\to \widehat{K}_{\infty,ij}^{(\ell-1)}$, and $C^{(\ell-1)}_{ij}\to C_{\infty,ij}^{(\ell-1)}$
in probability for each $i,j\in\{1,\ldots,r\}$. Combining Step 2 with Slutsky and the induction hypothesis yields, for each fixed $i,j$,
\begin{align}
K^{(\ell)}_{ij}\xrightarrow{\mathbb{P}} \mathcal{T}_\phi\!\Biggl(
\begin{bmatrix}
	K_{\infty,ii}^{(\ell-1)} & K_{\infty,ij}^{(\ell-1)}\\
	K_{\infty,ji}^{(\ell-1)} & K_{\infty,jj}^{(\ell-1)}
\end{bmatrix}
\Biggr),
\end{align}
\begin{align}
\widehat{K}^{(\ell)}_{ij}\xrightarrow{\mathbb{P}} \mathcal{T}_\phi\!\Biggl(
\begin{bmatrix}
	\widehat{K}_{\infty,ii}^{(\ell-1)} & \widehat{K}_{\infty,ij}^{(\ell-1)}\\
	\widehat{K}_{\infty,ji}^{(\ell-1)} & \widehat{K}_{\infty,jj}^{(\ell-1)}
\end{bmatrix}
\Biggr),
\end{align}
\begin{align}
C^{(\ell)}_{ij}\xrightarrow{\mathbb{P}} \mathcal{T}_\phi\!\Biggl(
\begin{bmatrix}
	K_{\infty,ii}^{(\ell-1)} & \sqrt{\tau_q}\,C_{\infty,ij}^{(\ell-1)}\\
	\sqrt{\tau_q}\,C_{\infty,ji}^{(\ell-1)} & \widehat{K}_{\infty,jj}^{(\ell-1)}
\end{bmatrix}
\Biggr).
\end{align}
Defining $K_{\infty}^{(\ell)}$ and $C_{\infty}^{(\ell)}$ by \eqref{eq:Krec_pointwise} and \eqref{eq:Crec_pointwise} gives the claimed
limits. Since $\{1,\ldots,r\}^2$ is finite, pointwise convergence implies simultaneous convergence over all pairs $(i,j)$ by a union bound,
which yields the theorem's statement "with probability tending to one" for the entire finite family of entries.

Finally, the $K$ and $\widehat{K}$ recursions coincide and share the same base case, hence
$\widehat{K}_{\infty}^{(\ell)}=K_{\infty}^{(\ell)}$ for all $\ell$.

\textbf{Step 5:} Finally, we obtain limits regarding the readout layer and the MSE identity. 
Conditional on $\mathcal{F}_L$, the readout is a sum of independent terms:
\begin{align}
f(\inputvec_i)=\frac{1}{\sqrt{m_L}}\sum_{u=1}^{m_L} y_u\,a_u^{(L)}(\inputvec_i),\qquad
\widehat{f}(\inputvec_i)=\frac{1}{\sqrt{m_L}}\sum_{u=1}^{m_L} y_u\,\widehat{a}_u^{(L)}(\inputvec_i),
\end{align}
with $\{y_u\}$ IID, mean zero, unit variance, finite second moment, and independent of $\mathcal{F}_L$. A multivariate Lindeberg CLT in
$\mathbb{R}^r$ yields joint Gaussian limits as $m_L\to\infty$, with conditional covariances
\begin{align}
\Cov\bigl(f(\inputvec_i),f(\inputvec_j)\mid\mathcal{F}_L\bigr)=K^{(L)}_{ij},\quad
\Cov\bigl(\widehat{f}(\inputvec_i),\widehat{f}(\inputvec_j)\mid\mathcal{F}_L\bigr)=\widehat{K}^{(L)}_{ij},\quad
\Cov\bigl(f(\inputvec_i),\widehat{f}(\inputvec_j)\mid\mathcal{F}_L\bigr)=C^{(L)}_{ij}.
\end{align}
Taking limits and using the kernel convergences gives the output covariance limits in Theorem~\ref{theorem:deep_general_phi}. For each fixed
$i$,
\begin{align}
\E\bigl[(f(\inputvec_i)-\widehat{f}(\inputvec_i))^2\bigr]
=\Var(f(\inputvec_i))+\Var(\widehat{f}(\inputvec_i))-2\Cov(f(\inputvec_i),\widehat{f}(\inputvec_i))
\to 2\bigl(K_{\infty,ii}^{(L)}-C_{\infty,ii}^{(L)}\bigr),
\end{align}
where we used $\widehat{K}_{\infty}^{(L)}=K_{\infty}^{(L)}$. This is \eqref{eq:mse_general_deep} and completes the proof. \qed

\subsection{Proof of Example~\ref{cor:deep_sign_mse}}
\label{app:proof_deep_sign_mse}

The required identity is classical (arcsine law for signs of correlated Gaussians), see, e.g.,
\cite{williams1997computing} and references therein. We include a short derivation for completeness.

Fix an input index $i$ and define $\alpha_\ell\triangleq C_{\infty,ii}^{(\ell)}$, with $\alpha_0=1$. Under the
assumption $\frac{1}{m_0}\|\inputvec_i\|^2=1$ and the normalization \eqref{eq:chi_def}, we have
$K_{\infty,ii}^{(\ell)}=\widehat{K}_{\infty,ii}^{(\ell)}=1$ for all $\ell$. Setting $j=i$ in the cross-kernel recursion
\eqref{eq:Crec_pointwise} therefore gives, for $\ell\ge 1$,
\begin{align}
\alpha_\ell
=
\mathcal{T}_{\act}\!\Biggl(
\begin{bmatrix}
	1 & \sqrt{\tau_q}\,\alpha_{\ell-1}\\
	\sqrt{\tau_q}\,\alpha_{\ell-1} & 1
\end{bmatrix}
\Biggr)
=
\E\big[\act(U)\act(V)\big],
\end{align}
where $(U,V)\sim\mathcal{N}(0,\Sigma)$ has unit variances and correlation
$\rho=\Sigma_{12}=\sqrt{\tau_q}\,\alpha_{\ell-1}$.

To evaluate $\E[\act(U)\act(V)]$, note that
\begin{align}
\E[\act(U)\act(V)]
=
\P(UV\ge 0)-\P(UV<0)
=
2\,\P(UV\ge 0)-1.
\end{align}
By symmetry,
\begin{align}
\P(UV\ge 0)=\P(U>0,V>0)+\P(U<0,V<0)=2\,\P(U>0,V>0).
\end{align}
A standard bivariate-Gaussian quadrant computation yields
\begin{align}
\P(U>0,V>0)=\frac{1}{4}+\frac{1}{2\pi}\arcsin(\rho),
\end{align}
hence $\P(UV\ge 0)=\frac{1}{2}+\frac{1}{\pi}\arcsin(\rho)$ and therefore
\begin{align}
\E[\act(U)\act(V)]
=
2\Bigl(\frac{1}{2}+\frac{1}{\pi}\arcsin(\rho)\Bigr)-1
=
\frac{2}{\pi}\arcsin(\rho).
\end{align}
Substituting $\rho=\sqrt{\tau_q}\,\alpha_{\ell-1}$ gives
\begin{align}
\alpha_\ell=\frac{2}{\pi}\arcsin\!\bigl(\sqrt{\tau_q}\,\alpha_{\ell-1}\bigr), \qquad \ell\ge 1,
\end{align}
which is the claimed recursion. In particular, the corresponding asymptotic MSE is $2(1-\alpha_L)$ by
\eqref{eq:mse_general_deep}. \qed


\subsection{Proof of Example~\ref{cor:deep_nrelu_mse}}
\label{app:proof_deep_nrelu_mse}

The closed-form kernel for ReLU applied to a correlated Gaussian pair is classical, see, e.g., \cite{cho2009kernel}
and GP-limit expositions such as \cite{lee2018deep,matthews2018gaussian}. We include the short calculation for
completeness.

Fix an input index $i$ and define $\alpha_\ell\triangleq C_{\infty,ii}^{(\ell)}$, with $\alpha_0=1$. Under the
assumption $\frac{1}{m_0}\|\inputvec_i\|^2=1$ and the normalization \eqref{eq:chi_def}, we have
$K_{\infty,ii}^{(\ell)}=\widehat{K}_{\infty,ii}^{(\ell)}=1$ for all $\ell$. Setting $j=i$ in \eqref{eq:Crec_pointwise}
therefore yields, for $\ell\ge 1$,
\begin{align}
\alpha_\ell
=
\mathcal{T}_{\phi}\!\Biggl(
\begin{bmatrix}
	1 & \sqrt{\tau_q}\,\alpha_{\ell-1}\\
	\sqrt{\tau_q}\,\alpha_{\ell-1} & 1
\end{bmatrix}
\Biggr)
=
\E\big[\phi(U)\phi(V)\big],
\end{align}
where $(U,V)$ is centered bivariate normal with unit variances and correlation
$\rho=\sqrt{\tau_q}\,\alpha_{\ell-1}$.

Now take $\phi(t)=\sqrt{2}\max\{t,0\}$. For such $(U,V)$, the classical ReLU kernel formula gives
\begin{align}
\E\big[\max\{U,0\}\max\{V,0\}\big]
=
\frac{1}{2\pi}\Bigl(\sqrt{1-\rho^2}+\rho(\pi-\arccos\rho)\Bigr),
\qquad \rho\in[-1,1].
\end{align}
Multiplying by $\sqrt{2}$ on each coordinate, we obtain
\begin{align}
\E\big[\phi(U)\phi(V)\big]
=
2\,\E\big[\max\{U,0\}\max\{V,0\}\big]
=
\frac{1}{\pi}\Bigl(\sqrt{1-\rho^2}+\rho(\pi-\arccos\rho)\Bigr)
=
\kappa_{\rm R}(\rho).
\end{align}
Substituting $\rho=\sqrt{\tau_q}\,\alpha_{\ell-1}$ yields
\begin{align}
\alpha_\ell=\kappa_{\rm R}\!\bigl(\sqrt{\tau_q}\,\alpha_{\ell-1}\bigr), \qquad \ell\ge 1,
\end{align}
which is the claimed recursion. In particular, the corresponding asymptotic MSE is $2(1-\alpha_L)$ by
\eqref{eq:mse_general_deep}. \qed


\subsection{Non-renormalized pruning recursions}
\label{app:nonrenorm_static_networks}

The renormalization in Section~\ref{sec:prunerenormmodel} is not required for the Gaussian-limit analysis. Its role is to remove the scalar attenuation caused by pruning. This appendix records the corresponding recursion when the masked weights $\prunedW^{(\ell)}$ are used directly, without the factor $1/\sqrt{\tau_q}$.

Let $\widetilde{\avec}^{(\ell)}(\inputvec)$ and $\widetilde{f}(\inputvec)$ denote the features and output of the non-renormalized pruned network. Define
\begin{align}
\widetilde{K}^{(\ell)}_{ij}\triangleq m_\ell^{-1}
\langle \widetilde{\avec}^{(\ell)}(\inputvec_i),\widetilde{\avec}^{(\ell)}(\inputvec_j)\rangle,
\qquad
\widetilde{C}^{(\ell)}_{ij}\triangleq m_\ell^{-1}
\langle \avec^{(\ell)}(\inputvec_i),\widetilde{\avec}^{(\ell)}(\inputvec_j)\rangle.
\end{align}
The base case is
\begin{align}
K^{(0)}_{\infty,ij}=\widetilde{K}^{(0)}_{\infty,ij}=\widetilde{C}^{(0)}_{\infty,ij}
=\frac{1}{m_0}\langle \inputvec_i,\inputvec_j\rangle.
\end{align}
For each pruned row, Corollary~\ref{corol:concentration} gives two limits: its squared norm is asymptotically reduced by $\tau_q$, and its unnormalized overlap with the original row is also $\tau_q$. Therefore the same proof as Appendix~\ref{app:proof_deep_general_phi} yields, for $\ell\ge 1$,
\begin{align}
\widetilde{K}^{(\ell)}_{\infty,ij}
&=
\mathcal{T}_\phi\!\Biggl(
\begin{bmatrix}
\tau_q\widetilde{K}^{(\ell-1)}_{\infty,ii} & \tau_q\widetilde{K}^{(\ell-1)}_{\infty,ij}\\
\tau_q\widetilde{K}^{(\ell-1)}_{\infty,ji} & \tau_q\widetilde{K}^{(\ell-1)}_{\infty,jj}
\end{bmatrix}
\Biggr),
\label{eq:nonrn_K_rec}
\\
\widetilde{C}^{(\ell)}_{\infty,ij}
&=
\mathcal{T}_\phi\!\Biggl(
\begin{bmatrix}
K^{(\ell-1)}_{\infty,ii} & \tau_q\widetilde{C}^{(\ell-1)}_{\infty,ij}\\
\tau_q\widetilde{C}^{(\ell-1)}_{\infty,ji} & \tau_q\widetilde{K}^{(\ell-1)}_{\infty,jj}
\end{bmatrix}
\Biggr).
\label{eq:nonrn_C_rec}
\end{align}
The output covariances satisfy
\begin{align}
\mathrm{Cov}(f(\inputvec_i),f(\inputvec_j))\to K^{(L)}_{\infty,ij},\quad
\mathrm{Cov}(\widetilde{f}(\inputvec_i),\widetilde{f}(\inputvec_j))\to \widetilde{K}^{(L)}_{\infty,ij},\quad
\mathrm{Cov}(f(\inputvec_i),\widetilde{f}(\inputvec_j))\to \widetilde{C}^{(L)}_{\infty,ij},
\end{align}
and hence
\begin{equation}
\E\big[(f(\inputvec_i)-\widetilde{f}(\inputvec_i))^2\big]
\to K^{(L)}_{\infty,ii}+\widetilde{K}^{(L)}_{\infty,ii}-2\widetilde{C}^{(L)}_{\infty,ii}.
\label{eq:nonrn_mse}
\end{equation}
Comparing \eqref{eq:nonrn_K_rec}--\eqref{eq:nonrn_C_rec} with \eqref{eq:Krec_pointwise}--\eqref{eq:Crec_pointwise} shows what renormalization removes. With renormalization, the pruned self-kernel is kept on the same scale as the original self-kernel and the row overlap coefficient becomes $\sqrt{\tau_q}$. Without renormalization, the pruned self-kernel is attenuated by $\tau_q$ at each layer and the unnormalized row overlap coefficient is $\tau_q$.

\begin{example}[Sign activation without renormalization]
For $\phi(t)=\act(t)$ and unit-norm inputs, $\widetilde{K}^{(\ell)}_{\infty,ii}=1$ for all $\ell\ge 1$, because the sign nonlinearity is invariant to positive rescaling. If $\widetilde{\alpha}_\ell\triangleq \widetilde{C}^{(\ell)}_{\infty,ii}$, then \eqref{eq:nonrn_C_rec} gives
\begin{align}
\widetilde{\alpha}_\ell
=
\mathcal{T}_{\act}\!\left(
\begin{bmatrix}
1 & \tau_q\widetilde{\alpha}_{\ell-1}\\
\tau_q\widetilde{\alpha}_{\ell-1} & \tau_q
\end{bmatrix}
\right)
=\frac{2}{\pi}\arcsin\!\bigl(\sqrt{\tau_q}\,\widetilde{\alpha}_{\ell-1}\bigr).
\end{align}
Thus the covariance matrix contains $\tau_q$ in the non-renormalized cross entry, but after converting to the Gaussian correlation used by the sign kernel, the one-dimensional sign recursion is the same as in Example~\ref{cor:deep_sign_mse}. This is consistent with the scale invariance of the sign activation.
\end{example}

\begin{example}[Normalized ReLU without renormalization]
	\label{ex:deep_nrelu_no_renorm_mse}
	Let $\phi(t)=\sqrt{2}\max\{t,0\}$ be the normalized ReLU, and assume unit-norm inputs so that
	$K_{\infty,ii}^{(\ell)}=1$ for the unpruned network. If pruning is applied without renormalization, then the pruned self-kernel satisfies
	\begin{align}
	\widetilde K_{\infty,ii}^{(\ell)}=\tau_q^\ell .
	\end{align}
	Define the normalized cross-alignment
	\begin{align}
	\beta_\ell
	\triangleq
	\frac{\widetilde C_{\infty,ii}^{(\ell)}}
	{\sqrt{K_{\infty,ii}^{(\ell)}\widetilde K_{\infty,ii}^{(\ell)}}}
	=
	\frac{\widetilde C_{\infty,ii}^{(\ell)}}{\tau_q^{\ell/2}},
	\qquad \beta_0=1 .
	\end{align}
	Then, for $\ell\geq 1$,
	\begin{align}
	\beta_\ell
	=
	\kappa_{\rm R}\!\left(\sqrt{\tau_q}\,\beta_{\ell-1}\right),
	\end{align}
	where
	\begin{align}
	\kappa_{\rm R}(\rho)
	\triangleq
	\frac{1}{\pi}
	\left(
	\sqrt{1-\rho^2}
	+\rho(\pi-\arccos\rho)
	\right),
	\qquad \rho\in[-1,1].
	\end{align}
	Thus, after normalizing for output scale, the alignment recursion is the same as in the renormalized model. The difference is that, without renormalization, the pruned output variance is attenuated by $\tau_q^L$. Therefore,
	\begin{equation}
		\mathrm{MSE}_{\rm no\;renorm}(q)
		=
		1+\tau_q^L
		-
		2\tau_q^{L/2}\beta_L(q).
		\label{eq:no_renorm_relu_mse}
	\end{equation}
	In contrast, the renormalized model has unit output variance on both sides and gives the alignment-only distortion
	\begin{align}
	\mathrm{MSE}_{\rm renorm}(q)
	=
	2(1-\beta_L(q)).
	\end{align}
	Hence renormalization removes the scale attenuation term and leaves only the loss of alignment caused by pruning.
\end{example}

%% file: tex/app/thres-calib-app.tex

\section{Threshold calibration under a compute budget}
\label{app:threshold_calibration}

Early-exit policies are specified by an exit location $\ell$ and a confidence threshold $\tau$. Let
$s_\ell(\inputvec)\in\mathbb{R}$ denote the confidence statistic used at exit $\ell$, and trigger an early exit when
$s_\ell(\inputvec)\ge \tau$. Let the normalized full-compute cost be $1$, and let $\mu_\ell\in(0,1)$ denote the
normalized cost incurred when the model exits at $\ell$. Under this cost model, the normalized average compute is
\begin{equation}
	\label{eq:mu_compute_generic}
	\overline{\mu}_{\mathrm{c}}^{(\ell)}(\tau)
	= \mu_\ell\,\P\{s_\ell(\inputvec)\ge \tau\} + 1\cdot \P\{s_\ell(\inputvec)< \tau\}
	= 1-(1-\mu_\ell)\,\P\{s_\ell(\inputvec)\ge \tau\}.
\end{equation}
Given a target compute budget $\overline{\mu}_{\mathrm{c}}\in(0,1]$, we choose $\tau$ by validation-set calibration.
Specifically, define the required early-exit frequency
\begin{align}
p_{\rm exit}\;\triangleq\;\frac{1-\overline{\mu}_{\mathrm{c}}}{1-\mu_\ell}\in[0,1].
\end{align}
Since $\tau\mapsto \P\{s_\ell(\inputvec)\ge \tau\}$ is nonincreasing, the compute constraint
$\overline{\mu}_{\mathrm{c}}^{(\ell)}(\tau)\le \overline{\mu}_{\mathrm{c}}$ is equivalent to
$\P\{s_\ell(\inputvec)\ge \tau\}\ge p_{\rm exit}$. Among all such thresholds, a natural compute-matching choice is the
largest $\tau$ that still satisfies the constraint, which sets the constraint at equality in the population.
In practice, we estimate $\P\{s_\ell(\inputvec)\ge \tau\}$ on a held-out validation set by sorting the validation scores
$\{s_\ell(\inputvec_i)\}_{i=1}^n$ and selecting $\tau$ as the empirical quantile that achieves exit frequency
approximately $p_{\rm exit}$. Repeating this calibration for each candidate $\ell$ yields a family of compute-matched
policies, and $\ell$ is then selected by validation error.

In the special case of the conditional perceptron, the confidence statistic is $s(\inputvec)=|v_1|$ and the normalized
average compute takes the explicit form
\begin{equation}
	\label{eq:mu_compute_perceptron}
	\overline{\mu}_{\mathrm{c}}(q,\tau)=1-q\,\mathrm{erfc}\!\Bigl(\frac{\tau}{\sqrt{2}}\Bigr),
\end{equation}
where $q\in(0,1]$ is the sparsity rate and $\mathrm{erfc}(\cdot)$ is the complementary error function. Writing the compute
constraint as $\overline{\mu}_{\mathrm{c}}\le 1-\epsilon$ (so that $\epsilon$ is the compute gap), the feasible set is
\begin{align}
q\,\mathrm{erfc}\!\Bigl(\frac{\tau}{\sqrt{2}}\Bigr)\ge \epsilon,
\end{align}
which is nonempty only when $q\in[\epsilon,1]$. The compute-matching threshold is obtained by enforcing equality, yielding
\begin{equation}
	\label{eq:tau_cal_perceptron}
	\tau_{\mathrm{cal}}(q,\epsilon)
	= \sqrt{2}\,\mathrm{erfc}^{-1}\!\Bigl(\frac{\epsilon}{q}\Bigr),\qquad q\in[\epsilon,1].
\end{equation}
This mirrors the empirical quantile calibration rule above, but here the calibration map admits an analytic inverse.

\begin{figure}[H]
	\centering
	\scalebox{1}{\includegraphics{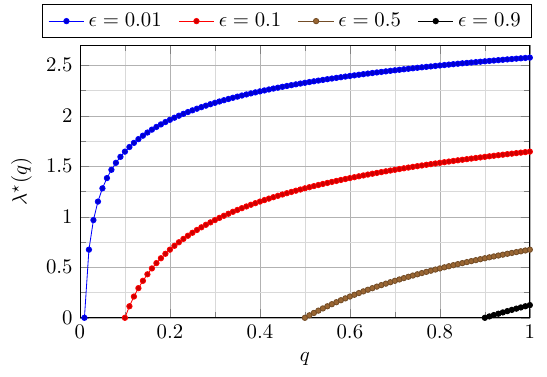}}
	\caption{Compute-matching thresholds $\tau_{\mathrm{cal}}(q,\epsilon)$ under different sparsities and compute gaps.}\vspace{-10pt}
	\label{fig:optt}
\end{figure}

Fig.~\ref{fig:optt} plots $\tau_{\mathrm{cal}}(q,\epsilon)$ for several values of $\epsilon$. The map is increasing in $q$.
Intuitively, when pruning is more aggressive (larger $q$), meeting a fixed compute gap $\epsilon$ requires a smaller early
exit rate $\epsilon/q$, which is achieved by a higher threshold. Conversely, when $q$ is small, meeting the same compute
gap requires more frequent early exits, which forces a lower threshold. The regime $q<\epsilon$ is infeasible, since even
exiting on every input cannot achieve compute gap $\epsilon$.

If one additionally optimizes the sparsity level under the same compute budget, the closed form
\eqref{eq:tau_cal_perceptron} reduces the tuning from a two-dimensional search over $(q,\tau)\in[\epsilon,1]\times[0,\infty)$
to a one-dimensional search over $q\in[\epsilon,1]$ with $\tau$ set to $\tau_{\mathrm{cal}}(q,\epsilon)$. In our numerical
experiments, the resulting univariate objective $q\mapsto \epsilon_{\mathrm{c}}(q,\tau_{\mathrm{cal}}(q,\epsilon))$ is consistently unimodal, so we locate its minimizer efficiently with a simple binary search over $q$.

%% file: tex/app/adaptive-net-app.tex
\section{Proofs of Results on Adaptive Networks}
In this section, we present the proofs of our main results on adaptive networks.

\subsection{Proof of Theorem \ref{thm:deep_tradeoff_explicit}}
\label{proofof:thm:deep_tradeoff_explicit}

We write the compute constraint as $\overline{\mu}_{\mathrm{c}}=1-\varepsilon$ for some
$\varepsilon\in(0,1-\kappa_\ell)$. Under the hypothesis $\overline{\mu}_{\mathrm{c}}\le 1-\varepsilon$, it suffices to
exhibit a threshold $\netexitthres$ whose expected compute satisfies $\mu_{\mathrm{c}}'(\netexitthres)\le \overline{\mu}_{\mathrm{c}}$
and whose resulting error satisfies the stated bound.

Let $\widehat{Y}_{\mathrm{uc}}(\inputvec)\triangleq \act(f_L(\inputvec))$ denote the full-compute predictor, and let
$\widehat{Y}_{\mathrm{c}}(\inputvec)$ be the early-exit predictor with threshold $\netexitthres$.
If $\widehat{Y}_{\mathrm{c}}(\inputvec)\neq Y$, then either $\widehat{Y}_{\mathrm{uc}}(\inputvec)\neq Y$, or
$\widehat{Y}_{\mathrm{uc}}(\inputvec)=Y$ and $\widehat{Y}_{\mathrm{c}}(\inputvec)\neq \widehat{Y}_{\mathrm{uc}}(\inputvec)$.
The latter event can only occur when an early exit is taken, namely when $|f_\ell(\inputvec)|\ge \netexitthres$, and when
the exit and final predictions disagree, namely $\act(f_\ell(\inputvec))\neq \act(f_L(\inputvec))$. Taking probabilities yields
\begin{align}
	\epsilon_{\mathrm{c}}
	\le
	\epsilon_{\mathrm{uc}}
	+
	\P_{\inputvec}\!\left(\act(f_{\ell}(\inputvec))\neq \act(f_L(\inputvec)),\ |f_{\ell}(\inputvec)|\ge \netexitthres\right).
	\label{eq:deep_risk_decomp_inproof}
\end{align}

We bound the disagreement term using Assumption~\ref{assump:subg_coupling}. Fix a class $y\in\{-1,+1\}$ and let
$U\triangleq \tilde f_{\ell,y}(X)$ and $V\triangleq \tilde f_{L,y}(X)$ denote the corresponding class-standardized scores.
Since $\act(\cdot)$ depends only on the sign, centering and rescaling within a class does not affect the disagreement event,
and it suffices to control
\begin{align}
\P\big(\act(U)\neq \act(V),\,|U|\ge t \mid Y=y\big)
\end{align}
for an appropriate threshold $t$ corresponding to $\netexitthres$. By symmetry,
\begin{align}
\P\big(\act(U)\neq \act(V),\,|U|\ge t \mid Y=y\big)
=
2\,\P\big(U\ge t,\,V\le 0\mid Y=y\big).
\end{align}
For any $s\ge t$, Markov's inequality and \eqref{eq:subg_mgf_R} give, with $R\triangleq V-\rho_\ell U$,
\begin{align}
	\P\big(V\le 0\mid U=s,Y=y\big)
	&=
	\P\big(R\le -\rho_\ell s\mid U=s,Y=y\big)\\
	&\le
	\exp(-\lambda\rho_\ell s)\,
	\E\!\left[\exp(\lambda R)\mid U=s,Y=y\right]\\
	&\le
	\exp\!\left(-\lambda\rho_\ell s+\frac{\lambda^2(1-\rho_\ell^2)}{2}\right)
	\qquad \forall \lambda\ge 0.
\end{align}
Optimizing over $\lambda$ yields $\lambda^\star=\rho_\ell s/(1-\rho_\ell^2)$ and therefore
\begin{align}
	\P\big(V\le 0\mid U=s,Y=y\big)
	\le
	\exp\!\left(-\frac{\rho_\ell^2}{2(1-\rho_\ell^2)}\,s^2\right),
	\qquad s\ge t.
	\label{eq:cond_tail_V_given_U_subg}
\end{align}
Consequently,
\begin{align}
	\P\big(U\ge t,\,V\le 0\mid Y=y\big)
	&\le
	\int_{t}^{\infty}\P\big(V\le 0\mid U=s,Y=y\big)\,d\P(U\le s\mid Y=y)\nonumber\\
	&\le
	\exp\!\left(-\frac{\rho_\ell^2}{2(1-\rho_\ell^2)}\,t^2\right)\,
	\P\big(U\ge t\mid Y=y\big).
	\label{eq:joint_event_bound_subg}
\end{align}
Let $r(t)\triangleq \P(|U|\ge t\mid Y=y)$ denote the within-class exit rate. Using
$\P(U\ge t\mid Y=y)\le r(t)$ and combining \eqref{eq:joint_event_bound_subg} with symmetry gives
\begin{align}
	\P\big(\act(U)\neq \act(V),\,|U|\ge t\mid Y=y\big)
	\le
	r(t)\,\exp\!\left(-\frac{\rho_\ell^2}{2(1-\rho_\ell^2)}\,t^2\right).
	\label{eq:disagreement_bound_subg}
\end{align}
Averaging over classes and translating back from standardized scores to the original threshold $\netexitthres$
yields
\begin{align}
	\epsilon_{\mathrm{c}}-\epsilon_{\mathrm{uc}}
	\le
	r(\netexitthres)\,
	\exp\!\left(-\frac{\rho_\ell^2}{2(1-\rho_\ell^2)}\,\netexitthres^2\right),
	\label{eq:excess_tradeoff_inproof_subg}
\end{align}
where $r(\netexitthres)\triangleq \P_{\inputvec}(|f_\ell(\inputvec)|\ge \netexitthres)$ denotes the early-exit rate.

We now choose $\netexitthres$ to meet the compute constraint. Since
$\mu_{\mathrm{c}}'(\netexitthres)=1-(1-\kappa_\ell)r(\netexitthres)$, the requirement
$\mu_{\mathrm{c}}'(\netexitthres)\le 1-\varepsilon$ is equivalent to $r(\netexitthres)\ge r_\varepsilon$, where
$r_\varepsilon\triangleq \varepsilon/(1-\kappa_\ell)$. Under Assumption~\ref{assump:subg_coupling}(i), the standardized
exit statistic is sub-Gaussian, hence for all $t\ge 0$,
\begin{align}
	r(t)=\P(|U|\ge t\mid Y=y)\le 2e^{-t^2/2}.
	\label{eq:r_tail_subg}
\end{align}
Choose $t_\varepsilon\triangleq \sqrt{2\log(2/r_\varepsilon)}$, so that $2e^{-t_\varepsilon^2/2}=r_\varepsilon$.
Then \eqref{eq:r_tail_subg} ensures $r(t_\varepsilon)\le r_\varepsilon$, and therefore selecting any threshold
$\netexitthres_\varepsilon$ such that $r(\netexitthres_\varepsilon)=r_\varepsilon$ yields
$\mu_{\mathrm{c}}'(\netexitthres_\varepsilon)=1-\varepsilon$. Substituting $t=t_\varepsilon$ into
\eqref{eq:excess_tradeoff_inproof_subg} and using $r(\netexitthres_\varepsilon)=r_\varepsilon$ gives
\begin{align}
	\epsilon_{\mathrm{c}}-\epsilon_{\mathrm{uc}}
	\le
	r_\varepsilon\,
	\exp\!\left(-\frac{\rho_\ell^2}{2(1-\rho_\ell^2)}\,t_\varepsilon^2\right)
	=
	r_\varepsilon\left(e^{-t_\varepsilon^2/2}\right)^{\frac{\rho_\ell^2}{1-\rho_\ell^2}}
	=
	r_\varepsilon\left(\frac{r_\varepsilon}{2}\right)^{\frac{\rho_\ell^2}{1-\rho_\ell^2}}.
	\label{eq:excess_subg_mid}
\end{align}
Let $A'\triangleq \frac{1}{2(1-\rho_\ell^2)}$. Since $\frac{\rho_\ell^2}{1-\rho_\ell^2}=2A'-1$, we obtain
\begin{align}
	\epsilon_{\mathrm{c}}-\epsilon_{\mathrm{uc}}
	\le
	2^{1-2A'}\,r_\varepsilon^{2A'}.
	\label{eq:excess_subg_power}
\end{align}
Using $r_\varepsilon\le 1$ and $2^{1-2A'}\le (8\pi)^{A'}$ (since $8\pi>1$ and $A'\ge 1/2$) yields the simpler bound
$\epsilon_{\mathrm{c}}-\epsilon_{\mathrm{uc}}
\le
(8\pi)^{A'}\,r_\varepsilon^{A'}
$. Finally, substituting $r_\varepsilon=\varepsilon/(1-\kappa_\ell)$ gives
\begin{align}
\epsilon_{\mathrm{c}}
\le
\epsilon_{\mathrm{uc}}
+
\left(\frac{8\pi\,\varepsilon}{1-\kappa_\ell}\right)^{A'},
\end{align}
which proves the claim.

\subsection{Proof of Proposition~\ref{prop:rho_gap_law_quenched}}
\label{proofof:prop:rho_gap_law_quenched}

Fix a class $y\in\{-1,+1\}$ and a layer $\ell\in\{1,\ldots,L-1\}$. Write $W=W^{(m)}$ for brevity. Throughout, all
expectations, variances, and correlations are with respect to $\P_{\inputvec}$ conditional on the realized backbone $W$
and on the event $\{Y=y\}$.

Proposition~\ref{prop:rho_gap_law_quenched} concerns the within-class correlation of the class-standardized scores, so
additive constants play no role once we subtract within-class means. Since the distillation definition
\eqref{eq:distilldef} is an affine (intercept) least-squares regression, we may equivalently work with centered variables.
Concretely, consider the within-class least-squares problem with an intercept
\begin{align}
(\boldsymbol{w}_{\ell,y}^{\star},b_{\ell,y}^{\star})
\in
\arg\min_{\boldsymbol{w},\,b}\;
\E_{\inputvec}\!\left[\bigl(f_L^{(m)}(\inputvec)-\boldsymbol{w}^{\top}\avec^{(\ell)}(\inputvec)-b\bigr)^2\mid Y=y\right].
\end{align}
Define the within-class centered final score and centered features
\begin{align}
g_{L,y}^{(m)}(\inputvec)\triangleq f_L^{(m)}(\inputvec)-\mu_{L,y}^{(m)},
\qquad
\mu_{L,y}^{(m)}\triangleq \E_{\inputvec}[f_L^{(m)}(\inputvec)\mid Y=y],
\end{align}
\begin{align}
\bar{\avec}^{(\ell)}(\inputvec)\triangleq \avec^{(\ell)}(\inputvec)-\E_{\inputvec}[\avec^{(\ell)}(\inputvec)\mid Y=y].
\end{align}
Optimality in $b$ gives
\begin{align}
b_{\ell,y}^{\star}=\mu_{L,y}^{(m)}-(\boldsymbol{w}_{\ell,y}^{\star})^{\top}\E_{\inputvec}[\avec^{(\ell)}(\inputvec)\mid Y=y],
\end{align}
and therefore the centered fitted score satisfies
\begin{align}
\Big((\boldsymbol{w}_{\ell,y}^{\star})^{\top}\avec^{(\ell)}(\inputvec)+b_{\ell,y}^{\star}\Big)
-
\E_{\inputvec}\!\Big[(\boldsymbol{w}_{\ell,y}^{\star})^{\top}\avec^{(\ell)}(\inputvec)+b_{\ell,y}^{\star}\mid Y=y\Big]
=
(\boldsymbol{w}_{\ell,y}^{\star})^{\top}\bar{\avec}^{(\ell)}(\inputvec).
\end{align}
Hence the within-class correlation between the standardized exit and final scores is the same as the correlation between
the standardized versions of $(\boldsymbol{w}_{\ell,y}^{\star})^{\top}\bar{\avec}^{(\ell)}(\inputvec)$ and
$g_{L,y}^{(m)}(\inputvec)$. We will work with this centered representation.

Let $\mathcal{S}_\ell$ denote the closed linear span in $L^2(\P_{\inputvec}\mid Y=y)$ of
$\{\boldsymbol{w}^{\top}\bar{\avec}^{(\ell)}(\cdot):\boldsymbol{w}\in\mathbb{R}^{m}\}$. Since
$(\boldsymbol{w}_{\ell,y}^{\star})^{\top}\bar{\avec}^{(\ell)}$ is the within-class least-squares predictor of
$g_{L,y}^{(m)}$ from $\bar{\avec}^{(\ell)}$, it equals the $L^2(\P_{\inputvec}\mid Y=y)$ orthogonal projection of
$g_{L,y}^{(m)}$ onto $\mathcal{S}_\ell$, that is,
\begin{align}
(\boldsymbol{w}_{\ell,y}^{\star})^{\top}\bar{\avec}^{(\ell)}(\cdot)=\mathrm{Proj}_{\mathcal{S}_\ell} g_{L,y}^{(m)}(\cdot).
\end{align}
For an orthogonal projection, $\Cov(\mathrm{Proj}_{\mathcal{S}_\ell} g,\,g)=\Var(\mathrm{Proj}_{\mathcal{S}_\ell} g)$, and hence
\begin{align}
	\big(\widetilde{\rho}_{\ell,y}^{(m)}\big)^2
	=
	\frac{\Var_{\inputvec}\!\bigl(\mathrm{Proj}_{\mathcal{S}_\ell} g_{L,y}^{(m)}(\inputvec)\mid Y=y\bigr)}
	{\Var_{\inputvec}\!\bigl(g_{L,y}^{(m)}(\inputvec)\mid Y=y\bigr)}.
	\label{eq:rho_as_R2_class_app}
\end{align}
Thus it suffices to identify the asymptotic fraction of the within-class variance of $g_{L,y}^{(m)}$ captured by the
projection onto $\mathcal{S}_\ell$.

Let $Z\sim\mathcal{N}(0,1)$ and recall $\chi_\phi=\E[Z\phi(Z)]$. Under the activation normalization in \eqref{eq:chi_def},
the $L^2(\mathcal{N}(0,1))$ orthogonal decomposition
\begin{align}
	\phi(Z)=\chi_\phi Z+\xi,
	\qquad
	\E[Z\xi]=0,
	\qquad
	\E[\xi^2]=1-\chi_\phi^2
	\label{eq:phi_decomp_formal_class}
\end{align}
shows that the degree-one component retains a fraction $\chi_\phi^2$ of the variance.

We now relate the final score to layer-$\ell$ features in the wide random-backbone regime. Standard infinite-width results
for fully-connected Gaussian networks imply that for fixed $L$ and $\ell$, the relevant empirical second moments
concentrate around deterministic limits governed by the NNGP recursion, see for example
\cite{lee2018deep,matthews2018gaussian,yang2019tensor}. In particular, under the $1/\sqrt{m}$ scaling and the
normalization in \eqref{eq:chi_def}, the layerwise variances remain at their fixed point, so the within-class variance of
the centered final score satisfies
\begin{align}
	\Var_{\inputvec}\!\bigl(g_{L,y}^{(m)}(\inputvec)\mid Y=y\bigr)\ \xrightarrow[]{\ \mathrm{in\ prob.}\ }\ 1.
	\label{eq:var_final_to_1_step3}
\end{align}

Next, apply the decomposition \eqref{eq:phi_decomp_formal_class} to each of the $L-\ell$ random layers between $\ell$ and
$L$. The degree-one component propagates linearly through each random layer and, at each step, retains a multiplicative
factor $\chi_\phi$ in $L^2$ when projecting onto the span of the previous-layer preactivations. Iterating over the depth
gap yields an orthogonal decomposition in $L^2(\P_{\inputvec}\mid Y=y)$ of the centered final score:
\begin{align}
	g_{L,y}^{(m)}(\inputvec)
	=
	\chi_\phi^{\,L-\ell}\,U_{\ell,y}^{(m)}(\inputvec)+V_{\ell,y}^{(m)}(\inputvec),
	\label{eq:score_decomp_gap_formal_class}
\end{align}
where $U_{\ell,y}^{(m)}(\cdot)\in\mathcal{S}_\ell$ and $V_{\ell,y}^{(m)}(\cdot)\perp\mathcal{S}_\ell$ in
$L^2(\P_{\inputvec}\mid Y=y)$. Moreover, moment concentration and the variance fixed-point property imply that
\begin{align}
	\Var_{\inputvec}\!\bigl(U_{\ell,y}^{(m)}(\inputvec)\mid Y=y\bigr)\ \xrightarrow[]{\ \mathrm{in\ prob.}\ }\ 1,
	\qquad
	\Var_{\inputvec}\!\bigl(V_{\ell,y}^{(m)}(\inputvec)\mid Y=y\bigr)\ \xrightarrow[]{\ \mathrm{in\ prob.}\ }\ 1-\chi_\phi^{2(L-\ell)}.
	\label{eq:UV_var_limits_step3}
\end{align}
Since $U_{\ell,y}^{(m)}\in\mathcal{S}_\ell$ and $V_{\ell,y}^{(m)}\perp\mathcal{S}_\ell$, we have
$\mathrm{Proj}_{\mathcal{S}_\ell} g_{L,y}^{(m)}(\inputvec)
=
\chi_\phi^{\,L-\ell}\,U_{\ell,y}^{(m)}(\inputvec)$, 
and therefore, by \eqref{eq:UV_var_limits_step3},
\begin{align}
	\Var_{\inputvec}\!\bigl(\mathrm{Proj}_{\mathcal{S}_\ell} g_{L,y}^{(m)}(\inputvec)\mid Y=y\bigr)
	=
	\chi_\phi^{2(L-\ell)}\Var_{\inputvec}\!\bigl(U_{\ell,y}^{(m)}(\inputvec)\mid Y=y\bigr)
	\ \xrightarrow[]{\ \mathrm{in\ prob.}\ }\ 
	\chi_\phi^{2(L-\ell)}.
	\label{eq:var_proj_limit_step3}
\end{align}

Combining \eqref{eq:rho_as_R2_class_app}, \eqref{eq:var_final_to_1_step3}, and \eqref{eq:var_proj_limit_step3} yields $\big(\widetilde{\rho}_{\ell,y}^{(m)}\big)^2
\ \xrightarrow[]{\ \mathrm{in\ prob.}\ }\ 
\chi_\phi^{\,2(L-\ell)}$. If $\chi_\phi\ge 0$ (which holds for standard monotone activations under \eqref{eq:chi_def}), then taking square roots gives
$\widetilde{\rho}_{\ell,y}^{(m)}\to \chi_\phi^{\,L-\ell}$ in probability. The argument does not depend on $y$, completing the proof.

\subsection{Proof of Corollary \ref{cor:deep_tradeoff_gap}}
\label{proofof:cor:deep_tradeoff_gap}

We apply Theorem~\ref{thm:deep_tradeoff_explicit} conditional on the realized backbone $W^{(m)}$. The only quantity in the
bound that depends on $W^{(m)}$ is the exponent
\(
\deepnetexpo^{(m)} \triangleq 0.5/\bigl(1-(\widetilde{\rho}_{\ell}^{(m)})^2\bigr),
\)
where $\widetilde{\rho}_{\ell}^{(m)}$ denotes the (true) within-class correlation from Proposition~\ref{prop:rho_gap_law_quenched}.
By Proposition~\ref{prop:rho_gap_law_quenched}, for each $y$ we have
$\widetilde{\rho}_{\ell,y}^{(m)}\to \chi_\phi^{\,L-\ell}\triangleq \widetilde{\rho}_{\ell,\infty}$ in probability, and hence the
same limit holds for $\widetilde{\rho}_{\ell}^{(m)}$. By continuity of the map $u\mapsto 0.5/(1-u^2)$ on $(-1,1)$, it follows that
$\deepnetexpo^{(m)}\to \deepnetexpo_{\infty}$ in probability, where
$\deepnetexpo_{\infty}=0.5/(1-\widetilde{\rho}_{\ell,\infty}^2)$.
Substituting $\deepnetexpo^{(m)}=\deepnetexpo_{\infty}+o_{\P}(1)$ into the bound of Theorem~\ref{thm:deep_tradeoff_explicit} yields that,
with probability approaching $1$ over the draw of $W^{(m)}$,
\begin{align}
\epsilon_{\mathrm{c}}
\le
\epsilon_{\mathrm{uc}}
+
\left[\frac{8\pi\,\varepsilon}{1-\kappa_\ell}\right]^{\deepnetexpo_{\infty}+o(1)}.
\end{align}
Absorbing the $o(1)$ term into the exponent gives \eqref{eq:deep_tradeoff_gap}.

%% file: tex/app/simul-app.tex

\section{Additional Numerical Experiments}
\label{app:more_experiments}

This appendix collects numerical simulations that further validate the analytical results and provide additional
diagnostics.

\subsection{Pruning and concentration of measure}
\label{app:concentration_pruning}

We first verify the concentration behavior underlying Theorem~\ref{theorem:gamma} and
Corollary~\ref{corol:concentration}. We generate i.i.d.\ pruning masks and evaluate the corresponding scalar
random quantity $\mathbf{W}^T \mathbf{W}_{\mathrm{ee}}$ over many trials, for increasing feature dimension $d$ and fixed
sparsity rate $q$.

\begin{figure}[H]
	\centering
	\begin{subfigure}[H]{0.47\linewidth}
		\centering
		\includegraphics[width=\linewidth]{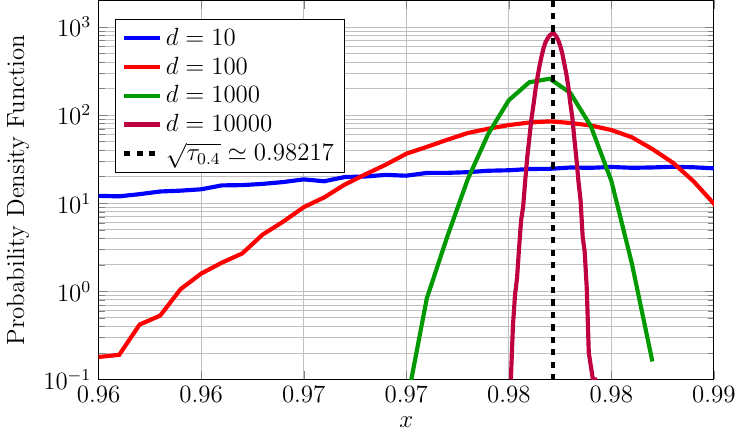}
		\caption{Simulated PDFs for $q=0.4$.}
		\label{fig:concent_q04}
	\end{subfigure}
	\hfill
	\begin{subfigure}[H]{0.47\linewidth}
		\centering
		\includegraphics[width=\linewidth]{software/concent-simul2.pdf}
		\caption{Simulated PDFs for $q=0.9$.}
		\label{fig:concent_q09}
	\end{subfigure}\\
	\vspace{10pt}
	\begin{subfigure}[H]{0.47\linewidth}
		\centering
		\includegraphics[width=\linewidth]{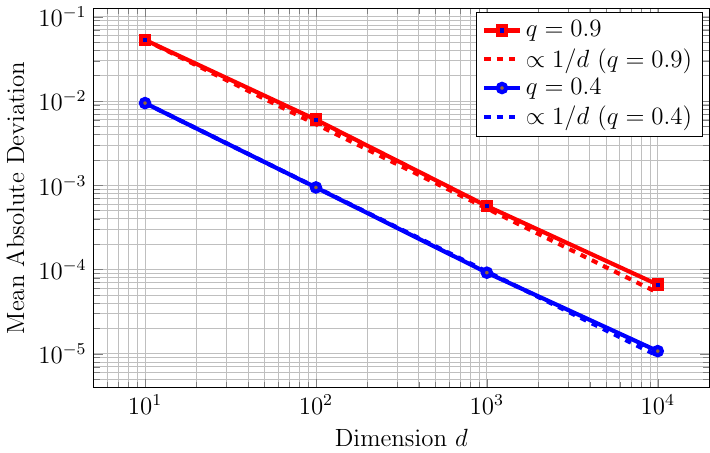}
		\caption{Mean absolute deviation from the limit.}
		\label{fig:concent_mad}
	\end{subfigure}
	\hfill
	\begin{subfigure}[H]{0.47\linewidth}
		\centering
		\includegraphics[width=\linewidth]{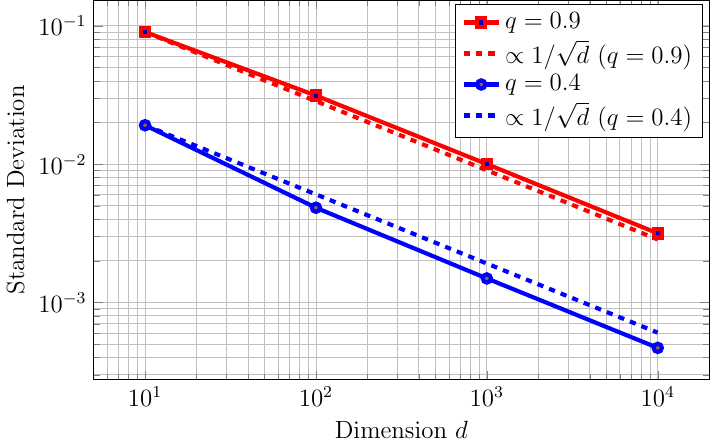}
		\caption{Standard deviation.}
		\label{fig:concent_sd}
	\end{subfigure}
	\caption{Pruning and concentration of measure at large feature dimension.}
	\label{fig:concentration_all}
\end{figure}

Figure~\ref{fig:concentration_all} extends the main-text concentration
experiment. Panels~\ref{fig:concent_q04} and~\ref{fig:concent_q09}
show the simulated PDFs of $\mathbf W^\top \mathbf W_{\mathrm{ee}}$
for $q=0.4$ and $q=0.9$, respectively, using $100{,}000$ trials and
$d\in\{10,100,1000,10000\}$. In both cases, the distributions
concentrate around the limits predicted by
Corollary~\ref{corol:concentration}, namely
$\sqrt{\tau_{0.4}}\simeq 0.98217$ and
$\sqrt{\tau_{0.9}}\simeq 0.66279$. The $q=0.9$ case is the one shown
in the main text, while the $q=0.4$ case illustrates the same phenomenon
under lighter pruning.

Panels~\ref{fig:concent_mad} and~\ref{fig:concent_sd} quantify the
finite-dimensional convergence. The mean absolute deviation from the
limit decays approximately as $1/d$, while the standard deviation decays
approximately as $1/\sqrt d$. These empirical rates are consistent with
the concentration behavior predicted by Theorem~\ref{theorem:gamma},
although a sharp proof of these observed rates is left for future work.

\subsection{Semi-structured pruning at the neuron level}
\label{app:semistructured_neuron}

\input{tex/app/semi-struct-app}

\subsection{Compute constrained tradeoffs for the conditional perceptron}
\label{app:perceptron_tradeoff}

We next report additional simulations for the conditional perceptron that complement the main paper results.


For a target normalized compute constraint $\overline{\mu}_{\mathrm{c}} \in (0,1)$, we select an operating point
$(q,\tau)$ by searching over $(q,\tau)$ pairs as described in Appendix \ref{app:threshold_calibration}. Concretely, we run a
fixed-depth binary search (10 iterations) that queries candidate points using a Monte Carlo estimate of the
generalization error. Each query uses 100 test samples per realization of the teacher and training vectors. We average
over enough independent realizations so that the standard error of the estimated generalization error is within 10\% of
its mean value. This procedure is designed to reliably locate a near-optimal operating point in the
compute--performance plane while keeping runtime modest.


\begin{figure}[H]
	\vspace{-4pt}
	\centering
	\begin{subfigure}[H]{0.48\linewidth}
		\centering
		\includegraphics[width=\linewidth]{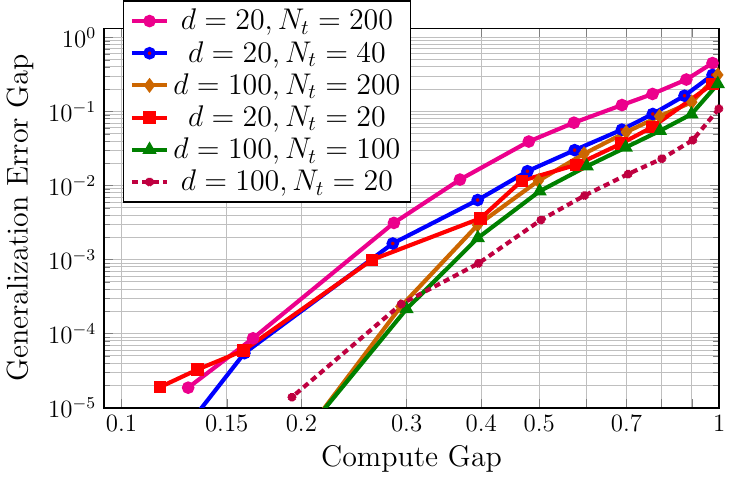}
		\caption{Excess error versus compute gap (log-log).}
		\label{fig:app_generrgaps}
	\end{subfigure}
	\hfill
	\begin{subfigure}[H]{0.48\linewidth}
		\centering
		\includegraphics[width=\linewidth]{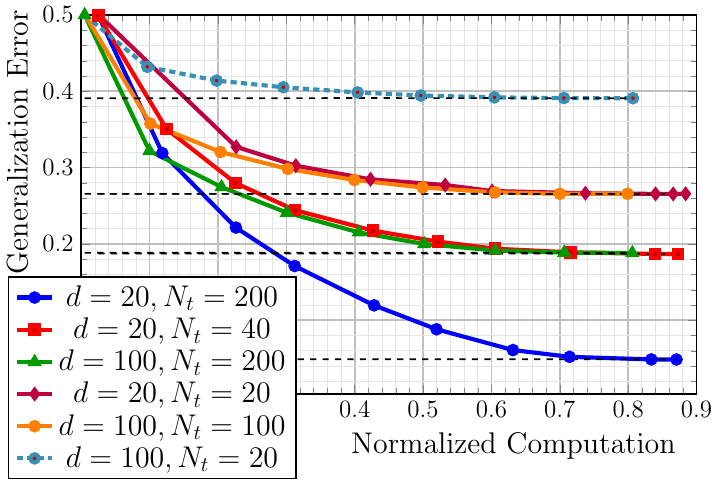}
		\caption{Generalization error versus expected compute.}
		\label{fig:app_errconv}
	\end{subfigure}
	\vspace{-3pt}
	\caption{Conditional perceptron compute--accuracy tradeoffs under compute-constrained operating points obtained by
		binary search.}
	\label{fig:app_perceptron_tradeoff}\vspace{-8pt}
\end{figure}

Figure~\ref{fig:app_generrgaps} plots the excess generalization error of the conditional predictor relative to the
unconditional predictor, versus the compute gap $1 - \text{(Normalized Compute)}$, on log-log axes. Theorem
\ref{theorem:tradeoff} predicts a power-law decay in the small-gap regime, which appears as an approximately linear trend
on these axes. This behavior is consistently observed across the tested configurations. Figure~\ref{fig:app_errconv}
replots the same sweeps as generalization error versus expected compute, with horizontal dashed lines indicating the
unconditional error rates. Configurations with matching ratios $N_t/d$ converge to essentially the same unconditional
baseline, and the conditional predictors approach these baselines quickly as compute increases.

\begin{figure}[H]
	\centering
	\begin{tikzpicture}
		\begin{axis}[
			width=12.5cm,
			height=8.5cm,
			xlabel={Compute Gap},
			ylabel={Generalization Error Gap},
			xlabel style={font=\large},
			ylabel style={font=\large},
			tick label style={font=\normalsize},
			grid=both,
			thick,
			xmode=log,
			ymode=log,
			ymin=1e-5,
			xmax=1,
			xmin=0.12,
			legend style={font=\normalsize, at={(0.97,0.03)}, anchor=south east},
			xtick={0.1,0.15,0.2,0.3,0.4,0.5,0.6,0.7,0.8,0.9,1},
			xticklabels={0.1,0.15,0.2,0.3,0.4,0.5,,0.7,,,1},
			]
			
			\addplot+[color=magenta, line width=2pt, mark=otimes*] coordinates {
				(0.97500000, 0.450358)
				(0.88143408, 0.270029)
				(0.77381775, 0.172442)
				(0.68780233, 0.122118)
				(0.57163747, 0.070589)
				(0.48028938, 0.039271)
				(0.36848788, 0.012027)
				(0.28565350, 0.003128)
				(0.16601349, 0.00008636)
				(0.12929029, 0.00001863)
			};
			\addlegendentry{$d=20,\;N_t=200$ frontier}
			
\addplot[
only marks,
mark=*,
mark size=1.8pt,
black,
forget plot
] coordinates {
	(0.16601349, 0.00008636)
	(0.28565350, 0.003128)
	(0.36848788, 0.012027)
	(0.48028938, 0.039271)
	(0.77381775, 0.172442)
};

\addplot[black, line width=1.6pt] coordinates {
	(0.13834458, 0.00003328)
	(0.19921619, 0.00022409)
};
\addlegendentry{Theoretical tangent $A$}

\addplot[blue, line width=1.6pt] coordinates {
	(0.13834458, 0.00002824)
	(0.19921619, 0.00026405)
};
\addlegendentry{Empirical tangent $\hat A$}
			
			
			\addplot[black, line width=1.6pt] coordinates {
				(0.13834458, 0.00003328)
				(0.19921619, 0.00022409)
			};
			\addplot[blue, line width=1.6pt] coordinates {
				(0.13834458, 0.00002824)
				(0.19921619, 0.00026405)
			};
			
			\addplot[black, line width=1.6pt] coordinates {
				(0.23804458, 0.00138713)
				(0.34278420, 0.00705367)
			};
			\addplot[blue, line width=1.6pt] coordinates {
				(0.23804458, 0.00093730)
				(0.34278420, 0.01043890)
			};
			
			\addplot[black, line width=1.6pt] coordinates {
				(0.30707323, 0.00769420)
				(0.44218546, 0.01879972)
			};
			\addplot[blue, line width=1.6pt] coordinates {
				(0.30707323, 0.00532374)
				(0.44218546, 0.02717049)
			};
			
			\addplot[black, line width=1.6pt] coordinates {
				(0.40024115, 0.03031340)
				(0.57634726, 0.05087557)
			};
			\addplot[blue, line width=1.6pt] coordinates {
				(0.40024115, 0.02124375)
				(0.57634726, 0.07259601)
			};
			
			\addplot[black, line width=1.6pt] coordinates {
				(0.64484813, 0.14370167)
				(0.92858130, 0.20693040)
			};
			\addplot[blue, line width=1.6pt] coordinates {
				(0.64484813, 0.10107459)
				(0.92858130, 0.29420097)
			};
			
			
			\node[font=\scriptsize, anchor=north west, align=left]
			at (axis cs:0.171,0.000074)
			{\shortstack{$A=5.23$\\$\hat A=6.13$}};
			
			\node[font=\scriptsize, anchor=north west, align=left]
			at (axis cs:0.294,0.00255)
			{\shortstack{$A=4.46$\\$\hat A=6.61$}};
			
			\node[font=\scriptsize, anchor=north west, align=left]
			at (axis cs:0.379,0.0099)
			{\shortstack{$A=2.45$\\$\hat A=4.47$}};
			
			\node[font=\scriptsize, anchor=north west, align=left]
			at (axis cs:0.494,0.0325)
			{\shortstack{$A=1.42$\\$\hat A=3.37$}};
			
			\node[font=\scriptsize, anchor=north west, align=left]
			at (axis cs:0.792,0.145)
			{\shortstack{$A=1.00$\\$\hat A=2.93$}};
			
		\end{axis}
	\end{tikzpicture}
	\caption{Optimized excess-error frontier for the conditional perceptron in the $d=20$, $N_t=200$ setting, with local theoretical tangents (black) and empirical tangents (blue) superimposed at representative operating points. Since the frontier is obtained by re-optimizing the operating point for each target compute gap, it is not a single fixed-$q$ power law, so the relevant comparison is local rather than global.}
	\label{fig:app_generrgaps_slopeanno}
\end{figure}

	Figure~\ref{fig:app_generrgaps_slopeanno} isolates the $d=20,N_t=200$ optimized frontier from
	Fig.~\ref{fig:app_generrgaps} and superimposes short local tangent segments at representative operating points.
	Note that this frontier is not a single fixed-$q$ curve. Rather, for each target
	compute gap, we choose a near-optimal operating point as in Appendix~\ref{app:threshold_calibration}, so there is no single
	theorem-predicted slope for the entire line. For Fig.~\ref{fig:app_generrgaps} in the
	$d=20,N_t=200$ setting, the approximate operating points are
	$q=0.36,0.40,0.55,0.70,0.80$ for compute gaps
	$0.15,0.25,0.40,0.50,0.75$, respectively. Using
	$\epsilon_{\mathrm{uc}}=0.049$, Theorem~3.1 gives the corresponding exponents
\begin{align}
	A=5.23,\ 4.46,\ 2.45,\ 1.42,\ 1.00,
\end{align}
	while the local empirical log-log slopes are
\begin{align}
	\hat A=6.13,\ 6.61,\ 4.47,\ 3.37,\ 2.93.
\end{align}
	
	In the figure, the dashed segments are the local theoretical tangents
	$y=y_0(x/x_0)^A$, while the dotted segments are the local empirical tangents
	$y=y_0(x/x_0)^{\hat A}$, each drawn through the corresponding point $(x_0,y_0)$ on the
	optimized frontier. The theoretical and empirical slopes show the same qualitative trend:
	they are steeper at smaller compute gaps and flatter at larger compute gaps. Theorem~3.1
	also predicts that as the compute gap approaches zero, the exponent should increase to
\begin{align}
	A \to \frac{0.5}{\sin^2(\pi\epsilon_{\mathrm{uc}})} \approx 21.7.
\end{align}

\subsection{Multi-layer pruning on CIFAR-10, additional theory comparison}
\label{app:cifar10_theory}

This appendix section complements Section~\ref{sec:static-networks} by visualizing the Gaussian-limit prediction of
Theorem~\ref{theorem:deep_general_phi} for renormalized neuronwise magnitude pruning, without retraining. The key quantity is the
\emph{alignment} between the unpruned and renormalized-pruned forward passes, captured by the diagonal cross-kernel
\begin{align}
\alpha_\ell(q)\;\triangleq\;C_{\infty,ii}^{(\ell)}(q),\qquad \alpha_0(q)=1,
\end{align}
where $C_\infty^{(\ell)}$ is defined by the cross-kernel recursion \eqref{eq:Crec_pointwise} and the factor $\sqrt{\tau_q}$
injects the per-layer overlap loss induced by pruning. Under the unit self-kernel convention $K_{\infty,ii}^{(\ell)}=1$,
Theorem~\ref{theorem:deep_general_phi} yields the closed NMSE form
\begin{align}
\mathrm{NMSE}_{\mathrm{th}}(q)\;=\;2\bigl(1-\alpha_L(q)\bigr),
\end{align}
so the predicted distortion is entirely an \emph{alignment deficit} accumulated across depth, rather than a scale attenuation.

\begin{figure}[H]
	\centering
	\includegraphics[width=0.5\linewidth]{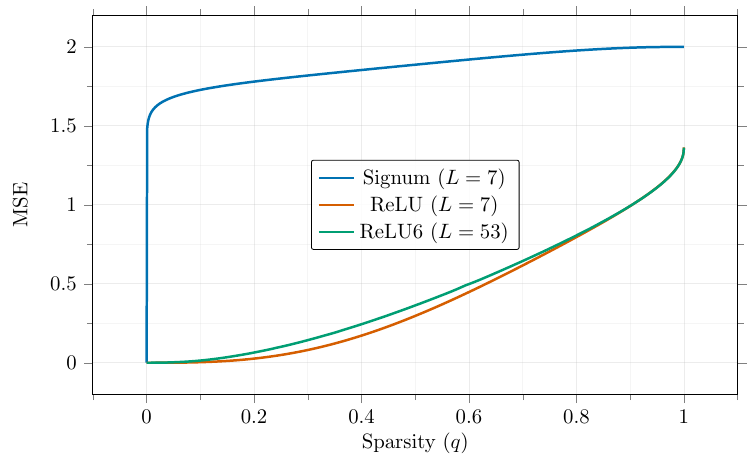}
	\vspace{-4pt}
	\caption{
		Theoretical NMSE curves $\mathrm{NMSE}_{\mathrm{th}}(q)=2(1-\alpha_L(q))$ from Theorem~\ref{theorem:deep_general_phi},
		obtained by iterating the cross-kernel recursion \eqref{eq:Crec_pointwise} at depth $L$ and extracting the diagonal
		alignment $\alpha_L(q)=C_{\infty,ii}^{(L)}(q)$. Shown are sign and (normalized) ReLU at depth $L=7$, and ReLU6 at
		depth $L=53$.}
	\label{fig:app_cifar10_theory}\vspace{-10pt}
\end{figure}

Figure~\ref{fig:app_cifar10_theory} plots the predicted distortion as a function of the pruning ratio $q$ under the
renormalized pruning model. The separation between curves reflects how the activation transforms the overlap loss
$\sqrt{\tau_q}$ through the Gaussian kernel operator $\mathcal{T}_\phi$ in \eqref{eq:Crec_pointwise}. In particular,
the sign nonlinearity produces a sharper alignment loss, so $\alpha_\ell(q)$ drops faster with depth and the predicted
NMSE is larger. Normalized ReLU yields a slower accumulation of alignment loss. ReLU6 behaves similarly to ReLU
locally, but its saturation together with the much larger effective depth leads to faster depth accumulation, which is
qualitatively consistent with the earlier degradation observed for \texttt{MobileNetV2}. The ordering in
Figure~\ref{fig:app_cifar10_theory} matches the empirical ordering in
Figures~\ref{fig:main_cifar_acc}--\ref{fig:main_cifar_nmse}.

Appendix~\ref{app:nonrenorm_static_networks} gives the corresponding recursion without renormalization. In that case,
pruning both changes direction and shrinks signal energy: a pruned row has limiting squared norm $\tau_q$ and overlap
$\tau_q$ with the original row. Renormalization restores the row norm and changes the overlap to $\sqrt{\tau_q}$. Thus,
the main text uses the renormalized model to remove this scale attenuation and focus on alignment loss. For sign
activations, the diagonal alignment recursion is unchanged for $q<1$ because signs are scale-invariant. For normalized
ReLU, pruning without renormalization adds a layerwise attenuation: the pruned self-kernel is $\tau_q^\ell$, and the raw
output MSE, normalized by the unpruned output variance, is
\begin{align}
\mathrm{MSE}^{\mathrm{un}}_{\mathrm{th}}(q)
=
1+\tau_q^L-2\tau_q^{L/2}\beta_L(q),
\end{align}
where $\beta_L(q)$ follows the same normalized alignment recursion as the renormalized curve in
Figure~\ref{fig:app_cifar10_theory}. For ReLU6, which is not homogeneous because of saturation, we evaluate the same
unrenormalized self-kernel and cross-kernel recursion directly.


Figure~\ref{fig:main_cifar_anal} separates the effect of removing the renormalization. The sign curve remains
close to the renormalized prediction because the sign activation discards scale information, so the main effect is still the
loss of directional alignment. In contrast, the ReLU and ReLU6 curves are substantially smaller at high sparsity because
the pruned branch is also attenuated in magnitude. In the extreme sparsity limit, the ReLU-type pruned output collapses
toward zero, so the raw MSE approaches the variance of the full output, namely $1$, rather than the decorrelation value
$2$ that appears when both branches are variance-normalized. Thus, Figures~\ref{fig:app_cifar10_theory} and
\ref{fig:main_cifar_anal} show complementary quantities: the former isolates alignment loss after scale
correction, while the latter describes the raw distortion induced by leaving the surviving weights unscaled.


\subsection{Empirical diagnostic for Assumption~5.1 on DistilBERT (SST-2)}
\label{app:assump51_diagnostic}

Assumption~5.1 models the dependence between an exit score $U$ and the final score $V$ via an approximately linear
conditional mean and a light tailed residual. Since classification scores can be strongly non Gaussian when pooled
across classes, we perform the diagnostic after within class standardization and then pool the standardized values.

\paragraph{Scores and standardization.}
For each example $(x,y)$, let $U(x)$ denote the scalar exit score at layer $\ell$ and let $V(x)$ denote the scalar
final score. For SST-2, we take $V$ to be the logit margin, for example $V=\text{logit}_{\text{pos}}-\text{logit}_{\text{neg}}$,
and $U$ to be the output of the learned exit head applied to the layer $\ell$ representation.
To suppress class mixture effects, we standardize within each class $c\in\{0,1\}$:
\begin{equation}
	\widetilde{U} \triangleq \frac{U-\mathbb{E}[U\mid Y=c]}{\sqrt{\mathrm{Var}(U\mid Y=c)}},
	\qquad
	\widetilde{V} \triangleq \frac{V-\mathbb{E}[V\mid Y=c]}{\sqrt{\mathrm{Var}(V\mid Y=c)}},
	\qquad \text{for } Y=c,
	\label{eq:within_class_standardize}
\end{equation}
and then pool $\{(\widetilde{U}_i,\widetilde{V}_i)\}$ across classes.

\paragraph{Conditional mean check.}
We check whether the pooled conditional mean is approximately linear:
\begin{equation}
	\mathbb{E}[\widetilde{V}\mid \widetilde{U}] \approx \rho_\ell\,\widetilde{U}.
	\label{eq:condmean_check}
\end{equation}
In Figure~\ref{fig:assump51_distilbert_grid}, the dots are binned estimates of
$\mathbb{E}[\widetilde{V}\mid \widetilde{U}]$ (quantile bins in $\widetilde{U}$), and the line is $\rho_\ell \widetilde{U}$,
where $\rho_\ell$ is the empirical correlation between $\widetilde{U}$ and $\widetilde{V}$.
The fit becomes tighter at later depth, consistent with increasing alignment as $\ell$ grows.

\paragraph{Residual tail check.}
Let $a_\ell,b_\ell$ be the least squares regression coefficients of $\widetilde{V}$ on $\widetilde{U}$, and define the
residual
\begin{equation}
	R_\ell \triangleq \widetilde{V}-(a_\ell+b_\ell \widetilde{U}).
	\label{eq:residual_def}
\end{equation}
Assumption \ref{assump:subg_coupling} is intended as an upper bound surrogate, so we do not require exact Gaussianity. The diagnostic below checks whether the linear conditional-mean and light-tail behavior hold approximately over the range that controls high-confidence exits. We empirically check for light tails by plotting
\begin{equation}
	t^2 \longmapsto \log \Pr\big(|R_\ell|\ge t\big).
	\label{eq:tail_plot_def}
\end{equation}
A sub Gaussian-like decay corresponds to an approximately linear trend in this plot over a moderate deviation range.
Figure~\ref{fig:assump51_distilbert_grid} shows such approximately linear decay for $\ell\in\{1,3,5\}$.
Flattening in the extreme tail is expected from finite sample resolution.

\begin{figure}[H]
	\centering
	\begin{subfigure}[H]{0.32\textwidth}
		\centering
		\includegraphics[width=\linewidth]{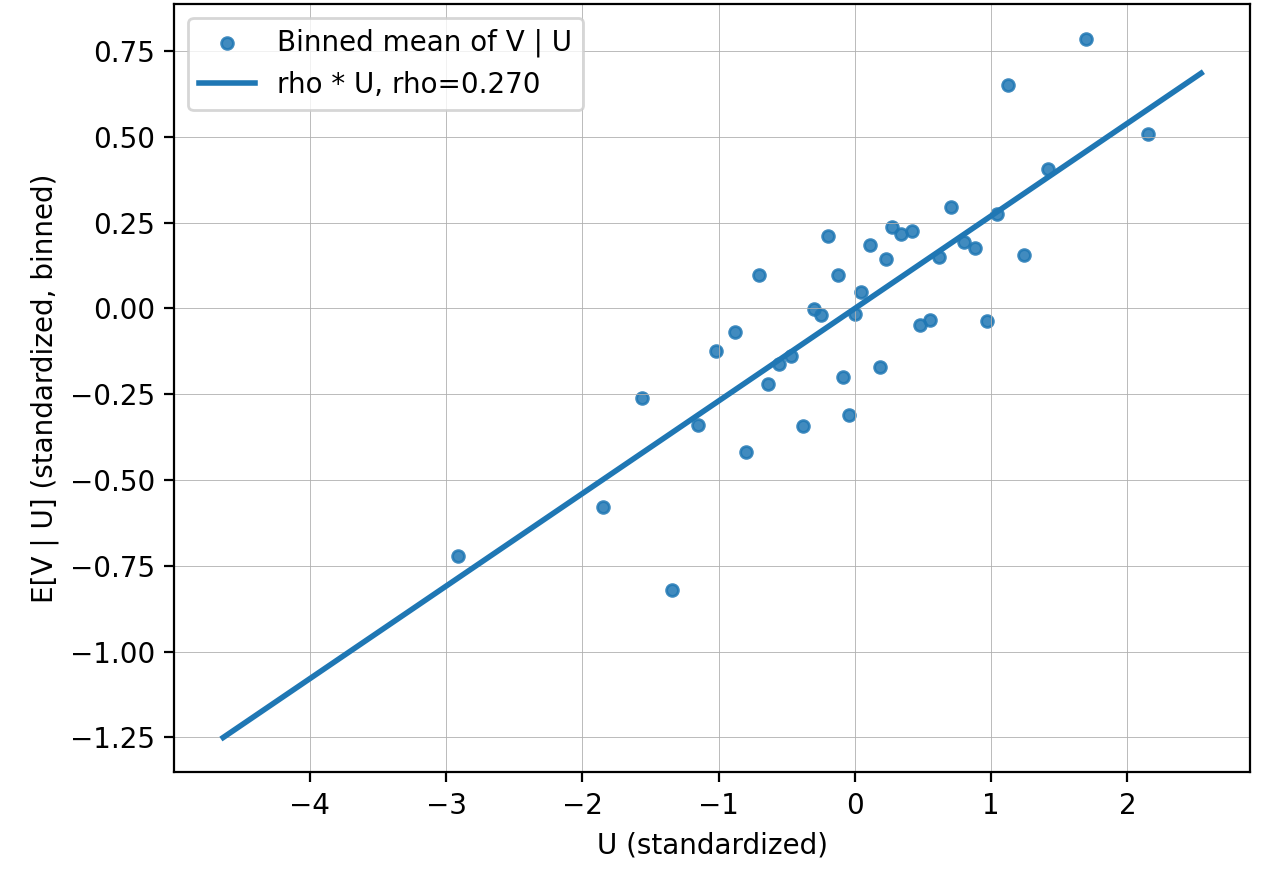}
		\caption{Conditional mean, $\ell=1$.}
		\label{fig:assump51_db_ell1_cond}
	\end{subfigure}
	\hfill
	\begin{subfigure}[H]{0.32\textwidth}
		\centering
		\includegraphics[width=\linewidth]{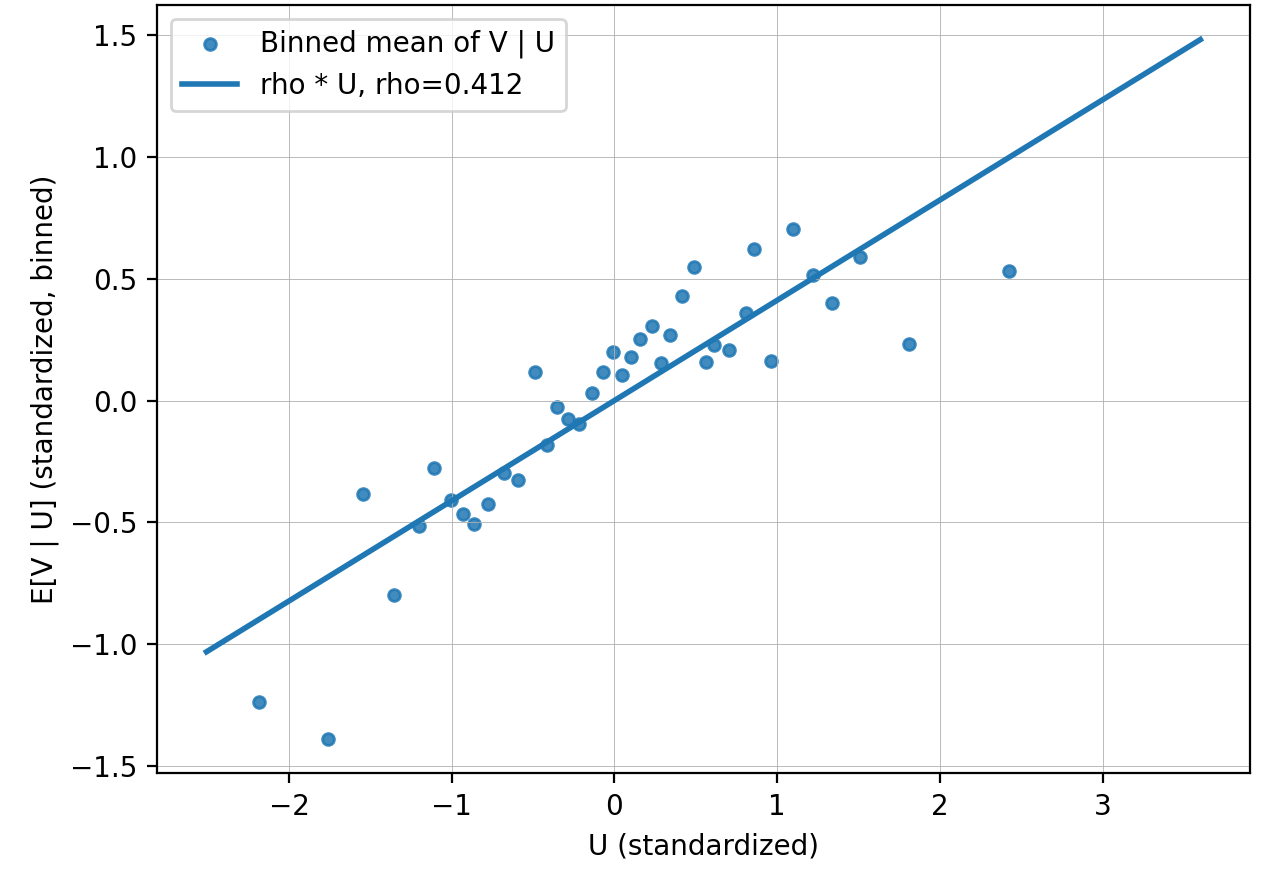}
		\caption{Conditional mean, $\ell=3$.}
		\label{fig:assump51_db_ell3_cond}
	\end{subfigure}
	\hfill
	\begin{subfigure}[H]{0.32\textwidth}
		\centering
		\includegraphics[width=\linewidth]{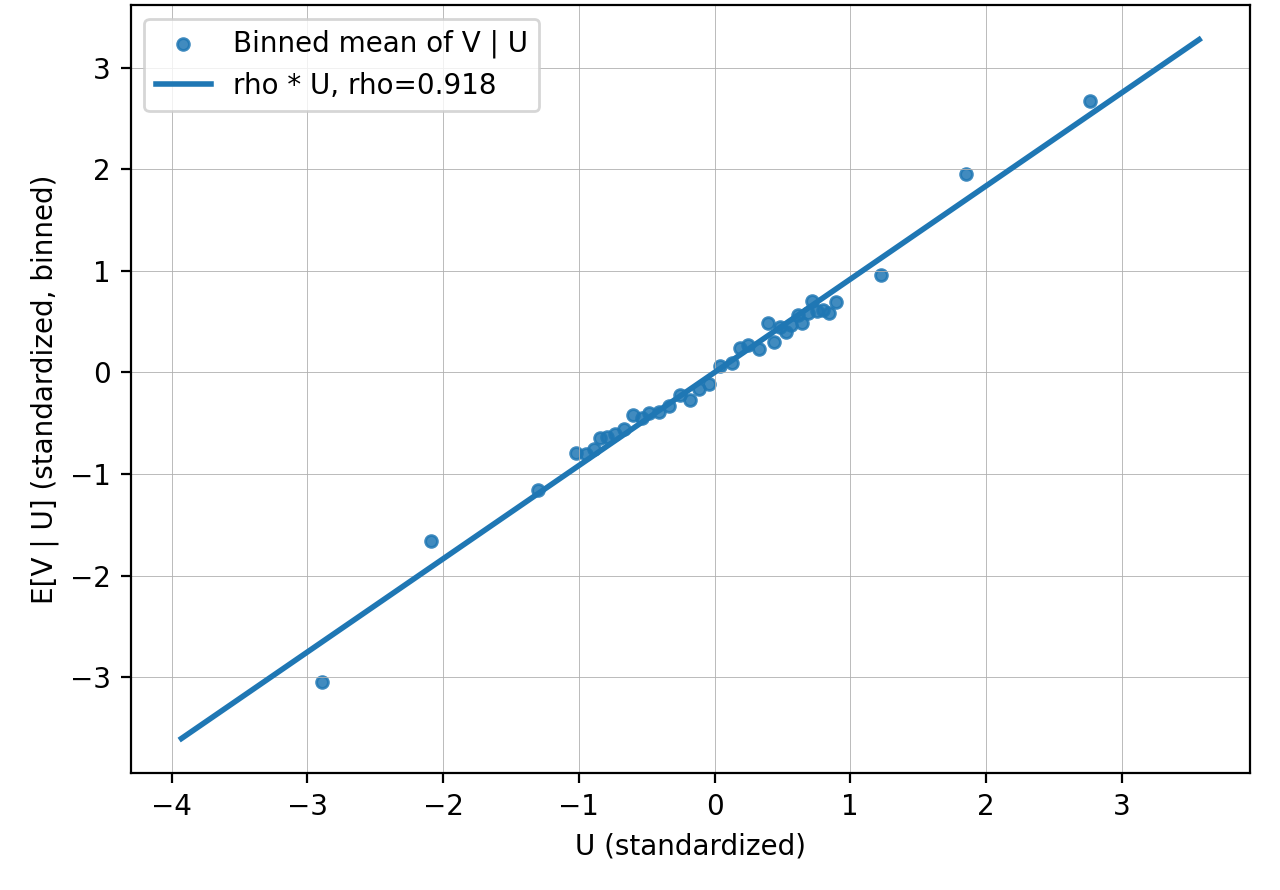}
		\caption{Conditional mean, $\ell=5$.}
		\label{fig:assump51_db_ell5_cond}
	\end{subfigure}
	
	\vspace{6pt}
	
	\begin{subfigure}[H]{0.32\textwidth}
		\centering
		\includegraphics[width=\linewidth]{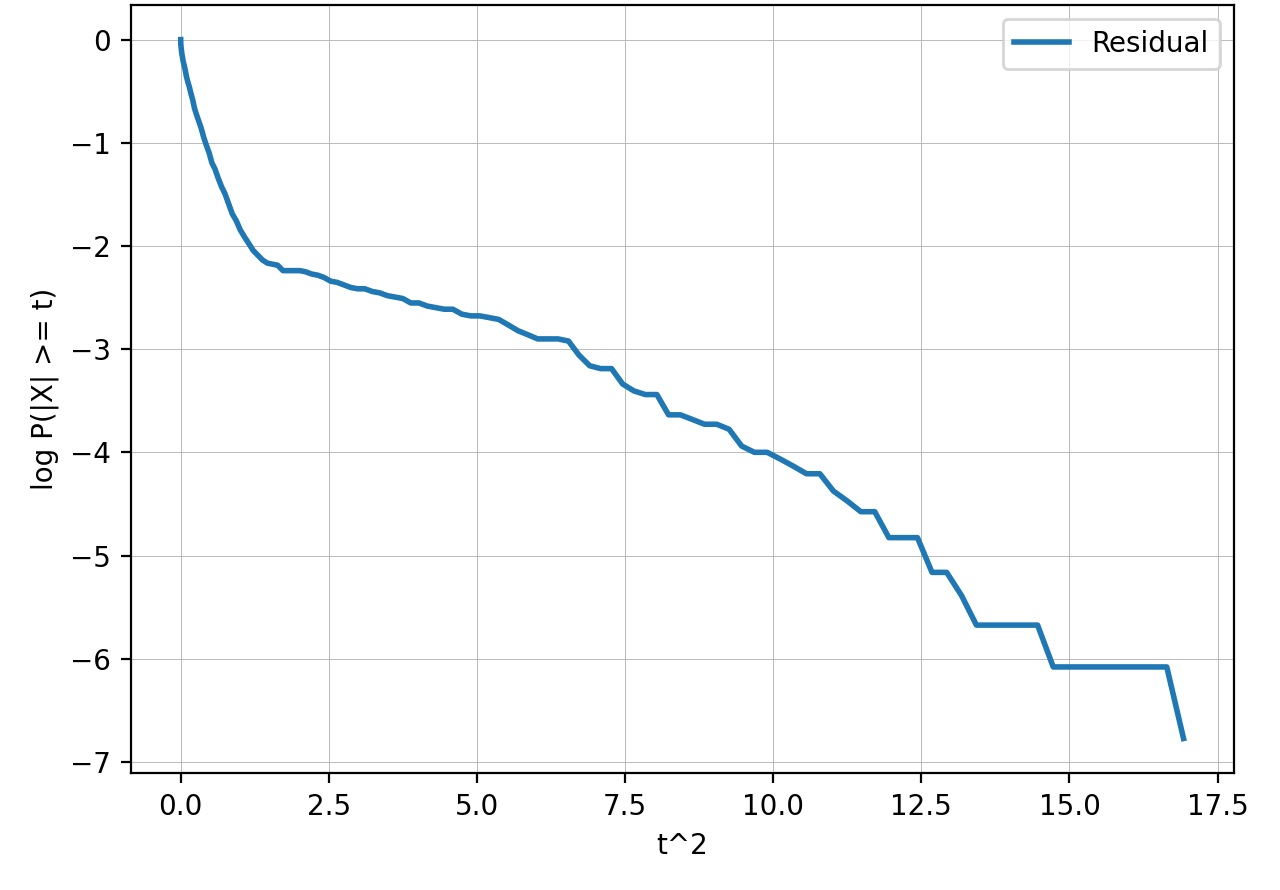}
		\caption{Residual tails, $\ell=1$.}
		\label{fig:assump51_db_ell1_tail}
	\end{subfigure}
	\hfill
	\begin{subfigure}[H]{0.32\textwidth}
		\centering
		\includegraphics[width=\linewidth]{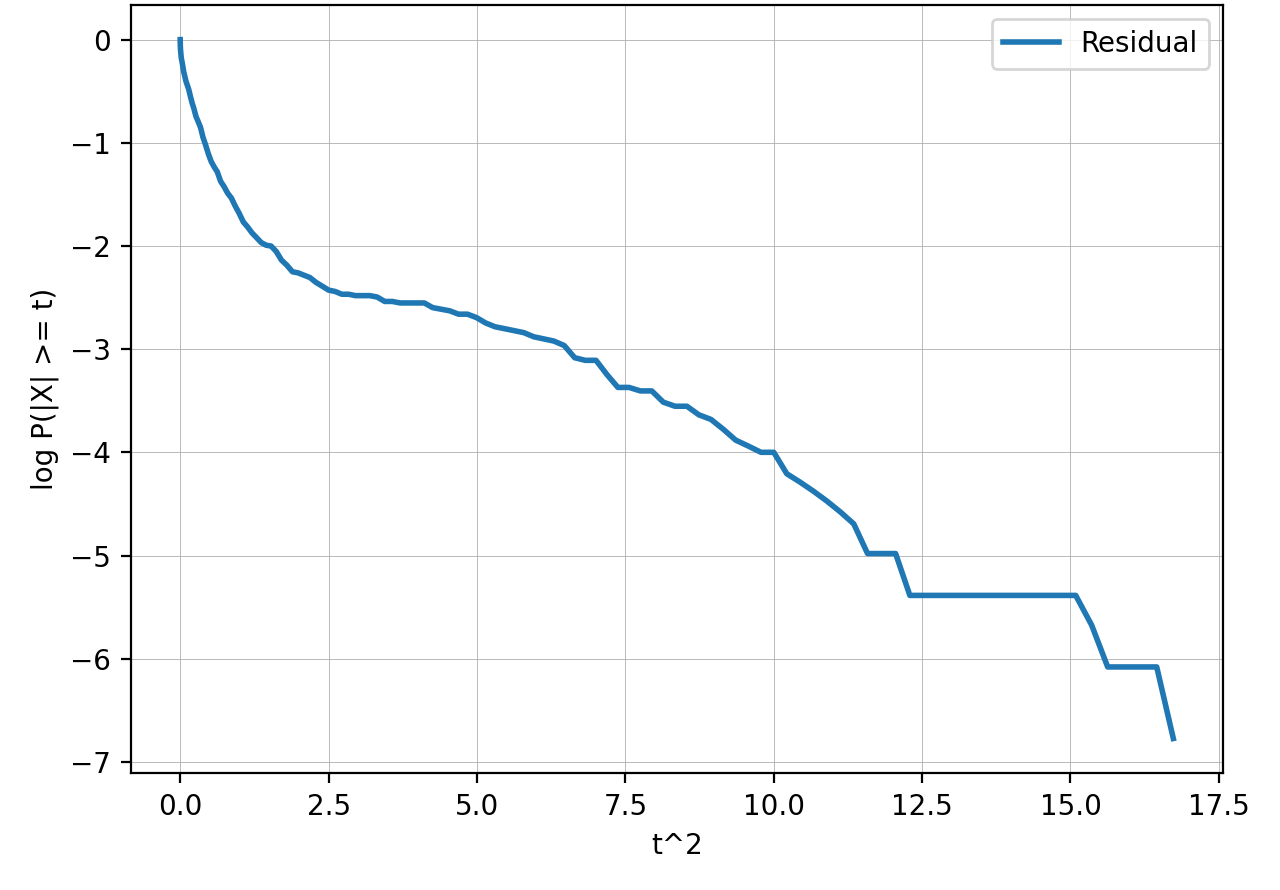}
		\caption{Residual tails, $\ell=3$.}
		\label{fig:assump51_db_ell3_tail}
	\end{subfigure}
	\hfill
	\begin{subfigure}[H]{0.32\textwidth}
		\centering
		\includegraphics[width=\linewidth]{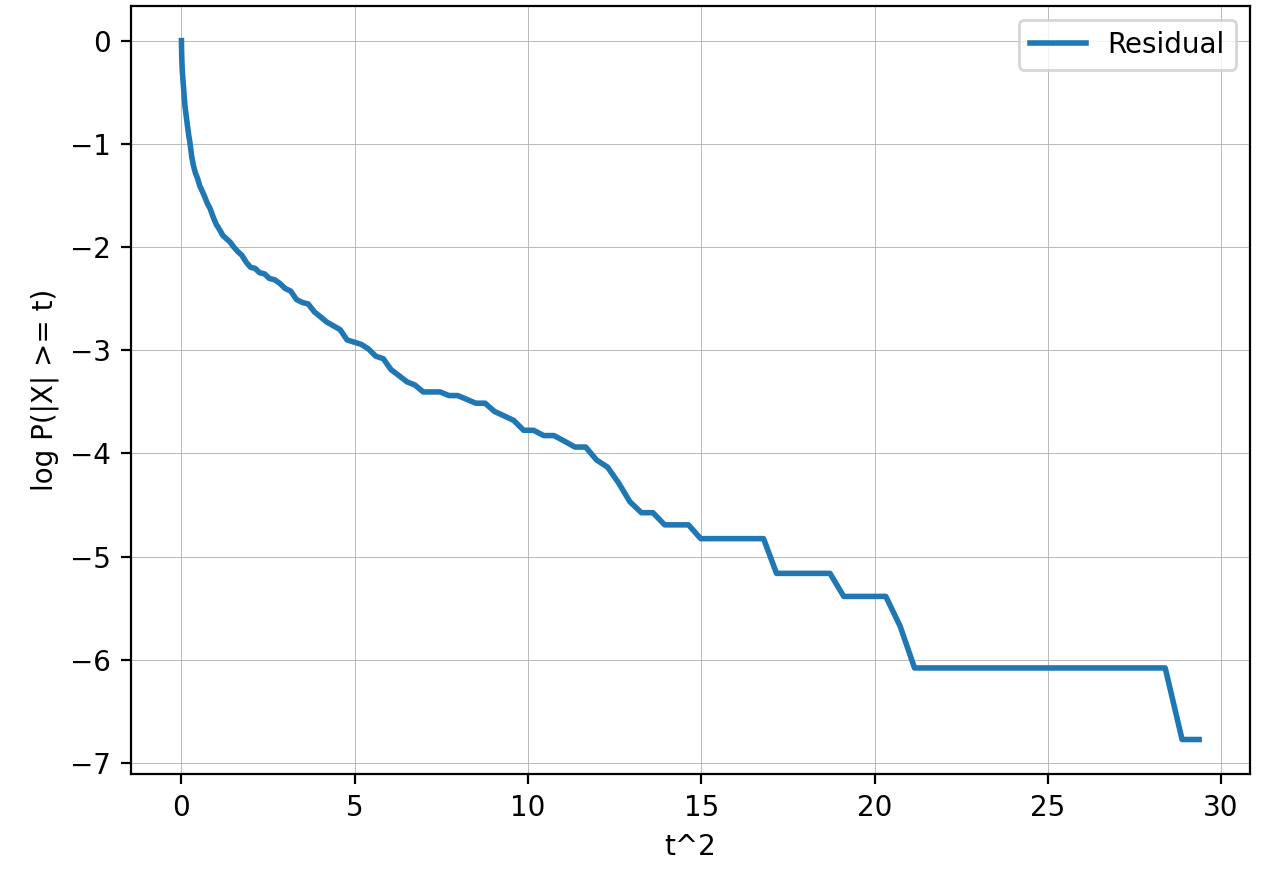}
		\caption{Residual tails, $\ell=5$.}
		\label{fig:assump51_db_ell5_tail}
	\end{subfigure}
	
	\caption{Empirical diagnostic of Assumption~5.1 on DistilBERT (SST-2) after within class standardization.
		Top row: pooled conditional mean estimates $\mathbb{E}[\widetilde{V}\mid \widetilde{U}]$ versus $\widetilde{U}$,
		overlaid with the linear surrogate $\rho_\ell \widetilde{U}$ for $\ell\in\{1,3,5\}$.
		Bottom row: pooled residual tail plots for $R_\ell=\widetilde{V}-(a_\ell+b_\ell\widetilde{U})$,
		showing approximately sub Gaussian-like decay over a moderate deviation range.
		Alignment increases with depth, and the conditional mean becomes increasingly linear.}
	\label{fig:assump51_distilbert_grid}
\end{figure}

Across early, mid, and late exit layers, the within class standardized diagnostic supports Assumption~5.1 as a
reasonable upper bound surrogate for the score coupling used in the analysis: the conditional mean is close to
linear, with alignment increasing with depth, and the residual tails decay approximately exponentially in $t^2$ over
the moderate deviation range that dominates the bound.



\subsection{ViT diagnostics (CIFAR-100): conditional mean and residual tails}
\label{app:vit_diagnostics}

We provide empirical diagnostics for the classwise Gaussian-coupling surrogate used in the main text.
For each chosen exit layer $\ell$, we form paired scalar scores $(U,V)$ from the $\ell$-exit head and the final head,
then standardize within each class to remove class-dependent shifts and scales before pooling across classes.
We then check two signatures: (i) approximate linearity of the conditional mean $\mathbb{E}[V\mid U]$,
and (ii) light (near-Gaussian) tails for the residual $R \triangleq V - \rho_\ell U$, where $\rho_\ell$ is the pooled
within-class correlation of $(U,V)$.

Figure~\ref{fig:vit_cifar100_diagnostics} summarizes results for $\ell\in\{3,6,9,11\}$.
As depth increases, the pooled within-class alignment strengthens and the residual exhibits lighter tails and fewer extreme deviations from Gaussian behavior.
Concretely, the pooled correlation grows from $\rho \approx 0.199$ at $\ell=3$ to $\rho \approx 0.407$ at $\ell=6$,
$\rho \approx 0.616$ at $\ell=9$, and $\rho \approx 0.825$ at $\ell=11$.
In parallel, the pooled residual kurtosis decreases from about $4.725$ to $4.230$ to $3.509$ to $1.736$
(Table~\ref{tab:vit_diagnostics}). These trends support the surrogate as a useful upper-bound model rather than an exact distributional claim: the conditional mean is close to linear over the bulk of the mass, and the residual tail plots are consistent with light-tail behavior. This is the regime used by our analytical results to control high-confidence exits, so moderate approximation error changes constants but does not change the alignment-controlled power-law conclusion.

\begin{figure}[H]
	\vspace{-2pt}
	\centering
	
	\begin{subfigure}[H]{0.24\textwidth}
		\centering
		\includegraphics[width=\linewidth]{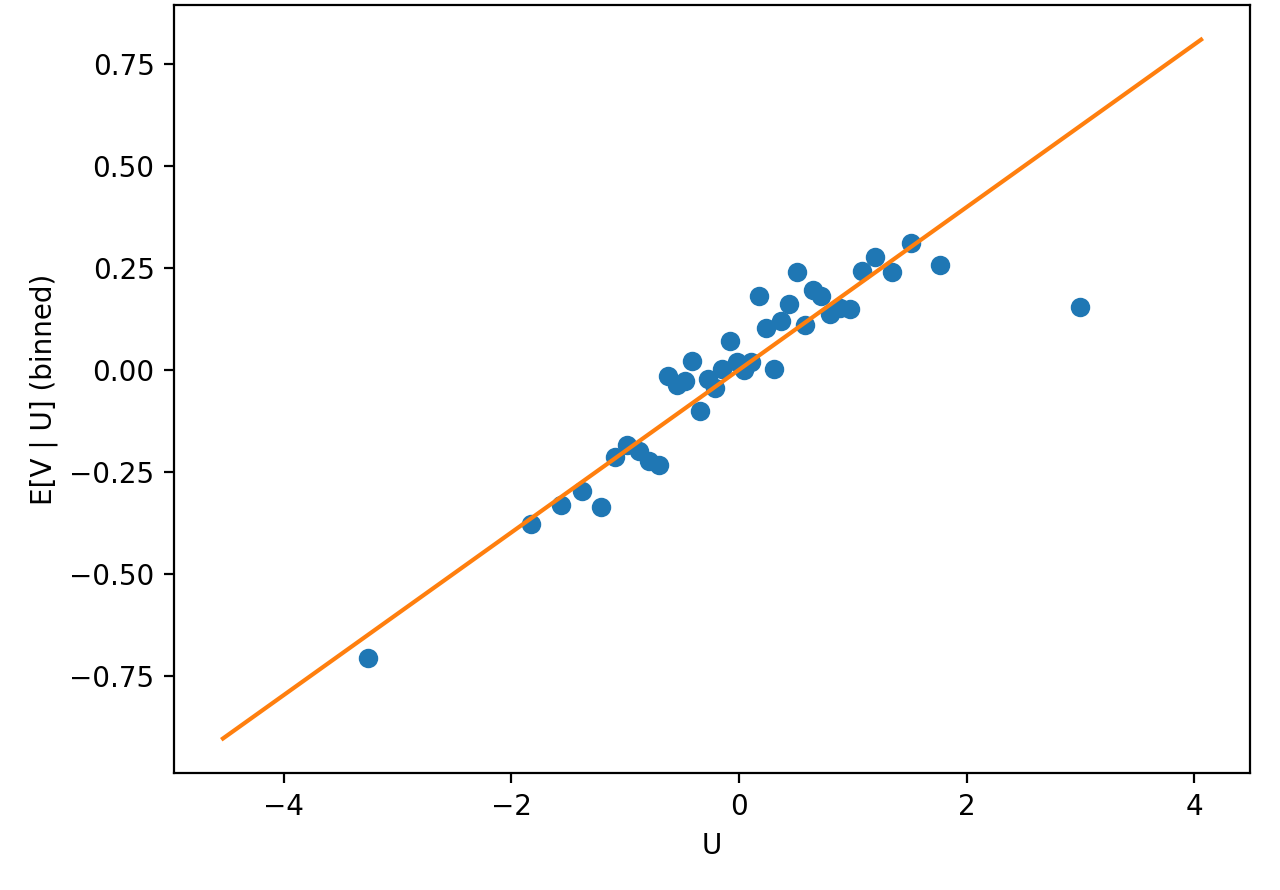}
		\caption{$\ell=3$: pooled $\mathbb{E}[V\mid U]$ (within-class standardized).}
		\label{fig:vit_cifar100_ell3_pooled_condmean}
	\end{subfigure}\hfill
	\begin{subfigure}[H]{0.24\textwidth}
		\centering
		\includegraphics[width=\linewidth]{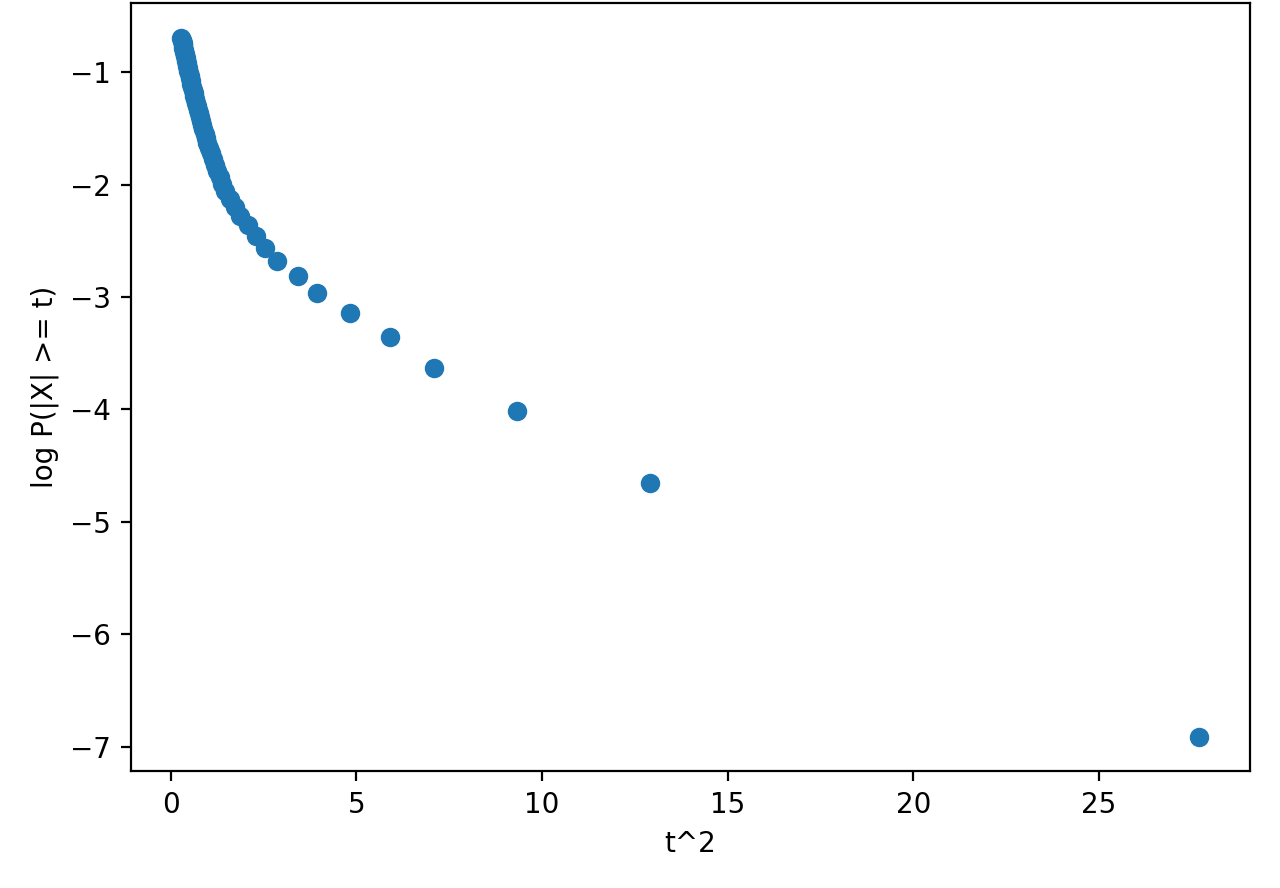}
		\caption{$\ell=3$: residual tail, $\log\mathbb{P}(|R|>t)$ vs $t^2$.}
		\label{fig:vit_cifar100_ell3_pooled_tail}
	\end{subfigure}\hfill
	\begin{subfigure}[H]{0.24\textwidth}
		\centering
		\includegraphics[width=\linewidth]{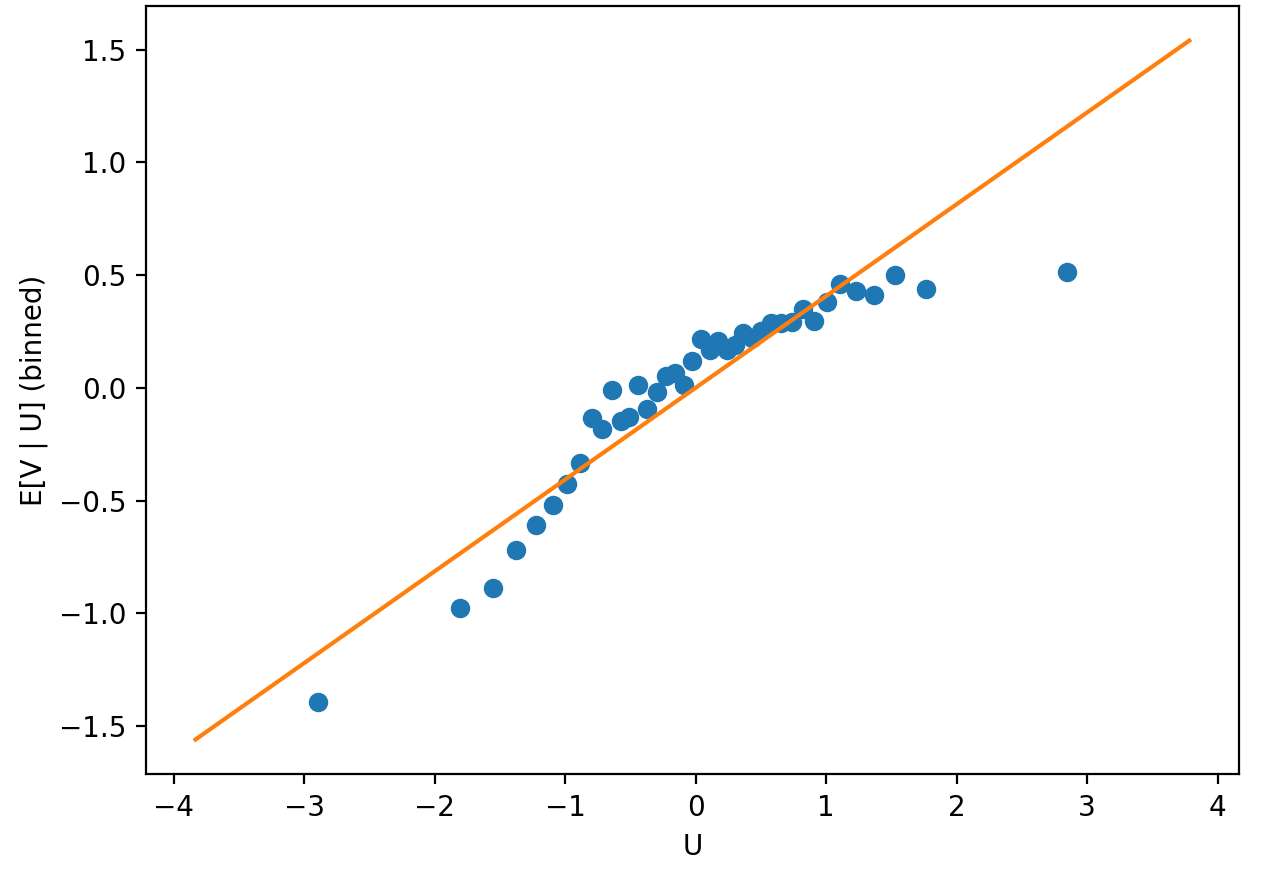}
		\caption{$\ell=6$: pooled $\mathbb{E}[V\mid U]$ (within-class standardized).}
		\label{fig:vit_cifar100_ell6_pooled_condmean}
	\end{subfigure}\hfill
	\begin{subfigure}[H]{0.24\textwidth}
		\centering
		\includegraphics[width=\linewidth]{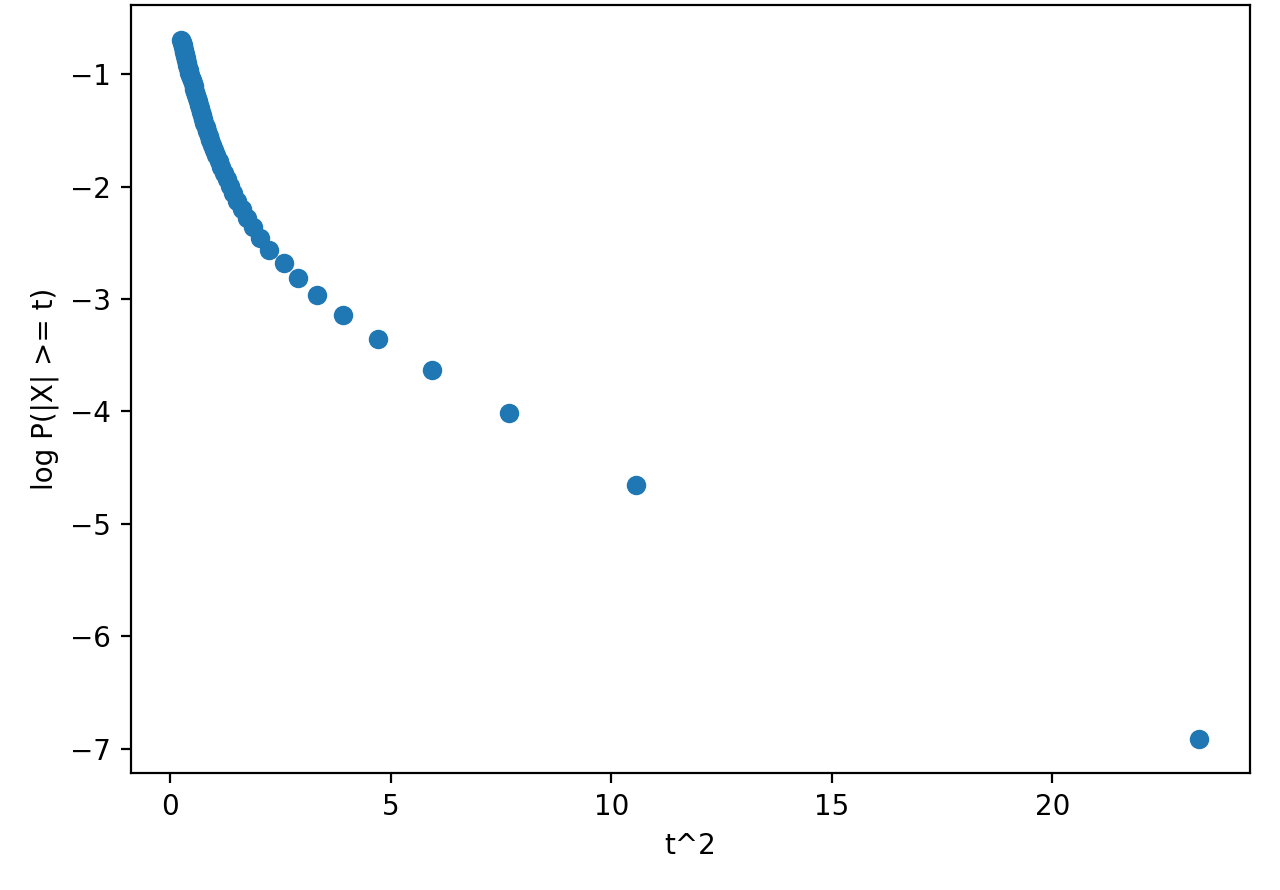}
		\caption{$\ell=6$: residual tail, $\log\mathbb{P}(|R|>t)$ vs $t^2$.}
		\label{fig:vit_cifar100_ell6_pooled_tail}
	\end{subfigure}
	
	\vspace{4pt}
	
	\begin{subfigure}[H]{0.24\textwidth}
		\centering
		\includegraphics[width=\linewidth]{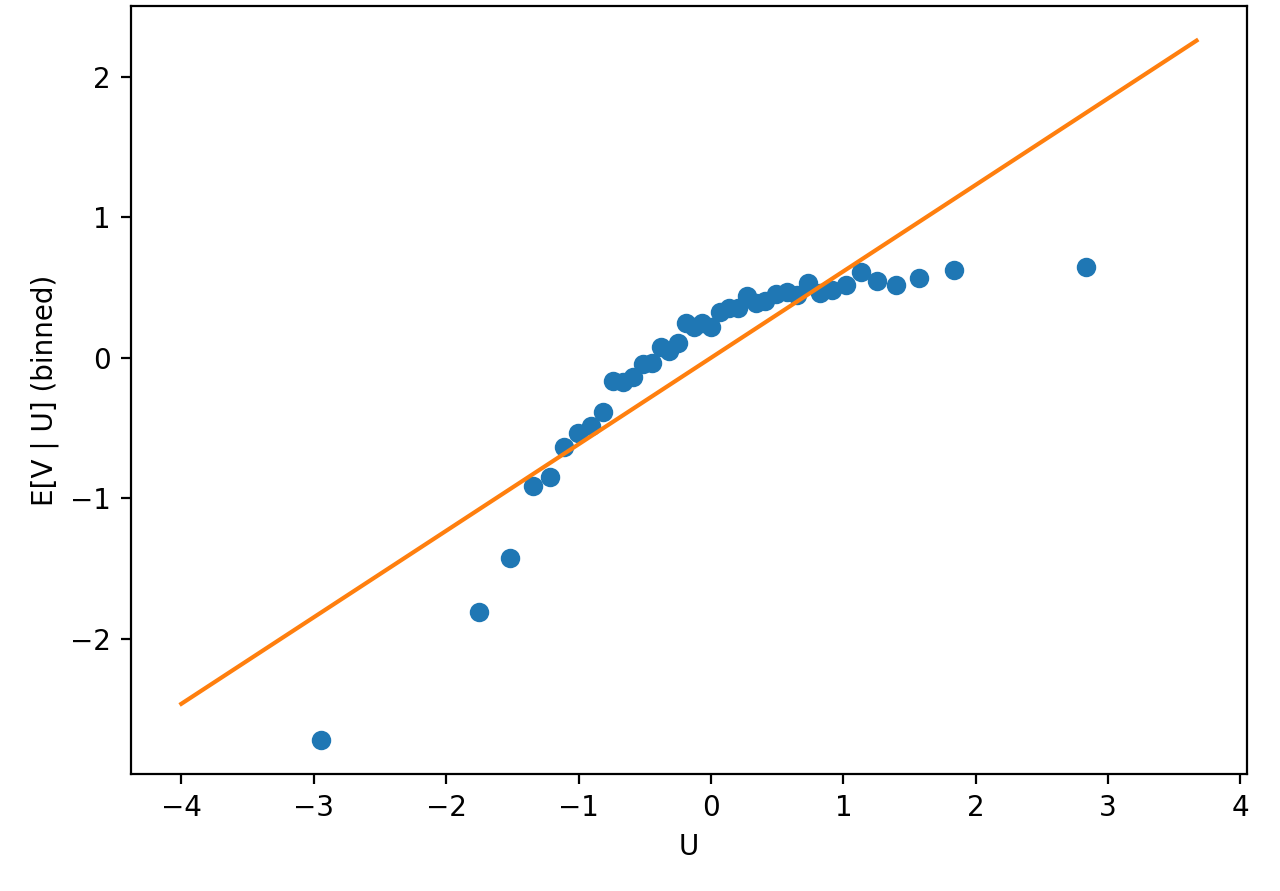}
		\caption{$\ell=9$: pooled $\mathbb{E}[V\mid U]$ (within-class standardized).}
		\label{fig:vit_cifar100_ell9_pooled_condmean}
	\end{subfigure}\hfill
	\begin{subfigure}[H]{0.24\textwidth}
		\centering
		\includegraphics[width=\linewidth]{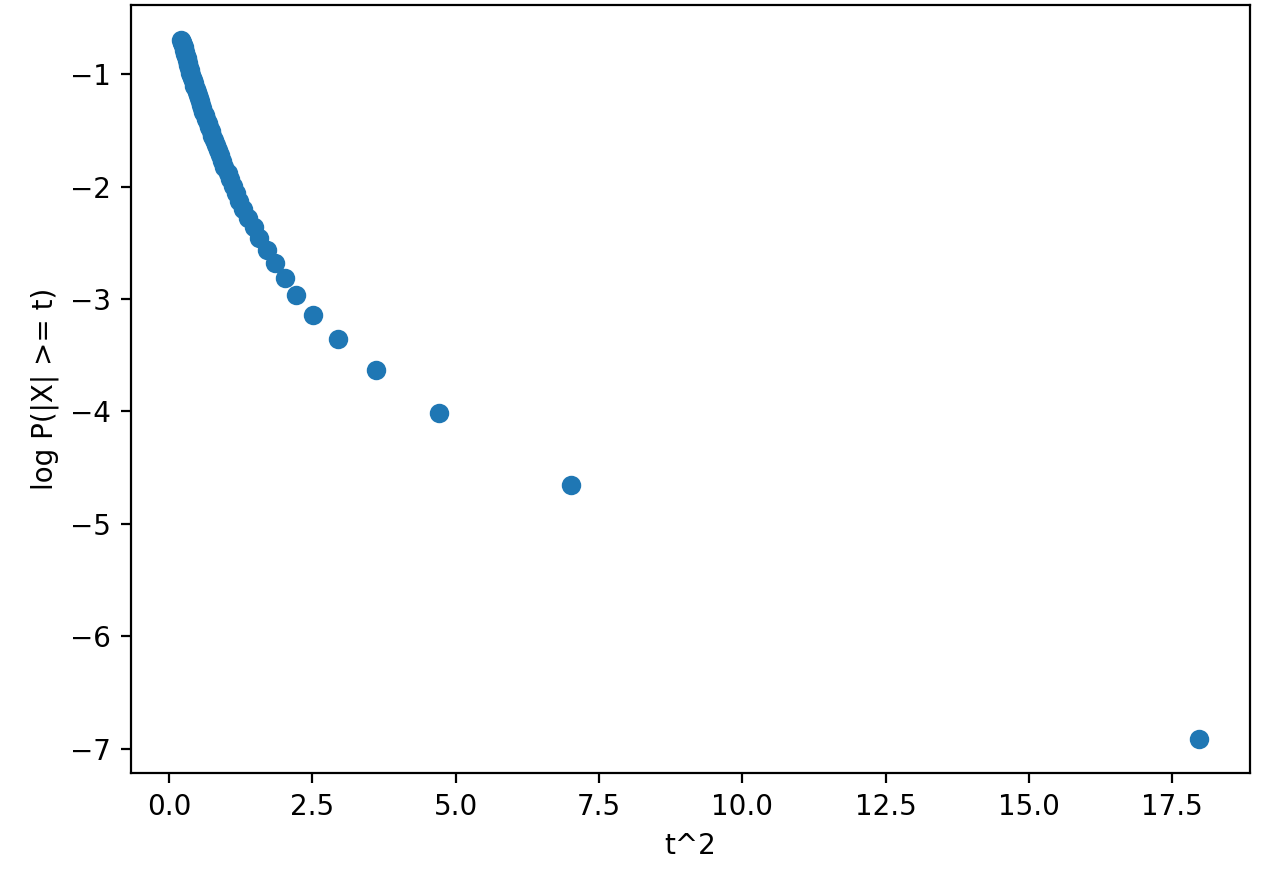}
		\caption{$\ell=9$: residual tail, $\log\mathbb{P}(|R|>t)$ vs $t^2$.}
		\label{fig:vit_cifar100_ell9_pooled_tail}
	\end{subfigure}\hfill
	\begin{subfigure}[H]{0.24\textwidth}
		\centering
		\includegraphics[width=\linewidth]{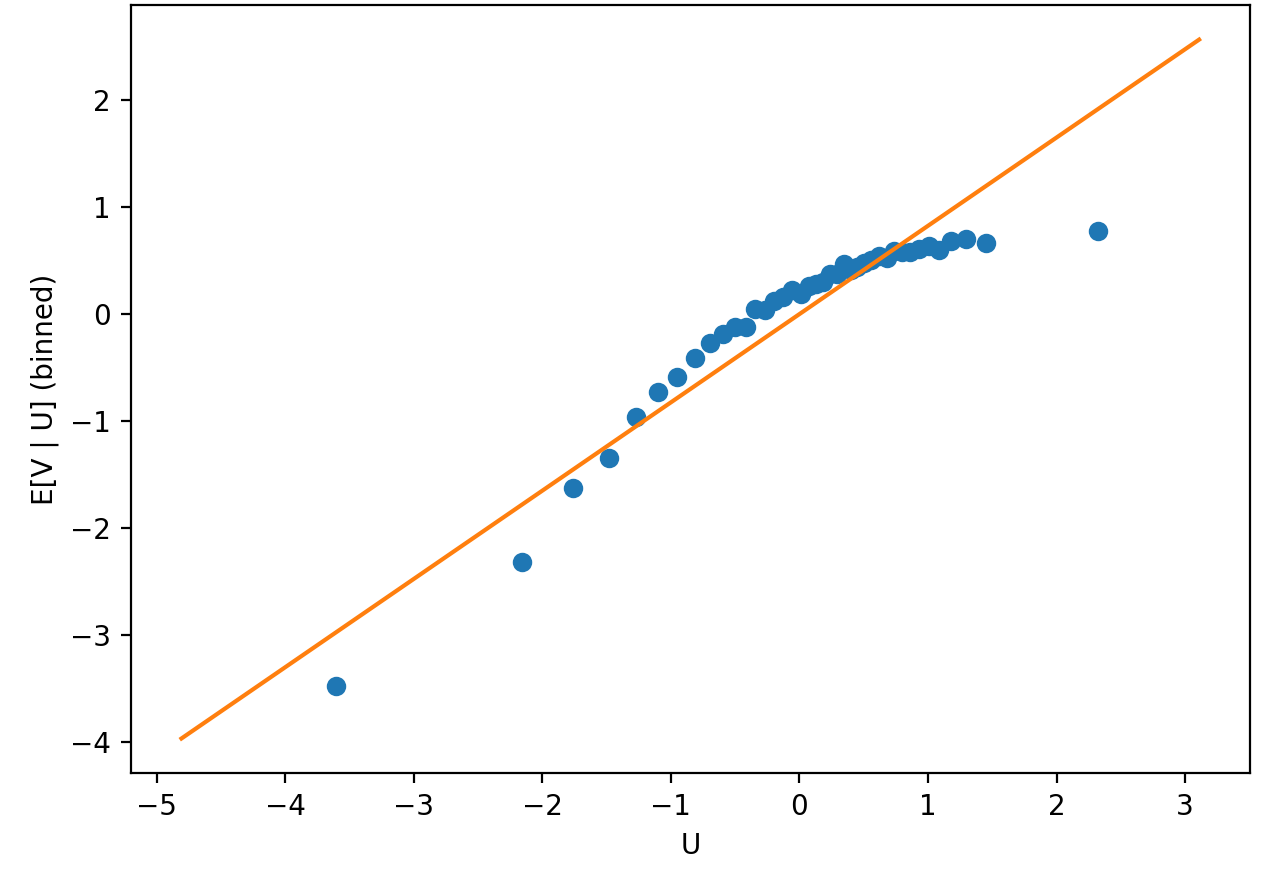}
		\caption{$\ell=11$: pooled $\mathbb{E}[V\mid U]$ (within-class standardized).}
		\label{fig:vit_cifar100_ell11_pooled_condmean}
	\end{subfigure}\hfill
	\begin{subfigure}[H]{0.24\textwidth}
		\centering
		\includegraphics[width=\linewidth]{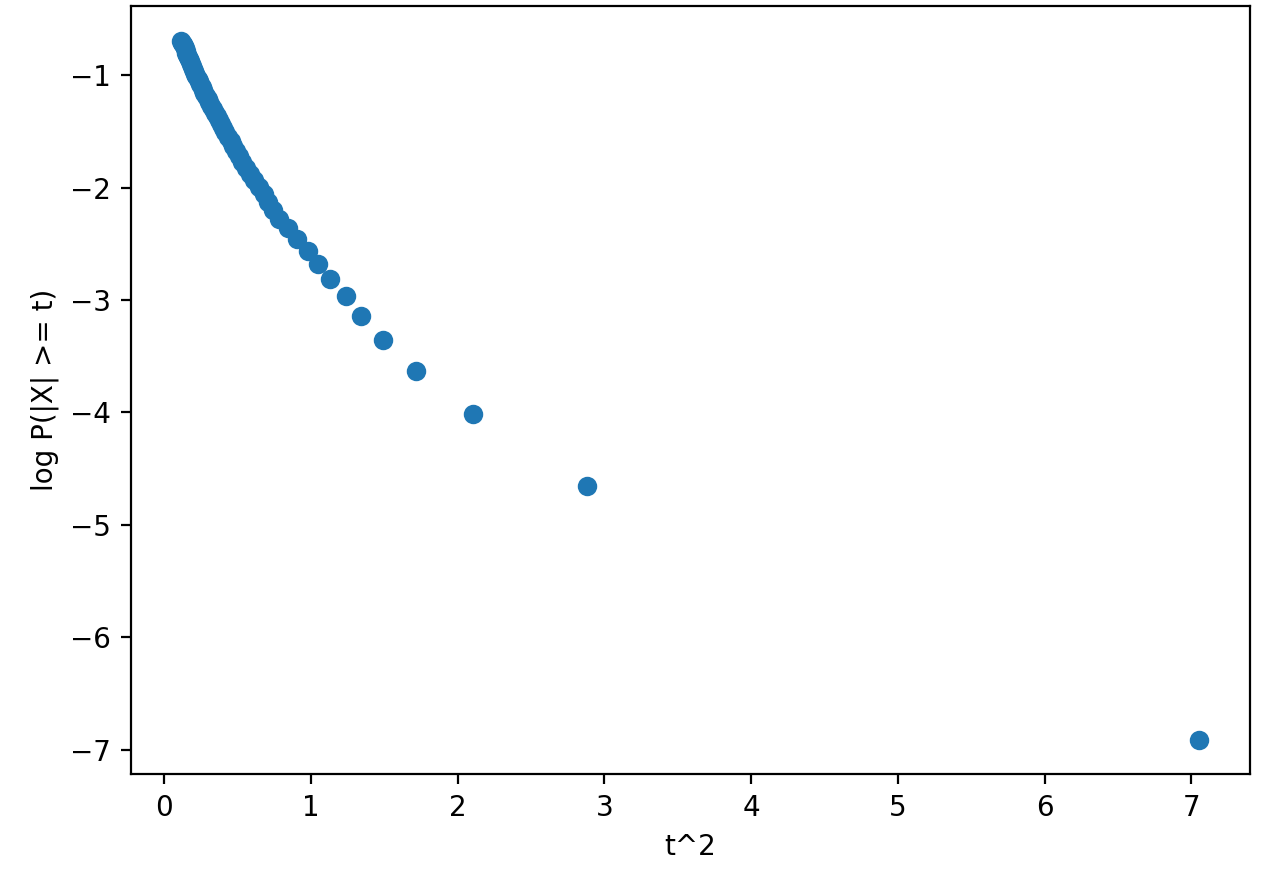}
		\caption{$\ell=11$: residual tail, $\log\mathbb{P}(|R|>t)$ vs $t^2$.}
		\label{fig:vit_cifar100_ell11_pooled_tail}
	\end{subfigure}
	
	\vspace{-4pt}
	\caption{
		ViT (CIFAR-100) diagnostics for the classwise Gaussian-coupling surrogate at exit layers $\ell\in\{3,6,9,11\}$.
		Within-class standardization makes the pooled conditional mean close to linear over the bulk of $U$,
		and the residual tail plots are consistent with light, near-Gaussian decay, increasingly so at deeper layers.
	}
	\label{fig:vit_cifar100_diagnostics}
	\vspace{-6pt}
\end{figure}

\begin{table}[H]
	\centering
	\caption{ViT CIFAR-100 diagnostics for the classwise coupling check (classwise pooled, within-class standardized). We report the pooled correlation $\rho_\ell$, the linear-fit $R^2$ for $\mathbb{E}[V\mid U]$, and tail/heavy-tail proxies via $\psi^2(U)$, $\psi^2(R)$, and residual kurtosis $\kappa(R)$. To summarize tail behavior in a single number, we report an empirical proxy for the squared sub-Gaussian
		Orlicz norm. Recall $\psi^2(X)\triangleq \inf\{s>0:\E[\exp(X^2/s^2)]\le 2\}$. Larger values indicate heavier tails.}
	\label{tab:vit_diagnostics}
	\begin{tabular}{cccccc}
		\toprule
		$\ell$ & $\rho_\ell$ & $R^2$ & $\psi^2(U)$ & $\psi^2(R)$ & $\kappa(R)$ \\
		\midrule
		3  & 0.199 & 0.040 & 3.374 & 6.316 & 4.725 \\
		6  & 0.407 & 0.166 & 2.584 & 6.372 & 4.230 \\
		9  & 0.616 & 0.379 & 2.657 & 5.930 & 3.509 \\
		11 & 0.825 & 0.680 & 3.783 & 3.221 & 1.736 \\
		\bottomrule
	\end{tabular}
\end{table}

%% file: tex/app/semi-struct-app.tex
	We next validate the semi-structured extension at the single-neuron level. We focus first on the practically relevant
$2\!:\!4$ case covered by Theorem~\ref{theorem:blockwise_nm}. Specifically, we draw
$\mathbf{W}\sim \mathrm{Unif}(\mathbb{S}^{n-1})$, partition its coordinates into disjoint blocks of size $M$, and within
each block keep the $N$ largest-magnitude entries to obtain the pruned vector $\mathbf{W}_p$. We then compare the
empirical raw and renormalized similarities,
$\mathbf{W}^T\mathbf{W}_p$ and $\mathbf{W}^T\widetilde{\mathbf{W}}_p$, against the predicted limits $\tau_{N:M}$ and
$\sqrt{\tau_{N:M}}$. For $n=4096$ and the practical $2\!:\!4$ pattern, the predicted constants are
$\tau_{2:4}=0.86755$ and $\sqrt{\tau_{2:4}}=0.93143$, while the corresponding empirical means are $0.86754$ and
$0.93142$, respectively. We also examine how concentration sharpens with dimension. In both the raw and renormalized
cases, the empirical standard deviation decays with an approximately $n^{-1/2}$ law, with fitted log-log slopes
$-0.4989$ and $-0.4993$, respectively.

\begin{figure}[H]
	\centering
	
	\begin{minipage}{0.48\textwidth}
		\centering
		\begin{tikzpicture}
			\begin{axis}[
				width=\linewidth,
				height=0.72\linewidth,
				tick label style={font=\small},
				label style={font=\small},
				title style={font=\small},
				legend style={font=\small},
				grid=both,
				title={(a) Empirical means versus $n$ for $2\!:\!4$},
								title style={at={(0.5,-0.25)}, anchor=north},
				xlabel={$n$},
				ylabel={Mean},
				xmode=log,
				xmin=100, xmax=10000,
				ymin=0.86, ymax=0.94,
				]
				\addplot[blue, thick, mark=*] table[col sep=comma, x=n, y=raw_mean] {fixed_sweep.csv};
				\addplot[blue, thick, dashed] table[col sep=comma, x=n, y=tau] {fixed_sweep.csv};
				\addplot[orange!85!black, thick, mark=square*] table[col sep=comma, x=n, y=ren_mean] {fixed_sweep.csv};
				\addplot[orange!85!black, thick, dashed] table[col sep=comma, x=n, y=sqrt_tau] {fixed_sweep.csv};
				\legend{{$\mathbf{W}^T\mathbf{W}_p$},{$\tau_{2:4}$},{$\mathbf{W}^T\widetilde{\mathbf{W}}_p$},{$\sqrt{\tau_{2:4}}$}}
			\end{axis}
		\end{tikzpicture}
	\end{minipage}
	\hfill
	\begin{minipage}{0.48\textwidth}
		\centering
		\begin{tikzpicture}
			\begin{axis}[
				width=\linewidth,
				height=0.72\linewidth,
				tick label style={font=\small},
				label style={font=\small},
				title style={font=\small},
				legend style={font=\small},
				grid=both,
				title={(b) Empirical standard deviations versus $n$ for $2\!:\!4$},
								title style={at={(0.5,-0.25)}, anchor=north},
				xlabel={$n$},
				ylabel={Standard deviation},
				xmode=log,
				ymode=log,
				xmin=100, xmax=10000,
				]
				\addplot[blue, thick, mark=*] table[col sep=comma, x=n, y=raw_std] {fixed_sweep.csv};
				\addplot[orange!85!black, thick, mark=square*] table[col sep=comma, x=n, y=ren_std] {fixed_sweep.csv};
				\legend{{$\mathbf{W}^T\mathbf{W}_p$},{$\mathbf{W}^T\widetilde{\mathbf{W}}_p$}}
			\end{axis}
		\end{tikzpicture}
	\end{minipage}
	
	\caption{Numerical validation of the fixed-block semi-structured limit for the practical $2\!:\!4$ pattern.
		The empirical means converge to the predicted constants $\tau_{2:4}$ and $\sqrt{\tau_{2:4}}$, and the empirical standard
		deviations decay approximately as $n^{-1/2}$.}
	\label{fig:semistructured_neuron_fixed}
\end{figure}

We also consider the regime in which $N$ and $M$ both scale linearly with $n$, namely $N=\beta_1 n$ and
$M=\beta_2 n$. As discussed in Appendix~\ref{sec:semistructured_nm}, this regime reduces to the original unstructured
constant $\tau_q$ with $q=1-\beta_1/\beta_2$. Figure~\ref{fig:semistructured_neuron_scaling} confirms this prediction for
$\beta_1=0.125$ and $\beta_2=0.25$, corresponding to $q=0.5$. For $n=4096$, the predicted constants are
$\tau_q=0.92867$ and $\sqrt{\tau_q}=0.96368$, while the empirical means are $0.92840$ and $0.96354$, respectively.
The fitted log-log slopes of the standard deviations are $-0.5103$ and $-0.5111$, again consistent with an $n^{-1/2}$
concentration law.

\begin{figure}[H]
	\centering
	\begin{minipage}{0.48\textwidth}
		\centering
		\begin{tikzpicture}
			\begin{axis}[
				width=\linewidth,
				height=0.72\linewidth,
				tick label style={font=\small},
				label style={font=\small},
				title style={font=\small},
				legend style={font=\small},
				grid=both,
				title={(a) Means in the regime $N,M\in O(n)$},
								title style={at={(0.5,-0.25)}, anchor=north},
				xlabel={$n$},
				ylabel={Mean},
				xmode=log,
				xmin=200, xmax=10000,
				ymin=0.92, ymax=0.97,
				]
				\addplot[blue, thick, mark=*] table[col sep=comma, x=n, y=raw_mean] {scaling_sweep.csv};
				\addplot[blue, thick, dashed] table[col sep=comma, x=n, y=tau_q] {scaling_sweep.csv};
				\addplot[orange!85!black, thick, mark=square*] table[col sep=comma, x=n, y=ren_mean] {scaling_sweep.csv};
				\addplot[orange!85!black, thick, dashed] table[col sep=comma, x=n, y=sqrt_tau_q] {scaling_sweep.csv};
				\legend{{$\mathbf{W}^T\mathbf{W}_p$},{$\tau_q$},{$\mathbf{W}^T\widetilde{\mathbf{W}}_p$},{$\sqrt{\tau_q}$}}
			\end{axis}
		\end{tikzpicture}
	\end{minipage}
	\hfill
	\begin{minipage}{0.48\textwidth}
		\centering
		\begin{tikzpicture}
			\begin{axis}[
				width=\linewidth,
				height=0.72\linewidth,
				tick label style={font=\small},
				label style={font=\small},
				title style={font=\small},
				legend style={font=\small},
				grid=both,
				title={(b) Standard deviations in the regime $N,M\in O(n)$},
								title style={at={(0.5,-0.25)}, anchor=north},
				xlabel={$n$},
				ylabel={Standard deviation},
				xmode=log,
				ymode=log,
				xmin=200, xmax=10000,
				]
				\addplot[blue, thick, mark=*] table[col sep=comma, x=n, y=raw_std] {scaling_sweep.csv};
				\addplot[orange!85!black, thick, mark=square*] table[col sep=comma, x=n, y=ren_std] {scaling_sweep.csv};
				\legend{{$\mathbf{W}^T\mathbf{W}_p$},{$\mathbf{W}^T\widetilde{\mathbf{W}}_p$}}
			\end{axis}
		\end{tikzpicture}
	\end{minipage}
	
	\caption{Numerical validation of the linear-scaling regime $N=\beta_1 n$, $M=\beta_2 n$ with
		$(\beta_1,\beta_2)=(0.125,0.25)$. In this case, the semi-structured limit reduces to the original unstructured constant
		$\tau_q$ with $q=0.5$, and the empirical means and standard deviations agree with this prediction.}
	\label{fig:semistructured_neuron_scaling}
\end{figure}

	Finally, to illustrate that fixed-block semi-structured pruning leads to a genuinely new constant $\tau_{N:M}$,
	Table~\ref{tab:semistructured_neuron_patterns} compares several patterns with the same keep ratio $N/M=1/2$,
	namely $1\!:\!2$, $2\!:\!4$, $4\!:\!8$, and $8\!:\!16$. Even at the same keep ratio, the limiting constants differ.
	Thus, for fixed block size, the asymptotic distortion is not determined solely by the retained fraction, but also by the
	local block structure.

\begin{table}[H]
	\centering
	\caption{{Comparison of several fixed-block patterns at the common keep ratio $N/M=1/2$. The empirical
			raw and renormalized similarities agree closely with the predicted constants $\tau_{N:M}$ and $\sqrt{\tau_{N:M}}$,
			respectively. The values increase with block size, showing that fixed-block semi-structured pruning leads to a new
			constant that depends on the local pattern, not only on the retained fraction.}}
	\label{tab:semistructured_neuron_patterns}
	\vspace{1mm}
	
	{\setlength{\tabcolsep}{4pt}
		\renewcommand{\arraystretch}{1.05}
		\footnotesize
		\begin{tabular}{c|c|cc|cc}
			\hline
			Pattern & $\tau_{N:M}$ & \multicolumn{2}{c|}{$\mathbf{W}^{\top}\mathbf{W}_p$} & \multicolumn{2}{c}{$\mathbf{W}^{\top}\widetilde{\mathbf{W}}_p$} \\
			& & Theory & Empirical & Theory & Empirical \\
			\hline
			$1\!:\!2$  & 0.8183 & 0.8183 & $0.8183 \pm 0.0048$ & 0.9046 & $0.9046 \pm 0.0027$ \\
			$2\!:\!4$  & 0.8676 & 0.8676 & $0.8676 \pm 0.0037$ & 0.9314 & $0.9314 \pm 0.0020$ \\
			$4\!:\!8$  & 0.8967 & 0.8967 & $0.8967 \pm 0.0031$ & 0.9469 & $0.9469 \pm 0.0016$ \\
			$8\!:\!16$ & 0.9124 & 0.9124 & $0.9124 \pm 0.0027$ & 0.9552 & $0.9552 \pm 0.0014$ \\
			\hline
		\end{tabular}
	}
\end{table}

\subsection{Semi-structured pruning in random networks}
\label{app:semistructured_networks}

	We also performed finite-width random-network experiments to verify that the same semi-structured overlap constant
	$\tau_{N:M}$ governs depth accumulation at the network level. In these experiments, each row of each weight matrix
	was pruned blockwise according to an $N\!:\!M$ rule and then renormalized by $1/\sqrt{\tau_{N:M}}$, exactly as in the
	theoretical construction. We then compared the original and pruned network outputs on random inputs.
	
	Table~\ref{tab:semistructured_random_networks} summarizes two representative validations. The left panel considers the
	practical $2\!:\!4$ pattern in random sign networks of width $512$. Here the empirical output alignment and symmetric NMSE
	closely follow the kernel-recursion prediction obtained by replacing $\tau_q$ with $\tau_{2:4}$. The right panel considers
	normalized ReLU random networks at fixed depth $L=5$ and common keep ratio $N/M=1/2$. Although the finite-width ReLU
	runs exhibit larger quantitative deviations from the infinite-width prediction, the same qualitative trend is visible:
	larger block sizes lead to larger output alignment and smaller distortion. This is consistent with the fact that the
	corresponding constants $\tau_{N:M}$ increase from $1\!:\!2$ to $8\!:\!16$.

\begin{table}[H]
	\centering
	\caption{Finite-width random-network validation of the semi-structured network recursion. Left: sign networks with the practical $2\!:\!4$ pattern, width $512$, averaged over $10$ random networks and $1024$ random inputs. Right: normalized ReLU networks at fixed depth $L=5$ and width $512$, comparing several $N\!:\!M$ patterns with common keep ratio $N/M=1/2$.}
	\label{tab:semistructured_random_networks}
	\vspace{1mm}
	
	{\setlength{\tabcolsep}{4pt}
		\renewcommand{\arraystretch}{1.05}
		\footnotesize
		
		\begin{minipage}{0.39\textwidth}
			\centering
			{(a) Depth sweep, sign activation, \(2\!:\!4\)}\\[1mm]
			\begin{tabular}{@{}c|cc|cc@{}}
				\hline
				\multirow{2}{*}{$L$} & \multicolumn{2}{c|}{Alignment} & \multicolumn{2}{c@{}}{Symmetric NMSE} \\
				& Theory & Empirical & Theory & Empirical \\
				\hline
				1 & 0.76 & 0.77 $\pm$ 0.01 & 0.47 & 0.46 $\pm$ 0.02 \\
				2 & 0.50 & 0.50 $\pm$ 0.02 & 0.99 & 1.00 $\pm$ 0.04 \\
				3 & 0.31 & 0.32 $\pm$ 0.03 & 1.38 & 1.36 $\pm$ 0.06 \\
				5 & 0.11 & 0.12 $\pm$ 0.03 & 1.78 & 1.75 $\pm$ 0.05 \\
				7 & 0.04 & 0.04 $\pm$ 0.02 & 1.92 & 1.91 $\pm$ 0.05 \\
				\hline
			\end{tabular}
		\end{minipage}
		\hfill
		\begin{minipage}{0.58\textwidth}
			\centering
			{{(b) Pattern sweep, normalized ReLU, \(L=5\)}}\\[1mm]
			\begin{tabular}{@{}c|c|cc|cc@{}}
				\hline
				Pattern & $\tau_{N:M}$ & \multicolumn{2}{c|}{Alignment} & \multicolumn{2}{c@{}}{Symmetric NMSE} \\
				& & Theory & Empirical & Theory & Empirical \\
				\hline
				$1\!:\!2$  & 0.82 & 0.75 & 0.39 $\pm$ 0.06 & 0.51 & 2.36 $\pm$ 1.42 \\
				$2\!:\!4$  & 0.87 & 0.80 & 0.49 $\pm$ 0.03 & 0.40 & 1.36 $\pm$ 0.31 \\
				$4\!:\!8$  & 0.90 & 0.83 & 0.58 $\pm$ 0.03 & 0.34 & 1.32 $\pm$ 0.90 \\
				$8\!:\!16$ & 0.91 & 0.85 & 0.62 $\pm$ 0.03 & 0.30 & 1.18 $\pm$ 0.46 \\
				\hline
			\end{tabular}
		\end{minipage}
	}
\end{table}

%% file: tex/app/related-work.tex
\section{Related Work}
\label{sec:related}

{\bf Pruning and compression with guarantees.}
Pruning and compression of neural networks have been widely studied to reduce model size and inference cost, with several works providing theoretical results for training-free filter pruning and quantization~\cite{lee2025trainingfree,bai2023unified,CHEN2023109780}. Pruning with retraining is an important practical regime, and in many settings it can yield stronger final performance. Classical pruning methods often combine mask selection with weight readjustment or retraining, from second-order pruning methods to iterative prune-and-compress pipelines~\cite{lecun1989optimal,han2016deepcompression}.
Coreset-based pruning methods offer theoretical guarantees on the trade-off between compression rate and approximation or generalization error by identifying a ``core'' subset of the network such that the remainder can be safely removed. These approaches have been applied to prune weights~\cite{baykal2018data,baykal2019sipping}, neurons~\cite{mussay2020data,tukan2022pruning}, and filters~\cite{dubey2018coreset,yin2022coreset}. In particular, coresets have been used to derive generalization bounds for pruning with the help of heuristics~\cite{baykal2019sipping}. Aghasi et al.~\cite{aghasi2017nettrim} propose Net-Trim, a convex optimization-based layer-wise pruning method for ReLU networks with theoretical performance guarantees on maintaining output consistency after pruning.
Our goal here, however, is to understand the pruning operation itself, before any recovery by further optimization. This is the regime in which its effect can be characterized cleanly. More broadly, even when retraining is used, understanding the intrinsic distortion caused by the initial pruning step is still foundational. Extending the theory to prune-and-retrain settings is an important next step, but we view it as building on top of, rather than replacing, an understanding of the one-shot distortion analyzed here.
In contrast to these lines of work, the concentration phenomenon of pruning that we present appears to be novel and has not been previously reported in the existing literature.

{\bf High-dimensional concentration and limits for structured sparsification.}
While classical results in high-dimensional probability, such as Lévy’s Lemma for Lipschitz functions and the Johnson–Lindenstrauss Lemma for random projections over spheres and Gaussians (see, e.g., \cite{ledoux2001concentration,vershynin2018high,boucheron2013concentration}), offer insight into average-case behavior, they do not directly apply to the structured sparsification process considered here. In particular, the transformations we analyze are highly non-Lipschitz and discontinuous. To the best of our knowledge, this form of pruning-induced concentration has not been previously analyzed in the literature. 

{\bf One-shot, training-free, and semi-structured pruning for large models.}
The one-shot regime remains practically relevant in its own right. For very large pretrained models, especially LLMs, post-pruning retraining or full fine-tuning can be expensive or unavailable, and a substantial recent literature therefore studies one-shot or training-free pruning.
Recent work has emphasized training-free and one-shot sparsification of large language models, including reconstruction based one-shot pruning~\cite{frantar2023sparsegpt}, simple weight and activation scoring rules~\cite{sun2023wanda}, and forward-pass only structured pruning pipelines~\cite{kolawole2024everybodyprunenow}. Extensions that incorporate additional signals, such as regional gradients, have also been proposed~\cite{yang2025wandapp}.
Semi-structured $N\!:\!M$ sparsity is especially relevant because patterns such as $2\!:\!4$ are designed for hardware-friendly inference. SparseGPT already considers semi-structured $2\!:\!4$ and $4\!:\!8$ variants, and recent work further studies learnable or post-training semi-structured sparsity for LLMs~\cite{fang2024maskllm,liu2026armor}. This motivates our extension of the concentration analysis to blockwise semi-structured one-shot pruning in Appendix~\ref{sec:semistructured_nm}, which moves the theory closer to pruning regimes of current interest in LLM inference.
These efforts primarily demonstrate empirical feasibility at scale, while our results provide a complementary first-principles explanation via pruning-induced cosine similarity concentration and its propagation through depth.

{\bf Early exit and dynamic inference.}
While the above efforts focus on fixed pruning or data-free pruning strategies, early exit networks offer a dynamic alternative by adaptively terminating inference when intermediate predictions reach sufficient confidence \cite{han2021dynamic,li2024seenn,teerapittayanon2016branchynet, 10494595, laskaridis2021adaptive}. Several works have explored early exit decision policies under uncertainty and resource constraints. In~\cite{jazbec2024fast}, early-exit neural networks are formulated within a risk control framework, and the authors optimize early-exit threshold hyperparameters under reliability constraints. In~\cite{ju2021learning,liu2021dynamic}, the regret of a dynamic early exit scheduling scheme is analyzed, with regret defined in terms of either utility or processing time relative to the optimal static policy for video applications. Another early exit scheme tailored for video applications is analyzed in~\cite{wang2019see}. A regret analysis for unsupervised early exit networks is provided in~\cite{10.1145/3564121.3564137}, and confidence sequences enabling reliable early exits are analyzed in~\cite{jazbec2024nested}. As a different application, early exit networks have been utilized to simplify/prune datasets for faster training \cite{gormez2026dataset}. While these works provide practical algorithms, their theoretical analyses have focused on regret or threshold optimization rather than the relationship between compute and generalization, which is the main focus of this work. 

{\bf Compute-adaptive inference for transformers and LLMs.}
In transformer and LLM settings, closely related compute-adaptive mechanisms include adaptive depth through
layer or sublayer skipping \cite{he2025adaskip,luo2025flexidepth} and system-level policies that combine
model selection with early exit to meet latency or cost constraints \cite{kumar2025helios}. These lines are
compatible with the frozen-backbone regime emphasized in this paper, and further motivate an explicit
characterization of compute--accuracy tradeoffs through alignment between partial and full computations.

{\bf Conditional computation beyond early exit.}
Beyond early exit, conditional computation more broadly includes structures such as neural decision trees and forests, which can be viewed as a special case of conditional computation~\cite{kontschieder2015deep,yang2018deep}. Theoretical work has shown that these models can achieve statistical consistency, i.e., convergence to the Bayes optimal classifier of the underlying data distribution as data and model capacity grow~\cite{biau2019neural,lu2020validation}. MoE models offer another form of conditional activation at scale. Recent works have studied scaling laws for MoE LLMs, providing insights into how performance scales with the number of active experts~\cite{ludziejewski24a,chen2024sparsemoe}. However, these studies rely exclusively on numerical experiments without theoretical analysis. Finally, a theoretical study of memorization capacity for neural networks with conditional computation was presented in~\cite{koyuncu2023memorization}. In contrast, we focus on the generalization performance of conditional networks under structured sparsity and dynamic activation.